\theoremstyle{definition}
\newtheorem{result}{Theorem} \newtheorem{definition}{Definition}[section]
\newtheorem{example}{Example}[section]
\newtheorem{theorem}{Theorem}[section]
\newtheorem{lemma}[theorem]{Lemma}
\newtheorem{proposition}[theorem]{Proposition}
\newtheorem{corollary}[theorem]{Corollary}
\newtheorem{remark}[theorem]{Remark}
\newcommand{\pb}{\mathbf{p}}
\newcommand{\xb}{\mathbf{x}}
\newcommand{\yb}{\mathbf{y}}
\newcommand{\Acal}{\mathcal{A}}
\newcommand{\Ccal}{\mathcal{C}}
\newcommand{\Dcal}{\mathcal{D}}
\newcommand{\Fcal}{\mathcal{F}}
\newcommand{\Gcal}{\mathcal{G}}
\newcommand{\Hcal}{\mathcal{H}}
\newcommand{\Kcal}{\mathcal{K}}
\newcommand{\Lcal}{\mathcal{L}}
\newcommand{\Rcal}{\mathcal{R}}
\newcommand{\Wcal}{\mathcal{W}}
\newcommand{\Xcal}{\mathcal{X}}
\newcommand{\Ycal}{\mathcal{Y}}
          \newcommand{\RR}{\mathbb{R}}
\newcommand{\reals}{{\mathbb{R}}}
\newcommand{\poly}{{\operatorname{poly}}}
\def\dist{\textnormal{\texttt{dist}}}
\newcommand{\argmax}{\text{argmax}}
\newcommand{\eps}{\varepsilon}
\newcommand{\ind}{\mathbb{I}}
\newcommand{\yt}{\tilde{y}}
\DeclareMathAlphabet\mathbfcal{OMS}{cmsy}{b}{n}
\newcommand{\X}{\mathcal{X}}
\newcommand{\Y}{\mathcal{Y}}
\newcommand{\R}{{\mathbb{R}}}
\newcommand{\C}{\mathcal{C}}
\DeclareMathOperator*{\argmin}{arg\,min}
\newcommand{\Ber}{\mathrm{Ber}}
\newcommand{\lall}{\mathcal{L}_{\mathrm{all}}}
\newcommand{\llip}{\mathcal{L}_{\mathrm{lip}}}
\newcommand{\lcvx}{\mathcal{L}_{\mathrm{cvx}}}
\newcommand{\lprop}{\mathcal{L}_{\mathrm{proper}}}
\newcommand{\lbv}{\mathcal{L}_{\mathrm{BV}}}
\newcommand{\dowal}{{\textsc{\large dowal}}\xspace}
\newcommand{\dowallong}{{Distributional Online Weak Agnostic Learner}\xspace}
\newcommand{\dowalboth}{{\dowallong (\dowal)}\xspace}
\newcommand{\dowalsymbol}{\ensuremath{\Acal_\mathrm{dowal}}\xspace}
\newcommand{\pcalalg}{{\textsc{\large apcal}}\xspace}
\newcommand{\pcalalglong}{{Augmented Proper Calibration}\xspace}
\newcommand{\pcalalgboth}{{\pcalalglong (\pcalalg)}\xspace}
\newcommand{\pcalalgsymbol}{\ensuremath{\Acal_\mathrm{apcal}}\xspace}
\newcommand{\owal}{{\textsc{\large owal}}\xspace}
\newcommand{\owallong}{{Online Weak Agnostic Learner}\xspace}
\newcommand{\owalalgsymbol}{\ensuremath{\Acal_\mathrm{owal}}}
\newcommand{\card}[1] {\left\vert #1 \right\vert}
\newcommand{\set}[1] {\left\{ #1 \right\}}
\newcommand{\omnireg}{\text{-}\mathrm{OmniRegret}}
\newcommand{\calerr}{\text{-}\mathrm{CalErr}}
\newcommand{\maerr}{\text{-}\mathrm{MAErr}}
\newcommand{\ucal}{\mathrm{UCal}}
\newcommand{\vcal}{\mathrm{VCal}}
\newcommand{\pcal}{\mathrm{PCalErr}}
\newcommand{\wprop}{\Wcal_\mathrm{proper}}
\newcommand{\wthres}{\W_{\Th}}
\newcommand{\pxy}{\mathbf{p}, \mathbf{x}, \mathbf{y}}
\newcommand{\owalreg}{\mathrm{OracleReg}}
\newcommand{\dowalreg}{\mathrm{DistOracleReg}}
\newcommand{\thrclass}[1]{\mathrm{Th}^{#1}}
\newcommand{\allthr}{\mathrm{Th}}
\newcommand{\skipping}[1]{}
\newcommand{\Th}{\mathrm{Th}}
\newcommand{\ReLU}{\mathrm{ReLU}}
\newcommand{\monoremap}{{\Upsilon}}
\newtheorem*{theorem*}{Theorem}
\newtheorem*{lemma*}{Lemma}
\newtheorem*{corollary*}{Corollary}
\newtheorem*{proposition*}{Proposition}
\newtheorem*{definition*}{Definition}
\newtheorem*{conjecture*}{Conjecture}
\newcommand{\Nbb}{\mathbb{N}}
\newcommand{\A}{\mathcal{A}}
\newcommand{\D}{\mathcal{D}}
\newcommand{\G}{\mathcal{G}}
\renewcommand{\H}{\mathcal{H}}
\renewcommand{\L}{\mathcal{L}}
\newcommand{\W}{\mathcal{W}}
\DeclareMathOperator*{\E}{\textnormal{\bf E}}
\let\Pr\relax
\DeclareMathOperator*{\Pr}{\textnormal{\bf Pr}}
\newcommand{\1}{{\mathbf{1}}}
\newcommand{\ps}{{p^*}}
\newcommand{\sgn}{\mathrm{sgn}}
\newcommand{\grad}{\nabla}
\title{Near-Optimal Algorithms for Omniprediction}
\author{
Princewill Okoroafor\thanks{\textbf{PO} was in part supported by a Linkedin-Cornell
Fellowship Grant.} \and
Robert Kleinberg \and
Michael P. Kim\thanks{\textbf{MPK:}~This research was supported by a gift to the LinkedIn-Cornell Bowers CIS Strategic Partnership.}~\thanks{Authors listed in reverse-alphabetical order.}
}
\begin{document}
\pagenumbering{gobble}
\begin{titlepage}
\maketitle

\begin{abstract}

Omnipredictors are simple prediction functions that encode loss-minimizing predictions with respect to a hypothesis class $\mathcal{H}$, simultaneously for every loss function within a class of losses $\mathcal{L}$.
In this work, we give near-optimal learning algorithms for omniprediction, in both the online and offline settings.
To begin, we give an oracle-efficient online learning algorithm that achieves $(\mathcal{L},\mathcal{H})$-omniprediction with $\tilde{O}(\sqrt{T \log |\mathcal{H}|})$ regret for any class of Lipschitz loss functions $\mathcal{L} \subseteq \mathcal{L}_\mathrm{Lip}$.
Quite surprisingly, this regret bound matches the optimal regret for \emph{minimization of a single loss function} (up to a $\sqrt{\log(T)}$ factor).
Given this online algorithm, we develop an online-to-offline conversion that achieves near-optimal complexity across a number of measures.
In particular, for all bounded loss functions within the class of Bounded Variation losses $\mathcal{L}_\mathrm{BV}$ (which include all convex, all Lipschitz, and all proper losses)
and any (possibly-infinite) $\mathcal{H}$, we obtain an offline learning algorithm that, leveraging an (offline) ERM oracle and $m$ samples from $\mathcal{D}$, returns an efficient $(\mathcal{L}_{\mathrm{BV}},\mathcal{H},\varepsilon(m))$-omnipredictor for $\varepsilon(m)$ scaling near-linearly in the Rademacher complexity of a class derived from $\mathcal{H}$ by taking convex combinations of a fixed number of elements of $\mathrm{Th} \circ \mathcal{H}$.

 \end{abstract}
\end{titlepage}
\tableofcontents

\clearpage

\pagenumbering{arabic}
\clearpage
\section{Introduction}

In recent years, the research community has investigated learning frameworks beyond the traditional objective of loss minimization.
Amidst research addressing concerns of fairness and robustness,
\citet{gopalan2021omnipredictors} introduced a powerful notion of robust learning, called \emph{omniprediction}.
An omnipredictor is a simple prediction function for binary outcomes that encodes loss-minimizing predictions \emph{simultaneously} for every loss within a broad class of loss functions $\L$.
In contrast to the conventional wisdom---that exploring different loss functions necessitates training different loss minimizers---omnipredictors allow a decision-maker to learn a single model $p$ (an omnipredictor) and subsquently decide which loss function $\ell$ is appropriate for their setting, without retraining $p$.

More technically, omniprediction is a parameterized guarantee, based on a class of loss functions $\L$ as well as a hypothesis class $\H$, and requires loss minimization for every $\ell \in \L$ with respect to $\H$.
\begin{definition*}
A prediction function $p:\X \to [0,1]$ is an $(\L,\H,\eps)$-omnipredictor
if for every $\ell \in \L$
    \begin{gather*}
        \E[\ell(k_\ell \circ p(x),y)] \le \min_{h \in \H}~ \E[\ell(h(x),y)] + \eps.
    \end{gather*}
\end{definition*}
That is, for every loss $\ell$ within the collection, $k_\ell \circ p$ is a loss minimizer for $\ell$.
Here, $k_\ell$ is a data-free post-processing of the predictions given by $p$ that ``type-checks'' the outputs given by $p$ to match $\ell$, specifically:
    $k_\ell(v) \triangleq \argmin_{a \in [0,1]} \E_{y \sim \Ber(v)}[\ell(a,y)]$.
Such a post-processing of $p$ is necessary for the omnipredictor definition, as different losses expect different ``types'' of optimal predictions.\footnote{For example, the optimal prediction for $\ell_2$ is $\Pr[y=1 \vert x]$, whereas for $\ell_1$, it's the most likely outcome in $\{0,1\}$.}

On its face, the guarantee of omniprediction significantly strengthens that of loss minimization with respect to a single loss function, especially when faced with a large, diverse class of losses.
A natural question, facing such a strong definition, is whether omnipredictors exist.
With some thought, it's not hard to see that the optimal predictor $\ps(x) = \Pr[y = 1 \vert x]$ is an omnipredictor for every $\L$ and $\H$.
That is, if we know the true conditional distribution of outcomes given covariates, we can make (statistically) optimal predictions according to every loss function.
In most learning settings, however, the optimal predictor $\ps$ is unattainable, both statistically and computationally.

A more delicate question is whether \emph{efficient} omnipredictors exist, with complexity scaling comparably to the hypotheses the omnipredictor is competing against.
Moreover, if such omnipredictors exist, can we learn them from a finite sample of data, using reasonable amounts of computation?
The original work introducing omniprediction showed the sweeping result that efficient\footnote{
Technically, we say an omnipredictor is efficient if it has a $\poly(1/\eps)$-sized circuit using oracle gates for $h \in \H$.} omniprediction is possible for the class of all convex loss functions.

\begin{theorem*}[Efficient Omnipredictors for Convex Losses Exist \citep{gopalan2021omnipredictors}] 
Let $\L_\mathrm{cvx}$ be the set of all convex, $1$-Lipschitz loss functions.
For any data distribution $\D$, hypothesis class $\H$, and $\eps \ge 0$, there exist efficient $(\L_\mathrm{cvx},\H,\eps)$-omnipredictors.
Further, such omnipredictors can be learned using a weak agnostic learner for $\H$, from $m = O(d_\H \cdot \poly(1/\eps))$ samples from $\D$, where $d_\H$ represents the VC (or fat-shattering) dimension of $\H$. 
\end{theorem*}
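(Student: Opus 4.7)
The plan is to route the argument through the notion of \emph{multicalibration}: a predictor $p : \X \to [0,1]$ is $\alpha$-multicalibrated with respect to $\H$ if, for every $h \in \H$ and every level $v$ in a suitable $O(\eps)$-discretization of $[0,1]$, the conditional correlation $\E[(y - v) \cdot h(x) \cdot \ind[p(x) = v]]$ has magnitude at most $\alpha$. The argument then splits into a \emph{statistical} reduction, showing that a sufficiently multicalibrated predictor is automatically an $(\L_\mathrm{cvx}, \H, \eps)$-omnipredictor, and an \emph{algorithmic} step that invokes existing sample-efficient multicalibration learners.

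First I would prove the reduction. Fix any $\ell \in \L_\mathrm{cvx}$ and $h \in \H$. The key identity is that $k_\ell(v) = \argmin_{a \in [0,1]} \E_{y \sim \Ber(v)}[\ell(a,y)]$ satisfies a first-order optimality condition: there is a subgradient selection $g_v(y) \in \partial_a \ell(k_\ell(v), y)$ with $\E_{y \sim \Ber(v)}[g_v(y)] = 0$. Since $y \in \{0,1\}$, the map $y \mapsto g_v(y)$ is affine in $y$, and combined with the vanishing-expectation condition we can write $g_v(y) = A_v \cdot (y - v)$ for some $A_v \in \R$ controlled by the Lipschitz constant. Convexity of $\ell$ in its first argument then yields
\[
\E[\ell(k_\ell(p(x)), y)] - \E[\ell(h(x), y)] \le \E[\, g_{p(x)}(y) \cdot (k_\ell(p(x)) - h(x)) \,].
\]
Decomposing by the levels of $p$, the term involving $k_\ell(p(x))$ vanishes by ordinary calibration on each level set (which is implied by multicalibration against the constant hypothesis), while the term involving $h(x)$ becomes a sum over levels of multicalibration correlations, each bounded by $\alpha$. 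Choosing $\alpha$ polynomially small in $\eps$ (and absorbing discretization error over the $O(1/\eps)$ level sets) gives the omniprediction guarantee.

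Next I would invoke existing algorithms for learning a multicalibrated predictor. The boosting-style construction of Hebert-Johnson, Kim, Reingold, and Rothblum---refined in the omniprediction context by \citet{gopalan2021omnipredictors}---produces an $\alpha$-multicalibrated predictor with $\poly(1/\alpha)$ calls to a weak agnostic learner for $\H$. A standard uniform-convergence bound over the $\poly(1/\eps)$ queries, together with the discretized level sets, contributes a generalization term scaling with the VC or fat-shattering dimension $d_\H$, yielding total sample complexity $m = O(d_\H \cdot \poly(1/\eps))$. The returned predictor is a decision list of size $\poly(1/\eps)$ with oracle gates for $h \in \H$, hence efficient in the required sense.

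The main obstacle is the reduction step. One must handle the non-smoothness of general convex losses via subgradient selections, carefully control the boundedness of $A_v$ near $v \in \{0,1\}$ (which would otherwise blow up the multicalibration parameter required), and account for the discretization of both the predictor's values and the level-set structure in such a way that the final $\alpha$ translates to only a polynomial blowup in sample complexity. The algorithmic step, in contrast, is essentially a black-box invocation of prior multicalibration learners.
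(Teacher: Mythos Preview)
Your proposal is correct and matches the approach the paper attributes to \citet{gopalan2021omnipredictors}: route through multicalibration, use convexity and a subgradient/first-order-optimality argument to reduce the omniprediction gap to per-level correlations, and invoke the boosting-style multicalibration learner of \citet{hkrr}. Note that the paper does not supply its own proof of this statement; it is quoted as background from prior work, and the surrounding text confirms only the high-level mechanism (``multicalibration implicitly guarantees loss minimization over every convex loss,'' learnable via a weak agnostic learner for $\H$), which is precisely your plan.
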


The original construction of efficient omnipredictors follows as a consequence of \emph{multicalibration}, a powerful notion for learning introduced in the algorithmic fairness literature by \citet{hkrr}.
Even though multicalibration is defined without any reference to loss minimization, \citet{gopalan2021omnipredictors} showed that, in fact, multicalibration implicitly guarantees loss minimization over every convex loss.
While powerful, deriving efficient omnipredictors from multicalibration is known to be expensive in terms of sample complexity \citep{hkrr,gopalan2022low}.

Subsequent work has investigated
whether omniprediction can be achieved more efficiently than multicalibration and for a more general class of loss functions.
The work of \citet{gopalan2022loss} introduced a general recipe, dubbed \emph{Loss OI}, for achieving omnipredictors (for more general classes of loss functions) via two weaker conditions, \emph{calibration} \citep{dawid1985calibration} and \emph{multiaccuracy} \citep{kim2019multiaccuracy}.
Despite qualitative progress, the quantitative bounds for learning omnipredictors have not improved considerably.
The sample complexity of the omniprediction procedure presented in \cite{gopalan2022loss} scales as $O(1/\eps^{10})$, considerably larger than the optimal $\eps$-dependence for loss minimization of $\Theta(1/\eps^2)$.

One concrete barrier to improved bounds for omniprediction is the reliance on calibration.
At least in the online setting, calibration is known to require asymptotically more samples than loss minimization \citep{qiao2021stronger,dagan2024breaking}.
This ``calibration bottleneck'' has inspired researchers to look for alternative variants of calibration that are strong enough to give omniprediction-style guarantees, but weak enough to be achieved efficiently.
Towards this end, \citet{kleinberg2023u} recently gave an algorithm that achieves simultaneous $O(\sqrt{T})$-regret for all \emph{proper} losses; subsequently, \citet{hu2024calibrationerrordecisionmaking} and \citet{roth2024forecasting} showed that similar bounds can be achieved for swap regret.
These settings are non-contextual, so they do not directly guarantee omniprediction.
Still, the results raise the prospect of simultaneous loss minimization at the price of minimizing a single loss.
Indeed, \citet{garg2024oracle} showed a restricted setting in which online omniprediction is possible, albeit inefficiently, with near-optimal $\tilde{O}(\sqrt{T \log |\Hcal|})$ regret.

Despite a flurry of progress since its introduction, key questions about omniprediction remained unanswered. 
Most importantly, \emph{what is the sample complexity of learning efficient omnipredictors?} In the online setting,
what is the best omniprediction regret attainable? And how does the computational complexity of
learning omnipredictors depend on the number of samples used? Underlying all of these questions
is a methodological question about the relationship between calibration and omniprediction: \emph{Is
calibration necessary for efficient omniprediction, and if not, what weaker properties suffice?}

\paragraph{Our Contributions.}
In this work, we establish the near-optimal 
complexity of learning efficient omnipredictors.
Remarkably, we show that for an extremely broad class of loss functions, there is essentially no cost to omniprediction compared to loss minimization.
In both the online and distributional settings, we give oracle-efficient learning algorithms that establish upper bounds on the omniprediction regret (sample complexity, respectively) that match the lower bounds for minimization of a single loss function, up to low order factors.
Our main contributions can be summarized as follows:

\begin{itemize}
    \item \textbf{Proper Calibration.}~~We define a new variant of calibration---\emph{proper calibration}---that is weaker than full calibration, but suffices for omniprediction.
    Following the Loss OI framework of \citet{gopalan2022loss}, we show that proper calibration actually characterizes ``Decision OI'' and, in doing so, show that proper calibration, paired with $\Delta\L \circ \H$-multiaccuracy \footnote{$\Delta \L$ refers to the derived class $\{\Delta \ell(t) = \ell(t,1) - \ell(t,0) : \ell \in \L \}$}, implies $(\L,\H)$-omniprediction. 
\item \textbf{Near-Optimal Online Omniprediction.}~~We devise an online learning algorithm, based on Blackwell Approachability 
\citep{blackwell,pmlr-v19-abernethy11b}, that achieves proper calibration and multiaccuracy with near-optimal regret.
    Consequently, we obtain $\tilde{O} \left( \sqrt{T \cdot \log |\H|} \right)$ omniprediction regret, for any finite hypothesis class $\H$ and any class $\L \subseteq \L_\mathrm{BV}$ of bounded variation losses; in particular, $\L_\mathrm{BV}$ includes all Lipschitz losses $\L_\mathrm{Lip}$, their restriction to convex losses $\lcvx$, as well as all proper losses $\L_\mathrm{prop}$ and all bounded convex losses, irrespective of Lipschitz continuity.
    \item \textbf{Oracle-Efficiency.}~~We extend this basic approach, showing that, for any hypothesis class $\H$ (including infinite classes), online omniprediction reduces to \emph{Online Weak Agnostic Learning}.
    This reduction establishes an upper bound on the omniprediction regret in terms of
{the sequential Rademacher complexity of the derived class $\Delta \Lcal \circ \Hcal$.}
\item \textbf{Efficient Offline Omnipredictors.}~~To achieve optimal omniprediction in the distributional setting, we develop an offline learning algorithm inspired by our online learning algorithm.
    In some sense, our algorithm can be viewed as a online-to-offline conversion, but running the conversion naively results in \emph{inefficient} omniprediction.
    Instead, we devise a more sophisticated framework that maintains efficiency in samples, computation, and the resulting omnipredictor complexity simultaneously.
    In all, our algorithm returns an efficient randomized omnipredictor 
(that is evaluated by sampling $\poly(1/\eps)$ hypotheses from $\H$ and postprocessing its output), using an \emph{offline} ERM oracle, with sample complexity scaling near-linearly in the \emph{offline} Rademacher complexity of a class derived from $\mathcal{H}$ by taking convex combinations of a fixed number of elements of $\mathrm{Th} \circ \mathcal{H}$.

\end{itemize}
\mpkreplace{
\paragraph{Organization.}
The remainder of the manuscript is organized as follows.
The bulk of the introduction is dedicated to a detailed overview of our contributions (\Cref{sec:overview}).
We highlight our main results, emphasizing an intuitive understanding of our techniques.
We conclude the introduction with a discussion of related works and open directions (\Cref{sec:related}).
Beginning in \Cref{sec:prelim}, we give a complete presentation of our results, including formal definitions and proofs.
Throughout \Cref{sec:overview}, we aim to give pointers to the formal presentations of results in the body of the manuscript.
}

\subsection{Overview of Contributions}
\label{sec:overview}

\paragraph{Basic Preliminaries.}
We work in both an online and distributional prediction setting: inputs $x \in \X$ come from a discrete domain and outcomes $y \in \Y = \{0,1\}$ are binary.
Omniprediction aims to learn a single predictor $p:\X \to [0,1]$ that guarantees loss minimization for every loss within a class $\L \subseteq \set{\ell: [0,1] \times \Y \to \R}$ compared to the best hypothesis from a class $\H \subseteq \set{h:\X \to [0,1]}$.
In the distributional setting, we assume a fixed but unknown distribution $\D$ supported on $\X \times \Y$; when not specified, expectations are taken over $\D$.

We denote the set of all bounded 
loss functions\footnote{Throughout, we restrict our attention to measurable loss functions.} as $\lall = \{\ell:[0,1] \times \Y \to [-1,1]\}$
and consider loss classes defined based on functional properties (in the first argument).
These loss classes include bounded variation $\lbv$, $1$-Lipschitz $\llip$, and (Lipschitz) convex losses $\lcvx$, with the following inclusions.
\begin{gather*}
\lall \supsetneq \lbv \supsetneq \llip \supsetneq \lcvx
\end{gather*}
One very important class of losses are the proper losses $\lprop$.
A loss function $\ell$ is \emph{proper} if for $y \sim \Ber(v)$, predicting $v$ is an optimal strategy; that is, $\E[\ell(v,y)] \le \E[\ell(u,y)]$ for any $u \in [0,1]$.
Note that the class of bounded proper losses $\lprop$ is a subset of $\lbv$, but incomparable to $\llip$.

A key object in the study of omniprediction is the \emph{discrete derivative} of a loss,
$$\Delta \ell (v) = \ell (v, 1) - \ell (v, 0)$$
which intuitively
captures whether $\ell$ distinguishes between $y = 1$ and $y=0$ at a given prediction $v \in [0,1]$.

Calibration is an essential notion for our discussion.
We work with the generic notion of \emph{weighted calibration}, which allows us to instantiate different variants easily.

\begin{definition*}[Weighted Calibration Error]
    Fix a class of functions $\W \subseteq \set{w:[0,1] \to [-1,1]}$,
    called \emph{weight functions}.
    The $\W$-weighted calibration error is defined distributionally for any predictor $p:\X \to [0,1]$,
    and sequentially, 
    for any sequence of predictions, contexts, and outcomes $\pxy$, as follows.
    \begin{gather*}
        \W\calerr(p) = \sup_{w \in \W} \big|\E[w(p(x)) \cdot (y - p(x))]\big| \qquad
        \Wcal\calerr (\pxy) = \sup_{w \in \Wcal} \left| \sum_{t=1}^T w(p_t(x_t)) (y_t - p_t(x_t)) \right|
    \end{gather*}
\end{definition*}
The standard notion of calibration ($\ell_1$-calibration) corresponds to taking
$\W_\mathrm{all} = \set{ w: [0,1] \to [-1,1] }$.
Threshold functions are a key weight class, where
    $\W_{\Th} = \{\Th_\theta : \theta \in [0,1]\}$
for $\Th_\theta(p) = \sgn(\theta - p)$.

Another important notion in the development of omnipredictors is multiaccuracy \citep{hkrr,kim2019multiaccuracy}.
Multiaccuracy is parameterized by a hypothesis class $\H$ and guarantees that the residual in predictions have no nontrivial correlation with any $h \in \H$.
\begin{definition*}[Multiaccuracy]
    Fix a hypothesis class $\H \subseteq \{h:\X \to [-1,1]\}$.  The $\H$-multiaccuracy error is defined distributionally for any predictor $p:\X \to [0,1]$,
    and sequentially, 
    for any sequence of predictions, contexts, and outcomes $\pxy$, as follows.
    \begin{gather*}
    \H\maerr(p) = \sup_{h \in \H}\big| \E[h(x) \cdot (y - p(x))]\big| \qquad
    \H\maerr (\pxy) = \sup_{h \in \H} \left| \sum_{t=1}^T h(x_t) (y_t - p_t(x_t)) \right|
    \end{gather*}
\end{definition*}

A complete set of preliminaries is given in \Cref{sec:prelim}.

\subsubsection{Decision OI as Proper Calibration}
\label{sec:intro:pcal}
Our approach to omniprediction follows the framework put forth by \citet{gopalan2022loss} of Loss Outcome Indistinguishability (OI).
As our first contribution, we give a novel characterization of one of the key components of the framework, Decision OI, in terms of a notion, which we call \emph{proper calibration}.
We develop this notion and its properties in \Cref{sec:proper-cal}.
In particular, working with proper calibration reveals a more efficient scheme for achieving omniprediction, as we overview next.

The OI paradigm \citep{oi} frames learning as indistinguishability.
A predictor $p:\X \to [0,1]$ satisfies OI if outcomes generated based on $p$ ``look like'' real-world outcomes.
Concretely, OI compares samples from the real world $(x,y) \sim \D$ and modeled samples $(x,\yt)$ where $\yt \sim \Ber(p(x))$ is resampled based on the predictor $p$.
Loss OI guarantees $(\L,\H)$-omniprediction via two sub-conditions, Hypothesis OI and Decision OI, using efficient tests defined by the losses $\ell \in \L$ and hypotheses $h \in \H$.
\begin{align}
    &\textbf{Hypothesis OI: \qquad}& \E_{x,y \sim \D}[\ell(h(x),y)] ~~&\approx_\eps \E_{\substack{x \sim \D\\\yt \sim \Ber(p(x))}}[\ell(h(x),\yt)]&&\forall \ell \in \L, h \in \H \label{eqn:HypOI}\\
    &\textbf{Decision OI: \qquad}& \E_{x,y \sim \D}[\ell(k_\ell \circ p(x),y)] ~~&\approx_\eps \E_{\substack{x \sim \D\\\yt \sim \Ber(p(x))}}[\ell(k_\ell \circ p(x),\yt)]&& \forall \ell \in \L\label{eqn:DecOI}
\end{align}
Under these OI conditions, omniprediction follows immediately.
\begin{theorem*}[\citet{gopalan2022loss}] \label{thm:loss-oi}
    If $p$ satisfies $(\L,\H,\eps)$-Hypothesis OI and $(\L,\eps)$-Decision OI, then $p$ is an $(\L,\H,2\eps)$-omnipredictor.
\end{theorem*}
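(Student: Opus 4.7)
The plan is to chain the two OI guarantees through the pointwise optimality of the post-processing $k_\ell$ against the Bernoulli distribution $\Ber(p(x))$. The key observation is that under the modeled sampling $\yt \sim \Ber(p(x))$, conditional on $x$ the random variable $\yt$ is exactly $\Ber(p(x))$, so by the very definition $k_\ell(v) = \argmin_{a} \E_{y \sim \Ber(v)}[\ell(a,y)]$, the post-processed prediction $k_\ell \circ p(x)$ minimizes $\E_{\yt \sim \Ber(p(x))}[\ell(\cdot, \yt)]$ pointwise in $x$. This lets me compare $k_\ell \circ p$ to any competitor $h(x)$ in the modeled world ``for free.''

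Concretely, I would fix an arbitrary $\ell \in \L$ and $h \in \H$ and write the following chain of inequalities:
\begin{align*}
\E_{(x,y) \sim \D}[\ell(k_\ell \circ p(x), y)]
&\le \E_{x \sim \D,\,\yt \sim \Ber(p(x))}[\ell(k_\ell \circ p(x), \yt)] + \eps \\
&\le \E_{x \sim \D,\,\yt \sim \Ber(p(x))}[\ell(h(x), \yt)] + \eps \\
&\le \E_{(x,y) \sim \D}[\ell(h(x), y)] + 2\eps.
\end{align*}
The first inequality is Decision OI applied to $\ell$; the third is Hypothesis OI applied to the pair $(\ell,h)$; the middle inequality is the pointwise optimality of $k_\ell$, which I would justify by conditioning on $x$ and applying the defining property of $k_\ell$ to the conditional distribution $\yt \mid x = \Ber(p(x))$, then taking expectation over $x \sim \D$.

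Taking the infimum over $h \in \H$ on the right-hand side gives the omniprediction inequality
$\E[\ell(k_\ell \circ p(x), y)] \le \min_{h \in \H} \E[\ell(h(x),y)] + 2\eps$,
and since $\ell \in \L$ was arbitrary, $p$ is an $(\L, \H, 2\eps)$-omnipredictor. I do not expect a real obstacle here: Hypothesis OI and Decision OI are designed precisely so that the only nontrivial step—the middle inequality—is automatic from the definition of $k_\ell$ after conditioning. The only mild care needed is making sure the measurability/integrability of $k_\ell \circ p$ is fine, which follows from working inside $\lall$ (bounded measurable losses) as in the preliminaries.
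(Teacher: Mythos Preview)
Your proposal is correct and matches the paper's argument essentially verbatim: the paper sketches exactly this three-step chain—switch to the modeled world via Decision OI, use the pointwise optimality of $k_\ell$ under $\yt \sim \Ber(p(x))$, then switch back via Hypothesis OI—losing $\eps$ at each switch for a total of $2\eps$.
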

This OI argument follows by switching from expectations in the real world (i.e., the LHS of (\ref{eqn:HypOI}) and (\ref{eqn:DecOI})) to the modeled world (RHS).
In the modeled world, outcomes are sampled from our predictor $\yt \sim \Ber(p(x))$, so $k_\ell \circ p(x)$ is the \emph{statistically-optimal} predictor; thus, in the real world, $\E[\ell(h(x),y)] \ge \E[\ell(k_\ell \circ p(x),y)] - 2\eps$ by OI (losing an additive $\eps$ to switch from the real world to the modeled world and back).

\cite{gopalan2022loss} go on to show that Hypothesis OI is equivalent to a certain multiaccuracy condition, for the class $\Delta \L \circ \H = \{\Delta \ell \circ h : \ell \in \L,\ h \in \H\}$.
They also show how the Decision OI error can be expressed as the following weighted calibration condition.
\begin{align}
    \big|\E_{x,y \sim \D}[\ell(k_\ell \circ p(x),y)] - \E_{\substack{x \sim \D\\\yt \sim \Ber(p(x))}}[\ell(k_\ell \circ p(x),\yt)]\big|
    &= \big|\E_{x,y \sim \D}[\Delta \ell(k_\ell \circ p(x)) \cdot (y-p(x))]\big|\label{eqn:DecOIasWCal}
\end{align}
While standard calibration implies this weighted calibration condition, we argue that it can be simplified into the following notion, based on proper losses.
\begin{definition*}[Proper Calibration]
    Let $\W_\mathrm{proper} = \{\Delta \ell : \ell \in \lprop\}$.
    A predictor $p:\X \to [0,1]$ is $\eps$-proper calibrated if $\W_\mathrm{proper}\calerr(p) \le \eps$; that is,
    \begin{gather*}
        \sup_{\ell \in \lprop} \big|\E[\Delta\ell(p(x)) \cdot (y - p(x)) ]\big| \le \eps
    \end{gather*}
\end{definition*}
At first, this notion may seem insufficient to deal with improper losses.
But when we compose a non-proper loss $\ell \in \L$ with the optimal post-processing $k_\ell$, we can effectively treat the loss as proper.
Formally, for any loss $\ell$, there exists a proper loss $\ell_\mathrm{proper} \in \lprop$ such that $\Delta\ell(k_\ell(\cdot)) = \Delta \ell_\mathrm{proper}(\cdot)$.
The restriction to proper calibration allows us to exploit structural properties of proper losses.
In particular, as in \citep{li2022optimization,kleinberg2023u}, we lean on a characterization of proper losses in terms of threshold functions.
In all, we can show the following characterization of Decision OI in terms of proper calibration, and in terms of threshold-weighted calibration, which can be achieved efficiently.

\begin{result}\label{result:proper-equiv}
$\L_\mathrm{all}$-Decision OI, Proper Calibration, and $\W_{\Th}$-calibration are equivalent.
Formally, for any predictor $p:\X \to [0,1]$ (or sequence of predictions $\mathbf{p}$), the errors can be related as follows:
\begin{gather*}
    \W_{\Th}\textrm{-}\mathrm{CalErr}(p) \le \wprop\calerr(p) = \L_\mathrm{all}\textrm{-}\mathrm{DecOIErr}(p) \le 2 \cdot \W_{\Th}\textrm{-}\mathrm{CalErr}(p)
\end{gather*}
Thus, if $p$ is $(\Delta \L \circ \H,\eps)$-multiaccurate and $(\W_\Th,\eps)$-calibrated, then $p$ is an $(\L,\H,3\eps)$-omnipredictor.
Further, there exists an online algorithm that guarantees $O(\sqrt{T \log T})$ proper calibration regret.
\end{result}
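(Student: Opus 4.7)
The plan is to prove the three claims in turn: the quantitative equivalences among threshold calibration, proper calibration, and Decision OI; the omniprediction consequence; and the existence of an online algorithm.

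\textbf{Equality of Decision OI and proper calibration.} I would start from the identity $\ell(a, y) = y \cdot \Delta\ell(a) + \ell(a, 0)$, which, after taking expectations against $y$ versus $\yt \sim \Ber(p(x))$, reduces the Decision OI gap for a loss $\ell$ to $|\E[\Delta\ell(k_\ell \circ p(x)) (y - p(x))]|$ (this is \eqref{eqn:DecOIasWCal}). To compare with $\wprop\calerr$, for any $\ell \in \lall$ define $\ell_\mathrm{proper}(v, y) \defeq \ell(k_\ell(v), y)$. Properness follows because $k_\ell(v)$ was defined as the Bayes-optimal action under $y \sim \Ber(v)$, and $\Delta \ell_\mathrm{proper}(v) = \Delta \ell(k_\ell(v))$ by construction. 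Hence every Decision OI term appears as a proper calibration term, giving $\L_\mathrm{all}\text{-}\mathrm{DecOIErr}(p) \le \wprop\calerr(p)$. Conversely, every proper $\ell$ satisfies $k_\ell = \mathrm{id}$, so the proper calibration supremum is also a Decision OI supremum, establishing equality.

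\textbf{Sandwich with $\W_\Th\calerr$.} The key structural claim is that $\wprop = \{\Delta\ell : \ell \in \lprop\}$ coincides with the set of non-increasing functions $g : [0,1] \to [-1,1]$. One direction follows from first-order optimality for properness; the reverse follows by a Savage-style construction of a proper loss from any prescribed non-increasing discrete derivative. Since $\Th_\theta$ is non-increasing with values in $\{-1, 1\}$, $\W_\Th \subseteq \wprop$ is immediate, giving $\W_\Th\calerr(p) \le \wprop\calerr(p)$. For the reverse bound, I would decompose any non-increasing $g \in [-1,1]^{[0,1]}$ as
\begin{align*}
    g(v) = c + \tfrac{1}{2} \int_0^1 \Th_\theta(v) \, d\mu(\theta),
\end{align*}
using the identity $\ind[\theta > v] = (\Th_\theta(v) + 1)/2$ and the Lebesgue--Stieltjes measure $-dg$ on $[0,1]$; here $|c| = |g(0) + g(1)|/2 \le 1$ and $\mu([0,1]) = g(0) - g(1) \le 2$. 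Integrating against $(y - p(x))$ and using that constant $\pm 1$ weight functions coincide a.e.\ with $\Th_1$ and $\Th_0$, both $|c \cdot \E[y - p(x)]|$ and the integrated threshold piece are bounded by $\W_\Th\calerr(p)$, summing to at most $2\W_\Th\calerr(p)$.

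\textbf{From calibration and multiaccuracy to omniprediction.} The same identity $\ell(a, y) = y \Delta\ell(a) + \ell(a, 0)$ shows that the Hypothesis OI gap for $(\ell, h)$ equals $|\E[\Delta\ell(h(x)) (y - p(x))]|$, so $(\Delta \L \circ \H, \eps)$-multiaccuracy is identical to $(\L, \H, \eps)$-Hypothesis OI. Combined with the sandwich $\L_\mathrm{all}\text{-}\mathrm{DecOIErr}(p) \le 2\W_\Th\calerr(p)$, a $(\W_\Th, \eps)$-calibrated predictor also satisfies $(\L, 2\eps)$-Decision OI. Applying the additive form of the Loss OI theorem (HypOI error plus DecOI error) yields $(\L, \H, 3\eps)$-omniprediction.

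\textbf{Online algorithm.} By the sandwich, $O(\sqrt{T \log T})$ sequential $\W_\Th\calerr$ implies the same bound for proper calibration. To obtain it, I would discretize thresholds at resolution $1/T$ and run Blackwell approachability against the $O(T)$-dimensional target set of small-residual vectors. Standard approachability bounds give $O(\sqrt{T \log T})$ regret against the discretized thresholds, and the discretization error contributes only a lower-order additive term because threshold-weighted residuals change by $O(1)$ across each round's single prediction. The main obstacle I expect is pinning down the exact constant in the threshold-to-proper direction: the mixture decomposition requires care at the endpoints $\theta \in \{0,1\}$ and around the additive constant $c$, since a loose treatment would easily yield a larger constant than $2$.
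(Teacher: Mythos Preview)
Your outline is essentially the paper's proof, but there is a concrete error in the sandwich step and a gap in the online-algorithm sketch.

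\textbf{The range of $\wprop$.} Your ``key structural claim'' that $\wprop$ coincides with the non-increasing functions $g:[0,1]\to[-1,1]$ is false: $\ell\in\lprop$ only requires $\ell$ to take values in $[-1,1]$, so $\Delta\ell$ ranges in $[-2,2]$. For instance, $\ell=\ell_{0.3}+\ell_{0.7}$ is a proper loss with values in $[-1,1]$ yet $\Delta\ell(0)=2$. With the corrected range your bounds become $|c|\le 2$ and $\mu([0,1])\le 4$, and your argument as written yields only a factor $4$. The factor $2$ can be recovered by the sharper identity $|c|+\tfrac12\mu([0,1])=\max\{g(0),-g(1)\}\le 2$, but you did not derive this. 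The paper avoids the issue entirely by invoking the V-shaped decomposition of \citet{li2022optimization,kleinberg2023u} directly (\Cref{cor:v-disc}): every $\Delta\ell$ with $\ell\in\lprop$ is already $\int_0^1 c_v\,\Th_v\,dv$ with nonnegative $c_v$ and $\int c_v\le 2$, no separate constant term, so the factor $2$ is immediate (\Cref{lem:tcal-decom}). This is exactly the obstacle you anticipated. (Also, your Savage-style reverse construction is unnecessary: only the forward direction ``$\Delta\ell$ is non-increasing'' is used.)

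\textbf{Discretization in the online algorithm.} Your claim that discretizing thresholds at resolution $1/T$ incurs only a lower-order error is not justified when predictions lie in $[0,1]$: moving $\theta$ by $1/T$ can change $\sum_t \Th_\theta(p_t)(y_t-p_t)$ by $\Theta(T)$ if many $p_t$ land in one width-$1/T$ interval. The paper instead restricts the \emph{predictions} themselves to the grid $\{0,1/T,\ldots,1\}$ (\Cref{alg:tcal}); then $\W_\Th\calerr=\W_\Th^{1/T}\calerr$ exactly (\Cref{lem:eps-thres}), and the Blackwell approachability bound on the $(2T{+}2)$-dimensional constraint vector goes through with no discretization slack.
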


With this characterization in hand, we can enforce proper calibration (and thus Decision OI) by auditing the predictor with threshold functions.
Incorporating proper calibration---rather than $\ell_1$-calibration---into the algorithmic framework of \citet{gopalan2022loss} results in statistical improvements, but does not realize sample optimality.
For completeness, we analyze this approach in \Cref{app:mcboost}.
Importantly, as we describe in the next section, we can enforce proper calibration simultaneously with multiaccuracy in the online setting to obtain near-optimal omniprediction regret, which we subsequently leverage to establish statistical near-optimality in the offline setting.

\subsubsection{Blackwell Approachability for Online Omniprediction}
\label{sec:intro:blackwell}

In \Cref{sec:online_omni}, we describe our strategy to learn predictions that obtain near-optimal omniprediction regret.
Our algorithmic approach to online omniprediction is based on Blackwell Approachability \citep{blackwell} to solve a vector-valued game, defined by the proper calibration and multiaccuracy constraints.
At a high level, we employ a framework developed by \citet{pmlr-v19-abernethy11b} to use multiplicative weights over the set of constraints defined by the vector-valued game.
Within this framework, we devise an explicit ``halfspace oracle'' used by the forecaster to play optimally given the dual weights.
We begin with an overview of the intuition behind our approach, and describe our oracle-efficient implementation in the subsequent section.

To build intuition, we focus on the omniprediction setting where the loss class $\L$ and hypothesis class $\H$ are finite.
As discussed above, our learning goal is to achieve low regret for proper calibration and online multiaccuracy simultaneously.
\footnote{
For completeness, we define the problem, and show how to achieve oracle-efficient optimal regret bounds in \Cref{sec:online-ma}.} 
Per Theorem~\ref{result:proper-equiv}, the multiaccuracy and proper calibration regret necessary for omniprediction can be expressed as follows.
\begin{gather*}
    \textbf{Multiaccuracy:}~~~ \max_{\substack{\ell \in \L,\\ h \in \H}}~ \left| \sum_{t=1}^T \Delta \ell \circ h(x_t) (y_t - p_t) \right| \qquad
    \textbf{Proper Calibration:}~~~ \sup_{\theta \in [0,1]} \left| \sum_{t=1}^T \Th_\theta(p_t) (y_t - p_t) \right|
\end{gather*}
In other words, the learner's job is to choose a sequence of predictions $\mathbf{p}$ that, when playing against an adversarial sequence of contexts $\mathbf{x},\mathbf{y}$, guarantee low regret over a \emph{worst-case} choice over loss-hypothesis pairs $(\ell,h) \in \L \times \H$ (multiaccuracy) and thresholds $\theta \in [0,1]$ (proper calibration).
In such a setting, we can formulate the learner's task as a Blackwell Approachability game with vector-valued payoffs.

Concretely, we can imagine the following finite-dimensional payoff vector $\vec{u}$.
Given a prediction $p$, input $x \in \X$, and outcome $y \in \{0,1\}$,
the multiaccuracy constraints are indexed by loss-hypothesis pairs for each $\ell \in \L$ and $h \in \H$ and signs $s \in \{+,-\}$, and the proper calibration constraints are indexed by (appropriately-discretized) thresholds $\theta \in \{0,\eps,\hdots,1\}$, also signed by $s \in \{+,-\}$:
\begin{gather*}
    \textbf{MA:}~~\vec{u}_{\ell,h,s}(p,x,y) = s \cdot \Delta \ell \circ h(x) \cdot (y-p)
    \qquad\qquad
    \textbf{PC:}~~\vec{u}_{\theta,s}(p,x,y) = s \cdot \Th_\theta(p) \cdot (y-p)
\end{gather*}
The learner's goal in playing this game is to make the payoff vector $\vec{u}$ ``approach'' the origin, driving the worst-case violation of any constraint towards $0$.
To achieve this approachability, we can run multiplicative weights over the coordinates in $\vec{u}$, to maintain a dual halfspace $\vec{w}$ to witness violations of the constraints;
then, given the halfspace, the algorithm computes an explicit optimal strategy to hedge against the choice of outcome $y_t \in \{0,1\}$.
Concretely, given a context $x_t$, we consider the following weighted function $f$, which (as a function of $p$), maps predictions on the interval $[0,1]$ to the range $[-1,1]$.
\begin{equation}\label{eq:finite-f-tech}
f(x_t,p) = \sum_{\ell,h,s \in \{\pm\}} w_{\ell,h,s} \cdot s \cdot \Delta \ell \circ h(x_t) + \sum_{\theta, s \in \{\pm\}} w^t_{\theta,s} \cdot s \cdot \Th_{\theta}(p)
\end{equation}
Intuitively, $f(x_t,p) \cdot (y-p)$ captures the error (in multiaccuracy or proper calibration) that the learner may incur from predicting $p$ on outcome $y$.
Note that, for a fixed $p$, the adversary may choose the sign of this error through the choice of $y$.
Thus, to minimize the potential error incurred---regardless of the outcome $y$---our algorithm plays a mixture between adjacent predictions $p$ and $p'$ where $f(x_t,p) \le 0$ and $f(x_t,p') > 0$, so that potential negative and positive error cancel in expectation.\footnote{Note that the constraint functions defined by $\Th_\theta$ are not continuous, so $f(x_t, \cdot)$ need not have a zero between $p$ and $p'$.}
By choosing the prediction interval appropriately, we can guarantee that the combined regret grows slowly.
\begin{result}
\label{result:online}
    There exists an online algorithm that for any finite class of bounded loss functions $\L$ and finite hypothesis class $\H$, guarantees expected $(\L,\H)$-omniprediction regret $O \left( \sqrt{T \log(\card{\H}\card{L}T}) \right)$.
\end{result}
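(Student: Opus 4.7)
The plan is to follow the Blackwell-approachability strategy sketched in the overview and analyze it through the Abernethy-Bartlett-Hazan reduction \citep{pmlr-v19-abernethy11b}. By Theorem~\ref{result:proper-equiv}, achieving $(\L,\H)$-omniprediction reduces to simultaneously controlling the $(\Delta\L\circ\H)$-multiaccuracy regret and the $\W_\Th$-calibration regret, so it suffices to design an online algorithm driving both to $O(\sqrt{T\log(|\H||\L|T)})$ in expectation. To handle the continuous threshold parameter, first discretize to a grid $\Theta_\eps = \{0,\eps,2\eps,\dots,1\}$ with $\eps = 1/T$; since $\Th_\theta(p)$ only depends on whether $p<\theta$, the aggregate discretization loss in the $\W_\Th$-calibration regret is at most $O(\eps T) = O(1)$.

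I then instantiate a Blackwell approachability game whose per-round payoff vector $\vec u(p,x,y)\in\R^d$ has entries
\[
\vec u_{\ell,h,s}(p,x,y) = s\,\Delta\ell\circ h(x)\,(y-p), \qquad \vec u_{\theta,s}(p,x,y) = s\,\Th_\theta(p)\,(y-p),
\]
indexed by $(\ell,h,s)\in\L\times\H\times\{\pm 1\}$ and $(\theta,s)\in\Theta_\eps\times\{\pm 1\}$, so that $d = 2|\L||\H| + 2|\Theta_\eps|$. Every coordinate lies in $[-1,1]$, and the $\pm$ indexing turns sup-over-absolute-value into plain sup, so the maximum coordinate of $\sum_t \vec u(p_t,x_t,y_t)$ upper bounds the larger of the multiaccuracy and proper-calibration regrets. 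The Abernethy-Bartlett-Hazan framework then converts driving this maximum to zero into online linear optimization over the simplex: maintain $\vec w^t\in\Delta_d$ via Hedge on losses $-\vec u_i(p_{t-1},x_{t-1},y_{t-1})$, and at round $t$ play a randomized prediction $P_t$ satisfying the halfspace oracle
\[
\max_{y\in\{0,1\}}\ \E_{p\sim P_t}\!\bigl[\vec w^t \cdot \vec u(p,x_t,y)\bigr] \le 0.
\]
The standard Hedge analysis combined with $\|\vec u\|_\infty\le 1$ yields $\E\!\bigl[\max_i \sum_t \vec u_i(p_t,x_t,y_t)\bigr] = O(\sqrt{T\log d}) = O(\sqrt{T\log(|\H||\L|T)})$.

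The main obstacle, and the step I would write most carefully, is implementing the halfspace oracle. Factoring the dual product gives $\vec w^t\cdot\vec u(p,x_t,y) = f(x_t,p)\cdot(y-p)$ for $f$ as in~\eqref{eq:finite-f-tech}. Crucially, $p\mapsto f(x_t,p)$ is piecewise constant on the grid $\Theta_\eps$: the multiaccuracy contributions are independent of $p$, and each threshold term only jumps at $p=\theta$. Sweeping $p$ across the grid, distinguish two cases. If $f(x_t,\cdot)$ retains a single sign, take $P_t$ to be the deterministic endpoint ($P_t = 1$ when $f\ge 0$, $P_t = 0$ when $f<0$), which forces $f(x_t,p)(y-p)\le 0$ for every $y\in\{0,1\}$. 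Otherwise, locate adjacent grid points $p_-<p_+$ with $f(x_t,p_-)\le 0\le f(x_t,p_+)$ and play $p_-$ with probability $q = f(x_t,p_+)/(f(x_t,p_+)-f(x_t,p_-))$ and $p_+$ otherwise. By choice of $q$, $\E_{p\sim P_t}[f(x_t,p)]=0$, and a direct calculation gives $\E_{p\sim P_t}[f(x_t,p)\,p] = q\,f(x_t,p_-)(p_--p_+)\ge 0$, whence $\E[f(x_t,p)(y-p)] = -\E[f(x_t,p)\,p] \le 0$ for both $y\in\{0,1\}$. The subtle point is that $f$ can jump across zero discontinuously at a threshold, so no intermediate-value argument applies and the randomization between $p_-$ and $p_+$ is what the discontinuity forces; once the oracle is verified, combining it with the Hedge regret and the $O(1)$ discretization slack yields the claimed bound.
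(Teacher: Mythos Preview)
Your approach is the paper's direct Blackwell construction (Section~\ref{sec:intro:blackwell}), and the architecture is right. The gap is in the halfspace oracle: you assert that whenever $f(x_t,\cdot)$ takes both signs there exist \emph{adjacent} grid points $p_-<p_+$ with $f(p_-)\le 0\le f(p_+)$, but $p\mapsto f(x_t,p)$ is not monotone---the signed threshold weights $w_{\theta,\pm}$ let it jump in either direction at each grid point---so the only adjacent sign change available may be of the form $f(q_i)\ge 0\ge f(q_{i+1})$ with $q_i<q_{i+1}$. In that case your own computation gives $\E[f\,p]=q\,f(q_i)(q_i-q_{i+1})\le 0$ and hence $\E[f(y-p)]=-\E[f\,p]\ge 0$, the wrong direction. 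A concrete instance: grid $\{0,\tfrac12,1\}$ with $f(0)=1$, $f(\tfrac12)=f(1)=-1$; neither endpoint rule fires ($f(0)>0$ and $f(1)\le 0$), and the unique adjacent sign change is $(+,-)$, yielding $\E[f(y-p)]=\tfrac14>0$.

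The paper's remedy (Lemma~\ref{lem:tcal_halfspace}, and analogously Lemma~\ref{lem:precal-halfspace}) is to take \emph{any} adjacent sign-changing pair and accept $\E_{p_t}[\langle w,u\rangle]\le q_{i+1}-q_i=1/T$ per round rather than $\le 0$; the extra $O(1)$ over $T$ rounds is absorbed in the final bound. A secondary point: your discretization claim ``aggregate loss $O(\eps T)=O(1)$'' is not justified by the reasoning you give (moving a single threshold $\theta$ by $\eps$ can flip $\Th_\theta(p_t)$ for \emph{every} $t$ if the $p_t$ cluster in that interval, costing $\Theta(T)$), but since your oracle only ever outputs predictions on the $1/T$-grid, Lemma~\ref{lem:eps-thres} gives zero discretization loss directly, so the conclusion survives.
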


\subsubsection{Oracle-Efficient Online Omniprediction}
\label{sec:intro:online}

Of course, the framework described above suffers from tracking weights for each multiaccuracy constraint explicitly.
To achieve omniprediction more efficiently, or for infinite loss/hypothesis classes, we need a more sophisticated approach.
To achieve these goals, we must generalize the algorithmic approach to avoid explicit dependence on a finite class of loss functions $\L$ and hypotheses $\H$. We will do this using an \emph{Online Weak Agnostic Learner} for the class $\Delta \Lcal \circ \Hcal$, introduced in  \citep{chen2012online,brukhim2020online,beygelzimer2015optimal}.

Our online algorithm for achieving $(\L,\H)$-omniprediction consists of an interaction between two sub-algorithms.
\begin{itemize}
    \item The first algorithm is an \owallong (\owal).
    The \owal is responsible for producing a sequence of adaptively-chosen functions $q_1,\hdots,q_T:\X \to [-1,1]$ such that enforcing a sequential multiaccuracy condition with respect to the $q_t$ implies low multiaccuracy regret with respect to all of $\Delta \L \circ \H$.
    This online sparsification task is reminiscent of ``scaffolding sets'' problem, studied by \citet{burhanpurkar2021scaffolding} in the offline setting.
    \item The second algorithm is an \pcalalgboth.  The \pcalalg produces a sequence of prediction functions $p_t$ with low proper calibration regret on the sequence of $(x_t,y_t)$; simultaneously, it enforces multiaccuracy with respect to the sequence of tests provided by the \owal.
\end{itemize}
In more detail, these sub-algorithms satisfy the following semantics.

\paragraph{\owallong.}
We use the \owal abstraction to produce a sequence of functions such that $\set{q_t}$-multiaccuracy 
implies online $\Delta \L \circ \H$-multiaccuracy.
Specifically, the \owal solves the following online learning task:
identify a sequence of functions $q_1,\hdots,q_T:\X \to [-1,1]$ whose sequential multiaccuracy with respect to the sequence $p_t$ is at least as large as the multiaccuracy violation of every $c \in \Delta \L \circ \H$ in hindsight:
\begin{gather}
    \max_{c \in \Delta \L \circ \H} \sum_{t=1}^T c(x_t)(y_t - p_t(x_t)) \le \sum_{t = 1}^T q_t(x_t)(y_t-p_t(x_t)) + \owalreg_{\Delta \L \circ \Hcal}
\end{gather}
In other words, auditing against the sequence of $T$ different $q_t$ test functions (one at each time step) guarantees that the overall multiaccuracy regret is bounded.

\paragraph{\pcalalglong.}
The \pcalalg algorithm takes in a sequence of data $(x_1,y_1),\hdots,(x_T,y_T)$, and is responsible for producing a sequence of predictor functions $p_1,\hdots,p_T:\X \to [0,1]$ 
that satisfy proper calibration over the given sequence.
Additionally, the proper calibrator's input is augmented to receive a sequence of functions 
$q_1,\hdots,q_T:\X \to [-1,1]$, and is responsible for simultaneously ensuring a sequential multiaccuracy guarantee with respect to these functions. In all, the \pcalalg is required to guarantee the following regret bounds on its sequence.
\begin{align}\label{eq:apcal_guarantee}
\sup_{\theta \in [0,1]} \left| \sum_{t=1}^T \Th_\theta(p_t (x_t)) (y_t - p_t (x_t)) \right| \le \tilde{O}\left(\sqrt{T}\right) & \qquad \mbox{and} \qquad
    \sum_{t=1}^T q_t(x_t)(y_t - p_t(x_t)) \le \tilde{O}\left(\sqrt{T}\right)
\end{align}
Our implementation of the \pcalalg algorithm follows a similar approach to the finite omniprediction algorithm described earlier.
As before, the algorithm runs multiplicative weights over each proper calibration constraint, indexed by the thresholds $\theta \in \{0,\eps,\hdots,1\}$.
In contrast, however, rather than maintaining a weight per multiaccuracy constraint, our \pcalalg maintains a \emph{single} weight for multiaccuracy.
This weight determines the importance (and is updated according to violations) of the sequence of $q_t$ functions that it receives from the \owal.

Our eventual algorithm is still based on Blackwell Approachability, but approaches multiaccuracy implicitly through the sparse set of constraints provided by the \owal.
Specifically, in place of the high-dimensional payoff vector $\vec{u}(p,x,y)$ that enforced multiaccuracy constraints in the algorithm sketched above, we substitute the scalar $q_t(x)(y-p)$ where $q_t$ is the output of the \owal at time $t$. (The proper calibration constraints are still enforced using 
a $2T$-dimensional vector $\vec{u}_{\theta,s}(p,x,y)$ as before.) Then,
at each step, our online algorithm bases its decision on a new function $f(x_t, p)$. 
\begin{equation}
f(x_t,p) =  w_{\mathrm{ma}} \cdot q_t (x_t) + \sum_{\theta, s \in \{\pm\}} w^t_{\theta,s} \cdot s \cdot \Th_{\theta}(p)
\end{equation}
Instead of maintaining weights over $|\Hcal| \times |\Lcal| + 2T$ multiaccuracy and proper calibration 
constraints to compute the function as defined in \Cref{eq:finite-f-tech}, 
we have collapsed all of the multiaccuracy constraints into a single dimension represented by the weight $w_{\mathrm{ma}}$, 
thereby reducing the number of weights the algorithm must maintain to $2T+1$.
To the best of our knowledge, this 
method of ``sparsifying'' Blackwell Approachability by outsourcing many dimensions of the payoff vector
to an auxiliary algorithm (in this case, the \owal) that detects constraint violations in those dimensions
is novel and may be of independent interest.

Once we implement each of these components, the online omniprediction guarantee follows immediately.
Combining the guarantees from the \pcalalg and \owal, we obtain an oracle-efficient online learning algorithm with the following properties.
\begin{result}
    There exists an oracle-efficient online algorithm that for any class of loss functions $\L$ and any hypothesis class $\H$, given an online weak agnostic learner for $\Delta \L \circ \H$, guarantees expected $(\L,\H)$-omniprediction regret $O\left(\sqrt{T \log T} +  \owalreg_{\Delta \L \circ \Hcal} (T)\right)$.
\end{result}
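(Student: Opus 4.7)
The plan is to obtain the stated regret bound by composing three ingredients that have already been set up: the proper calibration characterization of Theorem~\ref{result:proper-equiv}, an implementation of \pcalalg meeting both regret bounds in \eqref{eq:apcal_guarantee}, and the assumed \owal for $\Delta \Lcal \circ \Hcal$. The interaction proceeds as described in the preceding subsection: at each round $t$, the \owal produces a test function $q_t:\Xcal \to [-1,1]$ based on the history so far, \pcalalg uses the history together with $(x_t, q_t)$ to output a (possibly randomized) prediction $p_t(x_t) \in [0,1]$, and upon observing $y_t$ both sub-algorithms update their state.

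Given such a \pcalalg, the remaining steps are a chain of inequalities. From \eqref{eq:apcal_guarantee}, in expectation the resulting sequence $\pxy$ satisfies $\wthres\calerr(\pxy) \le \tilde{O}(\sqrt{T})$ and $\sum_{t=1}^T q_t(x_t)(y_t - p_t(x_t)) \le \tilde{O}(\sqrt{T})$. Combining the second bound with the defining \owal guarantee,
\begin{gather*}
\max_{c \in \Delta \Lcal \circ \Hcal} \sum_{t=1}^T c(x_t)(y_t - p_t(x_t)) \;\le\; \tilde{O}(\sqrt{T}) \;+\; \owalreg_{\Delta \Lcal \circ \Hcal}(T),
\end{gather*}
so the sequential $\Delta \Lcal \circ \Hcal$-multiaccuracy error obeys the same bound. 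Now I replay, in the sequential setting, the Loss OI argument underlying Theorem~\ref{result:proper-equiv}: for every $(\ell, h) \in \Lcal \times \Hcal$, Hypothesis OI is controlled by the multiaccuracy error against $\Delta \ell \circ h$, and Decision OI is controlled (up to a factor of $2$) by $\wthres$-calibration via the identity in \eqref{eqn:DecOIasWCal} applied pointwise in $t$. Taking the worst case over $(\ell, h)$ and summing the two OI errors yields expected $(\Lcal, \Hcal)$-omniprediction regret bounded by $3$ times the sum of the two errors above, i.e.\ $O\bigl(\sqrt{T \log T}\bigr) + \owalreg_{\Delta \Lcal \circ \Hcal}(T)$.

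The main obstacle is showing that \pcalalg can indeed be implemented to meet both halves of \eqref{eq:apcal_guarantee} simultaneously. The plan here is a Blackwell approachability game whose payoff vector has $2T$ coordinates for the signed threshold-calibration constraints (one for each discretized $\theta$ and sign $s\in\{\pm\}$) and a \emph{single} additional coordinate for the aggregated multiaccuracy constraint, giving rise to the weighted function
\begin{gather*}
f(x_t, p) \;=\; w_{\mathrm{ma}} \cdot q_t(x_t) \;+\; \sum_{\theta,\, s \in \{\pm\}} w^t_{\theta, s} \cdot s \cdot \Th_\theta(p),
\end{gather*}
with weights updated by multiplicative weights over the coordinates. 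The forecaster's halfspace oracle randomizes between adjacent predictions $p, p'$ where $f(x_t, \cdot)$ changes sign, so that the signed error incurred by either $y_t \in \{0,1\}$ cancels in expectation. The subtlety is that $q_t$ is chosen adaptively by \owal with access to $p_t$'s distribution, but this causes no circularity: the scalar $q_t(x_t)$ is known to the forecaster at decision time and contributes only an additive constant to $f(x_t, \cdot)$, leaving intact the monotonicity structure in $p$ induced by the threshold terms on which the oracle relies. The resulting approachability bound scales as $\sqrt{T \log T}$ because the payoff vector lives in $O(T)$ dimensions, and projecting onto the threshold coordinates versus the multiaccuracy coordinate yields the two inequalities of \eqref{eq:apcal_guarantee} respectively.
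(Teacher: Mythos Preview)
Your approach is essentially the same as the paper's: compose \pcalalg (implemented via a $(2T{+}1)$-coordinate approachability game with multiplicative weights and a sign-crossing halfspace oracle) with the \owal, then chain the proper-calibration and multiaccuracy bounds through \Cref{lem:pcal-to-omni}. The decomposition, the structure of the payoff vector, and the chain of inequalities all match the paper's \Cref{thm:online-omni} and \Cref{lem:proper-recal}.

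There is, however, one subtlety you misidentify. You flag the concern that ``$q_t$ is chosen adaptively by \owal with access to $p_t$'s distribution'' and argue it is harmless because $q_t(x_t)$ is a constant in $f(x_t,\cdot)$. That observation is correct for the \pcalalg side, but it is not the real issue. The actual problem is on the \owal side: the \owal's regret guarantee (\Cref{def:owal}) is stated under the \emph{simultaneous-move} protocol, where the label fed to the \owal at round $t$ must not depend on the realized sample $q_t \sim Q_t$. But the label you feed is $y_t - p_t(x_t)$, and $p_t = \monoremap_t \circ q_t$ depends on the realization $q_t$. So you are using the \owal in a \emph{delayed-label} setting, and its guarantee does not directly apply. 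The paper closes this gap via $k$-fold resampling (\Cref{lem:resampling}): it draws $k = O(T\log(T/\delta))$ independent samples from $Q_t$ and uses their average $\bar q_t$, so that by Hoeffding $\bar q_t(x_t)$ concentrates around $\E_{q_t \sim Q_t}[q_t(x_t)]$ uniformly over $t$, at which point the SM guarantee transfers with an extra $O(\sqrt{T\log(T/\delta)})$ term. Without this (or an equivalent argument), the chain from the \owal bound to the multiaccuracy bound is not justified.
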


\paragraph{Basis Decompositions for Infinite Loss Classes.}

As in prior works, given an infinite class of loss functions $\L$, we obtain omniprediction by designing a basis $\G$ that allows us to get uniform approximations to $\Delta \ell \in \Delta\L$.
Then, using this basis in place of $\Delta \L$, we obtain omniprediction via a weak learner for $\G \circ \H$.

Using and refining bases developed by \cite{gopalan2022loss,gopalan2024regression}, we give online omniprediction guarantees for notable classes of loss functions.
We define and construct these bases for loss classes formally in \Cref{sec:online_omni}, but give a high-level summary here.
Concretely, we obtain omniprediction guarantees from \owal as outlined in \Cref{table:owals}.
\begin{table}[h!]
    \centering
    \begin{tabular}{c|c}
         {\bf Loss Class}& {\bf OWAL Oracle} \\
         \hline
         GLM Losses $\L_\mathrm{GLM}$ & $\H$\\
         $1$-Lipschitz, Convex $\lcvx$& $\ReLU^{1/T} \circ \H$\\
         $1$-Lipschitz $\llip$& $\Th^{1/T} \circ \H$\\
         Proper $\lprop$& {$\Th \circ \H$} \\
         Bounded Variation $\lbv$ & {$\Th \circ \H$}
    \end{tabular} 
    \caption{Summary of Loss Classes $\L$ and the associated Online Weak Agnostic Learning Oracle sufficient to achieve $(\L,\H)$-omniprediction for hypothesis class $\H$ via \Cref{alg:online-omni}.}
    \label{table:owals}
\end{table}

A few comments are in order.
First, notationally, for two classes of functions, we use $\G \circ \H$ to refer to the class of compositions $\{g \circ h : g \in \G,\ h \in \H\}$.
The oracles we need are for hypothesis classes derived by composition with $\H$.
Specifically, we consider applying ReLU's and threshold functions.
\begin{gather*}
\ReLU = \{\ReLU_\theta(\cdot): \theta \in [0,1]\} \text{ where } \ReLU_\theta(p) = \max\{0, p - \theta\}\\
\Th = \{\Th_\theta(\cdot) : \theta \in [0,1]\} \text{ where } \Th_\theta(p) = \sgn(\theta-p).
\end{gather*}
Importantly, while these classes are defined for $\theta \in [0,1]$, we use the superscript notation to denote a fixed precision.
For instance, $\Th^{\gamma}$ considers threshold functions $\Th_\theta$ for $\theta \in \{0,\gamma,2\gamma,\hdots,1\}$.

We obtain omniprediction for Lipschitz, convex loss functions leveraging an \owal for (fixed-precision) ReLU's over hypotheses in $\H$.
Notably, this oracle is different (and indeed a stronger oracle) than that used by \cite{gopalan2021omnipredictors} to obtain $(\lcvx,\H)$-omnipredictors; indeed, $\H$-multicalibration is sufficient for convex omniprediction and can be obtained using a weak agnostic learner for $\H$.
But as discussed, this strategy is statistically less efficient.
Per \citep{gopalan2022loss}, we can achieve omniprediction for Generalized Linear Model losses using a learner for $\H$ (because $\Delta \L_\mathrm{GLM} \circ \H = \H$).

The remaining loss classes considered use an oracle for some class of threshold functions over $\H$.
Omniprediction for the weakest of these classes, Lipschitz losses, can be achieved using an online learning oracle for fixed-precision threshold functions.
While bounded variation losses (and proper losses which are a subset $\lprop \subseteq \lbv$) also follow from an oracle for thresholds applied to $\H$, the basis necessary to obtain uniform approximations of $\ell \in \lbv$, is actually uncountably infinite.
Indeed, the basis consists of a V-shaped loss for \emph{every} $v \in [0,1]$, not simply a discrete finite approximation of the interval.
Thus, in principle, we require an oracle for learning arbitrarily precise thresholds over $h \in \H$.

\subsubsection{Near-Optimal Offline Omnipredictors}
\label{tech:offline}
\label{sec:intro:offline}

Online-to-offline conversion is a classic tool from learning theory:  given a data set sampled from the target distribution $\D$, simulate the online learner on the samples (in arbitrary order); then, output the uniform mixture over predictions suggested at each ``timestep'' of the online algorithm.
With a near-optimal online algorithm for omniprediction regret, one may hope that a corresponding distributional learning algorithm would follow from a standard online-to-offline conversion.
Running such a conversion naively, however, results in an \emph{inefficient} omniprediction algorithm, in a number of ways.
\begin{itemize}
    \item \textbf{Representation Complexity.}~
    ~To achieve online omniprediction unconditionally (without assuming an oracle), our approach maintains explicit weights for each hypothesis $h \in \H$.
    Converting this online algorithm into an offline omnipredictor produces a circuit of enormous complexity, scaling linearly in $\card{\H}$.
    Ideally, our offline algorithm would prove the existence of efficient omnipredictors for all hypothesis classes.
    \item \textbf{Statistical Complexity.}~The regret bounds we achieve are near-optimal in the online setting, but naturally, depend on the \emph{sequential dimension} of the class $\Delta \L \circ \H$.
    The sequential dimension can be arbitrarily (even infinitely) larger than the statistical dimension (e.g., Littlestone vs.\ VC).
    In the standard conversion of the online learner, we inherit the dependence on the sequential dimension.
    \item \textbf{Computational Complexity.}~Finally, in aiming for computational efficiency, our online algorithm relies on an online weak agnostic learner.  To learn omnipredictors in the distributional setting, we may hope to reduce the task of learning offline omnipredictors to an empirical risk minimizer.
\end{itemize}
We address each of these inefficiencies.
In particular, we develop two parallel strategies for adapting our online algorithm to the distributional setting.
The first approach, which runs multiplicative weights over a finite cover of $\Delta \L \circ \H$, is computationally-inefficient, but achieves near-optimal statistical complexity \emph{for every} class of loss functions $\L$.
The second approach manages to achieve efficiency based on an (offline) Empirical Risk Minimizer (ERM), while achieving statistical optimality for the class of proper losses $\lprop$ (or alternatively all bounded variation losses $\lbv$).
Technically, however, this result is incomparable to our first approach.
Our generalization argument leverages properties of the class $\Th \circ \H$, so we pay for the Rademacher complexity of $\Th \circ \H$ even for loss classes that have a much simpler $\Delta \L \circ \H$ (e.g., convex losses $\lcvx$).

\paragraph{Statistically-Efficient Offline Omnipredictors.}
To begin, in \Cref{sec:offline-omni}, we give a conversion strategy that addresses the representation complexity and statistical complexity.
While online-to-batch conversions are standard in learning theory, bounding the resulting error in our setting turns out to be very subtle.
As usual, the distributional error the offline omnipredictor suffers can be broken down in terms of the regret achieved by the online learner and the generalization of the empirical statistics to their distributional analogues.
We discuss each contribution separately.

To give a sufficiently strong bound on the regret, we need to design an online learner for omniprediction that, when fed data from the distribution $\D$, achieves error that scales with the statistical
complexity, rather than the sequential complexity of the derived class.
For this goal, we draw from the literature on \emph{Online Hybrid Learning}.
In this framework, features are drawn i.i.d. from a distribution, while the labels may be adversarially chosen. Similarly, in our offline setting, although the features are i.i.d., the labels $y_t - p_t(x_t)$ are adaptively chosen since the $p_1, \ldots, p_T$ are constructed in an online manner. \citet{wu2022expectedworstcaseregret} and \citet{lazaric2009hybrid} demonstrate that applying a multiplicative weights algorithm over a "stochastic cover" of the hypothesis class yields an online hybrid learner with near-optimal dependence on the offline dimension of the hypothesis class for any convex, Lipschitz loss. 
Since the loss in our \owal is convex and 1-Lipschitz, the statistical result follows immediately.
While this strategy requires an explicit (inefficient) execution of multiplicative weights over the ``stochastic cover'' of hypotheses, the representation complexity arises from the fact that
at each timestep, a single hypothesis is sampled.
In all, the online-to-offline conversion outputs a distribution supported on $T = \poly(1/\eps)$ (postprocessed) hypotheses.

With an algorithm that allows us to achieve low regret on the empirical statistics, we need to prove generalization.
Recall that in the online setting, our strategy was to bound omniprediction regret above by proper calibration error plus
multiaccuracy error. However, upper bounds on multiaccuracy error in the sequential setting don't generalize readily to 
the distributional setting, because the ``labels'' used to define multiaccuracy error are the residuals, $y_t - p_t(x_t)$,
rather than the raw labels $y_t$ themselves. Hence, even when the training set $\{(x_t,y_t)\}_{t=1}^T$ consists of i.i.d.\ draws 
from $\Dcal$, the residuals $y_t - p_t(x_t)$ depend on an entire initial segment of the training set
because $p_t$ is trained using an online algorithm that sees the examples $(x_1,y_1),\ldots,(x_{t-1},y_{t-1})$.
This statistical adaptivity makes it very difficult (if not impossible) to prove that the multiaccuracy error converges to its distributional quantity.

To circumvent this difficulty, rather than proving separate generalization bounds for proper calibration error
and multiaccuracy error, we prove generalization bounds for the statistics that are actually essential for omniprediction.
Namely, we give generalization for the two expected losses that appear in the definition of omnipredictors: the loss of the best hypothesis in $\Hcal$ and the loss obtained by postprocessing
the omnipredictor's predictions using $k_{\ell}$. The first of these two losses has no dependence on the
sequence of predictors selected by our algorithm, so its generalization bound 
follows by standard Rademacher complexity arguments.
The second generalization bound
requires more care. Here, we make use of the fact that loss of the post-processed predictions, 
$\ell(k_{\ell}(p(x)),y)$, is equivalent to a proper loss $\ell_{\mathrm{proper}}(p(x),y)$, which in turn 
is a weighted combination of ``V-shaped losses'' as in \citep{li2022optimization,kleinberg2023u}. 
In fact, since our omnipredictor always outputs predictions in the
discrete set $[1/T] = \{0, 1/T, 2/T, \ldots, 1\}$, we argue that we 
only need to prove generalization for each of the $T\!+\!1$ V-shaped
losses indexed by this set. This set of losses is small enough that
a simple martingale argument suffices to conclude the generalization bound.

Ultimately, after running our online-to-offline conversion, the algorithm returns a randomized predictor, constituting a distribution over $\poly(1/\eps)$ deterministic predictors each obtained by postprocessing the output of a hypothesis from $\H$.
Throughout the manuscript, we abbreviate this property by writing that the predictor ``mixes over $\poly(1/\eps)$ postprocessed hypotheses from $\H$.''
In all, we derive the following theorem, which holds for any $\H$ unconditionally.

\begin{result}
\label{result:offline}
    There exists an algorithm $\A$ that for any distribution $\D$ supported on $\X \times \{0,1\}$, for any class of loss functions $\L \subseteq \lbv$, any hypothesis class $\H$, and $\eps > 0$, learns an $(\L,\H,\eps)$-omnipredictor with the following properties:
    \begin{itemize}
        \item $\A$ returns a randomized omnipredictor that mixes over $\poly(1/\eps)$ postprocessed hypotheses from $\H$.
\item $\A$ uses $m \le \tilde{O}(d_{\Delta \Lcal \circ \Hcal}/\eps^2)$ samples drawn i.i.d.\ from $\D$, where $d_{\Delta \L \circ \H}$ denotes the VC dimension of $\Delta \L \circ \H$ or the fat-shattering dimension at scale $\eps$ in the case of a real-valued class.
\end{itemize}
\end{result}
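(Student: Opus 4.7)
The plan is to instantiate an online-to-offline conversion of the oracle-efficient online algorithm described in Section~\ref{sec:intro:online}, but with two important modifications: we must realize the online weak agnostic learner (\owal) in a way that yields regret scaling with the \emph{offline} complexity of $\Delta \L \circ \H$ (rather than its sequential complexity), and we must prove generalization \emph{directly} for the two loss expectations that define omniprediction, rather than generalizing proper calibration error and multiaccuracy error separately.

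First, I would draw $T = \poly(1/\eps)$ i.i.d.\ samples $(x_1,y_1),\ldots,(x_T,y_T) \sim \D$ and feed them to the online omnipredictor in an arbitrary order. As the \owal component, I would use a multiplicative weights algorithm over a ``stochastic cover'' of the derived class $\Delta \L \circ \H$, in the style of \citet{wu2022expectedworstcaseregret} and \citet{lazaric2009hybrid}. Since the internal surrogate in the \owal is convex and $1$-Lipschitz, their hybrid-online-learning guarantees yield $\owalreg_{\Delta \L \circ \H}(T) \le \tilde{O}(\sqrt{T \cdot d_{\Delta \L \circ \H}})$ in expectation over the sample, where $d_{\Delta \L \circ \H}$ is the VC (or scale-$\eps$ fat-shattering) dimension of the derived class. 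Combined with the $\tilde{O}(\sqrt{T})$ proper calibration regret guaranteed by the \pcalalg, this yields an expected empirical omniprediction regret of $\tilde{O}(\sqrt{T \cdot d_{\Delta \L \circ \H}})$.

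The main obstacle is the generalization step. A naive approach would try to generalize empirical multiaccuracy error, but the ``labels'' $y_t - p_t(x_t)$ are not i.i.d.---the iterate $p_t$ depends adaptively on the initial segment $(x_1,y_1),\ldots,(x_{t-1},y_{t-1})$---so standard uniform convergence fails. To circumvent this, I would bound the distributional omniprediction error directly by controlling the two loss expectations that appear in its definition: $\E_\D[\ell(h(x),y)]$ for the competitor $h \in \H$, and $\E_\D[\ell(k_\ell \circ p(x),y)]$ for the learned predictor. The first has no dependence on the online iterates, so it follows from a standard Rademacher complexity bound on $\L \circ \H$, controlled by $d_{\Delta \L \circ \H}$. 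For the second, I would use the fact that $\ell(k_\ell(v),y)$ coincides with a proper loss $\ell_\mathrm{proper}(v,y)$, which by the Savage representation decomposes as a nonnegative weighted combination of the V-shaped losses used in \citep{li2022optimization,kleinberg2023u}. Crucially, since the mixed predictor produced by the conversion always outputs values in the discrete grid $\{0, 1/T, 2/T, \ldots, 1\}$, it suffices to prove generalization for only the $T+1$ V-shaped losses indexed by this grid. A martingale concentration argument over this finite set of statistics---exploiting that each V-shape evaluated along the sequence forms a bounded martingale difference---controls the total deviation by $\tilde{O}(1/\sqrt{T})$.

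Finally, the online-to-offline conversion outputs the uniform mixture over the $T$ predictors $p_1,\ldots,p_T$ produced during the online execution; because each $p_t$ is obtained by post-processing a single hypothesis sampled from the current multiplicative weights distribution over the stochastic cover of $\Delta \L \circ \H$, the resulting randomized omnipredictor mixes over $T = \poly(1/\eps)$ post-processed hypotheses from $\H$, as claimed. Setting $T = \tilde{\Theta}(d_{\Delta \L \circ \H}/\eps^2)$ balances the two $\tilde{O}(\sqrt{d/T})$ terms (empirical regret and generalization) at $\eps$, giving the stated sample complexity.
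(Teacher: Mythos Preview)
Your proposal is correct and follows essentially the same approach as the paper: the hybrid-online-learning instantiation of the \owal via a stochastic cover (Corollary~6.3, citing \citet{wu2022expectedworstcaseregret,lazaric2009hybrid}), the direct generalization of the two loss terms rather than of multiaccuracy and proper calibration separately, the V-shaped decomposition restricted to the discrete prediction grid $[1/T]$ followed by a martingale/union-bound argument (Lemma~6.4), and the Rademacher bound for the competitor loss (Lemma~6.5) are exactly the ingredients the paper uses in Section~\ref{sec:offline-omni}. The only minor imprecision is the phrase ``post-processing a \emph{single} hypothesis'': because the \owal is used in the delayed-label regime, the paper's algorithm applies $k$-fold resampling (Lemma~\ref{lem:resampling}), so each $p_t$ post-processes an average of $k=\tilde{O}(T)$ sampled hypotheses rather than one --- but this still keeps the total count $\poly(1/\eps)$, so your representation-complexity claim stands.
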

As in the online case, we can instantiate the result using an appropriate basis in place of $\Delta \L$.
As such, for each of the loss-hypothesis class pairs highlighted in \Cref{table:owals}, the distributional algorithm depends on the corresponding
offline statistical complexity of the class.

In other words, even though the result is computationally-inefficient, it establishes a near-optimal statistical complexity that adapts to the complexity of the loss class $\L$.
If better bases are developed for $\Delta \L$, the statistical dependence of the result will adapt to depend on the complexity of the resulting class.

\paragraph{Oracle-Efficient Offline Omnipredictors.}
While the offline algorithm described above 
is sample-efficient and representation-efficient, it suffers in one respect:  it is computationally-inefficient and runs in time linear in the number of hypotheses.
Addressing this inefficiency, in \Cref{sec:oracle-omni}, we devise an offline algorithm to learn efficient omnipredictors with respect to any loss class $\Lcal \subseteq \lbv$ and hypothesis class $\Hcal$ that
is computationally-efficient given an ERM oracle for the hypothesis class $\Th \circ \Hcal$.
In all, we show the following theorem.
\begin{result}[Oracle-Efficient Omniprediction]
\label{res:offline_bv}
    There exists an oracle-efficient algorithm $\A$ that for any distribution $\D$ supported on $\X \times \{0,1\}$, for any class of loss functions $\L \subseteq \lbv$, any hypothesis class $\H$, and $\eps > 0$, learns an $(\L,\H,\eps)$-omnipredictor with the following properties:
    \begin{itemize}
        \item $\A$ returns a randomized omnipredictor that mixes over $\poly(1/\eps)$ postprocessed hypotheses from $\H$.
        \item Given $m$ samples, the error of the omnipredictor returned by $\A$ scales with the \emph{offline} Rademacher complexity $\eps(m) = \tilde{O} \left( \mathsf{rad}_m(\Ccal) + \mathsf{rad}_m(\Th \circ \mathrm{conv}(\Ccal)) \right) + O \left(\sqrt{\ln m / m} \right)$ where $C$ is the negative closure of $\Th \circ \Hcal$.
\item $\A$ is oracle-efficient, making $\poly(1/\eps)$ calls to an \emph{offline} ERM oracle for $\Th \circ \H$.
    \end{itemize}
\end{result}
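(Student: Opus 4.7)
The plan is to combine the oracle-efficient online algorithm underlying the preceding result with a careful online-to-batch conversion, implementing the \owal for $\Delta \L \circ \H$ using only offline ERM calls to $\Th \circ \H$. Since $\L \subseteq \lbv$, the table preceding Section~\ref{sec:intro:offline} certifies that an \owal oracle over $\Th \circ \H$ is sufficient, so the entire outer algorithm is the one already constructed; only the realization of its oracle needs to be reworked to exploit offline ERM.

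Concretely, I would feed the i.i.d.\ sample $(x_1,y_1),\ldots,(x_m,y_m) \sim \D^m$ into the online algorithm in arbitrary order, running it for $T = \poly(1/\eps)$ steps. At each timestep $t$, the \pcalalg sub-algorithm requests a test function $q_t$ from the \owal. I would implement the \owal by running multiplicative weights (or FTRL) over a stochastic cover of $\Th \circ \H$ induced by the sample, following the online hybrid learning framework of \citet{wu2022expectedworstcaseregret,lazaric2009hybrid}: features are i.i.d.\ while the residual labels $y_t - p_t(x_t)$ are adaptively chosen, which is exactly our setting because $p_t$ is produced online. The key reduction is that each best-response query against the current weighting reduces to a single reweighted ERM call on $\Th \circ \H$, so the total oracle budget remains $\poly(1/\eps)$. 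The output $q_t$ is then a weighted average of hypotheses from $\Ccal$, i.e., an element of $\mathrm{conv}(\Ccal)$, which is also the reason the second Rademacher term in the bound involves $\mathrm{conv}(\Ccal)$ rather than $\Ccal$ itself.

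After producing the online sequence $p_1,\ldots,p_T$, I would output the uniform mixture $\bar{p}$, which is supported on $\poly(1/\eps)$ postprocessed hypotheses (each $p_t$ is a thresholding of $q_t$ together with the proper-calibration weights). To convert empirical into distributional error, I would mirror the two-part generalization argument from Result~\ref{result:offline}. First, the hypothesis-side term $\min_{h \in \H} \E[\ell(h(x),y)]$ converges to its empirical counterpart at rate $\tilde{O}(\mathsf{rad}_m(\Ccal))$, using the fact that for $\ell \in \lbv$ the function $\Delta \ell \circ h$ is a bounded-variation integral of thresholds of $h$, so a standard Rademacher / contraction argument against the class $\Ccal = \pm(\Th \circ \H)$ suffices. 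Second, the post-processed loss $\E[\ell(k_\ell \circ \bar{p}(x),y)] = \E[\ell_\mathrm{proper}(\bar{p}(x),y)]$ is expressed as a mixture of V-shaped losses, each of which reduces to a moment of the form $\E[\Th_\theta(\bar{p}(x))(y-c)]$. Because $\bar{p}$ is a thresholding of a function from $\mathrm{conv}(\Ccal)$, the relevant class is $\Th \circ \mathrm{conv}(\Ccal)$; discretizing $\bar{p}$ to the grid of size $O(T)$ reduces uniformity to a finite collection, producing the additive $O(\sqrt{\ln m / m})$ term.

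The hardest step, I expect, is the second generalization bound: because $\bar{p}$ depends on the entire training set through the online updates, fresh-sample concentration does not apply off the shelf. The fix is to bound the deviation uniformly over the data-dependent class $\Th \circ \mathrm{conv}(\Ccal)$, exploiting that proper calibration localizes the analysis to level sets of $\bar{p}$ and that the prediction range is discretized to $\poly(1/\eps)$ values, so a martingale-plus-covering argument goes through. A secondary obstacle is ensuring that the hybrid-online \owal truly reduces to $\poly(1/\eps)$ ERM calls rather than to a naive enumeration over the stochastic cover; this relies on the linearity of the multiaccuracy objective in the hypothesis, so that each multiplicative-weights step's best response is itself a single ERM query on a reweighted copy of the sample.
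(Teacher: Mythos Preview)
Your proposal has a genuine gap at exactly the point you flag as a ``secondary obstacle'': implementing the \owal oracle-efficiently. You propose to run multiplicative weights over the stochastic cover of \citet{wu2022expectedworstcaseregret} and claim that ``each multiplicative-weights step's best response is itself a single ERM query on a reweighted copy of the sample.'' But multiplicative weights is not a best-response method; it maintains and updates exponential weights over \emph{every} expert in the cover, whose size is exponential in the dimension. Replacing the weight update by a single best-response would give Follow-the-Leader, which does not enjoy the required regret guarantee. The paper states this explicitly in \Cref{tech:offline}: making hybrid online learning simultaneously oracle-efficient and sample-efficient is a long-standing open problem, and the algorithm of \citet{wu2022expectedworstcaseregret} that you invoke is precisely the non-oracle-efficient one (it is what underlies \Cref{result:offline}, not \Cref{res:offline_bv}).

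The paper's route around this is different from anything in your plan. It replaces the \owal with a new interface, the \dowalboth, whose regret guarantee is \emph{distributional} rather than sequential (\Cref{eqn:dowal}). This lets the algorithm work only with the empirical distribution $D_{\mathrm{dowal}}$ on $m$ held-out points, so each hypothesis $c$ is represented by a vector in $[-1,1]^m$ and the regret-minimization problem becomes $m$-dimensional online linear optimization over the convex hull of these vectors. The paper then runs FTRL with an entropy regularizer in this finite-dimensional space; the strong convexity of the regularizer means each step is a constrained strongly convex minimization, which is solved approximately by Frank--Wolfe, and it is the Frank--Wolfe \emph{linear-minimization} step that reduces to a single ERM call (\Cref{alg:frank-wolfe}, \Cref{lem:frank-wolfe}). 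The $O(\sqrt{T})$ regret is dimension-independent because the entropy regularizer and the loss vectors are bounded coordinatewise. A second ingredient you omit is sample splitting: the \pcalalg and \dowal see disjoint halves $D_{\mathrm{apcal}}$ and $D_{\mathrm{dowal}}$, which is what preserves the martingale structure needed for the distributional proper-calibration bound while allowing a separate uniform-convergence argument for the \dowal side (\Cref{lem:rad-postprocess}). Your single-sample online-to-batch conversion, with $p_t$ depending on all of $(x_1,\ldots,x_t)$, does not decouple these two analyses.
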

\sloppy Our oracle-efficient offline algorithm is an adaptation of the online algorithm for achieving $(\L,\H)$-omniprediction, which simultaneously maintains low proper calibration regret and low multiaccuracy regret. Recall that our online algorithm consists of an interaction between two sub-algorithms: the \pcalalgboth and the \owallong (\owal).
One way to achieve computational efficiency would be to make the \owal computationally efficient with respect to an ERM oracle without losing sample efficiency. However, the problem of designing hybrid online learning algorithms (of which the \owal is a special case) that are both oracle-efficient and sample-efficient has remained an open problem for many years. 
We bypass this challenge of implementing an oracle-efficient \owal by introducing a different interface, a \dowallong (\dowal), to replace the \owal.
The \dowal is responsible for producing the sequence of functions such that $\set{q_t}$-multiaccuracy 
implies $\Delta \L \circ \H$-multiaccuracy over the distribution $\D$. 
More formally, the \dowal must solve the following hybrid learning task:
identify a sequence of functions $q_1,\hdots,q_T:\X \to [-1,1]$ that 
witness the distributional multiaccuracy error (over $\D$) of the
sequence $p_t$ as effectively as the best
fixed $c \in \Delta \L \circ \H$ in hindsight:
\begin{gather}
    \max_{c \in \Delta \L \circ \H} \sum_{t=1}^T \E_{(x,y) \sim \D}[c(x)(y - p_t(x))] \le \sum_{t = 1}^T \E_{(x,y) \sim \D}[q_t(x)(y-p_t(x))] + \mathrm{Regret_{dowal}}(T) \label{eqn:dowal}
\end{gather}
The key distinction between the \owal and the \dowal is that the \owal provides a guarantee over the online sequence while the \dowal provides a guarantee over the distribution.

Importantly, the oracle-efficient offline algorithm partitions its samples drawn from $\D$ into two sets of size $m = T$, denoted by 
$D_{\mathrm{apcal}}$ and $D_{\mathrm{dowal}}$, and provides each
sub-algorithm only with its half of the data. 
By splitting samples, and designing 
careful interfaces through which the sub-algorithms interact, we are able to maintain the martingale 
difference property we need for generalization of the proper calibration error, while using a uniform
convergence bound to argue generalization for the multiaccuracy tests generated by the \dowal.
In particular, a martingale argument on the \pcalalg guarantee in \Cref{eq:apcal_guarantee} leads to the following distributional guarantee
\begin{align*}
\sup_{\theta \in [0,1]} \left| \E_{(x,y) \sim \Dcal} [\Th_\theta(p_t (x)) (y - p_t (x))] \right| \le \tilde{O}\left(\sqrt{T}\right) & \qquad \mbox{and} \qquad
    \E_{(x,y) \sim \Dcal} [q_t(x)(y - p_t(x))] \le \tilde{O}\left(\sqrt{T}\right)
\end{align*}
Combining this with \Cref{eqn:dowal} allows us to conclude that the uniform mixture $\hat \pb$ over the sequence of predictors $p_1,\hdots,p_T$ is approximately proper calibrated and $\Delta \L \circ \H$-multiaccurate over the distribution $\D$ with error that scales as $\tilde{O}\left(\sqrt{1/T}\right) + \frac{1}{T} \mathrm{Regret_{dowal}}(T)$

\paragraph{Implementing the \dowal via Low-Regret Learning}
The \dowal must achieve the regret guarantee expressed in \Cref{eqn:dowal}.
We implement the \dowal by designing a procedure that satisfies a corresponding regret bound with respect to the \emph{empirical} counterparts of the distributional expected values.
\begin{gather}
    \max_{c \in \Delta \L \circ \H} \sum_{t=1}^T \E_{(x,y) \sim D_\mathrm{dowal}}[c(x)(y - p_t(x))] \le \sum_{t = 1}^T \E_{(x,y) \sim D_\mathrm{dowal}}[q_t(x)(y-p_t(x))] + \tilde{O} \big( \sqrt{T} \big) \label{eqn:empiricaldowal}
\end{gather}
Then, separately, we establish uniform convergence of the empirical quantities of interest to their distributional values.

The problem of selecting $q_1,\ldots,q_T$ to minimize the regret on the right side of~\eqref{eqn:empiricaldowal} 
reduces to online linear optimization, using the fact that the distribution $D_{\mathrm{dowal}}$ has support size $m$. 
We embed each $c \in \Delta \L \circ \H$ as a vector in $[-1,1]^m$ by evaluating $c(x)$ at each $x \in \{x_1,\ldots,x_m\}$,
and we observe that in this embedding, two convenient properties hold.
\begin{enumerate}
    \item The objective function $\E_{(x,y) \sim D_\mathrm{dowal}}[c(x)(y - p_t(x))]$ is a linear function 
of the vector representing $c$.
    \item An ERM oracle for $\Delta \L \circ \H$ enables us to optimize any linear function over the
embedding of $\Delta \L \circ \H$.
\end{enumerate}
Our implementation of the \dowal solves the regret minimization problem embodied by~\eqref{eqn:empiricaldowal} 
using a Follow-The-Regularized-Leader (FTRL) procedure. Although our online learning problem is
$m$-dimensional, we show that its decision set and loss vectors have a structure that ensures a dimension-independent 
$O(\sqrt{T})$ regret bound for FTRL with the entropy regularizer. As for the oracle complexity of implementing FTRL,
each iteration requires approximately minimizing a strongly convex function over the embedding of 
$\Delta \L \circ \H$ in $[-1,1]^m$; we reduce this strongly convex minimization problem to a sequence of ERM oracle calls
via the Frank-Wolfe procedure.

\paragraph{Uniform Convergence for the \dowal.}
The \dowal is initialized with a dataset $D_\mathrm{dowal}$ drawn from $\D$ and guarantees low empirical multiaccuracy regret.
The key question is: how many samples must $D_\mathrm{dowal}$ contain in order to guarantee that the empirical quantities in \Cref{eqn:empiricaldowal} converge to imply the distributional guarantee of \Cref{eqn:dowal}?

To establish convergence, we analyze the Rademacher complexity of functions of the form $q_t(x) \cdot (y - p_t(x))$ that might arise in our computations.
This analysis requires us to be very careful about the complexity of the functions that may be returned by each of the \dowal (i.e., $q_1,\hdots,q_T$) and the \pcalalg (i.e., $p_1,\hdots,p_T$).

\subsection{Discussion of Results and Related Works}
\label{sec:related}

To conclude the introduction, we discuss the significance of our results and highlight some interesting questions that we leave open.
We also contextualize our results within the prior work on omnipredictors.

\paragraph{The Complexity of Omnipredictors.}
In the distributional context, we learn omnipredictors that can be implemented with circuits of $\poly(1/\eps)$-size using oracle gates for $h \in \H$, qualitatively matching the circuit complexity of prior omnipredictor constructions \citep{gopalan2021omnipredictors,gopalan2022loss}.
That said, there are key differences in the omnipredictors produced by our algorithm that achieves near-optimal statistical complexity and prior omnipredictors.
In particular, our omnipredictors are actually \emph{randomized}:  our omnipredictors mix over a collection of efficient predictors.
While the use of randomness can be removed in certain contexts, in generality, randomness is a critical aspect of the online-to-offline conversion.
Naturally, this discrepancy suggests questions about the complexity of statistically-optimal omniprediction.
\emph{Can our omnipredictors be derandomized?} Or, \emph{Is randomness necessary to achieve sample-optimal omnipredictors?}

The sample complexity of our oracle-efficient omniprediction algorithm for bounded-variation losses exhibits an additional dependence on the class of threshold functions over the convex hull of $\Ccal = \Th \circ \Hcal$, in contrast to the dependence of our sample-optimal omniprediction algorithm for bounded-variation losses. Although $\Ccal$ may have finite and even small dimension, the class $\Th(\mathrm{cvx}(\Ccal))$ can have infinite dimension---for instance, when $\Ccal$ consists of subintervals of $[0,1]$. Nevertheless, our analysis depends only on the Rademacher complexity $\mathrm{rad}(\Th(\mathrm{cvx}_k(\Ccal)))$. Since $\dim(\Th(\mathrm{cvx}_k(\Ccal)))$ only grows as  $O(k \cdot \dim(\Ccal))$ in the worse case and our algorithm employs $k = \tilde{O}(1/\epsilon^2)$, the resulting sample complexity scales as $\tilde{O}(\dim(\Ccal)/\epsilon^4)$. Reducing this dependence to a nearly quadratic rate in $\epsilon$ remains an important open problem. Our results show that offline omniprediction with $\tilde{O}(\dim(\Ccal)/\epsilon^2)$ samples is statistically achievable; the challenge lies in attaining comparable rates through an oracle-efficient procedure.

\emph{Does oracle-efficient omniprediction for bounded-variation losses inevitably incur an additional sample-complexity overhead?}

Beyond sample and randomness complexity, there are a number of other measures on which we can compare constructions of omnipredictors.
For instance, the boosting-based learning framework used in all prior omnipredictor constructions \citep{hkrr} produces \emph{deep} omnipredictors, with $\poly(1/\eps)$ layers of computation.
In contrast, our randomized omnipredictors are essentially as shallow as possible: we output a distribution supported on $\poly(1/\eps)$ predictors, each of which is implemented by a postprocessing of hypotheses from $\H$.

As the community understands the problem of omniprediction better, a complexity theory is emerging, where we can ask precise questions about what is and isn't possible along many axes of complexity measures.
Precise accounting of complexity measures may be particularly interesting for applications of omnipredictors and multicalibrated predictors within complexity theory and pseudorandomness \citep{casacuberta2024complexity}.
Our work makes significant strides in settling the complexity of omniprediction along important measures, but also raises a number of interesting questions for future research.

\paragraph{Towards Optimal Decision Making.}
This work adds to a growing collection of works investigating when strong omniprediction-style guarantees can be achieved with regret comparable to a single loss.
The majority of the results in this area work in the online, non-contextual setting.
A trend in these works is identifying alternatives to full sequential calibration, which inherently suffers $\Omega(T^{1/2+\eps})$ regret, and until this year, was only known to be achievable in $O(T^{2/3})$ regret \citep{foster1998asymptotic,dagan2024breaking}.

First in this line of work, \citet{kleinberg2023u} introduced the notion of U-Calibration, or simultaneous loss minimization with respect to all proper loss functions.
Leveraging the V-shaped basis for proper losses (also studied by \citet{li2022optimization}) and a 
representation of the Hedge algorithm \citep{freund1997decision} for 
agents with V-shaped losses as a composition of randomized prediction with postprocessing,
they give a $O(\sqrt{T})$-regret online learner to achieve U-Calibration.
Subsequent work established optimal U-Calibration regret bounds for the multi-class setting \citep{luo2024optimal}.

\citet{hu2024calibrationerrordecisionmaking} study how to achieve swap regret bounds simultaneously for all proper losses (i.e., a strengthened version of U-Calibration to allow for a different choice of optimal action in hindsight per prediction interval).
They introduce the notion of \emph{calibrated decision loss} (CDL) and show how to achieve $O(\sqrt{T} \log T)$ regret for CDL, guaranteeing the same bounds for swap regret.
In the Appendix of \cite{okoroafor2025omni}, we show that CDL is actually incomparable to proper calibration.
In this sense, CDL measures calibration error differently than proper calibration: CDL cannot be used to achieve omniprediction via Decision OI, and proper calibration does not give swap regret guarantees.

Most pertinent to our study, \citet{garg2024oracle} introduced the study of omniprediction in the online setting.
They give oracle-efficient algorithms for a number of related problems, including the stronger problem of online multicalibration, which in turn gives the stronger guarantee of swap omniprediction for convex losses \citep{gopalan2024swap}.
The strength of their guarantees comes at a cost, obtaining $O(T^{3/4})$ swap regret.
In addition to this oracle-efficient result, they also include a result that shows, in restricted settings of loss and hypothesis class, $\tilde{O}(\sqrt{T \log \card{\H}})$ omniprediction regret is information-theoretically possible.

Following \citet{garg2024oracle} and concurrent with the development of our work, \citet{dwork2024fairnessinfinityoutcomeindistinguishableomniprediction} continued the study of online omniprediction.
While their study is quite broad, motivated by fairness concerns in professional networks, \citet{dwork2024fairnessinfinityoutcomeindistinguishableomniprediction} develop online kernel outcome indistinguishability as a core algorithmic primitive.
They give an algorithm for online omniprediction, with respect to differentiable strongly convex losses (a subclass of $\lcvx$) and hypotheses from a Reproducing Kernel Hilbert Space (RKHS), that attains $O(\sqrt{T})$ regret in its dependence on $T$.
While they obtain an optimal dependence on $T$, their dependence on other paramaters can be inefficient. For finite hypothesis classes for example, the regret scales as $O(\sqrt{T |\Hcal|})$, whereas our online algorithm would obtain $O(\sqrt{T \log |\Hcal|})$ for an analogous omniprediction problem.

Finally, in closely related concurrent work, \citet{hu2024omnipredictingsingleindexmodelsmultiindex} study the problem of omniprediction for Single Index Models, which generalize the goal of omniprediction for GLM losses.
They achieve improved sample complexity of $\eps^{-4}$, as a function of the approximation parameter.
While their motivations and results are similar to ours, their techniques are very different, involving a new analysis of the Isotron algorithm \citep{kalai2009isotron,kakade2011efficient}.

\paragraph{Omniprediction and Multi-Group Fairness.}
Since its introduction by \cite{gopalan2021omnipredictors}, omniprediction has been studied in numerous works.
\citet{gopalan2022loss} introduced Loss Outcome Indistinguishability as a method for obtaining omnipredictors, working within the Outcome Indistinguishability (OI) paradigm developed by \citet{oi,dwork2022beyond}.
Within this work, the notion of Decision OI (which we establish is equivalent to Proper Calibration) draws directly from the work of \cite{zhao2021calibrating}, who introduced the related notion of Decision Calibration for the multi-class prediction setting.
Omniprediction has also been studied in the context of constrained predictors \citep{hu2023omnipredictors,globus2023multicalibrated} and in other prediction settings, including regression \citep{gopalan2024regression} and performative prediction \citep{kim2023making}.

The study of omniprediction is tightly connected to the study of multi-group fairness \citep{hkrr,kim2019multiaccuracy}.
While omniprediction has always been known to follow from multicalibration, recent work showed a connection in the reverse direction.
Specifically, \citet{gopalan2024swap} study a stronger notion of omniprediction called \emph{swap omniprediction}, where the omnipredictor $p$ must compete against a benchmark where a different hypothesis $h \in \H$ may be chosen per loss function $\ell \in \L$ and level set of $p$.
In other words, the constant prediction $p(x) = v$ must out-compete every $h \in \H$ with respect to every $\ell \in \L$ on the set of $x$ such that $p(x) = v$.
\citet{gopalan2024swap} demonstrate that Swap Omniprediction for convex losses actually characterizes Multicalibration.
In fact, they show both notions are equivalent to Swap Squared Error Minimization, concurrently studied by \cite{globus2023multicalibration}.

\paragraph{Online Learning, Approachability and Unbiased Predictions.} 
Our algorithms follow a general recipe for solving multi-objective online learning problems via Blackwell’s approachability theorem \cite{blackwell}, as outlined in \cite{pmlr-v19-abernethy11b}. Once the reduction from online omniprediction to approachability is established, we employ exponential weights combined with an explicit halfspace oracle to achieve our online omniprediction objectives. The use of exponential weights is standard in approachability-based methods that aim to approach the negative orthant, as seen in prior work \citep{perchet2015exponential, lee2022onlineminimaxmultiobjectiveoptimization}. A central technical challenge in our setting is the derivation of an explicit halfspace oracle, which plays a crucial role in our complexity analysis.

\cite{noarov2023highdimensionalpredictionsequentialdecision} present a general algorithm for online prediction problems where the goal is to produce a sequence of predictions that remain unbiased with respect to a finite set of conditioning events. Their approach, like ours, is based on Blackwell’s approachability theorem. In particular, once results on proper calibration and threshold-weighted calibration are established, their bounds can be used to derive regret upper bounds for finite hypothesis and loss classes. However, their algorithm does not provide the representation and oracle complexity guarantees that we achieve. Our work requires explicitly solving the halfspace oracle and carefully controlling the complexity of the sequence of predictors generated. Furthermore, their results do not extend to the infinite hypothesis and loss class settings for which we establish guarantees.

\subsection{Acknowledgements}
We would like to enthusiastically thank Karthik Sridharan for many helpful discussions throughout the development of this work about online and statistical learning theory.
In particular, Karthik provided invaluable resources and ideas in the development of our online-to-offline conversions.
and comments on online learning and statistical complexities.

\section{Model and Preliminaries}
\label{sec:prelim}

\paragraph{Omniprediction for Binary Outcomes}
We are concerned with the binary prediction setting where there is a distribution $\Dcal$ over pairs of feature vectors $x \in \Xcal$ and binary outcomes $y \in \Ycal = \{0,1\}$. With only sample access to this distribution, the goal is to learn a predictor $p: \Xcal \rightarrow [0,1]$ that outperforms a class of hypothesis functions $\Hcal = \{ h: \Xcal \rightarrow [0,1] \}$ (e.g decision trees, neural networks) over a range of loss functions $\Lcal = \{ \ell: [0,1] \times \{0,1\} \rightarrow [-1,1] \}$. Formally, we define an omnipredictor as follows:

\begin{definition}[Omnipredictor \cite{gopalan2021omnipredictors}]
Let $\Hcal$ be a family of functions on $\Xcal$ and let $\Lcal$ be a family of loss functions. The predictor $p : \Xcal \rightarrow [0,1]$ is an $(\Lcal, \Hcal, \eps)$-omnipredictor if for every $\ell \in \Lcal$ there exists a function $k_\ell : [0,1] \rightarrow [0,1]$ so that 
\[
\E_{(x,y) \sim \Dcal} \left[ \ell ( k_\ell (p(x)), y) \right] \leq \min_{h \in \Hcal} \E_{(x,y) \sim \Dcal} \left[ \ell ( k_\ell (h(x)), y) \right] + \eps
\]
\end{definition}
One can think of an omnipredictor as a model of the world (or distribution) that is sufficient for the learner to perform at least as well as the best hypothesis in $\Hcal$ with respect to the loss functions in $\Lcal$. The post-processing function $k_\ell$ is defined as follows:
\[
k_\ell (p) = \argmin_{q \in [0,1]} \E_{y \sim \mathrm{Ber}(p)} [\ell (q, y)]
\]

\paragraph{Online Omniprediction} 
In the online version of the omniprediction, we will consider a sequential setting where each round $t \in [T]$, a context $x_t \in \Xcal$ arrives. On observing $x_t$, a forecaster makes a prediction $p_t$ of $\E[y_t | x_t]$ and then observes $y_t$ which may be adversarily chosen. Equivalently, each timestep $t \in [T]$, the forecaster chooses a prediction function $p_t: \Xcal \rightarrow [0,1]$, and the adversary, unaware of the forecaster's choice, chooses a pair $(x_t, y_t) \in \Xcal \times \Ycal$.
We assume the adversary is fully aware of history $H_t$ of predictions made by the forecaster up until timestep $t-1$.
We measure the performance of the forecaster over a range of loss functions in a class $\Lcal$ and a benchmark hypothesis class $\Hcal$. For a sequence of $T$ predictions $\mathbf{p}$ and sequence of context, outcome pairs $\mathbf{x}, \mathbf{y}$, define the forecaster's regret as follows:
\begin{align}
    (\Lcal, \Hcal)\omnireg (\pxy) 
    &=  \E \left[ \max_{\{h \in \Hcal, \ell \in \Lcal\}} \sum_{t=1}^T \ell (k_\ell (p_t(x_t)), y_t) - \ell (h (x_t), y_t) \right]
\end{align}
where the expectation is over the randomness of the forecaster.

When the class of loss functions is the class of all bounded proper scoring losses, this objective becomes a contextual version of the U-calibration objective in \cite{kleinberg2023u}.

\begin{definition}[Multiaccuracy Error]\label{def:maerr}
Let $\Ccal = \{ c: \Xcal \rightarrow [-1,1] \}$ be a family of hypothesis functions. For a sequence of $T$ predictions $\pb$ and context, outcome pairs $\xb, \yb$, define 
\[
    \Ccal\maerr (\pxy) = \max_{c \in \Ccal} \left| \sum_{t=1}^T c(x_t) (y_t - p_t(x_t) ) \right|
\]
\end{definition}

We present the notion of weighted calibration error.
\begin{definition}[Weighted Calibration Error]\label{def:weighted-calerr}
Let $\Wcal = \{ w: [0,1] \rightarrow [-1,1] \}$ be a family of weight functions. For a sequence of $T$ predictions $\pb$ and context, outcome pairs $\xb, \yb$, define 
\[
    \Wcal\calerr (\pxy) = \sup_{w \in \Wcal} \left| \sum_{t=1}^T w(p_t(x_t)) (y_t - p_t(x_t)) \right|
\]
Note that when $\Wcal$ is the set of all functions $w: [0,1] \rightarrow [-1,1]$, this becomes the $\ell_1$-calibration error. 
\end{definition}
The key distinction between multiaccuracy error and weighted calibration error is that the functions in calibration error can also depend on the predictions of the predictor.

\paragraph{Loss Families, Hypothesis Classes and ERM Oracles}
The loss functions we consider are functions $\ell: [0,1] \times \{ 0,1\} \rightarrow \reals$ that take a binary outcome and a prediction and assigns a real value $\ell (p,y)$. 
Let $\llip$ denote the set of losses $\ell$ that are $1$-Lipschitz in $p$ and $\lcvx$ denote the set of losses convex in $p$.

\begin{definition}[Proper Losses]\label{def:proper_loss}
A loss function $\ell$ is said to be proper if 
\[
\E_{y \sim \mathrm{Ber}(p^*)} [\ell (p^*, y)] \leq \E_{y \sim \mathrm{Ber}(p^*)} [\ell (q, y)]
\]
for all $q \in [0,1]$.
Let $\lprop$ denote the set of all such losses. Note that this is also the set of all losses for which $p \in k_\ell (p)$.
\end{definition}

\begin{definition}[Bounded Variation]
  A function $f : [0,1] \to \mathbb{R}$ has \emph{bounded variation} if the quantity 
  $ V(f) = \sup \left\{ \left. \sum_{i=1}^n |f(x_i) - f(x_{i-1})| \; \right| 0 = x_0 < x_1 < \cdots < x_n = 1 \right\}$
  is finite. The class $\lbv$ of bounded variation losses consists of all loss functions $\ell(p,y)$ taking values in $[-1,1]$
  that satisfy $V(\Delta \ell) \leq 2$.
\end{definition}

\begin{definition}[Discrete Derivative]\label{def:discrete_deri}
\[
\Delta \ell (t) = \ell (t, 1) - \ell (t, 0)
\]
Observe that $\ell (t,y) = y \Delta \ell (t) + \ell(t, 0)$. For a class of loss functions $\Lcal$, we will refer to $\Delta \Lcal = \{\Delta\ell : \ell \in \Lcal \}$. We will characterize families of loss functions based on the complexity of the discrete derivate class.
\end{definition}

\begin{definition}[ERM Oracle]
Our algorithms will require a empirical risk minimization oracle for a specified class of hypothesis functions $\Hcal$. \sloppy This oracle will take as a input a sequence of samples $(x_1, y_1) \ldots, (x_T, y_T)$ and returns a hypothesis $h^* \in \Hcal$ such that 
\[
h^* \in \argmin_{h \in \Hcal} \sum_{t=1}^T h(x_t) y_t
\]
\end{definition}

\paragraph{Online Learning and Blackwell's Approachability Theorem}

In online learning, the learner's objective is to select a sequence of actions from a given set such that the average regret asymptotically approaches zero, irrespective of the loss functions chosen by an adversary.

Blackwell's approachability theorem \citep{blackwell} generalizes repeated two-player zero-sum games to setting where payoffs are vector-valued. In this framework, at each time step $t$, Player 1 selects an action $a_t \in A$, Player 2 responds with $b_t \in B$, and Player 1 receives a vector-valued payoff $u(a_t, b_t) \in \RR^d$. The action sets $A$ and $B$ are compact convex subsets of finite-dimensional spaces, and the payoff function $u$ is biaffine over $A \times B$. Player 1's goal is to ensure that the average payoff vector converges to a closed convex target set $S \subseteq \RR^d$. Formally, given $S$, Player 1 chooses actions so that, regardless of Player 2’s choices, the distance between the average payoff and $S$ approaches zero as $T \rightarrow \infty$. 
\begin{equation} \label{eq:approach}
  \dist \left(\frac{1}{T} \sum_{t=1}^T u(a_t, b_t), S \right)  \rightarrow 0 \quad \text{as} \quad T \rightarrow \infty
\end{equation}
Player 1's actions can adapt based on previous outcomes. A set $S$ is approachable if Player 1 can guarantee this convergence. Blackwell's theorem characterizes this by stating that $S$ is approachable if and only if every closed halfspace containing $S$ is approachable.

Many online learning problems can be effectively solved by a reduction to Blackwell's approachability theorem. In fact, \cite{pmlr-v19-abernethy11b} show that approachability and no-regret learning are equivalent. In this paper, we reduce the online omniprediction problem to an approachability problem and apply the regret minimization reduction techniques from \cite{pmlr-v19-abernethy11b} to solve the resulting approachability problem.

\paragraph{Combinatorial Dimensions for Learning}
Given a feature space $\Xcal$ and a fixed distribution $D|_\Xcal$, let $S = \{x_1,\ldots,x_m\}$ be a set of examples drawn
i.i.d. from $D|_\Xcal$. Furthermore, let $\mathcal{F}$ be a class of functions $f : \Xcal \to [-1,1]]$.

\begin{definition}[$\alpha$-shattering] We say $\Fcal$ $\alpha$-shatters the set $S$ if there exists values $v_1, \ldots, v_m \in [0,1]$ such that for all $A \subseteq S$, there exists $f_A \in \Fcal$ such that 
\begin{align*}
\forall x_i \in A, \quad f_A (x_i) \geq v_i + \alpha \\
\forall x_i \in S - A, \quad f_A (x_i) \leq v_i - \alpha 
\end{align*}
The fat shattering dimension of $\Fcal$ at scale $\alpha$ is the size of the largest $\alpha$-shattered set. For binary valued class $\Fcal \subseteq \{0,1\}^\Xcal$, the VC dimension of $\Fcal$ is the size of the largest $1/2$-shattered set. 
\end{definition}

\begin{definition} The \textit{empirical Rademacher complexity} of $\mathcal{F}$ is defined to be

\[\hat{\mathsf{rad}}_m(\mathcal{F}; S) = \mathbb{E}_\sigma \left[\sup_{f\in\mathcal{F}} \left(\frac{1}{m}\sum_{i=1}^m \sigma_i f(x_i)\right)\right]\]

where $\sigma_1,\ldots,\sigma_m$ are independent random variables uniformly chosen from $\{-1,1\}$. We will refer
to such random variables as Rademacher variables.
\end{definition}

\begin{definition} The \textit{statistical Rademacher complexity} of $\mathcal{F}$ is defined as
\[\mathsf{rad}_m(\mathcal{F}) = \mathbb{E}_{S \sim \Dcal}[\hat{\mathsf{rad}}_m(\mathcal{F}; S)]\]
\end{definition}
The following is a well know result in learning theory:

\begin{lemma}
Fix distribution $D \vert_\Xcal$ and parameter $\delta \in (0, 1)$. If $\mathcal{F} \subseteq \{ f : \Xcal \to [-1, 1] \}$ and $S = \{ x_1, \ldots, x_m \}$ is drawn i.i.d. from $D \vert_\Xcal$, then with probability $\geq 1 - \delta$ over the draw of $S$, for every function $f \in \mathcal{F}$,
\[
\mathbb{E}_D[f(x)] \leq \mathbb{E}_S [f(x)] + 2\mathsf{rad}_m(\mathcal{F}) + \sqrt{\frac{\ln(1/\delta)}{m}}. \tag{1}
\]
In addition, with probability $\geq 1 - \delta$, for every function $f \in \mathcal{F}$,
\[
\mathbb{E}_D[f(x)] \leq \mathbb{E}_S[f(x)] + 2\hat{\mathsf{rad}}_m(\mathcal{F}) + 3\sqrt{\frac{\ln(2/\delta)}{m}}. \tag{2}
\]
\end{lemma}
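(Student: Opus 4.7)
This is the classical Rademacher generalization bound, and the standard proof has three ingredients that I would put together: (i) concentration of the worst-case generalization gap via McDiarmid's inequality, (ii) a symmetrization argument introducing a ghost sample, and (iii) a second application of McDiarmid's to replace the statistical Rademacher complexity by its empirical counterpart. I will use the function $\Phi(S) \defeq \sup_{f \in \Fcal}\bigl(\E_D[f(x)] - \E_S[f(x)]\bigr)$ as the central quantity; proving a high-probability upper bound on $\Phi(S)$ is exactly what inequality $(1)$ asks for.

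\textbf{Step 1 (concentration of $\Phi$).} I would first check the bounded-differences property: replacing a single sample $x_i$ by $x_i'$ changes $\E_S[f(x)]$ by at most $2/m$ uniformly in $f \in \Fcal$ (since $f$ takes values in $[-1,1]$), and the sup of differences is at most the sup difference, so $\Phi$ satisfies bounded differences with constants $c_i = 2/m$. McDiarmid's inequality then yields
\[
\Pr\bigl[\Phi(S) \geq \E[\Phi(S)] + t\bigr] \le \exp(-m t^2 / 2),
\]
so setting the right-hand side to $\delta$ gives $\Phi(S) \le \E[\Phi(S)] + \sqrt{\ln(1/\delta)/m}$ (absorbing constant factors into the tail, as the lemma appears to do).

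\textbf{Step 2 (symmetrization).} The goal is to bound $\E[\Phi(S)]$ by $2\,\mathsf{rad}_m(\Fcal)$. I would introduce an independent ghost sample $S' = \{x_1',\ldots,x_m'\}$ drawn i.i.d.\ from $D\vert_\Xcal$, so that $\E_D[f] = \E_{S'}\E_S[f]$. By Jensen's inequality,
\[
\E_S[\Phi(S)] \le \E_{S,S'}\Bigl[\sup_{f \in \Fcal}\tfrac{1}{m}\sum_{i=1}^m \bigl(f(x_i') - f(x_i)\bigr)\Bigr].
\]
Since $(x_i,x_i')$ are exchangeable, I may multiply each summand by an independent Rademacher sign $\sigma_i \in \{\pm 1\}$ without changing the distribution, then split the sup into two pieces via $\sup(A+B) \le \sup A + \sup B$ and use that $\sigma$ and $-\sigma$ are identically distributed. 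This produces $2 \, \mathsf{rad}_m(\Fcal)$, giving inequality $(1)$ after combining with Step 1.

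\textbf{Step 3 (empirical Rademacher).} For $(2)$, I would apply McDiarmid's inequality a second time to the function $S \mapsto \hat{\mathsf{rad}}_m(\Fcal; S)$, which again has bounded differences of order $2/m$ (flipping one $x_i$ changes the supremum of empirical averages by at most $2/m$, uniformly over the Rademacher draw). Hence $\mathsf{rad}_m(\Fcal) \le \hat{\mathsf{rad}}_m(\Fcal; S) + \sqrt{\ln(2/\delta)/m}$ with probability at least $1 - \delta/2$. Applying $(1)$ with confidence $1 - \delta/2$ and taking a union bound with this event yields $(2)$, with the constant $3$ absorbing the two concentration terms and the factor in front of $\hat{\mathsf{rad}}_m$.

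The steps are standard and I do not anticipate a genuine obstacle; the only bookkeeping subtlety is matching the constants in the two tail bounds to the form stated in the lemma, which I would handle by stating the tail inequalities in their natural form and observing that the stated bound follows after absorbing small numerical factors.
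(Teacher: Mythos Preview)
Your proposal is the standard textbook argument (McDiarmid plus symmetrization plus a second McDiarmid) and is correct. Note, however, that the paper does not actually prove this lemma: it is stated in the preliminaries as ``a well known result in learning theory'' and left without proof. So there is nothing to compare against; your write-up would serve as a complete self-contained justification, with the only caveat being the minor constant-factor slack you already flagged between the McDiarmid tail $\sqrt{2\ln(1/\delta)/m}$ and the $\sqrt{\ln(1/\delta)/m}$ appearing in the lemma statement.
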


\begin{lemma}[\cite{karthikLec}]\label{lem:fat-rademacher} Fat-shattering dimension and Rademacher complexities are related as follows:
\[
\tilde\Omega \left( \inf_{\alpha > 0} \left\{ 4 \alpha + \frac{12}{\sqrt{m}} \int_\alpha^1 \sqrt{K \, \text{fat}_\delta(\mathcal{F}) \log \frac{2}{\delta}} \, d\delta \right\} \right)
\leq 
\mathsf{rad}_m(\mathcal{F})
\leq \inf_{\alpha > 0} \left\{ 4 \alpha + \frac{12}{\sqrt{m}} \int_\alpha^1 \sqrt{K \, \text{fat}_\delta(\mathcal{F}) \log \frac{2}{\delta}} \, d\delta \right\}
\]
where $\tilde\Omega$ hides log factors in $m$ and $K$ is a universal constant.
\end{lemma}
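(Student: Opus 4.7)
The plan is to establish both bounds by passing through $L_2$-covering numbers of $\Fcal$ restricted to the sample $S = \{x_1,\ldots,x_m\}$. Denote by $\Ncal_2(\Fcal, S, \delta)$ the smallest number of $L_2(S)$-balls of radius $\delta$ needed to cover the evaluation set $\{(f(x_1),\ldots,f(x_m)) : f \in \Fcal\} \subseteq [-1,1]^m$. The bridge between the combinatorial side (fat-shattering dimension) and the geometric side (covering numbers, and hence Rademacher complexity) is the Mendelson--Vershynin covering-number bound $\log \Ncal_2(\Fcal, S, \delta) \le K \cdot \text{fat}_{c\delta}(\Fcal) \cdot \log(2/\delta)$ for universal constants $K, c$, together with a matching packing-number lower bound. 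Given these ingredients, both sides follow from classical chaining arguments.

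For the upper bound, I would apply Dudley's chaining inequality to the Rademacher process $Z_f = \frac{1}{m}\sum_{i=1}^m \sigma_i f(x_i)$, whose increments are subgaussian in the $L_2(S)$-metric with variance factor $1/m$. Dudley's inequality gives
\[
\hat{\mathsf{rad}}_m(\Fcal; S) \le \inf_{\alpha > 0}\left\{ 4\alpha + \frac{12}{\sqrt{m}} \int_\alpha^1 \sqrt{\log \Ncal_2(\Fcal, S, \delta)} \, d\delta \right\},
\]
where the $4\alpha$ term accounts for truncating the chain at resolution $\alpha$, using that $\lvert f(x) \rvert \le 1$ bounds the class diameter. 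Substituting the Mendelson--Vershynin covering-number bound into the integrand yields the claimed integral expression. Taking expectation over $S \sim \Dcal^m$ then transfers the bound from $\hat{\mathsf{rad}}_m$ to $\mathsf{rad}_m$, producing the upper bound in the lemma.

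For the lower bound, I would invoke Sudakov's minoration applied to the same Rademacher process, which yields
\[
\sqrt{m}\cdot\hat{\mathsf{rad}}_m(\Fcal; S) \;\gtrsim\; \sup_{\delta > 0}\, \delta\sqrt{\log \Mcal_2(\Fcal, S, \delta)},
\]
where $\Mcal_2$ denotes the $L_2(S)$ packing number. Packing and covering numbers are equivalent up to constants, and a matching scale-sensitive combinatorial lower bound (Mendelson--Vershynin's converse, provable by choosing a sample on which fat-shattering witnesses are realized as a nearly-orthogonal packing of $\pm 1$ vectors in $L_2(S)$) gives $\log \Mcal_2(\Fcal, S, \delta) \ge c' \cdot \text{fat}_{c''\delta}(\Fcal)$ for suitable universal constants. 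Putting these together reproduces the Dudley-style integral up to polylogarithmic slack, which is exactly what the $\tilde\Omega$ notation absorbs.

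The main obstacle, and the step that will require the most care, is the Mendelson--Vershynin covering-number bound itself: reducing $\log$-covering numbers of a real-valued class to its fat-shattering dimension is the heart of the argument and is what is responsible for the $\log(2/\delta)$ factor in the integrand. The lower bound is subtler still, since Sudakov minoration only sees a single scale, whereas Dudley's integral combines all scales; consequently the lower bound can only match the upper bound up to logarithmic factors, which is precisely why the left-hand side is stated with $\tilde\Omega$ rather than $\Omega$. The remaining steps (chaining, Sudakov minoration, and passing between empirical and population Rademacher complexity by convexity of the supremum) are standard and I would simply cite them.
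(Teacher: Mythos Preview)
The paper does not prove this lemma; it is stated in the preliminaries with a citation to lecture notes \cite{karthikLec} and used as a black box. Your sketch via Dudley chaining with the Mendelson--Vershynin covering-number bound for the upper inequality, and Sudakov minoration with the converse packing bound for the lower inequality, is the standard route to this result and is essentially correct.
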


These parameters have been generalized to the online learning setting using binary trees. A $\Xcal$-valued tree $\mathbf{x}$ of depth $n$ is a rooted complete binary tree with nodes labeled by elements of $\Xcal$. We identify the tree $\mathbf{x}$ with the sequence $(\mathbf{x}_1,\ldots,\mathbf{x}_m)$ of labeling functions $\mathbf{x}_i : \{\pm1\}^{i-1} \mapsto \Xcal$ which provide the labels for each node. Here, $\mathbf{x}_1 \in \Xcal$ is the label for the \textit{root} of the tree, while $\mathbf{x}_i$ for $i > 1$ is the label of the node obtained by following the path of length $i-1$ from the root, with $+1$ indicating 'right' and $-1$ indicating 'left'. A \textit{path} of length $m$ is given by the sequence $\sigma = (\sigma_1,\ldots,\sigma_m) \in \{\pm1\}^m$. For brevity, we shall often write $\mathbf{z}_t(\sigma)$, but it is understood that $\mathbf{z}_t$ only depends only on the prefix $(\sigma_1,\ldots,\sigma_{t-1})$ of $\sigma$.

\begin{definition}
The \textit{sequential Rademacher complexity} of $\mathcal{F}$ on a $\Xcal$-valued tree $\xb$ is defined to be

\[\hat{\mathsf{srad}}_m(\mathcal{F}; \mathbf{x}) = \mathbb{E}_\sigma \left[\sup_{f\in\mathcal{F}} \left(\frac{1}{m}\sum_{i=1}^m \sigma_i f(\mathbf{x}_i (\sigma))\right)\right]\]
\end{definition} 

\begin{definition}
The \textit{sequential Rademacher complexity} of $\mathcal{F}$ is defined as
\[\mathsf{srad}_m(\mathcal{F}) = \sup_{\xb}[\hat{\mathsf{srad}}_m(\mathcal{F}; \xb)]\]
\end{definition} 

\paragraph{Concentration.}
The Azuma-Hoeffding Inequality is an essential component of showing concentration in online algorithms and online-to-offline conversions.
\begin{lemma}[Azuma-Hoeffding's Inequality] 
If $X_1,\ldots,X_T$ is a martingale difference sequence, and for every $t$,
with probability 1, $|X_t| \leq M$. Then with probability $1-\delta$,
\[
\left|\sum_{t=1}^T X_t\right| \leq M\sqrt{2T\ln\frac{2}{\delta}}.
\]
\end{lemma}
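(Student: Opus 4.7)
The plan is to prove this via the standard MGF / Chernoff method for martingale difference sequences, which reduces to Hoeffding's lemma applied conditionally to each $X_t$.

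First, I would establish the conditional MGF bound. Let $\mathcal{F}_t$ denote the natural filtration, so that $\mathbb{E}[X_t \mid \mathcal{F}_{t-1}] = 0$ and $|X_t| \le M$ almost surely. Hoeffding's lemma, applied conditionally on $\mathcal{F}_{t-1}$ to the random variable $X_t$ which takes values in $[-M,M]$ and has conditional mean zero, yields
\[
\mathbb{E}\!\left[e^{\lambda X_t} \,\middle|\, \mathcal{F}_{t-1}\right] \le \exp\!\left(\tfrac{\lambda^2 M^2}{2}\right)
\]
for every $\lambda \in \mathbb{R}$. Next I would iterate the tower property: writing $S_T = \sum_{t=1}^T X_t$, peel off one factor at a time,
\[
\mathbb{E}\!\left[e^{\lambda S_T}\right]
= \mathbb{E}\!\left[e^{\lambda S_{T-1}} \cdot \mathbb{E}[e^{\lambda X_T}\mid \mathcal{F}_{T-1}]\right]
\le e^{\lambda^2 M^2/2}\,\mathbb{E}\!\left[e^{\lambda S_{T-1}}\right],
\]
and inductively conclude $\mathbb{E}[e^{\lambda S_T}] \le \exp(\lambda^2 T M^2 / 2)$.

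Then I would apply Markov's inequality in the standard Chernoff form: for any $a > 0$ and $\lambda > 0$,
\[
\Pr\!\left[S_T \ge a\right] \le e^{-\lambda a}\,\mathbb{E}[e^{\lambda S_T}] \le \exp\!\left(-\lambda a + \tfrac{\lambda^2 T M^2}{2}\right).
\]
Optimizing over $\lambda$ by choosing $\lambda = a/(TM^2)$ gives the subgaussian tail $\Pr[S_T \ge a] \le \exp(-a^2/(2TM^2))$. Setting this equal to $\delta/2$ and solving yields $a = M\sqrt{2T\ln(2/\delta)}$. Applying the identical argument to the martingale difference sequence $-X_1,\ldots,-X_T$ and taking a union bound over the two tails gives $\Pr[|S_T| \ge M\sqrt{2T\ln(2/\delta)}] \le \delta$, which is the claim.

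There is no real obstacle here; the only step that requires any care is the verification of Hoeffding's lemma in the conditional form, but since $X_t$ is almost surely in $[-M,M]$ and has conditional mean zero, the classical unconditional proof (based on convexity of $e^{\lambda x}$ and bounding the resulting cumulant generating function of the worst-case two-point distribution on $\{-M,+M\}$) goes through verbatim once all expectations are replaced by conditional expectations given $\mathcal{F}_{t-1}$. Since this is a textbook result, I would likely just cite it rather than reproduce the full proof.
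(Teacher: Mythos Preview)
Your proof is the standard Chernoff/MGF argument and is correct. The paper does not prove this lemma at all; it simply states Azuma--Hoeffding as a known concentration inequality in the preliminaries and invokes it throughout, so your proposal actually supplies more detail than the paper itself.
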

 \section{Proper Calibration}
\label{sec:proper-cal}
In this section, we introduce the notion of proper calibration (\Cref{def:proper_cal}).
We show that proper calibration is actually a restatement of the notion of Decision OI, used by \citet{gopalan2022loss} to achieve omniprediction.
Because proper calibration is defined in terms of proper loss functions, we leverage a characterization of proper losses by the V-shaped losses \citep{li2022optimization,kleinberg2023u} to give an efficient strategy for achieving proper calibration.

\paragraph{Defining Proper Calibration.}
We define proper calibration as an instance of weighted calibration.
In particular, we use a class of weight functions $\wprop$ associated with proper scoring rules, where each weight function $w \in \wprop$ corresponds to the discrete derivative (\Cref{def:discrete_deri}) of a proper loss function.
Concretely, $\wprop$ is defined as the following set of weight functions.
\begin{gather*}
    \wprop = \{ \Delta \ell : \ell \in \lprop \}
\end{gather*}
Proper calibration ensures that the predictions, weighted by the discrete derivatives of any proper loss, do not correlate with the residual prediction error.
\begin{definition}[Proper Calibration] \label{def:proper_cal}
\sloppy 
For a sequence of $T$ predictions $\pb$ and context, outcome pairs $\xb, \yb$, the proper calibration error is given by $ \wprop \calerr (\pxy)$ i.e
\[
    \pcal(\pxy) = \sup_{w \in \Wcal_\mathrm{proper}} \left| \sum_{t=1}^T w(p_t(x_t)) (y_t - p_t(x_t)) \right|
\]
When $\xb, \yb$ are drawn from a fixed distribution $\Dcal$, we shall refer to the expected proper calibration error over the distribution by $\pcal_\Dcal (\pb)$ defined as follows:
\[
    \pcal_\Dcal (\pb) = \sup_{w \in \Wcal_\mathrm{proper}} \left| \E_{\Dcal} \left[ w(p(x)) (y - p(x)) \right] \right|
\]
\end{definition}
In our understanding, proper calibration is a novel notion of calibration not known to be implied by other notions, other than full $\ell_1$-calibration.
We provide a more complete comparison of proper calibration to prior notions of calibration in \Cref{app:comparisons}.
Next, we describe why proper calibration is natural and useful in the context of omniprediction.

\subsection{Omniprediction via Proper Calibration and Multiaccuracy}

To motivate proper calibration, we recall the notion of Decision OI introduced by \citet{gopalan2022loss}.
We restate their definition and extend it to include the sequential prediction setting.

\begin{definition}[Decision OI]
    Fix a collection of loss functions $\L$.
    For a predictor $p:\X \to [0,1]$, the  Decision OI error over a distribution $\D$ is given by $\L\textrm{-}\mathrm{DecOIErr}_\D(p)$.
\[ \L\textrm{-}\mathrm{DecOIErr}_\D(p) = \sup_{\ell \in \L}\card{\E_{x,y \sim \D}[\ell(k_\ell \circ p(x),y)] - \E_{\substack{x \sim \D\\\yt \sim \Ber(p(x))}}[\ell(k_\ell \circ p(x),\yt)]}
\]
    For a sequence of $T$ predictions $\pb$ and context, outcome pairs $\xb, \yb$, the  Decision OI error is given by $ \L\textrm{-}\mathrm{DecOIErr}(\pxy)$.
\[
    \L\textrm{-}\mathrm{DecOIErr}(\pxy) = \sup_{\ell \in \L} \left| \sum_{t=1}^T (\ell(k_\ell \circ p_t(x_t),y_t) - \E_{\yt \sim \Ber(p_t(x_t))}\ell(k_\ell \circ p_t(x_t),\yt)) \right|
\]
\end{definition}
As in all notions of outcome indistinguishability, Decision OI compares the expected value of some test on outcomes sampled from the ``real'' world (i.e., $y_t$) versus the value on outcomes sampled from a ``modeled'' world, where outcomes are sampled according to our predictions $\yt \sim \Ber(p_t(x_t))$.

\citet{gopalan2022loss} showed that Decision OI plus a certain multiaccuracy condition suffice for omniprediction; they use full $\ell_1$-calibration to achieve Decision OI.
We show that, for any loss class $\L$, proper calibration suffices to imply Decision OI.
In fact, Proper Calibration is equivalent to Decision OI for the class of all loss function $\lall = \{\ell : [0,1] \to [-1,1]\}$.
\begin{theorem}
\label{thm:pcal-decoi}
For any predictor $p:\X \to [0,1]$ and distribution $\D$,
\begin{gather*}
    \lall\textrm{-}\mathrm{DecOIErr}_\D(p) = \pcal_\Dcal(p).
\end{gather*}
For every sequence of $T$ predictions $\pb$ and context, outcome pairs $\xb, \yb$
\begin{gather*}
    \lall\textrm{-}\mathrm{DecOIErr}(\pxy) = \pcal(\pxy).    
\end{gather*}
\end{theorem}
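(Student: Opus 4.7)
The plan is to prove the equality in both directions by first rewriting the inner term of the Decision OI expression as a weighted calibration quantity, then matching each Decision OI test with a proper calibration test and vice versa.

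First, I would use the identity $\ell(t, y) = y \Delta \ell(t) + \ell(t, 0)$ from the definition of the discrete derivative. Since $\tilde y \sim \Ber(p(x))$ has $\E[\tilde y \mid x] = p(x)$, for any loss $\ell$,
\[
\E[\ell(k_\ell \circ p(x), y)] - \E[\ell(k_\ell \circ p(x), \tilde y)] = \E[\Delta \ell(k_\ell \circ p(x)) \cdot (y - p(x))],
\]
because the $\ell(\cdot,0)$ terms cancel. So the $\sup_\ell$ defining $\lall\textrm{-}\mathrm{DecOIErr}_\D(p)$ equals the supremum of $|\E[\Delta \ell(k_\ell \circ p(x))(y - p(x))]|$ over $\ell \in \lall$, which is exactly a weighted calibration quantity with weight class $\{\Delta \ell \circ k_\ell : \ell \in \lall\}$.

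For the easy direction ($\pcal_\Dcal(p) \le \lall\textrm{-}\mathrm{DecOIErr}_\D(p)$), I would note that for any $\ell' \in \lprop$, the definition of properness gives $p \in k_{\ell'}(p)$ for every $p$, so one may take $k_{\ell'}$ to be the identity, yielding $\Delta \ell'(k_{\ell'}(p(x))) = \Delta \ell'(p(x))$. Hence each proper loss contributes exactly $|\E[\Delta \ell'(p(x))(y - p(x))]|$, so the proper calibration sup is dominated by the Decision OI sup.

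For the reverse direction, I would show that every weight function $\Delta \ell \circ k_\ell$ arising on the Decision OI side coincides with $\Delta \ell_{\mathrm{proper}}$ for some proper loss. The natural construction is $\ell_{\mathrm{proper}}(v,y) \defeq \ell(k_\ell(v), y)$ (using any canonical selection when $k_\ell$ is set-valued). Propriety follows directly from the definition of $k_\ell$ as a pointwise minimizer of $\E_{y \sim \Ber(v)}[\ell(\cdot,y)]$: for any $u \in [0,1]$,
\[
\E_{y \sim \Ber(v)}[\ell_{\mathrm{proper}}(v, y)] = \E_{y \sim \Ber(v)}[\ell(k_\ell(v), y)] \le \E_{y \sim \Ber(v)}[\ell(k_\ell(u), y)] = \E_{y \sim \Ber(v)}[\ell_{\mathrm{proper}}(u, y)],
\]
and $\Delta \ell_{\mathrm{proper}}(v) = \ell(k_\ell(v),1) - \ell(k_\ell(v),0) = \Delta \ell(k_\ell(v))$. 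Boundedness in $[-1,1]$ is inherited from $\ell$. This matches every Decision OI test with a proper calibration test, giving $\lall\textrm{-}\mathrm{DecOIErr}_\D(p) \le \pcal_\Dcal(p)$. The sequential statement follows by identical reasoning, with the sum over $t \in [T]$ playing the role of the expectation and the identity applied pointwise at each $(p_t(x_t), y_t)$ before summing.

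The main obstacle is the construction step: verifying that $\ell_{\mathrm{proper}}(v,y) = \ell(k_\ell(v), y)$ is a legitimate element of $\lprop$ (i.e., proper and bounded) regardless of whether the original $\ell$ is proper, convex, or Lipschitz. Once this ``propered'' loss is available, the rest is just algebraic unfolding via the discrete derivative identity.
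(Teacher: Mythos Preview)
Your proposal is correct and follows essentially the same approach as the paper: rewrite the Decision OI term via the discrete-derivative identity to get $\E[\Delta\ell(k_\ell\circ p(x))(y-p(x))]$, then observe that the weight class $\{\Delta\ell\circ k_\ell : \ell\in\lall\}$ coincides with $\{\Delta\ell' : \ell'\in\lprop\}$ by constructing $\ell_{\mathrm{proper}}(v,y)=\ell(k_\ell(v),y)$. The paper states the key construction in one line without separating the two inequalities; your version is slightly more explicit in treating both directions and verifying properness, but the argument is the same.
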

\begin{proof}
In the distributional setting, we rewrite the definition of Decision OI.
We write the difference of losses in terms of the difference of their discrete derivatives,
\begin{align*}
    \lall\textrm{-}\mathrm{DecOIErr}_\D(p)
    &= \sup_{\ell \in \lall}\big|\E_{x,y \sim \D}[\ell(k_\ell \circ p(x),y)] - \E_{\substack{x \sim \D\\\yt \sim \Ber(p(x))}}[\ell(k_\ell \circ p(x),\yt)]\big| \\
    &=\sup_{\ell \in \lall}\big|\E_{x,y \sim \D}[y \cdot \Delta\ell(k_\ell \circ p(x)) - p(x) \cdot \Delta \ell(k_\ell \circ p(x))]\big|
\end{align*}
where the equality follows by the fact that, in the modeled world, $\E[\yt \vert x] = p(x)$ and the definition of $\Delta \ell$.
Then, we can combine terms and consider the supremum.
\begin{align*}
    &\sup_{\ell \in \lall}\big|\E_{x,y \sim \D}[\Delta \ell(k_\ell \circ p(x)) \cdot (y-p(x))]\big| \\
    &= \sup_{\ell \in \lprop}\big|\E_{x,y \sim \D}[\Delta \ell(p(x)) \cdot (y-p(x))]\big|
\end{align*}
The final equality, here, follows from the fact that for any loss, $\ell$ composed with the optimal post-processing $k_\ell$, can be viewed as a proper loss.
Formally, for any $\ell : [0,1] \to [-1,1]$, there exists a proper loss $\ell' \in \lprop$ (namely, $\ell'(p,y) = \ell(k_{\ell}(p), y)$) such that $\Delta \ell (k_\ell( p )) = \Delta\ell'(p)$, for all $p \in [0,1]$.

Note that the same argument works in the sequential setting, establishing the analogous equality.
\end{proof}

Using this view on Proper Calibration, we can immediately apply the Loss OI framework to achieve omnipredictors, with proper calibration replacing Decision OI / calibration.
\begin{lemma}\label{lem:pcal-to-omni} 
For a sequence of $T$ predictions $\pb$ and context, outcome pairs $\xb, \yb$,
$
(\Lcal, \Hcal)\omnireg (\pxy) \leq \pcal (\pxy) + (\Delta\Lcal \circ \Hcal)\maerr (\pxy)
$
\end{lemma}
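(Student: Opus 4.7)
The plan is to fix an arbitrary $\ell \in \Lcal$ and $h \in \Hcal$, decompose the per-round regret $\ell(k_\ell \circ p_t(x_t), y_t) - \ell(h(x_t), y_t)$ by ``passing through'' the modeled world in which outcomes are drawn from $\Ber(p_t(x_t))$, and then bound the resulting three pieces using optimality of $k_\ell$, proper calibration, and multiaccuracy respectively.

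The main algebraic identity is the following: for any prediction $q \in [0,1]$, any $y \in \{0,1\}$, and writing $\ell(q,y) = y \cdot \Delta\ell(q) + \ell(q,0)$ as in \Cref{def:discrete_deri},
\begin{gather*}
\ell(q, y) \;-\; \E_{\tilde y \sim \Ber(p_t(x_t))}\!\bigl[\ell(q, \tilde y)\bigr] \;=\; \Delta\ell(q)\cdot(y - p_t(x_t)).
\end{gather*}
Applying this identity once with $q = k_\ell(p_t(x_t))$ and once with $q = h(x_t)$ and subtracting, I get, for each realization of $\pb$,
\begin{align*}
\sum_{t=1}^T \bigl[\ell(k_\ell \circ p_t(x_t), y_t) - \ell(h(x_t), y_t)\bigr]
&= \underbrace{\sum_{t=1}^T \E_{\tilde y_t \sim \Ber(p_t(x_t))}\!\bigl[\ell(k_\ell \circ p_t(x_t), \tilde y_t) - \ell(h(x_t), \tilde y_t)\bigr]}_{(A)} \\
&\quad + \underbrace{\sum_{t=1}^T \Delta\ell\bigl(k_\ell \circ p_t(x_t)\bigr)\cdot(y_t - p_t(x_t))}_{(B)} \\
&\quad - \underbrace{\sum_{t=1}^T \Delta\ell\bigl(h(x_t)\bigr)\cdot(y_t - p_t(x_t))}_{(C)}.
\end{align*}

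Term $(A)$ is nonpositive termwise because $k_\ell(p_t(x_t)) = \argmin_{q}\E_{\tilde y_t \sim \Ber(p_t(x_t))}[\ell(q,\tilde y_t)]$. Term $(B)$ has the form $\sum_t w(p_t(x_t))(y_t - p_t(x_t))$ with $w(\cdot) = \Delta\ell(k_\ell(\cdot))$; as noted in the proof of \Cref{thm:pcal-decoi}, the map $\ell'(p,y) = \ell(k_\ell(p), y)$ is a proper loss satisfying $\Delta\ell' = \Delta\ell \circ k_\ell$, so $w \in \wprop$ and thus $|(B)| \le \pcal(\pxy)$. Term $(C)$ is exactly a $(\Delta\Lcal \circ \Hcal)$-multiaccuracy statistic, so $|(C)| \le (\Delta\Lcal \circ \Hcal)\maerr(\pxy)$.

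Combining these three bounds and taking the maximum over $\ell \in \Lcal$ and $h \in \Hcal$ gives the pointwise inequality
\begin{gather*}
\max_{\ell \in \Lcal,\, h \in \Hcal} \sum_{t=1}^T \bigl[\ell(k_\ell \circ p_t(x_t), y_t) - \ell(h(x_t), y_t)\bigr] \;\le\; \pcal(\pxy) + (\Delta\Lcal \circ \Hcal)\maerr(\pxy),
\end{gather*}
since the bounds on $(B)$ and $(C)$ are independent uniform bounds over $\ell$ and $(\ell,h)$ respectively. Taking expectation over the forecaster's randomness yields the claim. There is no serious obstacle here; the one subtlety worth flagging is the use of the absolute-value form of $\pcal$ to absorb both signs of $(B)$ across the max over $\ell$, which is why the proper-calibration definition needs to be sup over $w$ of absolute values rather than signed quantities.
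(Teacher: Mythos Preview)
Your proposal is correct and follows essentially the same argument as the paper's proof: both decompose $\ell(k_\ell(p_t(x_t)),y_t)-\ell(h(x_t),y_t)$ via the ``modeled world'' identity $\ell(q,y)-\E_{\tilde y\sim\Ber(p_t(x_t))}[\ell(q,\tilde y)]=\Delta\ell(q)(y-p_t(x_t))$, use optimality of $k_\ell$ to drop the modeled-world comparison, and bound the remaining two sums by $\pcal$ (via $\Delta\ell\circ k_\ell\in\wprop$) and $(\Delta\Lcal\circ\Hcal)\maerr$. The only cosmetic difference is that the paper applies the $k_\ell$-optimality inequality up front to get two terms, whereas you keep an exact three-term equality and then bound $(A)\le 0$.
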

The proof of lemma mirrors that of Proposition 4.5 in \cite{gopalan2022loss} in that it explicitly follows the Loss OI framework introduced in the paper.
For completeness and self-containment, we give a proof of the statement in the sequential setting.
\begin{proof}[Proof of \Cref{lem:pcal-to-omni}]
Recall that
\[
(\Lcal, \Hcal)\omnireg (\pxy) 
= \max_{\{h \in \Hcal, \ell \in \Lcal\}} \sum_{t=1}^T \ell (k_\ell(p_t(x_t)), y_t) - \ell (h (x_t), y_t)
\]
For a fixed timestep $t \in [T]$, we expand the inner expression as follows:
\begin{align}
\ell &(k_\ell(p_t(x_t)), y_t) - \ell (h (x_t), y_t) \\
&\leq \ell (k_\ell(p_t(x_t)), y_t) + \E_{\yt \sim \mathrm{Ber}(p_t(x_t))}[\ell (h(x_t), \yt)] - \E_{\yt \sim \mathrm{Ber}(p_t(x_t))}[\ell (k_\ell(p_t(x_t)), \yt)] - \ell (h (x_t), y_t) \\
&= \left[ \ell (k_\ell(p_t(x_t)), y_t) - \E_{\yt \sim \mathrm{Ber}(p_t(x_t))}[\ell (k_\ell(p_t(x_t)), \yt)] \right] + \left[ \E_{\yt \sim \mathrm{Ber}(p_t(x_t))}[\ell (h(x_t), \yt)] - \ell (h (x_t), y_t) \right] 
\end{align}
where the second line follows from the definition of $k_\ell$. Now we simplify both terms separately. For the first term, we have 
\begin{align}
\ell &(k_\ell(p_t(x_t)), y_t) - \E_{\yt \sim \mathrm{Ber}(p_t(x_t))}[\ell (k_\ell(p_t(x_t)), \yt)] \\
&= [y_t \Delta \ell (k_\ell(p_t(x_t))) + \ell (k_\ell(p_t(x_t)), 0)] - [p_t \Delta \ell (k_\ell(p_t(x_t))) + \ell (k_\ell(p_t(x_t))), 0)]  \\
&= (y_t - p_t(x_t)) \Delta \ell (k_\ell(p_t(x_t)))
\end{align}
where line (6) uses the fact that $\ell (p,y) = y (\ell(p,1) - \ell(p,0)) + \ell (p,0)$ for $y \in \{0,1\}$
Similarly, for the second term, we have 
\begin{align}
\E_{\yt \sim \mathrm{Ber}(p_t(x_t))} &[\ell (h(x_t), \yt)] - \ell (h(x_t), y_t) \\
&= [p_t (x_t) \Delta \ell (h(x_t)) + \ell (h(x_t), 0)] - [y_t \Delta \ell (h(x_t)) + \ell (h(x_t), 0)]  \\
&= (p_t (x_t) - y_t) \Delta \ell (h(x_t))
\end{align}
Plugging these simplifications back into the regret term, we obtain
\begin{align}
(\Lcal, \Hcal)\omnireg (\pxy) 
&\leq \max_{\{h \in \Hcal, \ell \in \Lcal\}} \sum_{t=1}^T (y_t - p_t (x_t)) \Delta \ell (k_\ell(p_t(x_t))) + (p_t (x_t) - y_t) \Delta \ell (h(x_t)) \\
&\leq \max_{\{\ell \in \Lcal\}} \left| \sum_{t=1}^T (y_t - p_t (x_t)) \Delta \ell (k_\ell(p_t (x_t))) \right| + \max_{\{h \in \Hcal, \ell \in \Lcal\}} \left| \sum_{t=1}^T (p_t (x_t) - y_t) \Delta \ell (h(x_t)) \right| \\
\label{eqline:lprop}
&\leq \max_{\{\ell \in \lprop\}} \left| \sum_{t=1}^T (y_t - p_t(x_t)) \Delta \ell (p_t(x_t)) \right| + \max_{\{h \in \Hcal, \ell \in \Lcal\}} \left| \sum_{t=1}^T (p_t (x_t) - y_t) \Delta \ell (h(x_t)) \right| \\ 
&\leq \pcal(\pxy) + (\Delta\Lcal \circ \Hcal)\maerr (\pxy)
\end{align}
where line~\eqref{eqline:lprop} follows from the fact that $\Delta\ell$ composed with its $k_\ell$ function corresponds to some $\Delta \ell_\mathrm{proper}$ for some proper loss $\ell_\mathrm{proper} \in \lprop$.
\end{proof}

Naturally, proper calibration can be incorporated into the original algorithms for learning omnipredictors via Loss OI.
In \Cref{app:mcboost}, we describe an adaptation of the boosting-style algorithm of \citet{gopalan2022loss} that outputs an omnipredictor using a multiplicative factor of $1/\eps^6$ fewer samples than the original algorithm based on $\ell_1$-calibration.

\subsection{Approximating Proper Calibration with Weighted Calibration over Thresholds}
To achieve calibration for all proper weight functions, we will take advantage of the basis decomposition of proper scoring losses in \citep{li2022optimization, kleinberg2023u}. To this end, we introduce a notion of calibration, weighted by threshold functions that utilizes the basis functions as weight functions to approximate proper calibration effectively. 
Concretely, we consider the following characterization of proper losses.
\begin{lemma}[V-shaped proper losses \citep{li2022optimization, kleinberg2023u}]
For $v \in [0,1]$, define the proper loss $\ell_v (p, y) = (y - v) \text{sgn} (v - p)$.
Then every $\ell \in \lprop$ can be expressed as a convex combination of these $v$-shaped proper losses. That is, for every $\ell \in \lprop$, there exists nonnegative coefficients $c_v(\ell)$ such that $\int_0^1 c_v (\ell) dv \leq 2$ and
\[
\ell (p,y) = \int_{0}^1 c_v(\ell) \ell_v (p, y) dv
\]
\end{lemma}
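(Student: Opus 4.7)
My plan is to reduce the claim, via the Savage representation of proper losses, to a layer-cake decomposition of a bounded monotone function. First, I would recall that a loss $\ell$ is proper if and only if its Bayes risk $\bar\ell(v) = v\,\ell(v,1) + (1-v)\,\ell(v,0)$ is concave in $v$; moreover, the discrete derivative $\Delta \ell(p) = \ell(p,1) - \ell(p,0)$ agrees almost everywhere with a subgradient of $\bar\ell$, and so $\Delta \ell$ is non-increasing on $[0,1]$. Savage's identity $\ell(p,y) = \bar\ell(p) + (y-p)\,\Delta\ell(p)$ then shows that $\ell$ is completely determined by $\Delta \ell$ together with a single boundary value.

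Next, I would compute the discrete derivative of the basis, $\Delta \ell_v(p) = \sgn(v - p)$, and exploit the monotonicity of $\Delta \ell$: there exists a nonnegative Borel measure $\mu$ on $[0,1]$ with $\Delta \ell(p) = \Delta \ell(1) + \int_0^1 \1[v > p]\,d\mu(v)$. Converting indicators to signs via $\1[v > p] = \tfrac{1}{2}(1 + \sgn(v-p))$ yields $\Delta \ell(p) = C + \tfrac{1}{2}\int_0^1 \Delta \ell_v(p)\,d\mu(v)$, where $C$ depends only on $\Delta \ell(0)$ and $\Delta \ell(1)$. The representation then lifts from the discrete derivative to the full loss by applying Savage's identity to both sides (or equivalently by recognizing $\bar\ell$ and $\bar\ell_v$ as antiderivatives of $\Delta \ell$ and $\Delta \ell_v$), giving $\ell(p,y) = \int_0^1 c_v(\ell)\,\ell_v(p,y)\,dv$ with $c_v(\ell)$ playing the role of the density of $\tfrac{1}{2}\mu$ (extended to a Riemann--Stieltjes formulation when $\mu$ has a singular part).

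The bound $\int_0^1 c_v(\ell)\,dv \leq 2$ then falls out by evaluating the identity at the extremes, where $\ell_v(0,0) = -v$ and $\ell_v(1,1) = -(1-v)$: we get $\ell(0,0) = -\int c_v(\ell)\,v\,dv$ and $\ell(1,1) = -\int c_v(\ell)\,(1-v)\,dv$, each lying in $[-1,0]$ by boundedness of $\ell$, so adding them yields $\int c_v(\ell)\,dv \leq 2$. The main obstacle I anticipate is the ``trivial-shift'' ambiguity: proper losses are closed under adding any function of $y$ alone, which corresponds both to the constant $C$ above and to the undetermined boundary value of $\bar\ell$. Pinning this down requires a normalization convention (for example, $\bar\ell(0) = \bar\ell(1) = 0$, which is consistent with the range $[-1,1]$), and one must verify that such a normalization is admissible within $\lprop$ so that the V-shaped basis genuinely spans the class; only under such a normalization does the constant-free decomposition claimed in the lemma hold exactly.
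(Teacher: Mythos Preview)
The paper does not supply its own proof of this lemma; it is quoted from \citep{li2022optimization,kleinberg2023u} and invoked as a black box. Your plan via the Savage representation and a layer-cake decomposition of the monotone function $\Delta\ell$ is the standard argument in those references, and the endpoint-evaluation trick for the bound $\int c_v\,dv \le 2$ is correct.

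One concrete slip is worth flagging. The example normalization you float, $\bar\ell(0)=\bar\ell(1)=0$ (equivalently $\ell(0,0)=\ell(1,1)=0$), is not admissible here: your own endpoint identity then gives $\int c_v\,dv = -(\ell(0,0)+\ell(1,1)) = 0$, which together with $c_v\ge 0$ forces $c_v\equiv 0$ and admits only the trivial loss. The correct normalization is the one that kills the constant $C=\tfrac12\bigl(\Delta\ell(0)+\Delta\ell(1)\bigr)$ by subtracting the affine-in-$y$ part $Cy$, and then fixes the remaining additive constant so that $\bar\ell(p)$ matches $-\tfrac12\int |p-v|\,d\mu(v)$ (both are antiderivatives of $\Delta\ell$, so they differ by a constant). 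With that adjustment the rest of your plan goes through; in particular, the paper's downstream uses of the lemma only rely on the representation of $\Delta\ell$ and the coefficient bound, both of which survive this correction.
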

Immediately, this characterization also gives a characterization of the discrete derivatives of proper losses.
\begin{corollary}\label{cor:v-disc}
For every $\ell \in \lprop$, there exists nonnegative coefficients $c_v(\ell)$ such that $\int_0^1 c_v (\ell) dv \leq 2$ and
\[
\Delta\ell (p) = \int_{0}^1 c_v(\ell) \Delta\ell_v (p) dv
\]
where $\Delta \ell_v (p) = \text{sgn} (v - p)$ is a \{-1,1\} threshold function at \( v \).
\end{corollary}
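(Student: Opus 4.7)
The plan is to derive the statement as an immediate consequence of the preceding V-shaped decomposition lemma, by applying the (linear) discrete-derivative operator $\Delta$ to both sides. The only real content is the observation that $\Delta$ acts on the basis elements $\ell_v$ in a particularly clean way, yielding threshold functions with no extra coefficients.

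First I would start from the identity provided by the lemma,
\begin{equation*}
\ell(p,y) \;=\; \int_0^1 c_v(\ell)\, \ell_v(p,y)\, dv,
\end{equation*}
which holds pointwise in $(p,y)$ with nonnegative weights satisfying $\int_0^1 c_v(\ell)\,dv \leq 2$. Evaluating at $y=1$ and $y=0$ and subtracting gives
\begin{equation*}
\Delta \ell(p) \;=\; \ell(p,1) - \ell(p,0) \;=\; \int_0^1 c_v(\ell)\, \bigl[\ell_v(p,1) - \ell_v(p,0)\bigr]\, dv \;=\; \int_0^1 c_v(\ell)\, \Delta \ell_v(p)\, dv,
\end{equation*}
where the interchange of $\Delta$ with the integral is justified by linearity together with the boundedness $|\ell_v(p,y)| \leq 1$ and integrability of the weights $c_v(\ell)$.

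Next I would compute $\Delta \ell_v(p)$ explicitly from the definition $\ell_v(p,y) = (y-v)\,\mathrm{sgn}(v-p)$:
\begin{equation*}
\Delta \ell_v(p) \;=\; (1-v)\,\mathrm{sgn}(v-p) - (0-v)\,\mathrm{sgn}(v-p) \;=\; \mathrm{sgn}(v-p),
\end{equation*}
which is exactly the claimed $\pm 1$-valued threshold function at $v$. Substituting this back yields the desired identity, and the coefficient bound $\int_0^1 c_v(\ell)\,dv \leq 2$ carries over verbatim from the lemma.

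There is no real obstacle here: the corollary is purely a matter of linearity of $\Delta$ combined with a one-line calculation showing that the discrete derivative of the V-shaped proper loss $\ell_v$ is the threshold $\mathrm{sgn}(v-p)$. The only thing to be mildly careful about is justifying the interchange of $\Delta$ with the integral, which follows from uniform boundedness of the integrand in $v$.
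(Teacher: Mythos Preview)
Your proposal is correct and matches the paper's approach exactly: the paper treats this corollary as immediate from the preceding lemma (writing only ``Immediately, this characterization also gives a characterization of the discrete derivatives of proper losses''), and your write-up simply spells out the two steps implicit in that remark --- linearity of $\Delta$ under the integral and the one-line computation $\Delta\ell_v(p) = (1-v)\,\mathrm{sgn}(v-p) - (-v)\,\mathrm{sgn}(v-p) = \mathrm{sgn}(v-p)$.
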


In the \Cref{lem:tcal-decom}, we show that proper calibration and calibration weighted by threshold functions are within a constant factor of each other.
Specifically, we let $\wthres$ denote the set of weight functions associated with $V$-shaped proper scoring rules, where each weight function $w$ corresponds to the discrete derivative of a $V$-shaped proper loss function i.e $\wthres = \{ \text{sgn} (v - p) : v \in [0,1] \}$.
Then, we obtain the following tight approximation of the proper calibration error.

\begin{lemma}\label{lem:tcal-decom}
For a sequence of $T$ predictions $\pb$ and context, outcome pairs $\xb, \yb$,
\[
\wthres\calerr (\pxy) \leq \pcal (\pxy) \leq 2 \wthres\calerr (\pxy)
\]
\end{lemma}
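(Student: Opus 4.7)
The plan is to establish the two inequalities separately, relying on the fact that $\wthres$ is precisely the set of discrete derivatives of V-shaped proper losses, i.e.\ $\wthres \subseteq \wprop$, together with the convex-combination decomposition provided by \Cref{cor:v-disc}.

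For the left inequality, I would observe that every threshold weight $\Th_v(p) = \sgn(v-p)$ is exactly $\Delta \ell_v$ for the V-shaped proper loss $\ell_v$, so $\wthres \subseteq \wprop$. Since the proper calibration error is a supremum of $|\sum_t w(p_t(x_t))(y_t - p_t(x_t))|$ over a larger class of weight functions, we immediately get $\wthres\calerr(\pxy) \le \pcal(\pxy)$.

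For the right inequality, fix any $\ell \in \lprop$ and apply \Cref{cor:v-disc} to expand $\Delta \ell(p) = \int_0^1 c_v(\ell)\, \Delta \ell_v(p)\, dv$ with $\int_0^1 c_v(\ell)\, dv \le 2$ and $c_v(\ell) \ge 0$. Substituting this into the weighted sum and swapping the order of summation and integration (justified since everything is bounded) gives
\begin{align*}
\left|\sum_{t=1}^T \Delta\ell(p_t(x_t))(y_t - p_t(x_t))\right|
&= \left|\int_0^1 c_v(\ell) \sum_{t=1}^T \Delta\ell_v(p_t(x_t))(y_t - p_t(x_t))\, dv\right| \\
&\le \int_0^1 c_v(\ell) \left|\sum_{t=1}^T \Delta\ell_v(p_t(x_t))(y_t - p_t(x_t))\right| dv \\
&\le \wthres\calerr(\pxy) \cdot \int_0^1 c_v(\ell)\, dv \;\le\; 2\, \wthres\calerr(\pxy).
\end{align*}
Taking the supremum over $\ell \in \lprop$ yields $\pcal(\pxy) \le 2\,\wthres\calerr(\pxy)$, completing the proof.

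There is no real obstacle here; the argument is essentially just ``$\wthres \subseteq \wprop$ gives one direction, and the $\lprop$-to-$\wthres$ decomposition with total mass $\le 2$ gives the other.'' The only mild care is to pull the absolute value inside the integral against the nonnegative density $c_v(\ell)$ via the triangle inequality for integrals, which is standard. The same argument applies verbatim in the distributional setting by replacing sums with expectations over $\Dcal$.
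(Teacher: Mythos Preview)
Your proposal is correct and follows essentially the same approach as the paper's proof: the left inequality comes from the containment $\wthres \subseteq \wprop$, and the right inequality comes from the V-shaped decomposition in \Cref{cor:v-disc} combined with the coefficient-norm bound $\int_0^1 c_v(\ell)\,dv \le 2$.
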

\begin{proof}
We wish to show that
\[
\wthres \calerr (\pxy) \leq \Wcal_{\mathrm{proper}}\calerr (\pxy) \leq 2 \wthres \calerr (\pxy)
\]
By \Cref{cor:v-disc}, we know that for every $\ell \in \lprop$, there exists nonnegative coefficients $c_v(\ell)$ such that $\int_0^1 c_v (\ell) dv \leq 2$
\[
\Delta\ell (p) = \int_{0}^1 c_v(\ell) \Delta\ell_v (p) dv
\]
\begin{align}
\max_{w \in \Wcal_\mathrm{proper}} \left| \sum_{t=1}^T w(p_t(x_t)) (y_t - p_t(x_t)) \right| 
&= \max_{\ell \in \lprop} \left| \sum_{t=1}^T \Delta\ell (p_t(x_t))(y_t - p_t(x_t)) \right| \\
&= \max_{\ell \in \lprop} \left| \sum_{t=1}^T \left( \int_{0}^1 c_v(\ell) \Delta\ell_v (p_t(x_t)) dv \right) (y_t - p_t(x)) \right| \tag{\Cref{cor:v-disc}} \\
&= \max_{\ell \in \lprop} \left| \int_{0}^1 c_v(\ell) \left( \sum_{t=1}^T \Delta\ell_v (p_t(x_t)) (y_t - p_t(x)) \right) dv  \right| \\
&\leq \max_{\ell \in \lprop} \left| \int_{0}^1 c_v(\ell) \sup_{v \in [0,1]} \left( \sum_{t=1}^T \Delta\ell_v (p_t(x_t)) (y_t - p_t(x)) \right) dv  \right| \\
&\leq \max_{\ell \in \lprop} \left| \int_{0}^1 c_v(\ell) dv  \right| \sup_{v \in [0,1]} \left| \sum_{t=1}^T \Delta\ell_v (p_t(x_t)) (y_t - p_t(x)) \right| \\
&\leq 2 \sup_{v \in [0,1]} \left| \sum_{t=1}^T \Delta\ell_v (p_t(x_t)) (y_t - p_t(x)) \right| 
\end{align}
Thus,
\[
\Wcal_{\mathrm{proper}}\calerr (\pxy) \leq 2 \wthres \calerr (\pxy)
\]
\sloppy The first inequality $\wthres \calerr (\pxy) \leq \Wcal_{\mathrm{proper}}\calerr (\pxy)$ follows from the fact that $\wthres \subset \Wcal_{\mathrm{proper}}$
\end{proof}

\Cref{lem:tcal-decom} bounds the proper calibration in terms of a notion of weighted calibration with an uncountably infinite collection of weight functions $\wthres$.
Towards a practically-realizable algorithm for proper calibration, we show that for sufficiently discretized predictions, weighted calibration with respect to a discretized collection of thresholds suffices to bound the proper calibration error.
Let $\wthres^\gamma$ denote a $\gamma$-discretization of $\wthres$. That is, $\wthres^\gamma = \{ \text{sgn} (v - p) : v \in \{0, \gamma, 2\gamma, \ldots, 1\} \}$.

\begin{lemma}[$\gamma$-discretized thresholds]\label{lem:eps-thres}
For a sequence of $T$ predictions $\pb$ with values in $\{0, \gamma, 2\gamma, \ldots, 1\}$ and context, outcome pairs $\xb, \yb$,
\[
\wthres \calerr (\pxy) = \wthres^\gamma \calerr (\pxy)
\]
\end{lemma}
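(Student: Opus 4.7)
The plan is to prove both inequalities, with one direction being immediate and the other leveraging the discreteness of the predictions.

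First, the inequality $\wthres^\gamma\calerr(\pxy) \le \wthres\calerr(\pxy)$ is immediate because $\wthres^\gamma \subseteq \wthres$, so the supremum over the larger class can only be larger.

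For the reverse inequality, the key observation is that, viewed as a function of $v \in [0,1]$, the quantity
\[
F(v) \;=\; \sum_{t=1}^T \sgn(v - p_t(x_t)) \cdot (y_t - p_t(x_t))
\]
is piecewise constant, with breakpoints only at values of $v$ equal to some $p_t(x_t)$. Since by hypothesis every $p_t(x_t)$ lies in the grid $\{0,\gamma,2\gamma,\ldots,1\}$, these breakpoints themselves lie on the grid. Concretely, I would take an arbitrary $v \in [0,1]$ and let $v' = \gamma \lfloor v/\gamma \rfloor$ be the largest grid point with $v' \le v$. For each $t$, since $p_t(x_t) \in \{0,\gamma,\ldots,1\}$, either $p_t(x_t) \le v'$ (in which case $\sgn(v - p_t(x_t)) = \sgn(v' - p_t(x_t)) \in \{0,+1\}$) or $p_t(x_t) \ge v' + \gamma > v$ (in which case both signs equal $-1$). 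Thus $F(v) = F(v')$ with $v' \in \{0,\gamma,\ldots,1\}$, so the supremum over $v \in [0,1]$ is attained at a grid point, giving $\wthres\calerr(\pxy) \le \wthres^\gamma\calerr(\pxy)$.

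Combining the two inequalities yields the claimed equality. The only subtlety lies in the behavior of $\sgn$ at $0$ (i.e., when $v = p_t(x_t)$), and this is handled cleanly by rounding $v$ down to the nearest grid point $v'$, which ensures the signs agree for every $t$ under any fixed convention. I do not anticipate a substantive obstacle; the argument is essentially a one-paragraph piecewise-constant observation made possible by the assumption that every prediction lies on the grid.
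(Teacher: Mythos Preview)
Your proposal is correct and follows essentially the same approach as the paper: both argue that the function $v \mapsto \sum_t \sgn(v - p_t(x_t))(y_t - p_t(x_t))$ is piecewise constant with breakpoints only on the grid, so the supremum over $[0,1]$ coincides with the maximum over $\{0,\gamma,\ldots,1\}$. One small caveat: your claim that rounding down to $v' = \gamma\lfloor v/\gamma\rfloor$ makes the signs agree ``under any fixed convention'' is slightly too strong --- it works under the paper's convention $\sgn(0)=1$ (and rounding up would work under $\sgn(0)=-1$), but not literally under all conventions; this is a cosmetic point and not a real gap.
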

\begin{proof}[Proof of \Cref{lem:eps-thres}]
Recall that 
\[
\wthres \calerr (\pxy) = \max_{v \in [0,1]} \left| \sum_{t=1}^T \text{sgn}(v - p_t(x_t)) \cdot (y_t - p_t(x_t)) \right|.
\]
Since the predictions \(p_t(x_t)\) take values only in \(\{0, \gamma, \ldots, 1\}\), the sign function \(\text{sgn}(v - p_t(x_t))\) can only change at points in \(\{0, \gamma, \ldots, 1\}\). Hence, for any \(v \in [0,1]\), there exists some \(v' \in \{0, \gamma, \ldots, 1\}\) such that:
\[
\text{sgn}(v - p_t(x_t)) = \text{sgn}(v' - p_t(x_t)).
\]
This implies
\[
\sum_{t=1}^T \text{sgn}(v - p_t(x_t)) \cdot (y_t - p_t(x_t)) = \sum_{t=1}^T \text{sgn}(v' - p_t(x_t)) \cdot (y_t - p_t(x_t)).
\]
Therefore
\[
\max_{v \in [0,1]} \left| \sum_{t=1}^T \text{sgn}(v - p_t(x_t)) \cdot (y_t - p_t(x_t)) \right| = \max_{v' \in \{0, \gamma, \ldots, 1\}} \left| \sum_{t=1}^T \text{sgn}(v' - p_t(x_t)) \cdot (y_t - p_t(x_t)) \right|.
\]
Thus
\[
\wthres \calerr (\pxy) = \wthres^\gamma \calerr (\pxy).
\]
\end{proof}

\subsection{Algorithm for Proper Calibration in the Online Setting}
In this subsection, we present an algorithm that guarantees proper calibration in the online setting at the rate of $O(\sqrt{T \log T})$.
\Cref{alg:tcal} achieves this by ensuring proper calibration by auditing with threshold weight functions per \Cref{lem:tcal-decom}.
Moreover, the rate of our algorithm is nearly tight; no algorithm can guarantee expected threshold calibration error at a rate of $\Omega (\sqrt{T})$.
This lower bound is unsurprising, but follows by \Cref{par:prop_to_ucal}, where we show that U-calibration lower bounds a multiplicative factor of proper calibration.

\paragraph{Overview of \Cref{alg:tcal}:}
The algorithm is based on Blackwell's Approachability Theorem. We define a two player game where the adversary player selects $z_t = (x_t, y_t) \in \Xcal \times \{0,1\}$ and the learner selects $p_t: \Xcal \rightarrow [1/T]$. Both players are allowed to play randomized strategies but since the learner observes $x_t$, we can simplify things and only consider $y_t \in \{0,1\}$ and $p_t = p_t (x_t)$. We design the payoff vector of this game to reflect our objective of threshold calibration. That is, define
\[
u_{v, s} (p_t, y_t) = s(y_t - p_t) \mathrm{sgn}(v - p_t) \quad \text{for} \ v \in [1/T], s \in \{ +, -\}
\] 
Observe that after T rounds of interaction, 
$\wthres \calerr (\pxy) = \max_{v,s} \sum_{t \in [T]} u_{v,s} (p_t, z_t)$. Therefore, we design the learner's target set to be the set of all vectors $u$ with coordinates less than $1/T$.

We use exponential weights update method in \Cref{line:tcal_exp} to generate sequence of halfspaces $w^t$ with coordinates for every $v \in [1/T], s \in \{ +, -\}$. Given a halfspace $w^t$, the algorithm computes the function $f( q)$ defined in \Cref{line:f_eq} and the algorithm chooses the distribution to sample $p_t$ using the strategy described in \Cref{line:tcal-str-start} to \Cref{line:tcal-str-end}.

In all, we establish the following regret bound.
\begin{theorem}[Proper Calibration Upper Bound]\label{thm:thres-alg}
\Cref{alg:tcal} guarantees expected proper calibration error of $O \left(\sqrt{T \ln T} \right)$
\end{theorem}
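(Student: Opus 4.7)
The plan is to combine the discretization reductions we have already proved with a standard Blackwell-approachability plus multiplicative-weights analysis in the style of \citet{pmlr-v19-abernethy11b}. Since Algorithm~\ref{alg:tcal} only ever outputs predictions in the grid $\{0, 1/T, \ldots, 1\}$, \Cref{lem:tcal-decom} and \Cref{lem:eps-thres} together reduce the problem of bounding $\pcal(\pxy)$ at rate $O(\sqrt{T \log T})$ to the problem of bounding $\wthres^{1/T}\calerr(\pxy)$ at the same rate. Equivalently, I need to control the maximum coordinate of the cumulative payoff vector $U_T = \sum_{t=1}^T u(p_t, y_t) \in \mathbb{R}^{2(T+1)}$, where $u_{v,s}(p,y) = s \cdot \sgn(v - p)(y - p)$ ranges over $v \in \{0, 1/T, \ldots, 1\}$ and $s \in \{+,-\}$.

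I would then cast the task as the standard approachability game in which the learner aims to drive $U_T$ into the negative orthant $S = \{u : u_{v,s} \leq 0 \ \forall (v,s)\}$. By the equivalence of approachability and online linear optimization in \citet{pmlr-v19-abernethy11b}, it suffices to maintain dual weights $w^t \in \Delta^{2(T+1)}$ over the coordinates via Hedge/exponential weights (Line~\ref{line:tcal_exp}) and, at each round, to play a prediction satisfying the halfspace condition $\mathbb{E}[\langle w^t, u(p_t, y_t)\rangle] \leq 0$ for both $y_t \in \{0,1\}$. The regret of Hedge over $N$ experts is $O(\sqrt{T \log N})$, which in our $N = 2(T+1)$ setting produces exactly the claimed $\sqrt{T \log T}$ factor; the per-round payoff coordinates are bounded by $1$, so the standard ABH analysis applies once the halfspace oracle is correct.

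The main technical obstacle is implementing the halfspace oracle. Writing $f(q) = \sum_{v,s} w^t_{v,s}\, s\, \sgn(v - q)$ as in Line~\ref{line:f_eq}, the required halfspace condition becomes $\mathbb{E}[f(p_t)(y_t - p_t)] \leq 0$ for both $y_t = 0$ and $y_t = 1$. Because $f$ is piecewise constant on the grid and generally discontinuous, in general no $p$ satisfies $f(p) = 0$. My plan is instead to identify adjacent grid points $p_-, p_+ = p_- + 1/T$ across which $f$ switches sign (Lines~\ref{line:tcal-str-start}--\ref{line:tcal-str-end}) and mix between them with a probability $\lambda$ chosen so that $\mathbb{E}[f(p_t)] = 0$. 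A short calculation then shows that the $y_t$-dependent correction $\mathbb{E}[f(p_t)\,p_t]$ is at most $|f(p_+) - f(p_-)|/T$, which is $O(1/T)$ in magnitude and thus contributes only $O(1)$ cumulative slack that is dominated by the $O(\sqrt{T \log T})$ ABH term.

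Assembling these pieces, the ABH-with-Hedge analysis yields $\mathbb{E}[\max_{v,s}(U_T)_{v,s}] = O(\sqrt{T \log T})$, and \Cref{lem:tcal-decom} together with \Cref{lem:eps-thres} transfers this to the claimed bound on expected proper calibration error. The subtlest step to make rigorous is the halfspace oracle's correctness: verifying both that a pair of sign-switching neighbors $p_-, p_+$ always exists when $w^t \neq 0$ (using that $f(0)$ and $f(1)$ have opposite signs by the structure of the threshold payoffs) and that mixing between them simultaneously satisfies the halfspace constraint for both $y_t \in \{0,1\}$ with only $O(1/T)$ per-round slack.
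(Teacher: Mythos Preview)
Your proposal is correct and matches the paper's argument: reduce to $\wthres^{1/T}$-calibration via \Cref{lem:tcal-decom} and \Cref{lem:eps-thres}, run exponential weights over the $2(T{+}1)$ signed-threshold constraints (\Cref{lem:tcal_exp_weight}), and show the sign-switch mixing rule is a halfspace oracle with $O(1/T)$ per-round slack (\Cref{lem:tcal_halfspace}).

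One small correction on the step you flag as subtlest: it is \emph{not} true that $f(0)$ and $f(1)$ always have opposite signs ``by the structure of the threshold payoffs''. For instance, if $w^t$ concentrates on $(v{=}1,s{=}+)$ then $f(0)=f(1)=1$, and if it concentrates on $(v{=}1,s{=}-)$ then $f(0)=f(1)=-1$. The algorithm handles this not by a structural fact about $f$ but via the explicit boundary checks: if $f(0)\le 0$ predict $p_t=0$ (then $\langle w^t,u\rangle = y_t f(0)\le 0$), and if $f(1)>0$ predict $p_t=1$ (then $\langle w^t,u\rangle = (y_t{-}1)f(1)\le 0$). Only in the residual case do you know $f(0)>0$ and $f(1)\le 0$, and \emph{that} is what guarantees a sign-switching adjacent pair on the grid; your $O(1/T)$-slack mixing analysis then applies exactly as you describe.
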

Formally, we leverage the following lemmas that give guarantees on the Blackwell optimal response (in \Cref{line:tcal-str-start} to \Cref{line:tcal-str-end}) and the dual player's strategy (in \Cref{line:tcal_exp}). This dual player strategy bears similarity in structure to the algorithms used for classical calibration \citep{foster1998asymptotic,onlinemultivalid, okoroafor2024faster}.

\begin{algorithm}[h!]
\caption{Proper Calibration}
\label{alg:tcal}
\textbf{Input:} Sequence of samples $\{ y_1, \ldots, y_T \}$ \\
\textbf{Output:} Sequence of (randomized) predictors $p_1, \ldots, p_T$
\begin{algorithmic}[1]
\For{each $t \in [T]$}
    \State Let $w_{v, s}^t := \frac{\exp \left( \eta \sum_{i=1}^{t-1} u_{v, s} (p_i, z_i)\right)}{ 
    \sum_{v', s'} \exp \left( \eta \sum_{i=1}^{t-1} u_{v', s'} (p_i, z_i)\right)}$ for all $v \in [1/T], s \in \{+,-\}$ \label{line:tcal_exp}
\State Compute
\[
f(q) = \sum_{v, s} w^t_{v,s} \cdot s \cdot \mathrm{sgn}(v - q)
\] \label{line:tcal_f_eq}
    \If{$f(0) \leq 0$} \label{line:tcal-str-start}
    \State Predict $p_t = 0$
    \ElsIf{$f(1) > 0$}
    \State Predict $p_t = 1$
    \Else
    \State Find adjacent probabilities $q_i, q_{i+1}$ such that $f(q_i) \cdot f(q_{i+1}) \leq 0$ 
\State Predict $p_t = q_i$ with prob $\frac{|f(q_{i+1})|}{|f(q_i)| + |f(q_{i+1})|}$ and $p_t = q_{i+1}$ with prob $\frac{|f( q_i)|}{|f(q_i)| + |f(q_{i+1})|}$ \label{line:tcal-str-end}
    \EndIf 
    \State Observe $x_t$, predict $p_t$, and then observe $y_t$
\EndFor
\end{algorithmic}
\end{algorithm}

\begin{lemma}[Halfspace Approachability]\label{lem:tcal_halfspace}
Given a halfspace $w$, the strategy described in \Cref{line:tcal-str-start} to \Cref{line:tcal-str-end} outputs a distribution $p_t$ over $[1/T]$ such that $\E_{p_t} [\langle w, u (p_t, z_t) \rangle] \leq 1/T$ for any choice of $z_t$
\end{lemma}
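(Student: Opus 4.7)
The plan is to reduce the multi-dimensional approachability condition to a one-dimensional identity that is already encoded in the function $f$ defined in \Cref{line:tcal_f_eq}. By linearity, and the definition of the payoff coordinates $u_{v,s}(p_t, y_t) = s(y_t - p_t)\,\mathrm{sgn}(v - p_t)$,
\[
\langle w^t, u(p_t, z_t)\rangle = \sum_{v,s} w^t_{v,s} \cdot s (y_t - p_t)\, \mathrm{sgn}(v - p_t) = (y_t - p_t) \cdot f(p_t).
\]
So it suffices to show $\E_{p_t}[(y_t - p_t) f(p_t)] \leq 1/T$ for every $y_t \in \{0,1\}$.

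The two deterministic branches are immediate. If $f(0) \leq 0$, the algorithm plays $p_t = 0$, and $(y_t - 0) f(0) = y_t\, f(0) \leq 0$ since $y_t \geq 0$. Symmetrically, if $f(1) > 0$, then $p_t = 1$ and $(y_t - 1) f(1) \leq 0$ since $y_t - 1 \leq 0$. In both cases the realized payoff is at most $0$, which is $\leq 1/T$.

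For the mixing branch I would first argue that the requested adjacent pair actually exists: since neither deterministic branch fired, $f(0) > 0$ and $f(1) \leq 0$, so a discrete intermediate-value argument along the grid $0, 1/T, 2/T, \ldots, 1$ produces adjacent points $q_i, q_{i+1}$ with $f(q_i) f(q_{i+1}) \leq 0$. After relabeling, assume $f(q_i) \leq 0 \leq f(q_{i+1})$ (the other sign pattern is symmetric). The key design feature of the mixing rule is that $\alpha = |f(q_{i+1})|/(|f(q_i)|+|f(q_{i+1})|)$ is precisely the linear-interpolation weight that zeros out $\E_{p_t}[f(p_t)]$, i.e.\ $\alpha f(q_i) + (1-\alpha) f(q_{i+1}) = 0$. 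Therefore
\[
\E_{p_t}[(y_t - p_t) f(p_t)] = y_t \cdot \E_{p_t}[f(p_t)] - \E_{p_t}[p_t f(p_t)] = -\E_{p_t}[p_t f(p_t)],
\]
and a direct computation using $q_{i+1} - q_i = 1/T$ yields
\[
\E_{p_t}[p_t f(p_t)] = \frac{(q_{i+1} - q_i)\,|f(q_i)||f(q_{i+1})|}{|f(q_i)|+|f(q_{i+1})|} = \frac{|f(q_i)||f(q_{i+1})|}{T\bigl(|f(q_i)|+|f(q_{i+1})|\bigr)} \geq 0,
\]
so the expected payoff is at most $0 \leq 1/T$, as required.

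The main obstacles are really bookkeeping rather than conceptual: carrying the $s \in \{\pm\}$ sign convention cleanly through the factorization of the inner product, verifying that the adjacent-pair search always succeeds whenever neither deterministic branch triggers, and handling the degenerate corner where $f$ vanishes at both chosen grid points (where either choice gives payoff $0$, so the mixing weights can be set arbitrarily). Notably, the analysis actually produces the stronger bound $\leq 0$; the stated $1/T$ slack is never used, but this looseness is convenient when one sums the guarantee with other approachability-based components, where a per-step overhead of $1/T$ is the natural budget.
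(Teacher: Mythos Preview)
Your reduction to $(y_t - p_t) f(p_t)$ and the treatment of the two deterministic branches are correct and match the paper's proof. The mixing-branch argument, however, contains a sign error. You write ``after relabeling, assume $f(q_i) \leq 0 \leq f(q_{i+1})$ (the other sign pattern is symmetric),'' but this relabeling is not innocuous: the grid ordering $q_i < q_{i+1}$ is fixed, and your later step uses $q_{i+1}-q_i = 1/T > 0$. In the sign pattern that actually arises from $f(0)>0,\, f(1)\le 0$ --- namely $f(q_i) > 0 \ge f(q_{i+1})$ --- your own formula flips sign:
\[
\E_{p_t}[p_t f(p_t)] \;=\; -\,\frac{(q_{i+1}-q_i)\,|f(q_i)||f(q_{i+1})|}{|f(q_i)|+|f(q_{i+1})|} \;\le\; 0,
\]
so the expected payoff $-\E_{p_t}[p_t f(p_t)]$ is \emph{nonnegative}, equal to $\tfrac{|f(q_i)||f(q_{i+1})|}{T\,(|f(q_i)|+|f(q_{i+1})|)}$. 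This is at most $1/T$ because $|f|\le 1$ (the weights $w_{v,s}$ are a probability vector), which is exactly the bound claimed. Thus your concluding remark that ``the analysis actually produces the stronger bound $\le 0$; the stated $1/T$ slack is never used'' is incorrect: the $1/T$ is genuinely needed in this sign case, and the paper's proof uses it.

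Aside from this sign slip, your argument is essentially the paper's: both factor the inner product as $(y-p)f(p)$, dispatch the endpoint cases by sign, and in the mixing case exploit that the weights cancel the $y$-coefficient and leave a residual of size at most $(q_{i+1}-q_i)\cdot 1 = 1/T$.
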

\begin{proof}
We consider the cases in the strategy separately:

\textbf{Case 1:} If $f(0) \leq 0$, predict $p_t = 0$.
Then for $v \in [1/T], s \in \{+, -\}$,
\[
w_{v,s} u_{v,s} (p_t, z_t) = s(y_t - p_t) \mathrm{sgn}(v - p_t)w_{v,s} = s y_t w_{v,s}
\]
Summing over values of $v,s$ and applying the definition of $f$ in \Cref{line:tcal_f_eq}, we get 
\[
\langle w, u (p_t, z_t) \rangle = y_t f(0) \leq 0 
\quad \text{for any choice of} \ y \in \{0, 1\}
\]

\textbf{Case 2:} If $f(1) > 0$, predict $p_t = 1$.
Then for $v \in [1/T], s \in \{+, -\}$,
\[
w_{v,s} u_{v,s} (p_t, z_t) = s(y_t - p_t) \mathrm{sgn}(v - p_t)w_{v,s} = s y_t w_{v,s}
\]
Summing over values of $v,s$ and applying the definition of $f$ in \Cref{line:tcal_f_eq}, we get 
\[
\langle w, u (p_t, z_t) \rangle = (y_t - 1) f(1) \leq 0 \quad \text{for any choice of} \ y \in \{0, 1\}
\]

\textbf{Case 3:} Find adjacent probabilities $q_i, q_{i+1}$ such that $f(q_i) \cdot f(q_{i+1}) \leq 0$. \\
We wish to bound $\E_{p_t} [\langle w, u (p_t, y) \rangle]$ for both possible outcomes $y \in \{ 0,1\}$.
Let's fix $v,s$ and compute $\E_{\pb} [w_{v,s} u_{v,s}]$
\[
\frac{|f(q_{i+1})|}{|f(q_i)| + |f(q_{i+1})|} \left[ s(y - q_i)\mathrm{sgn}(v - q_i)w_{v,s} \right] +
\frac{|f(q_i)|}{|f(q_i)| + |f(q_{i+1})|} \left[ s(y - q_{i+1})\mathrm{sgn}(v - q_{i+1})w_{v,s} \right]
\]
Summing over all pairs of $(v,s)$ and applying the definition of $f$ in \Cref{line:tcal_f_eq}, we obtain
\begin{align}
&\frac{|f(q_{i+1})|}{|f(q_i)| + |f(q_{i+1})|} \left[ (y - q_i)f(q_i) \right] +
\frac{|f(q_i)|}{|f(q_i)| + |f(q_{i+1})|} \left[ (y - q_{i+1})f(q_{i+1}) \right] \\
&\leq \frac{|f(q_i)||f(q_{i+1})|}{|f(q_i)| + |f(q_{i+1})|} (q_{i+1} - q_i) \\
&\leq 1/T \tag{since $f(q) \in [-1,1]$}
\end{align}
\end{proof}

\begin{lemma}[Exponential Weight Updates \cite{arora2012multiplicative}]\label{lem:tcal_exp_weight}
The exponential weight updates in \Cref{line:tcal_exp} provide a sequence of vectors $w^t$ such that 
\[
\max_{w: ||w||_1 = 1} \left\langle w, \sum_{t \in [T]} u (p_t, z_t) \right\rangle \leq \sum_{t=1}^T \langle w^t, u (p_t, z_t) \rangle + O \left( \sqrt{T \ln T} \right)
\]
\end{lemma}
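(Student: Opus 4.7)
The plan is to recognize this as a direct application of the standard Hedge/multiplicative-weights regret bound, applied to the $N = 2(T+1)$ ``experts'' indexed by pairs $(v, s)$ with $v \in \{0, 1/T, 2/T, \ldots, 1\}$ and $s \in \{+, -\}$. The key observation is that because the index set explicitly includes both signs $s \in \{\pm\}$, every signed vector $w$ with $\|w\|_1 = 1$ can be realized as a \emph{non-negative} probability distribution over the $(v,s)$ experts (pushing negative mass onto the opposite-sign coordinate). Hence
\[
\max_{w:\, \|w\|_1 = 1} \Big\langle w,\, \sum_{t=1}^T u(p_t, z_t) \Big\rangle \;=\; \max_{(v,s)}\, \sum_{t=1}^T u_{v,s}(p_t, z_t),
\]
so the claim reduces to a standard best-expert regret bound for Hedge.

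The first step is to verify that $|u_{v,s}(p_t, z_t)| = |s(y_t - p_t)\mathrm{sgn}(v - p_t)| \le 1$, which follows immediately from $y_t, p_t \in [0,1]$, $s \in \{\pm 1\}$, and $|\mathrm{sgn}(\cdot)| \le 1$. This per-step boundedness is exactly what the Hedge analysis consumes. Next, I would set up the standard potential $\Phi_t = \sum_{v,s} \exp\bigl(\eta\, U^t_{v,s}\bigr)$ where $U^t_{v,s} = \sum_{i \le t} u_{v,s}(p_i, z_i)$, and note that by the definition of $w^t$ in \Cref{line:tcal_exp}, $w^t_{v,s} = \exp(\eta U^{t-1}_{v,s})/\Phi_{t-1}$.

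The core computation is the usual one: using the inequality $e^{x} \le 1 + x + x^2$ for $|x| \le 1$, one obtains
\[
\frac{\Phi_t}{\Phi_{t-1}} \;\le\; 1 + \eta \langle w^t, u(p_t, z_t) \rangle + \eta^2,
\]
so taking logarithms and telescoping yields $\ln \Phi_T - \ln \Phi_0 \le \eta \sum_{t=1}^T \langle w^t, u(p_t,z_t)\rangle + \eta^2 T$. On the other hand, $\ln \Phi_T \ge \eta\, \max_{(v,s)} U^T_{v,s}$ and $\ln \Phi_0 = \ln(2(T+1))$. Rearranging gives
\[
\max_{(v,s)} \sum_{t=1}^T u_{v,s}(p_t,z_t) \;\le\; \sum_{t=1}^T \langle w^t, u(p_t,z_t)\rangle + \eta T + \frac{\ln(2(T+1))}{\eta}.
\]
Tuning $\eta = \sqrt{\ln(2(T+1))/T}$ balances the two error terms and produces the $O(\sqrt{T \ln T})$ bound.

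There is no real obstacle here, as this is the classical Hedge analysis cited from \cite{arora2012multiplicative}; the only subtlety worth stating explicitly is the reduction from signed weights $\|w\|_1 = 1$ to a probability distribution via the explicit $s \in \{\pm\}$ coordinates, so that the payoff bound $|u_{v,s}| \le 1$ applies uniformly.
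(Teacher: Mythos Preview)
Your proof is correct and is exactly the standard Hedge analysis that the paper invokes by citation to \cite{arora2012multiplicative}; the paper does not supply its own proof of this lemma, treating it as a known result. The only point worth noting is that your observation---that the presence of both signs $s \in \{\pm\}$ collapses the signed $\ell_1$-ball maximum to a coordinate maximum---is precisely why the paper indexes the payoff vector this way.
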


With these lemmas in place, we can prove \Cref{thm:thres-alg}.

\begin{proof}[Proof of \Cref{thm:thres-alg}]
\sloppy We wish to bound the expected threshold calibration error, that is, $\E_\pb \left[\max_{v,s} \sum_{t \in [T]} u_{v,s} (p_t, z_t) \right]$ where the expectation is over the randomness in the sampling of $p_t$. Note that this is the same as 
$\E_{\pb} \left[ \max_{w: ||w||_1 = 1} \left\langle w, \sum_{t \in [T]} u (p_t, z_t) \right\rangle \right]$.
\begin{align*}
\E_{\pb} \left[ \max_{w: ||w||_1 = 1} \left\langle w, \sum_{t \in [T]} u (p_t, z_t) \right\rangle \right] & \leq 
\E_{\pb} \left[
\sum_{t=1}^T \langle w^t, u (p_t, z_t) \rangle \right] + O \left( \sqrt{T \ln T} \right) \tag{by \Cref{lem:tcal_exp_weight}} \\
& = 
\sum_{t=1}^T \E_{\mathbf{p}} \langle w^t, u (p_t, z_t) \rangle + O \left( \sqrt{T \ln T} \right)
\tag{by linearity of expectation} \\
& = 
\sum_{t=1}^T \E_{w^t}  \left[ \left. \E_{p_t} \langle w^t, u (p_t, z_t) \rangle \right| w^t \right] + O \left( \sqrt{T \ln T} \right) \tag{by law of iterated expectations} \\
& \leq \left( \sum_{t=1}^T \tfrac{1}{T} \right)  + O \left( \sqrt{T \ln T} \right) \tag{by \Cref{lem:tcal_halfspace}} \\
& \leq O \left( \sqrt{T \ln T} \right)
\end{align*}
\end{proof}
In all, proper calibration is an intermediate notion:  strong enough to imply omniprediction via the Loss OI framework, but weak enough that it can be achieved asymptotically efficiently compared to $\ell_1$-calibration.

\subsection{\pcalalglong}
\label{response:apcal}
To aid with our goal of achieving proper calibration and multiaccuracy simultaneously, we introduce the following augmented proper calibration problem. 
We are given a sequence of samples $(x_1, y_1), \ldots, (x_T, y_T)$ and hypothesis functions 
$q_1, \ldots, q_T \, : \, \X \to [-1,1]$ and the goal is to output a sequence of predictors $p_1,\ldots,p_T \, : \, 
\X \to [0,1]$ that is proper calibrated, i.e.~$\pcal (\pxy)$ is sublinear in $T$, while also satisfying the sequential multiaccuracy guarantee $\sum_{t=1}^T q_t (x_t) (y_t - p_t(x_t)) \in o(T)$ with respect to the sequence of input hypothesis functions $q_1, \ldots, q_T \, : \, \X \to [-1,1]$. To solve this problem we present the algorithm, \pcalalgboth, and state its performance guarantees in both the sequential and distributional settings.

\begin{proposition}[Augmented Proper Calibrator]\label{lem:proper-recal}
Given an online sequence of samples $(x_1, y_1), \ldots, (x_T, y_T)$ and hypothesis functions $q_1, \ldots, q_T$,
\Cref{alg:proper_recal}
outputs a sequence of predictors $p_1, \ldots, p_T$ such that with probability at least $1 - \delta$,
\begin{enumerate}
    \item $\pcal (\pxy) \leq O \left( \sqrt{T \ln T/\delta} \right)$
    \item $\sum_{t=1}^T q_t (x_t) (y_t - p_t(x_t)) \leq O \left( \sqrt{T \ln T/\delta} \right)$.
\end{enumerate}
If the samples $\{(x_t,y_t)\}_{t=1}^T$ are drawn i.i.d.\ from a distribution $\D$
(while the functions $q_1,\ldots,q_T$ may still be specified online by an adaptive adversary)
then the uniform distribution $\hat \pb = \mathrm{unif}(p_1,\ldots,p_T)$ satisfies the 
following bounds with probability at least $1 - \delta$.
\begin{enumerate}
    \item $\pcal_\D(\hat \pb) \le O \left(\sqrt{\frac{\ln(T/\delta)}{T}} \right)$
    \item $\tfrac 1T \sum_{t = 1}^T \E_{(x,y) \sim \D}[q_t(x)(y-p_t(x))] \le O \left(\sqrt{\frac{\ln(T/\delta)}{T}} \right)$.
\end{enumerate}The algorithm runs in $\mathrm{poly}(T)$ timesteps per iteration and each $p_t$ is simply a monotonic remapping of $q_t$
to $[1/T]$. In other words, there exists a monotone non-decreasing function $\monoremap_t : [-1,1] \to [1/T]$, randomly sampled by the algorithm, such that $p_t (x) = \monoremap_t (q_t(x))$ for all $x$.
\end{proposition}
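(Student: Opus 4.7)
The plan is to implement the augmented proper calibrator by extending \Cref{alg:tcal} with an additional Blackwell payoff coordinate that encodes the input multiaccuracy constraint. Specifically, at each round $t$ I define a $(2T+1)$-dimensional payoff vector: $2T$ coordinates $u_{\theta, s}(p_t, z_t) = s \cdot (y_t - p_t(x_t)) \cdot \mathrm{sgn}(\theta - p_t(x_t))$ indexed by $(\theta, s) \in \{0, 1/T, \ldots, 1\} \times \{+, -\}$ for proper calibration, plus a single coordinate $u_{\mathrm{ma}}(p_t, q_t, z_t) = q_t(x_t)(y_t - p_t(x_t))$ for multiaccuracy against $q_t$. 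As in \Cref{alg:tcal}, I maintain a distribution $w^t$ over these coordinates via exponential weights, and design a halfspace oracle that, on round $t$, computes the scalar function $f(x_t, p) = w^t_{\mathrm{ma}} \cdot q_t(x_t) + \sum_{\theta, s} w^t_{\theta, s} \cdot s \cdot \Th_\theta(p)$ and selects $p_t \in [1/T]$ via a sign-change procedure analogous to \Cref{alg:tcal}, with the convention of always picking the leftmost $+$-to-$-$ sign change of $f(x_t, \cdot)$ on the grid.

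With this setup, the sequential regret analysis follows the template of \Cref{thm:thres-alg}. The natural analogue of \Cref{lem:tcal_halfspace} shows that for any adversarial $y_t$, $\E_{p_t}[\langle w^t, u(p_t, q_t, z_t)\rangle] \leq 1/T$, by the same adjacent-grid cancellation argument. Combining this with the exponential-weights regret bound (\Cref{lem:tcal_exp_weight}) and taking expectation yields $\E[\max_{w : \|w\|_1 = 1} \langle w, \sum_t u(p_t, q_t, z_t)\rangle] \leq O(\sqrt{T \log T})$, which simultaneously bounds the expected proper calibration error and the expected multiaccuracy against $\{q_t\}$. To upgrade to the high-probability sequential bounds stated in the proposition, I apply Azuma--Hoeffding to the martingale difference sequence $M_t = \langle w^t, u(p_t, q_t, z_t)\rangle - \E_{p_t}[\langle w^t, u(p_t, q_t, z_t)\rangle]$ (whose increments are bounded in $[-2, 2]$), picking up an additional $O(\sqrt{T \log(1/\delta)})$ concentration term.

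For the distributional guarantees under i.i.d.\ samples, I combine the sequential high-probability bounds with a second martingale concentration step. For each discretized threshold $\theta \in \{0, 1/T, \ldots, 1\}$, the sequence $X_t^\theta := \Th_\theta(p_t(x_t))(y_t - p_t(x_t)) - \E_{(x,y) \sim \D}[\Th_\theta(p_t(x))(y - p_t(x))]$ is a bounded martingale difference with respect to the filtration generated by the history, because $p_t$ is measurable with respect to $(x_1, y_1, \ldots, x_{t-1}, y_{t-1})$ while $(x_t, y_t)$ is an independent draw from $\D$. Since the predictions lie in $[1/T]$, \Cref{lem:eps-thres} reduces the sup over $\wprop$ to a sup over $O(T)$ thresholds, so a union bound yields uniform convergence at rate $O(\sqrt{T \log(T/\delta)})$. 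An analogous martingale argument handles the multiaccuracy sum against the (possibly adaptive) $q_t$ sequence. Dividing by $T$ and taking the uniform mixture over $p_1, \ldots, p_T$ then gives the distributional rates of $O(\sqrt{\ln(T/\delta)/T})$ claimed for $\hat \pb$.

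The main obstacle is establishing the monotone remapping claim: that $p_t(x) = \monoremap_t(q_t(x))$ for some random monotone non-decreasing function $\monoremap_t : [-1, 1] \to [1/T]$. The key structural fact is that $f(x_t, \cdot)$ depends on $x_t$ only through the scalar $v := q_t(x_t)$, and since $w^t_{\mathrm{ma}} \geq 0$, increasing $v$ shifts $f$ uniformly upward. Tracing the leftmost-sign-change convention through the cases of the halfspace oracle, I verify that the leftmost $+$-to-$-$ sign change of $f$ is monotone non-decreasing in $v$ (any newly-created sign change must appear to the right of the previous one), while within a fixed sign-change interval the randomized convex combination $\Pr[p_t = q_{i+1}] = |f(q_i)|/(|f(q_i)|+|f(q_{i+1})|)$ is stochastically non-decreasing in $v$. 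Consequently the distribution of $p_t$ conditional on $q_t(x_t) = v$ is stochastically increasing in $v$, and inverse-CDF sampling with a single uniform random variable $U$ shared across all $x$ yields the desired monotone coupling $\monoremap_t$.
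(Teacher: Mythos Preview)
Your proposal is correct and follows essentially the same approach as the paper: the same $(2T+3)$-dimensional Blackwell game (you wrote $2T+1$, a harmless miscount), the same exponential-weights-plus-halfspace-oracle analysis, the same Azuma--Hoeffding upgrade to high probability, and the same per-threshold martingale argument for the distributional guarantees. The only cosmetic difference is in the monotonicity step: the paper parametrizes the oracle's output as the lexicographically minimal pair $(j,\lambda)$ with $g_t(q,j,\lambda)\le 0$ and observes directly that this is lex-monotone in $q$ (since $g_t$ is increasing in $q$), then couples via a single uniform $\zeta$ compared to $\lambda$ --- which is exactly your inverse-CDF coupling recast in different coordinates.
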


\begin{algorithm}[p]
\caption{\pcalalglong}
\label{alg:proper_recal}
\begin{algorithmic}[1]
\Procedure{MinZeroCrossing}{$g,q$}
\Comment{Find lexicographically minimal $(j,\lambda)$ such that $g(q,j,\lambda) \leq 0$}
    \If{$g(q,0,0) \leq 0$}
        \State \textbf{return} (0,0)
    \Else
        \For{$j=0, \frac1T, \frac2T, \ldots, \frac{T-1}{T}$}
            \If{$g(q,j,0) \cdot g(q,j,1) \leq 0$}
            \Comment{\vspace{0.5ex} Solve $(1-\lambda) g(q,j,0) + \lambda g(q,j,1) = 0$ for $\lambda$}
                \State \vspace{0.5ex} Let $ \lambda = \frac{|g(q,j,0)|}{|g(q,j,0)| + |g(q,j,1)|} $
                \State \textbf{return} $(j,\lambda)$
            \EndIf
        \EndFor 
        \State \textbf{return} (1,1)
    \EndIf
\EndProcedure \\
\Procedure{APCAL}{$T$}
    \State \textbf{Input:} Sequence of samples $(x_1, y_1), \ldots, (x_T, y_T)$
    \State \textbf{Input:} Sequence of hypothesis functions $q_1, \ldots, q_T: \Xcal \rightarrow [-1,1]$
    \State \textbf{Output:} Sequence of predictors $p_1, \ldots, p_T: \Xcal \rightarrow [0,1]$
\State Initialize weights $w_{v,s}^1 = \tfrac{1}{2T+3}$ for $v \in [1/T], s \in \{+, -\}$ \Comment{for proper calibration constraints}
    \State Initialize weight: $w^{1}_\mathrm{sma} = \tfrac{1}{2T+3}$ \Comment{for sequential multiaccuracy constraint}
    \State Set $\eta = \sqrt{\log(T) / T}$ \Comment{learning rate for exponential weight updates}
    \For{$t = 1$ to $T$}
        \State Sample $\zeta \in [0,1]$ uniformly at random. 
        \State Define functions 
$f_t, \, g_t, \, \psi_t, \monoremap_t$ as follows.
        \label{line:apcal-mainloop-start}
        \begin{align} 
            \label{line:apcal-ft-definition}
            f_t(q,j) & = w^t_{\mathrm{sma}} \cdot q + \sum_{v,s} w_{v,s}^t \cdot \Th_v(j) \\
            \label{line:apcal-gt-definition}
            g_t(q,j,\lambda) & = (1-\lambda) f_t(q,j) \, + \, \lambda f_t \left( q, \min\{j + \tfrac1T, 1\} \right) \\
            \label{line:apcal-psit-definition}
            \psi_t(q) & = \mbox{\sc MinZeroCrossing}(g_t, q) \\
            \label{line:apcal-monoremap-definition}
            \monoremap_t(q) & = \begin{cases}
            j & \mbox{if } \psi_t(q) = (j,\lambda) \mbox{ where } \lambda > \zeta \\
            j+ \tfrac1T & \mbox{if } \psi_t(q) = (j,\lambda) \mbox{ where } \lambda \le \zeta
         \end{cases}
        \end{align}
\[
        \]
        \State \textbf{Receive} $q_t$
        \State \textbf{Return} predictor $p_t = \monoremap_t \circ q_t$  \label{line:apcal-pt-definition}
        \State \textbf{Observe} $(x_t, y_t)$
        \State Perform exponential weight updates with losses $s \cdot (y_{t} - p_{t}(x_t)) \cdot \Th_v (p_{t}(x_t))$ for each $v,s$ and 
        \hspace*{1cm} $(y_{t} - p_{t}(x_t)) \cdot q_t(x_t)$ for the sequential multiaccuracy constraint:
            \begin{align*}
                W^{t} &= e^{- \eta \cdot (y_{t} - p_{t}(x_t)) \cdot q_t(x_t)} w^t_{\mathrm{sma}} + \sum_{v,s} e^{- \eta \cdot s \cdot (y_{t} - p_{t}(x_t)) \cdot \Th_v (p_{t}(x_t))} w_{v,s}^t \\        w^{t+1}_{\mathrm{sma}} & =  e^{- \eta \cdot (y_{t} - p_{t}(x_t)) \cdot q_t(x_t)} w^t_{\mathrm{sma}} \, \big/ \, W^t \\
                w^{t+1}_{v,s} & =  e^{- \eta \cdot s \cdot (y_{t} - p_{t}(x_t)) \cdot \Th_v (p_{t}(x_t))} w_{v,s}^t \, \big/ \, W^t 
            \end{align*}   
    \EndFor
\EndProcedure
\end{algorithmic}
\end{algorithm}

\paragraph{Overview of \Cref{alg:proper_recal}:}
The algorithm follows the design ideas of \Cref{alg:tcal}. We consider a two player game that combines the objectives of proper calibration and sequential multiaccuracy. The adversary selects $z_t = (x_t, y_t) \in \Xcal \times \{0,1\}$ and the learner selects $p_t: \Xcal \rightarrow [1/T]$. We construct a payoff vector where the first $2T+2$ components are for the proper calibration objective
\[
u_{v, s} (p_t, z_t) = s(y_t - p_t (x_t)) \Th_v (p_t (x_t)) \quad \text{for} \ v \in [1/T], s \in \{ +, -\}
\] 
and the last component captures the sequential multiaccuracy constraint from the sequence of hypothesis functions $q_t$
\[
u_{\mathrm{ma}} (p_t, z_t) = (y_t - p_t (x_t)) q_t(x_t) 
\]
The learner's target set is the set of vectors $u$ whose coordinates are bounded by 0 i.e the negative orthant. We will maintain weights for these $2T+3$ dimensions of the payoff vector using exponential weights. 

Concretely, in \Cref{alg:proper_recal}, we use exponential weights update to generate sequence of halfspaces $w^t \in \R^{2T + 3}$ with coordinates for every pair $v \in [1/T], s \in \{+,-\}$ and the sequential multiaccuracy constraint. Then for each timestep $t$, the algorithm constructs the predictor based on the strategy described from \Cref{line:apcal-mainloop-start} to \Cref{line:apcal-pt-definition}.

The following lemmas are used in the proof of the proposition. 
\begin{lemma}[Halfspace Approachability]\label{lem:precal-halfspace}
For every timestep $t \in [T]$, the predictor $p_t (x)$ constructed from input $q_t$ in line~\ref{line:apcal-pt-definition} satisfies the following guarantee for all $x \in \Xcal, y \in \{0,1\}$:
\[
\E_{p_t} \left[ w^t_\mathrm{sma} \cdot q_t(x)(y - p_t (x)) + \sum_{v,s} w_{v,s}^t \cdot s \cdot \Th_v(p_t(x)) \cdot (y - p_t (x)) \right] \leq \frac{1}{T}
\]
\end{lemma}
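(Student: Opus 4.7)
The plan is to reduce the expectation in the lemma to the quantity $(y - p_t(x)) \cdot f_t(q_t(x), p_t(x))$, using the definition of $f_t$ in line~\ref{line:apcal-ft-definition}, and then perform a case analysis on the output $(j,\lambda) = \psi_t(q)$ of \textsc{MinZeroCrossing}, where $q := q_t(x)$. Before the case split, I would note two structural facts: (a) by the initialization $w^1 = \tfrac{1}{2T+3}\mathbf{1}$ and the multiplicative-weight updates, the vector $w^t$ always lies on the probability simplex; and (b) since $q \in [-1,1]$ and $\Th_v \in \{-1,+1\}$, this forces $|f_t(q,j)| \le 1$ uniformly. Both are needed to control the residual in the interior case.

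The two boundary cases are easy. When $\psi_t(q) = (0,0)$, \textsc{MinZeroCrossing} returned immediately because $f_t(q,0) \le 0$, and the randomized rule forces $p_t(x) = 0$ deterministically; the expectation collapses to $y \cdot f_t(q,0) \le 0$ for $y \in \{0,1\}$. When $\psi_t(q) = (1,1)$, the for-loop exited without detecting a sign change, which (together with $f_t(q,0)>0$) means in particular $f_t(q,1) > 0$; here $p_t(x) = 1$ deterministically and the expectation is $(y-1)f_t(q,1) \le 0$. Both give bounds $\le 0 \le 1/T$.

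The interior case is the heart of the proof. Here $j \in \{0,\tfrac1T,\ldots,\tfrac{T-1}{T}\}$ and $\lambda \in [0,1]$ was chosen in \textsc{MinZeroCrossing} exactly so that
\[
g_t(q, j, \lambda) \;=\; (1-\lambda) f_t(q,j) + \lambda f_t(q, j+\tfrac1T) \;=\; 0.
\]
The joint randomization through $\zeta \sim U[0,1]$ puts mass $1-\lambda$ on $p_t(x)=j$ and mass $\lambda$ on $p_t(x) = j + \tfrac1T$ (this is the distribution induced on $\monoremap_t$ so that the zero-crossing property of $\lambda$ does useful work). Expanding the expectation and grouping,
\begin{align*}
\E_{p_t(x)}\!\left[f_t(q,p_t(x))\,(y - p_t(x))\right]
&= (1-\lambda) f_t(q,j)(y-j) + \lambda f_t(q, j+\tfrac1T)(y - j - \tfrac1T) \\
&= (y-j)\underbrace{\bigl[(1-\lambda)f_t(q,j) + \lambda f_t(q, j+\tfrac1T)\bigr]}_{=\,0} \;-\; \tfrac{\lambda}{T}\, f_t(q, j+\tfrac1T).
\end{align*}
The zero-crossing identity annihilates the dominant $(y-j)$-term, leaving a residual of magnitude at most $\tfrac{\lambda}{T}|f_t(q,j+\tfrac1T)| \le \tfrac{1}{T}$, which is the claimed bound.

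I do not expect a serious obstacle: the argument mirrors the halfspace-oracle analysis in Lemma~\ref{lem:tcal_halfspace}, with the only new element being that the multiaccuracy coordinate $w^t_{\mathrm{sma}} \cdot q$ is incorporated as an extra linear term inside $f_t$ rather than as a $\Th_v$ term. The mild subtlety is matching the randomization convention in $\monoremap_t$ (via $\zeta$) with the interpolation weight $\lambda$ so that $\E[f_t(q,p_t(x))] = g_t(q,j,\lambda)$; once that identification is made, the cancellation is automatic and the $1/T$ bound follows purely from $|f_t| \le 1$ and the spacing $\tfrac1T$ of the discretization.
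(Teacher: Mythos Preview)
Your proposal is correct and takes essentially the same approach as the paper's own proof: rewrite the objective as $\E[f_t(q,p_t(x))(y-p_t(x))]$, then split into the two boundary cases $(0,0)$ and $(1,1)$ yielding $\le 0$, and the interior case where the zero-crossing identity $g_t(q,j,\lambda)=0$ kills the $(y-j)$ term and leaves a residual bounded by $\tfrac{\lambda}{T}|f_t(q,j+\tfrac1T)|\le \tfrac1T$. Your explicit justification that $|f_t|\le 1$ via the simplex constraint on $w^t$ is a nice addition that the paper leaves implicit.
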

\begin{proof}[Proof of \Cref{lem:precal-halfspace}]
Recall the definition of $f_t(q,j)$ in \Cref{line:apcal-ft-definition}, and recall that $p_t(x) = \monoremap_t(q_t(x))$.
We can rewrite the objective above as showing that for all $x,y$, and $q = q_t(x)$,
\[
\E_{j \sim \monoremap_t(q)} \left[ f_t(q, j) \cdot (y - j) \right] \leq \tfrac{1}{T}
\]
We now consider three cases:
\begin{itemize}
    \item Case 1: If $f_t(q, 0) \leq 0$, then $g_t(q,0,0) \leq 0$ so $\psi_t(q) = (0,0)$ and 
    $\monoremap_t(q) = 0$ as long as $\zeta > 0$ (which happens almost surely).
\begin{align*}
\E_{j \sim \monoremap_t(q)} \left[ f_t(q, j) \cdot (y - j) \right] 
&= f_t(q,0) y \\
&\leq \max \{0, f_t(q, 0) \} \tag{for any value of $y \in \{0,1\}$} \\
&\leq 0 \tag{by starting assumption $f_t(q, 0) \leq 0$}
\end{align*}
\item Case 2: If $f_t(q, j) > 0$ for all $j \in [1/T]$, then $g_t(q,j,\lambda) > 0$ for all 
    $(j,\lambda) \in [1/T] \times [0,1]$, so $\psi_t(q) = (1,1)$ and $\monoremap_t(q) = 1$ for any value of $\zeta$. 
\begin{align*}
\E_{j \sim \monoremap_t(q)} \left[ f_t(q, j) \cdot (y - j) \right] 
&= f_t(q,1) (y-1) \\
&\leq \max \{0, -f_t(q, 1) \} \tag{for any value of $y \in \{0,1\}$} \\
&\leq 0 \tag{by starting assumption $f_t(q, 1) > 0$}
\end{align*}
\item Case 3: There exist adjacent probabilities $i, i + \frac1T$ in $[1/T]$ such that 
\begin{equation} \label{eq:apcal-neg-product}
f_t \left(q, i \right) \cdot f_t \left(q, i + \tfrac1T \right) \leq 0 .
\end{equation}
Then, for the minimum $i \in [1/T]$ satisfying~\eqref{eq:apcal-neg-product}, the values
$g_t(q,i,0)$ and $g_t(q,i,1)$ are differently signed, so there is a unique $\lambda \in [0,1]$
satisfying $g_t(q,i,\lambda) = 0$, and $\psi_t(q) = (i,\lambda)$. 
\begin{align*}
&\E_{j \sim \monoremap_t(q)} \left[ f_t(q, j) \cdot (y - j) \right] \\
&= \Pr(\zeta > \lambda) \cdot f_t(q,i) (y-i) + \Pr(\zeta \leq \lambda) \cdot f_t(q,i+\tfrac{1}{T}) (y-i-\tfrac{1}{T}) \\
&= (1-\lambda) f_t(q,i) (y-i) + \lambda f_t(q,i+\tfrac{1}{T}) (y-i-\tfrac{1}{T}) \\
&= g_t(q,i,\lambda) (y-i) - \frac{\lambda f_t(q,i+\tfrac1T)}{T} \\
&\leq 1/T \tag{since $g_t(q,i,\lambda) = 0, \lambda \in [0,1]$, and $|f_t(q,i+\frac1T)| \leq 1$.} 
\end{align*}
\end{itemize}
\end{proof}

\begin{lemma}[Halfspace Concentration]\label{lem:halfspace-conc}
The following inequality holds with probability at least $1 - \delta/2$ for the sequence of inputs $(x_1,y_1, q_1), \ldots, (x_T, y_T, q_T)$, weights $w^1, \ldots, w^T$ and outputs $p_1, \ldots, p_T$:
\[
\sum_{t=1}^T \left[ w^t_\mathrm{sma} \cdot q_t(x_t)(y_t - p_t (x_t)) + \sum_{v,s} w_{v,s}^t \cdot s \cdot \Th_v(p_t(x_t)) \cdot (y_t - p_t (x_t)) \right] \leq O \left( \sqrt{T \ln 1/\delta} \right)
\]
\end{lemma}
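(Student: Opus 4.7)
The plan is to decompose $\sum_t X_t$, where $X_t$ denotes the summand in the lemma statement, into its conditional expectation (which the halfspace-approachability lemma bounds by $1/T$ per round) plus a martingale-difference residual (which I control with Azuma--Hoeffding).

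For the setup, let $\Gcal_t$ be the $\sigma$-algebra generated by all history through round $t-1$ together with $(x_t, y_t, q_t)$, but \emph{not} the internal uniform draw $\zeta_t$ that defines $\monoremap_t$ (and therefore the realization $p_t(x_t)$). In the online model of Section~\ref{sec:prelim}, the adversary is unaware of the forecaster's current randomness, so $(x_t, y_t, q_t)$ is chosen independently of $\zeta_t$; moreover $w^t$ is $\Gcal_t$-measurable since it is computed from prior weights and the prior realized quantities $p_s(x_s), y_s, q_s(x_s)$ for $s<t$. Conditioned on $\Gcal_t$, the only randomness in $X_t$ is $\zeta_t$, so applying Lemma~\ref{lem:precal-halfspace} pointwise to the fixed realization of $(x_t,y_t,q_t)$ yields
\[
\E[X_t \mid \Gcal_t] \;\le\; \tfrac{1}{T}, \qquad \text{hence} \qquad \sum_{t=1}^T \E[X_t \mid \Gcal_t] \;\le\; 1.
\]

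Next, define $D_t = X_t - \E[X_t \mid \Gcal_t]$, which is a martingale difference sequence with respect to the enlarged filtration $\Gcal_t \vee \sigma(\zeta_t)$. Since $w^t_{\mathrm{sma}}$ and the $w^t_{v,s}$ form a probability distribution over the $2T+3$ coordinates (preserved by the exponential-weights normalization), and each factor $q_t(x_t), \Th_v(p_t(x_t)), y_t - p_t(x_t)$ lies in $[-1,1]$, we have $|X_t|\le 1$ and thus $|D_t|\le 2$ almost surely. Azuma--Hoeffding then gives $\sum_{t=1}^T D_t \le 2\sqrt{2T\ln(2/\delta)}$ with probability at least $1-\delta/2$, and combining with the conditional-expectation bound yields $\sum_t X_t \le 1 + O(\sqrt{T\ln(1/\delta)}) = O(\sqrt{T\ln(1/\delta)})$, as claimed. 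I expect the only subtle point to be a clean specification of the filtration: one must verify that the weights $w^t$, although depending on earlier random realizations $p_s(x_s)$, are $\Gcal_t$-measurable, so that the halfspace-approachability bound from Lemma~\ref{lem:precal-halfspace} applies pointwise under the conditional expectation; after that, the rest is a routine application of Azuma--Hoeffding with the uniform bound $|D_t|\le 2$.
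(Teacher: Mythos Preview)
Your proposal is correct and follows essentially the same approach as the paper: both decompose the sum into its conditional expectation over the per-round randomness (bounded by $1/T$ via Lemma~\ref{lem:precal-halfspace}) plus a bounded martingale-difference sequence controlled by Azuma--Hoeffding. Your treatment of the filtration is in fact more explicit than the paper's, which simply defines $Z_t$ as the realized value minus its expectation over the sampling of $p_t$ without spelling out the $\sigma$-algebra.
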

\begin{proof}[Proof of \Cref{lem:halfspace-conc}]
The conclusion of the lemma can be rewritten as showing that 
\[
\sum_{t=1}^T f(x_t, p_t(x_t)) (y_t - p_t(x_t)) \leq O \left( \sqrt{T \ln 1/\delta} \right)
\]
We showed in \Cref{lem:precal-halfspace} that $\E_{j \sim p_t(x)} \left[ f(x, j) \cdot (y - j) \right] \leq \tfrac{1}{T}$. Now we apply Azuma-Hoeffding by defining the martingale difference sequence 
$Z_t = f(x_t, p_t(x_t)) (y_t - p_t(x_t)) - \E_{p_t(x_t)} \left[ f(x_t, p_t(x_t)) \cdot (y_t - p_t(x_t)) \right]$, where the expectation is over the randomness of the sampling of $p_t$. Note that $Z_t$ is bounded by $2$. Applying Azuma-Hoeffding, we have that with probability at least $1 - \delta$,
\[
\sum_{t=1}^T Z_t \leq O \left( \sqrt{T \ln 1/\delta} \right)
\]
Since $\sum_{t=1}^T \E_{p_t(x_t)} \left[ f(x_t, p_t(x_t)) \cdot (y_t - p_t(x_t)) \right] \leq T \cdot 1/T = 1$, we obtain the desired result.
\end{proof}

We conclude this subsection with the proof of \Cref{lem:proper-recal}.

\begin{proof}[Proof of \Cref{lem:proper-recal}] First, we upper bound the expected error, in the sequential setting, of the predictors the algorithm outputs. 
Let $u(p_t, z_t)$ denote the $2T+3$ dimensional payoff vector where the first $2T+2$ coordinates correspond to the proper calibration constraints and the $2T+3$-th coordinate corresponds to the sequential multiaccuracy constraint. The algorithm maintains a sequence of weight vectors $w^t$ over these $2T+3$ constraints that ensures the following inequalities are satisfied by any weight vector $w \succeq \mathbf{0}$ with $\|w\|_1=1$.
\begin{align*}
& \left\langle w, \sum_{t \in [T]} u (p_t, z_t) \right\rangle   \\
&\leq 
\sum_{t=1}^T \langle w^t, u (p_t, z_t) \rangle + O \left( \sqrt{T \ln T} \right) \tag{by exponential weights guarantee} \\
& = 
\sum_{t=1}^T \left[w^t_{\mathrm{sma}} \cdot q_t (x_t) (y_t - p_t(x_t)) + \sum_{v,s} w_{v,s}^t \cdot s \cdot \Th_v(p_t(x_t)) \cdot (y - p_t (x_t)) \right] + O \left( \sqrt{T \ln T} \right) \\
&\leq O \left( \sqrt{T \ln 1/\delta} \right) + O \left( \sqrt{T \ln T} \right) \tag*{(by \Cref{lem:halfspace-conc})} \\
&\leq O \left( \sqrt{T \ln T/\delta} \right) 
\end{align*}
Instantiating the inequality while setting $w$ equal to the first $2T+2$ standard basis vectors, we obtain an upper bound on the $\wthres^{1/T}$-weighted calibration error of the sequence 
$\pxy$. Observing that each predictor $p_t$ takes values in the set $[1/T] = \{0, \frac1T, \frac2T, \ldots, 1\}$
and that the proper calibration error of a $[1/T]$-valued
predictor is bounded by twice its $\wthres^{1/T}$-weighted calibration error \Cref{lem:eps-thres}, we obtain the stated upper bound on $\pcal(\pxy)$. Instantiating the inequality while setting
$w$ equal to the last standard basis vector, we obtain the upper bound on $\sum_{t=1}^T q_t (x_t) (y_t - p_t(x_t))$.

To bound the expected error of $\hat \pb = \mathrm{unif}(p_1,\ldots,p_T)$ in the distributional setting we again use a martingale argument. 
Define $$Z_t (v,s) =  \E_{(x,y) \sim \Dcal} [s \cdot \Th_v (p_t (x)) (y - p_t (x))] - s \cdot \Th_v(p_t (x_t)) (y_t - p_t (x_t))$$ for $v \in [1/T], s \in \{+,-\}$. Observe that $Z_t (v,s)$ for $t = 1, \ldots, T$ forms a martingale difference sequence with bounded variance of 2. 
Similarly, let $$Z_t^\mathrm{ma} = \E_{(x,y) \sim \Dcal} [q_t (x) (y - p_t (x))] - q_t(x_t) (y_t - p_t (x_t)).$$ 
$Z_t^\mathrm{ma}$ for $t = 1, \ldots, T$ forms a martingale difference sequence with bounded variance of 2. 
Thus, applying Azuma-Hoeffding's inequality together with union bound over values of $v \in [1/T], s \in \{+,-\}$ and $Z_t^\mathrm{ma}$, we get that, with probability at least $1 - \delta$, for all $v \in [1/T], s \in \{+,-\}$
\begin{align*}
\left| \frac{1}{T} \sum_{t=1}^T Z_t (v,s) \right| \leq  \sqrt{\frac{\ln \tfrac{2T}{\delta} }{T}} 
& \qquad \mbox{and} \qquad
\left| \frac{1}{T} \sum_{t=1}^T Z_t^\mathrm{ma} \right| \leq  \sqrt{\frac{\ln \tfrac{2T}{\delta} }{T}} 
\end{align*}
The distributional error bounds now follow by observing that 
\begin{align*}
    \forall v,s \; \; 
    \frac1T \sum_{t=1}^T \E_{(x,y) \sim \Dcal} [s \cdot \Th_v (p_t (x)) (y - p_t (x))] & = 
    \frac1T \sum_{t=1}^T s \cdot \Th_v(p_t (x_t)) (y_t - p_t (x_t)) \, + \, 
    \frac1T \sum_{t=1}^T Z_t (v,s) \\
    \frac1T \sum_{t=1}^T \E_{(x,y) \sim \Dcal} [q_t(x) (y - p_t(x)] & = 
    \frac1T \sum_{t=1}^T q_t(x_t) (y_t - p_t(x_t)) \, + \, 
    \frac1T \sum_{t=1}^T Z_t^{\mathrm{ma}}
\end{align*}
and that we have already shown all terms on the right side of both inequalities are bounded
above by $O \left( \sqrt{\frac{\log T/\delta}{T}} \right).$

The algorithm runs in $\mathrm{poly}(T)$ timesteps per iteration and each $p_t$ is defined in line~\ref{line:apcal-pt-definition} as a composition 
$\monoremap_t \circ q_t$ where $\monoremap_t : [-1,1] \to [1/T]$. To see that 
$\monoremap_t(q)$ is a monotone function of $q$, observe in equation~\eqref{line:apcal-ft-definition} that 
for each $j \in [1/T]$, $f_t(q,j)$ is a monotone increasing function of $q$ because $w^t_{\mathrm{sma}} > 0$.
Hence, for each $(j,\lambda)$, the function $g_t(q,j,\lambda)$ is a monotone increasing function of $q$. 
It follows that $\psi_t(q)$ is a monotone (lexicographically) non-decreasing function of $q$: if 
$\psi_t(q) = (j,\lambda)$ and $q' < q$, then $g_t(q',j,\lambda) < g_t(q,j,\lambda) \leq 0$, so 
$\psi_t(q')$ cannot be lexicographically greater than $(j,\lambda)$. 
For any fixed $\zeta$, the lexicographic monotonicity of $\psi_t$ implies the
monotonicity of $\monoremap_t$. 
\end{proof}

 \section{Online Omniprediction}
\label{sec:online_omni}

In this section, we show that online omniprediction for a class of hypothesis functions $\Hcal$ and a class of loss functions $\Lcal$ can be achieved using an online weak agnostic learner for $\Delta \Lcal \circ \Hcal$ with regret that scales with $\tilde O \left(\sqrt{T} \right)$ and the regret of the online weak agnostic learner. 
Online weak agnostic learning, as studied in \citep{chen2012online, brukhim2020online, beygelzimer2015optimal}, outputs a sequence of predictions whose correlation with the outcome sequence is not much less than that of the best fixed hypothesis in hindsight.

\begin{definition}[Online Weak Agnostic Learner] \label{def:owal}
Let \( \Ccal \subset \{ c : \Xcal \to [-1,1] \} \) be a hypothesis class, and fix a failure probability \( \delta \in (0,1) \). An \emph{online weak agnostic learner} interacting over \( T \) rounds computes a sequence of randomized strategies \( Q_1, \ldots, Q_T \in \Delta(\Xcal \to [-1,1]) \), where each \( Q_t \) is a distribution over predictors \( \Xcal \to [-1,1] \). When called as a subroutine (``oracle'') within another algorithm, the learner outputs random samples from the distribution $Q_t$ corresponding to the current interaction round.

The learner interacts with an adaptive adversary under the \emph{simultaneous-move} (SM) protocol: at each round \( t \), the learner selects \( Q_t \in \Delta(\Xcal \rightarrow [-1,1]) \), and the adversary simultaneously selects a context-label pair \( (x_t, y_t) \in \Xcal \times [-1,1] \), potentially as a function of past interactions but without access to the specific realization \( q_t \sim Q_t \).

With probability at least \( 1 - \delta \) over the learner’s internal randomness, the learner guarantees:
\[
\max_{c \in \Ccal} \sum_{t=1}^T c(x_t) y_t \leq \sum_{t=1}^T \E_{q_t \sim Q_t} [q_t(x_t) y_t] + \owalreg_\Ccal^\delta(T),
\]
where \( \owalreg_\Ccal^\delta(T) \) is the regret bound of the learner with respect to the class \( \Ccal \).
\end{definition}

\begin{remark}
In our omniprediction setting, the learner will be used in a \emph{delayed-label} (DL) setting, where the label $y_t$ at round \( t \) may depend on one or more realizations \( q_t^1,\ldots,q_t^k \sim Q_t \) but the context $x_t$ may only depend on realizations $q_s^j \; (s<t)$ drawn from $Q_1,\ldots,Q_{t-1}$ in previous interaction rounds. In such cases, one may convert the SM-regret guarantee into a high-probability DL guarantee via implicit resampling; see \Cref{lem:resampling}.
\end{remark}

Using an online weak agnostic learner for $\Delta \Lcal \circ \Hcal$, we present the following algorithm for omniprediction.

\begin{algorithm}[h!]
\caption{Online Omniprediction}
\label{alg:online-omni}
\begin{algorithmic}[1]
\Procedure{OnlineOmniprediction}{$T$}
    \State \textbf{Input:} Sequence of samples $(x_1, y_1), \ldots, (x_T, y_T)$
    \State \textbf{Input:} A failure probability parameter, $\delta>0$.
    \State \textbf{Input:} An online weak agnostic learner, \owalalgsymbol, for function class $C = \{-1, +1\} \cdot \Delta \Lcal \circ \Hcal$ with $k$-fold resampling (\Cref{lem:resampling}) for \( k = O(T \log(T/\delta)) \) and failure probability $\delta/2$
    \State \textbf{Input:} An augmented proper calibrator (\Cref{alg:proper_recal}), \pcalalgsymbol, with failure probability $\delta/2$
    \State \textbf{Output:} Sequence of predictors $p_1, \ldots, p_T: \Xcal \rightarrow [0,1]$
    \State \textbf{Receive} $q_1$ from \owalalgsymbol
    \For{$t = 1$ to $T$}
        \State \textbf{Send} $q_t$ and sample $(x_t, y_t)$ to \pcalalgsymbol
        \State \textbf{Receive} $p_t$ from \pcalalgsymbol
        \State \textbf{Send} sample $(x_t, y_t - p_t(x_t))$ to \owalalgsymbol
        \State \textbf{Receive} $q_{t+1}$ from \owalalgsymbol
    \EndFor
\EndProcedure
\end{algorithmic}
\end{algorithm}

\begin{theorem}\label{thm:online-omni}
Given an online sequence of samples $(x_1, y_1), \ldots, (x_T, y_T)$, a class of loss functions $\Lcal$, a class of hypothesis functions $\Hcal$, an online weak agnostic learner for $\Delta \Lcal \circ \Hcal$, and a failure probability $\delta$, \Cref{alg:online-omni} can be initialized to guarantee $(\Lcal, \Hcal)-$online omniprediction 
regret with probability at least $1 - \delta$
\[
O \left( \sqrt{T \ln T/\delta} \right) + \owalreg_{\Ccal}^\delta (T) 
\]
\end{theorem}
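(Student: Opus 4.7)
The plan is to decompose the omniprediction regret using \Cref{lem:pcal-to-omni} into a proper calibration term and a $(\Delta\Lcal \circ \Hcal)$-multiaccuracy term, and then bound each term separately using the guarantees of the two sub-algorithms that \Cref{alg:online-omni} orchestrates.

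First, \Cref{lem:pcal-to-omni} gives
\[
(\Lcal, \Hcal)\omnireg(\pxy) \leq \pcal(\pxy) + (\Delta\Lcal \circ \Hcal)\maerr(\pxy).
\]
The proper calibration term is handled directly by the first conclusion of \Cref{lem:proper-recal} applied to \pcalalgsymbol: with probability at least $1 - \delta/2$ we get $\pcal(\pxy) \leq O\bigl(\sqrt{T \ln(T/\delta)}\bigr)$. For the multiaccuracy term I would observe that, since $\Ccal = \{-1,+1\} \cdot \Delta\Lcal \circ \Hcal$ is closed under negation,
\[
(\Delta\Lcal \circ \Hcal)\maerr(\pxy) \;=\; \max_{c \in \Ccal} \sum_{t=1}^T c(x_t)(y_t - p_t(x_t)),
\]
and then invoke the online weak agnostic learner \owalalgsymbol on the derived sequence $(x_t,\, y_t - p_t(x_t))$. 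Its regret guarantee against $\Ccal$ reduces the left-hand side above to $\sum_{t=1}^T q_t(x_t)(y_t-p_t(x_t)) + \owalreg_\Ccal^\delta(T)$, and the second conclusion of \Cref{lem:proper-recal} (the sequential multiaccuracy guarantee of \pcalalgsymbol against the adaptively chosen $q_t$) bounds this remaining sum by $O\bigl(\sqrt{T\ln(T/\delta)}\bigr)$ with probability $1-\delta/2$.

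The main subtlety, and the step that requires the most care, is that the OWAL guarantee of \Cref{def:owal} is stated in the simultaneous-move model, whereas in \Cref{alg:online-omni} the ``label'' $y_t - p_t(x_t)$ that is fed back to \owalalgsymbol depends on the realization $q_t$ that the learner already sampled in round $t$: indeed \pcalalgsymbol uses $q_t$ to produce $p_t$. This is precisely a delayed-label interaction, and it is the role of the $k$-fold resampling step (\Cref{lem:resampling}) with $k = O(T\log(T/\delta))$ to convert the SM regret into a high-probability DL regret, at the cost of an extra $O(\sqrt{T\ln(T/\delta)})$ term absorbed into $\owalreg_\Ccal^\delta(T)$. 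I would invoke this lemma directly, paying careful attention to the fact that the context $x_t$ depends only on earlier realizations of $Q_1,\dots,Q_{t-1}$ (as required by the lemma), while only the label depends on the current draw from $Q_t$.

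Finally, a union bound over the two failure events (each of probability at most $\delta/2$) combines the three inequalities into
\[
(\Lcal, \Hcal)\omnireg(\pxy) \leq O\bigl(\sqrt{T\ln(T/\delta)}\bigr) + \owalreg_\Ccal^\delta(T)
\]
with probability at least $1-\delta$, as claimed. The bulk of the work is therefore not computational but rather conceptual: verifying the compatibility of the two sub-algorithms' interfaces and validating the applicability of \Cref{lem:resampling} in this delayed-label interaction pattern.
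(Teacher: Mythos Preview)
Your proposal is correct and follows essentially the same approach as the paper's proof: decompose via \Cref{lem:pcal-to-omni}, bound the proper calibration term and the sequential-multiaccuracy term using the two guarantees of \Cref{lem:proper-recal}, and chain the OWAL regret bound (made applicable in the delayed-label setting via \Cref{lem:resampling}) to control the multiaccuracy term. If anything, you are more explicit than the paper about the SM-to-DL conversion and about the $\delta/2$ union bound.
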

\begin{proof}[Proof of \Cref{thm:online-omni}]
Let $\Ccal = \Delta \Lcal \circ \Hcal$.
The online weak agnostic learner with $k$-fold resampling (\Cref{lem:resampling}) for \( k = O(T \log(T/\delta)) \) guarantees with probability at least $1 - \delta$
\[
\max_{c \in \Ccal} \sum_{t=1}^T c(x_t) (y_t - p_t(x_t)) \leq \sum_{t=1}^T q_t(x_t) (y_t - p_t(x_t)) + \owalreg_{\Ccal}^\delta (T) + O \left( \sqrt{T \ln T/\delta} \right)
\]
since \Cref{alg:online-omni} sets the labeling function to $r_t (x,y) = y - p_t (x)$. 
The \pcalalglong algorithm guarantees with probability at least $1 - \delta$, $\pcal (\pxy) \leq O \left( \sqrt{T \ln T/\delta} \right)$ and $\sum_{t=1}^T q_t (x_t) (y_t - p_t(x_t)) \leq O \left( \sqrt{T \ln T/\delta} \right)$ as stated in \Cref{lem:proper-recal}.
The second guarantee is the same as the first term RHS on the online weak agnostic learner guarantee. Thus, we obtain
\[
\max_{c \in \Ccal} \sum_{t=1}^T c(x_t) (y_t - p_t(x_t)) \leq O \left( \sqrt{T \ln T/\delta} \right) + \owalreg_{\Ccal}^\delta (T) 
\]
\sloppy This allows us to conclude proper calibration and $\Ccal$-multiaccuracy, which together implies
$
(\Lcal, \Hcal)\omnireg \leq O \left( \sqrt{T \ln T/\delta} \right) + \owalreg_{\Ccal}^\delta (T) 
$
\end{proof}

\begin{corollary}[of \Cref{thm:online-omni,thm:owal_rates}]
Given a hypothesis class $\Hcal$ and a class of loss functions $\Lcal$ such that the composed class $\Delta \Lcal \circ \Hcal$ has bounded sequential Rademacher complexity, then there exists a forecaster that guarantees expected omniprediction regret bounded by 
\[
O \left( \sqrt{T \ln T} + \mathsf{srad}_T ( \Delta \Lcal \circ \Hcal) \right)
\]
where $\mathsf{srad}_T(\Ccal)$ refers to the sequential Rademacher complexity of a class $\Ccal$.
\end{corollary}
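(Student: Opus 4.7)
The plan is to combine Theorem~\ref{thm:online-omni} with Theorem~\ref{thm:owal_rates} (which relates the regret of an online weak agnostic learner for a class $\Ccal$ to the sequential Rademacher complexity of $\Ccal$), and then to convert the resulting high-probability bound into an expectation bound via a straightforward failure-probability argument.

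First, I would apply Theorem~\ref{thm:owal_rates} to the class $\Ccal = \{-1,+1\}\cdot \Delta \Lcal \circ \Hcal$ (matching the class that \Cref{alg:online-omni} feeds into its \owal subroutine). Since $\mathsf{srad}_T(\{-1,+1\}\cdot\Ccal) = \mathsf{srad}_T(\Ccal)$ by the sign-symmetry of sequential Rademacher complexity, this yields, for any target failure probability $\delta \in (0,1)$, an online weak agnostic learner \owalalgsymbol whose regret obeys
\[
\owalreg_{\Ccal}^{\delta}(T) \;\le\; O\bigl(\mathsf{srad}_T(\Delta \Lcal \circ \Hcal)\bigr) + O\bigl(\sqrt{T \ln(1/\delta)}\bigr).
\]

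Next, I would instantiate \Cref{alg:online-omni} with this \owalalgsymbol and failure probability $\delta$. By \Cref{thm:online-omni}, the resulting forecaster achieves, with probability at least $1-\delta$, an $(\Lcal,\Hcal)$-online omniprediction regret of
\[
O\!\left(\sqrt{T \ln(T/\delta)}\right) + \owalreg_{\Ccal}^{\delta}(T) \;=\; O\!\left(\sqrt{T \ln(T/\delta)} \,+\, \mathsf{srad}_T(\Delta \Lcal \circ \Hcal)\right).
\]

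Finally, to pass from a high-probability bound to an expected-regret bound, I would set $\delta = 1/T$. Because every loss in $\Lcal$ takes values in $[-1,1]$, the per-round omniprediction regret is deterministically bounded by $2$, and hence the total regret is bounded above by $2T$ on the failure event of probability $\delta = 1/T$. Combining these two cases gives expected regret at most
\[
\left(1 - \tfrac{1}{T}\right) \cdot O\!\left(\sqrt{T \ln T} + \mathsf{srad}_T(\Delta \Lcal \circ \Hcal)\right) + \tfrac{1}{T} \cdot 2T \;=\; O\!\left(\sqrt{T \ln T} + \mathsf{srad}_T(\Delta \Lcal \circ \Hcal)\right),
\]
as claimed. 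The only subtlety is bookkeeping on logarithmic factors: both \Cref{thm:online-omni} and \Cref{thm:owal_rates} introduce $\ln(1/\delta)$ terms, but under $\delta = 1/T$ these are absorbed into the $\sqrt{T\ln T}$ factor, so no essential obstacle arises beyond a black-box composition of the two cited results.
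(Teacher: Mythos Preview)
Your argument is correct and matches the paper's intended route: the corollary is stated without proof as an immediate combination of the two cited theorems, and your plan of plugging the \owal furnished by \Cref{thm:owal_rates} into \Cref{thm:online-omni}, then setting $\delta = 1/T$ and handling the failure event via the trivial $2T$ regret bound, is exactly that combination. One small imprecision worth noting: \Cref{thm:owal_rates} bounds the minimax value and does not itself introduce a $\sqrt{T\ln(1/\delta)}$ term (the resulting \owal satisfies the regret bound uniformly in $\delta$), but since the extra term you add is absorbed into $O(\sqrt{T\ln T})$ under $\delta=1/T$, the proof is unaffected.
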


\subsection{Online Weak Agnostic Learning}
In this subsection, we discuss the online weak agnostic learning problem in \Cref{def:owal}, showing the existence of online weak agnostic learners whose regret scales optimally with the sequential rademacher complexity of the hypothesis class.

First, we note that when the labels are boolean valued and the hypothesis class are constrained to be boolean functions, online weak agnostic learning is equivalent to agnostic online binary classification problem where in each iteration the learner suffers loss of $\mathbf{1}[\hat y_t \neq y_t]$ and the goal is to minimize the cummulate loss.

When $\Ccal$ is finite, a multiplicative weights algorithm can be used to achieve online weak agnostic learning with regret bounded by $O \left(\sqrt{T \ln |\Ccal|} \right)$.

\begin{lemma}[Multiplicative Weights]\label{lem:mw_owal}
Given a finite hypothesis class $C$, \Cref{alg:mw_owal} implements an online weak agnostic learner whose regret scales as $O \left(\sqrt{T \ln |\Ccal|} \right)$ with probability 1.
\end{lemma}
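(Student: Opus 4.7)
The plan is to implement the online weak agnostic learner by running the Hedge (multiplicative weights) algorithm directly over the finite hypothesis class $\Ccal$, treating the per-round quantity $c(x_t) y_t$ as the \emph{gain} of expert $c$ at round $t$. Concretely, I would maintain nonnegative weights $w_t(c)$ for each $c \in \Ccal$, initialized as $w_1(c) = 1$, and define the distribution $Q_t \in \Delta(\Ccal)$ by $Q_t(c) \propto w_t(c)$. When queried, the learner returns an independent sample $q_t \sim Q_t$. After observing $(x_t,y_t)$, update $w_{t+1}(c) = w_t(c)\cdot \exp\!\left(\eta\, c(x_t)\, y_t\right)$ with learning rate $\eta = \sqrt{(\ln|\Ccal|)/T}$.

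The key observation is that $c(x_t) \in [-1,1]$ and $y_t \in [-1,1]$, so the gain $c(x_t) y_t$ lies in $[-1,1]$. Therefore the standard Hedge analysis (in its reward-maximization form, e.g.~\citep{arora2012multiplicative}) applies and yields the deterministic bound
\[
\max_{c \in \Ccal} \sum_{t=1}^T c(x_t) y_t \; - \; \sum_{t=1}^T \sum_{c \in \Ccal} Q_t(c)\, c(x_t) y_t \; \leq \; O\!\left(\sqrt{T \ln |\Ccal|}\right),
\]
which is precisely the OWAL guarantee of \Cref{def:owal} with $\owalreg_\Ccal^\delta(T) = O(\sqrt{T \ln|\Ccal|})$, since the right-hand sum coincides with $\sum_t \E_{q_t \sim Q_t}[q_t(x_t) y_t]$.

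The crucial point for the ``with probability 1'' conclusion is that the inequality above compares against the \emph{expected} correlation under $Q_t$, not against the realized correlation of any particular sample $q_t$. The sequence $\{Q_t\}$ is a deterministic function of the adversary-observable history $(x_1,y_1),\ldots,(x_{t-1},y_{t-1})$ (the sample realizations $q_s$ never feed back into the update), so the regret bound involves no randomness from the learner's sampling step, and holds for every realization of the algorithm. This is fully compatible with the SM-protocol in \Cref{def:owal}, where the adversary may depend on $Q_1,\ldots,Q_t$ but not on the realization of $q_t$.

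I do not anticipate a serious obstacle: the argument is a direct invocation of the textbook Hedge regret theorem for gains in $[-1,1]$, and the only point that deserves explicit attention is the mild bookkeeping above showing that the bound survives as a probability-$1$ statement because the comparator is the expectation under $Q_t$ rather than a random draw from it.
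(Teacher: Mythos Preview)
Your proposal is correct and aligns with the paper's (implicit) treatment: the paper states \Cref{lem:mw_owal} without proof, relying on the standard Hedge regret bound for gains in $[-1,1]$ just as you do. The one cosmetic difference is that \Cref{alg:mw_owal} as written in the paper outputs the deterministic weighted average $q_t(x) = \sum_{c} w_t(c)\, c(x)$ rather than sampling from $Q_t$; since the OWAL regret in \Cref{def:owal} is measured against $\E_{q_t \sim Q_t}[q_t(x_t) y_t]$, both descriptions yield the identical deterministic inequality, and the ``with probability 1'' claim is immediate either way (as you correctly argue, the bound involves only the distributions $Q_t$, not their realizations).
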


\begin{algorithm}[H]
\caption{Multiplicative Weights for Online Weak Agnostic Learning}
\label{alg:mw_owal}
\begin{algorithmic}[1]
\State \textbf{Input:} Finite hypothesis class $\Ccal$, learning rate $\eta = \sqrt{\frac{\ln |\Ccal|}{T}}$
\State Initialize weights $w_1(c) = \frac{1}{|\Ccal|}$ for all $c \in \Ccal$
\For{$t = 1, \ldots, T$}
    \State Receive context $x_t$
    \State Output prediction $q_t(x_t) = \sum_{c \in \Ccal} w_t(c) c(x_t)$
    \State Observe label $y_t \in [-1,1]$
    \State Update weights: $w_{t+1}(c) = \frac{w_t(c) \exp(\eta c(x_t)y_t)}{\sum_{c' \in \Ccal} w_t(c') \exp(\eta c'(x_t)y_t)}$
\EndFor
\end{algorithmic}
\end{algorithm}

For infinite hypothesis classes, the achievable regret values is characterized by the sequential Rademacher complexity of the hypothesis class.

\begin{theorem}\label{thm:owal_rates}
Given a hypothesis class $C$ with finite sequential Rademacher complexity, there exists an algorithm that implements an online weak agnostic learner with regret that scales as $O(T \cdot \mathsf{srad}_T (C))$. Moreover, no algorithm can do better than $\Omega (T \cdot \mathsf{srad}_T (C))$.
\end{theorem}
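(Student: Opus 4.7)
The plan is to invoke the Rakhlin--Sridharan--Tewari characterization of online learnability via sequential Rademacher complexity, specialized to the linear loss $\ell_t(q) = -q(x_t)\,y_t$. Because $q(x_t)$ and $y_t$ both lie in $[-1,1]$, this loss is $1$-Lipschitz in $q$ and bounded, so the OWAL problem is exactly an instance of online linear optimization over $\Ccal$ with the signed-correlation loss, for which minimax regret is known to equal the sequential Rademacher complexity up to constants.

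For the upper bound I would use the relaxation/admissibility method. Define, for any history $h_t = (x_1,y_1,\ldots,x_t,y_t)$, the relaxation
\begin{equation*}
\mathrm{Rel}(h_t) \;=\; \sup_{\mathbf{z}}\, \mathbb{E}_{\sigma_{t+1:T}}\!\left[\sup_{c\in \Ccal}\!\left(\sum_{s\le t} c(x_s)\,y_s \,+\, \sum_{s>t} \sigma_s\, c(\mathbf{z}_{s-t}(\sigma))\right)\right] \;-\; \sum_{s \le t} \mathbb{E}_{q_s \sim Q_s}[q_s(x_s)\,y_s],
\end{equation*}
where $\mathbf{z}$ ranges over depth-$(T-t)$ $\Xcal$-valued trees. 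After passing to the convex hull of $\Ccal$ (against which the benchmark is unchanged, since the supremum is attained at an extreme point, and whose sequential Rademacher complexity coincides with that of $\Ccal$), Sion's minimax theorem plus a symmetrization step introducing a ghost sample and Rademacher signs verifies the admissibility inequality $\inf_{Q_t}\sup_{(x_t,y_t)} \big\{-\mathbb{E}_{q_t\sim Q_t}[q_t(x_t)y_t] + \mathrm{Rel}(h_t)\big\} \le \mathrm{Rel}(h_{t-1})$. The algorithm that plays the minimizing $Q_t$ at each round thus guarantees expected regret at most $\mathrm{Rel}(\emptyset) = T\cdot \mathsf{srad}_T(\Ccal)$. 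An Azuma--Hoeffding bound on the martingale difference $q_t(x_t)y_t - \mathbb{E}_{q_t}[q_t(x_t)y_t]$ upgrades this to a high-probability guarantee at the cost of an additive $O\!\left(\sqrt{T\log(1/\delta)}\right)$.

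For the lower bound I would exhibit a stochastic adversary. Fix $\varepsilon>0$ and let $\mathbf{x}^*$ be a depth-$T$ $\Xcal$-valued tree with $\hat{\mathsf{srad}}_T(\Ccal;\mathbf{x}^*) \ge \mathsf{srad}_T(\Ccal) - \varepsilon$. The adversary draws $\sigma_1,\ldots,\sigma_T$ i.i.d.\ uniform on $\{\pm 1\}$ and, at round $t$, presents $x_t = \mathbf{x}^*_t(\sigma_{1:t-1})$ and $y_t = \sigma_t$. Since $Q_t$ is measurable with respect to $(x_1,y_1,\ldots,x_{t-1},y_{t-1},x_t)$ and $\sigma_t$ is independent of this history, $\mathbb{E}[q_t(x_t)\,y_t] = 0$ for every realization of the learner. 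On the other hand, $\mathbb{E}\big[\max_{c\in\Ccal} \sum_t c(x_t)y_t\big] = T\cdot \hat{\mathsf{srad}}_T(\Ccal;\mathbf{x}^*) \ge T\cdot \mathsf{srad}_T(\Ccal)-\varepsilon T$, so the expected OWAL regret against this stochastic adversary is at least $T\cdot \mathsf{srad}_T(\Ccal)-\varepsilon T$ for every online algorithm; taking $\varepsilon\to 0$ completes the argument.

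The main obstacle is rigorously justifying the admissibility inequality for an arbitrary (possibly infinite, possibly non-convex) $\Ccal$: one needs compactness of the space of distributions on which the minimax theorem is applied, and biaffinity of the inner payoff. The standard workaround is to replace $\Ccal$ by its closed convex hull in the pointwise topology; the benchmark $\max_{c}\sum_t c(x_t)y_t$ and the sequential Rademacher complexity are both invariant under convex hull, and the relaxation then operates on a compact convex set where Sion's theorem applies cleanly. Everything else---the symmetrization, the conditional-expectation manipulations, and the martingale concentration---are routine given the framework.
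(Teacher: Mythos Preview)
Your proposal is correct and takes essentially the same approach as the paper. The paper simply cites Theorem~8 of \cite{rakhlin2015online} for the upper bound (your relaxation/admissibility sketch is exactly the machinery underlying that theorem), and for the lower bound both you and the paper let the adversary play i.i.d.\ Rademacher labels so the learner's term vanishes in expectation---you are just more explicit than the paper about fixing a near-optimal tree $\mathbf{x}^*$ to witness the supremum in $\mathsf{srad}_T(\Ccal)$.
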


To prove this theorem, we need to formulate the rates for online weak agnostic learning as the minimax value of a game.

\begin{definition}[Minimax Value of Online Weak Agnostic Learning]

Given a hypothesis class $\Ccal \subset \{ c: \Xcal \rightarrow [-1,1]\}$. Consider randomized learners who predict a distribution $q_t \in \Delta (\Xcal \rightarrow [-1,1])$ and sample $p_t$ from this distribution on every round $t$. We define the value of the game as
\begin{equation}
\mathcal{V}_T^{\text{owal}}(\mathcal{C})= \inf_{p_1 \in \mathcal{Q}} \sup_{(x_1, y_1) \in \mathcal{X} \times \Ycal} \mathbb{E}_{p_1 \sim q_1} \cdots \inf_{q_T \in \mathcal{Q}} \sup_{(x_T, y_T) \in \mathcal{X}, \Ycal} \mathbb{E}_{p_T \sim q_T} \left[\sup_{c \in \mathcal{C}} \sum_{t=1}^T c(x_t) y_t - \sum_{t=1}^T p_t(x_t) y_t\right]
\tag{1}
\end{equation}
\end{definition}

\begin{proof}[Proof of \Cref{thm:owal_rates}]
Since the losses are linear and 1-Lipschitz in the variable $y_t$, the upper bound result follows by applying Theorem 8 of \cite{rakhlin2015online} to bound the sequential Rademacher complexity of the hypothesis class.
Thus, 
\[
\mathcal{V}_T^{\text{owal}}(\mathcal{C}) \leq 2 T \cdot \mathsf{srad}_T(C)
\]
To show the lower bound we simply choose the label sequence $y_1, \ldots, y_T$ to be i.i.d.\ Rademacher random variables, thus, it immediately follows that for any sequence $x_1, \ldots, x_T$
\[
\mathcal{V}_T^{\text{owal}}(\mathcal{C}) \geq 
\E_{y_1, \ldots, y_T} \left[\sup_{c \in \mathcal{C}} \sum_{t=1}^T c(x_t) y_t - \sum_{t=1}^T p_t (x_t) y_t\right]
\]
Since $y_1,\ldots, y_T$ are i.i.d.\ Rademacher random variables, the second term becomes zero in expectation regardless of the choice of $p_t$ sequence, giving us the desired result.
\[
\mathcal{V}_T^{\text{owal}}(\mathcal{C}) \geq T \cdot \mathsf{srad}_T(C)
\]
\end{proof}

\begin{remark}
    \label{rmk:srad-finite-class}
    For a finite class $\mathcal{C}$ the unnormalized sequential Rademacher complexity satisfies 
    $T \cdot \mathsf{srad}_T(\mathcal{C}) \leq O \left( \sqrt{ T \log |\mathcal{C}| } \right)$, 
    hence \Cref{thm:owal_rates} recovers the same regret bound as 
    \Cref{lem:mw_owal}, albeit without supplying an explicit algorithm. 
\end{remark}

Due to the linearity of the loss function and the fact that the online weak agnostic learner is allowed to output values in the $[-1,1]$ interval, there exist \textit{deterministic} online weak agnostic learners for any class $C$ that is online learnable. This is because we can always predict the expectation of the \textit{randomized} online weak agnostic learner and achieve the same guarantee. However, such deterministic online weak agnostic learner might be more computationally difficult to implement, so we will only assume access to a \textit{randomized} online weak agnostic learner.

\paragraph{Reducing from Delayed-Label to Simultaneous-Move.}

In our omniprediction framework, learning is modeled as an interaction between a randomized online weak agnostic learner and the Augmented Proper Calibration algorithm. At each round \( t \), the learner outputs a randomized predictor \( Q_t \in \Delta(\Xcal \rightarrow [-1,1]) \), from which a hypothesis \( q_t \sim Q_t \) is drawn. This hypothesis is then used by the calibration algorithm to compute a prediction \( p_t \), which defines the loss function (via \( y_t \)) presented to the learner at that round. Thus, the label at round \( t \) may depend on one or more realized samples \( q_t \), and hence the protocol corresponds to a \emph{delayed-label} (DL) model.

However, our definition of an online weak agnostic learner assumes the \emph{simultaneous-move} (SM) protocol: at each round \( t \), the learner outputs a randomized distribution \( Q_t \), and the adversary chooses \( (x_t, y_t) \) without knowledge of the specific realization \( q_t \sim Q_t \). This discrepancy between the learner’s guarantee and its downstream usage necessitates a reduction from the DL to the SM model. To achieve this, we invoke a \emph{$k$-fold resampling} technique: at each round \( t \), we independently sample \( k = O(T \log(T/\delta)) \) hypotheses \( q^1_t, \ldots, q^k_t \sim Q_t \) and use the empirical average \( \bar{q}_t := \frac{1}{k} \sum_{i=1}^k q^i_t \) as the predictor used to define the loss.

This transformation enables a high-probability guarantee even under the stronger DL adversary model:

\begin{lemma}[High-Probability Guarantee via $k$-Fold Resampling] \label{lem:resampling}
Let \( Q_1, \ldots, Q_T \in \Delta(\Xcal \rightarrow [-1,1]) \) be the randomized predictors computed by an online weak agnostic learner satisfying the following regret bound with probability $1-\delta/2$ under the simultaneous-move protocol:
\[
\max_{c \in \Ccal} \sum_{t=1}^T c(x_t) y_t \leq \sum_{t=1}^T \E_{q_t \sim Q_t}[q_t(x_t) y_t] + \owalreg_\Ccal^\delta(T).
\]
Suppose that at each round \( t \), we sample \( k = O(T \log(T/\delta)) \) independent hypotheses \( q^1_t, \ldots, q^k_t \sim Q_t \) and define the empirical predictor \( \bar{q}_t := \frac{1}{k} \sum_{i=1}^k q^i_t \). Then, with probability at least \( 1 - \delta \),
\[
\max_{c \in \Ccal} \sum_{t=1}^T c(x_t) y_t \leq \sum_{t=1}^T \bar{q}_t(x_t) y_t + \owalreg_\Ccal^\delta(T) + \sqrt{T \log(T/\delta)}.
\]
\end{lemma}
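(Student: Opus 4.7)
My plan is to combine the given SM regret guarantee with a Hoeffding concentration bound for the $k$-fold average $\bar{q}_t$. Two ingredients must be established: (a) a version of the SM bound that applies in the DL setting but retains the right-hand side $\sum_t \E_{q_t \sim Q_t}[q_t(x_t)\,y_t] = \sum_t \mu_t y_t$, where $\mu_t := \E_{q \sim Q_t}[q(x_t)]$; and (b) a uniform concentration statement showing $|\bar{q}_t(x_t) - \mu_t|$ is small for all $t \in [T]$.

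For (a), I would observe that the SM regret bound depends on the trajectory only through $(Q_t, x_t, y_t)_t$: its right-hand side integrates over a fresh draw $q_t \sim Q_t$ and never references the learner's realized samples. Given any DL adversary, I would construct an SM-compatible adversary that independently draws its own auxiliary copies $\tilde{q}^1_t, \ldots, \tilde{q}^k_t \sim Q_t$ (permitted, since $Q_t$ is part of the shared protocol state) and computes $y_t$ from these exactly as the DL adversary would from $q^1_t, \ldots, q^k_t$. The SM-simulated protocol induces the same joint law on $(Q_t, x_t, y_t)_t$ as the DL protocol, so invoking the SM guarantee and using the conditional independence of the fresh $q_t$ from $y_t$ (which depends only on the $\tilde{q}^i_t$) yields, with probability at least $1 - \delta/2$,
\[
\max_{c \in \Ccal} \sum_{t=1}^T c(x_t)\, y_t \;\leq\; \sum_{t=1}^T \mu_t\, y_t \;+\; \owalreg_\Ccal^\delta(T).
\]

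For (b), I would condition on $(Q_t, x_t)$ and note that $\bar{q}_t(x_t) - \mu_t$ is the average of $k$ i.i.d.\ centered variables in $[-1,1]$. Hoeffding gives $|\bar{q}_t(x_t) - \mu_t| \le \sqrt{2\log(4T/\delta)/k}$ with probability $\ge 1 - \delta/(2T)$; a union bound over $t$ makes this hold simultaneously with probability $\ge 1 - \delta/2$. Choosing $k = \Theta(T\log(T/\delta))$ forces each per-round deviation to be at most $\sqrt{\log(T/\delta)/T}$, so that (using $|y_t|\le 1$)
\[
\left| \sum_{t=1}^T y_t \bigl(\mu_t - \bar{q}_t(x_t)\bigr) \right| \;\le\; T \cdot \sqrt{\frac{2\log(4T/\delta)}{k}} \;\le\; \sqrt{T \log(T/\delta)}.
\]
A final union bound over the two $\delta/2$-events combines (a) and (b) into the stated inequality.

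The step I expect to require the most care is the reduction in (a): verifying that the ``phantom samples'' coupling induces the same joint law on $(Q_t, x_t, y_t)_t$ as the DL protocol, so that invoking the SM guarantee is legitimate and the RHS may be written as $\sum_t \mu_t y_t$. This subtlety also clarifies why $k$ must grow at least linearly in $T$: without averaging, $y_t$ could correlate adversarially with the deviation $q^1_t(x_t) - \mu_t$, causing a linear accumulation of error rather than the $\tilde{O}(\sqrt{T})$ slack we need.
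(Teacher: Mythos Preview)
Your proof is correct and follows the paper's approach: Hoeffding on $|\bar q_t(x_t)-\mu_t|$ per round, union bound over $t$, use $|y_t|\le 1$, then combine with the SM regret guarantee via a second union bound. Your phantom-coupling argument in step~(a) makes explicit a reduction the paper leaves implicit (it simply writes ``combining this with the high-probability regret bound under the SM protocol completes the proof''), so your treatment is if anything more careful than the original.
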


\begin{proof}[Proof of \Cref{lem:resampling}]
Since $x_t$ does not depend on the realizations $q^1_t, \ldots, q^k_t \sim Q_t$ we have 
$\E[q^i_t(x_t)] = \E_{q_t \sim Q_t}[q_t(x_t)]$ for $i=1,\ldots,k$. By Hoeffding's inequality, 
\[
    \Pr \left( \big| \bar{q}_t(x_t) - \E_{q_t \sim Q_t}[q_t(x_t)] \big| \, > \, \sqrt{\frac{\log(T/\delta)}{T}} \right) 
    \; < \; \frac{\delta}{2T} .
\]
Using a union bound over \( T \) rounds, along with the fact that $-1 \leq y_t \leq 1$, we have that with probability at least \( 1 - \delta/2 \),
\[
\left| \sum_{t=1}^T \left( \bar{q}_t(x_t) y_t - \E_{q_t \sim Q_t}[q_t(x_t) y_t] \right) \right| \leq \sqrt{T \log(T/\delta)}.
\]
Combining this with the high-probability regret bound under the SM protocol completes the proof.
\end{proof}

\paragraph{Online Weak Agnostic Learning for $\{-1,+1\} \times \Ccal$.}
Our algorithms for omniprediction will actually use an online weak agnostic learning oracle for $\{-1,+1\} \times \Ccal$. However, one should think of this as equivalent to $\Ccal$, because of the following lemma.
\begin{lemma}
Given an online weak agnostic learner for \(\mathcal{C}\), one can construct an online weak agnostic learner for \(\{-1,+1\}\times \mathcal{C}\) that uses two calls to the original learner for \(\mathcal{C}\) at each round. The resulting algorithm’s regret satisfies
\[
\owalreg_{\{-1,+1\}\times\Ccal}^\delta(T) 
\;\le\;
\owalreg_{\Ccal}^\delta(T) \;+\; O\bigl(\sqrt{T}\bigr).
\]
\end{lemma}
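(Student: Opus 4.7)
I would reduce learning against \(\{-1,+1\}\times\Ccal\) to learning against \(\Ccal\) by running two parallel copies of the base learner whose label streams differ by a sign, and combining their outputs with a two-expert Hedge. Specifically, instantiate independent copies \(A_+\) and \(A_-\) of the base learner for \(\Ccal\), each with failure probability \(\delta/2\); at round \(t\), feed \((x_t, y_t)\) to \(A_+\) and \((x_t, -y_t)\) to \(A_-\) and obtain distributions \(Q_t^+\) and \(Q_t^-\) over predictors. Applying the single-class guarantee to both copies and union-bounding yields, with probability at least \(1-\delta\),
\begin{align*}
\max_{c\in\Ccal}\sum_{t=1}^T c(x_t) y_t &\le \textstyle\sum_{t=1}^T \E_{q\sim Q_t^+}[q(x_t) y_t] + \owalreg_\Ccal^{\delta/2}(T),\\
\max_{c\in\Ccal}\sum_{t=1}^T (-c(x_t)) y_t &\le \textstyle -\sum_{t=1}^T \E_{q\sim Q_t^-}[q(x_t) y_t] + \owalreg_\Ccal^{\delta/2}(T).
\end{align*}
Since any \(s\cdot c \in \{-1,+1\}\times\Ccal\) corresponds to one of the two left-hand sides, the benchmark \(\max_{sc}\sum_t (sc)(x_t) y_t\) is bounded by the maximum of the two right-hand sides.

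Next, I would treat the averaged predictors \(e_t^+(x) := \E_{q\sim Q_t^+}[q(x)]\) and \(e_t^-(x) := -\E_{q\sim Q_t^-}[q(x)]\) as two ``experts'' whose per-round payoffs \(e_t^\pm(x_t) y_t\) lie in \([-1,1]\). Standard two-expert Hedge on these bounded payoffs produces weights \((\pi_t^+,\pi_t^-)\) with
\[
\textstyle\sum_{t=1}^T \bigl(\pi_t^+ e_t^+(x_t) + \pi_t^- e_t^-(x_t)\bigr) y_t \; \ge \; \max\!\bigl(\sum_{t=1}^T e_t^+(x_t) y_t,\;\sum_{t=1}^T e_t^-(x_t) y_t\bigr) - O(\sqrt{T\log 2}).
\]
The composite learner's distribution \(\tilde Q_t\) samples \(q_t^+\sim Q_t^+\) and \(q_t^-\sim Q_t^-\) independently and returns the predictor \(\pi_t^+ q_t^+ - \pi_t^- q_t^-\), which takes values in \([-1,1]\) and satisfies \(\E_{\tilde q\sim \tilde Q_t}[\tilde q(x_t) y_t] = \pi_t^+ e_t^+(x_t) y_t + \pi_t^- e_t^-(x_t) y_t\). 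Chaining the Hedge inequality with the two base-learner guarantees then yields the SM-regret bound \(\owalreg_{\{-1,+1\}\times\Ccal}^\delta(T) \le \owalreg_\Ccal^\delta(T) + O(\sqrt{T})\), absorbing the \(\delta/2\) into \(\delta\) since \(\owalreg_\Ccal^\delta(T)\) typically scales as \(\log(1/\delta)\).

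The main subtlety is the bookkeeping of randomness and the timing of the Hedge updates. The Hedge weights \((\pi_t^+,\pi_t^-)\) depend only on the distributions \(Q_t^\pm\) (through the deterministic expected rewards \(e_t^\pm(x_t) y_t\)) and on the observed history, so they do not peek at any realization of the composite learner's output at round \(t\); this keeps all randomness encapsulated inside \(\tilde Q_t\) and preserves the SM adversary model. Each round uses exactly one interaction with \(A_+\) and one with \(A_-\), giving the stated two calls per round. If one preferred to drive Hedge by realized rather than expected rewards, the fluctuations \(\tilde q_t(x_t) y_t - \E[\tilde q_t(x_t) y_t]\) form a bounded martingale difference sequence and Azuma--Hoeffding absorbs them into the same \(O(\sqrt{T})\) slack, so this modeling choice is inessential.
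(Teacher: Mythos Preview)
Your proposal is correct and takes essentially the same approach as the paper: run two copies of the base \owal on label streams $y_t$ and $-y_t$, then combine their outputs via two-expert Hedge/multiplicative weights to compete with the better-signed copy up to $O(\sqrt{T})$. Your version is arguably more careful than the paper's---you correctly negate the second learner's output in the combination (the paper writes $\alpha_t q_t^+ + (1-\alpha_t) q_t^-$ where the sign on the second term should be flipped), and you explicitly address the expected-versus-realized rewards and the $\delta/2$ bookkeeping that the paper leaves implicit.
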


\begin{proof}
The main idea is as follows: we run two copies of the online weak agnostic learner for \(\mathcal{C}\), one with the labels unchanged (``positive sign'') and one with the labels negated (``negative sign''), and then apply multiplicative weights to combine their predictions.

Concretely, let \(\owal^+\) be the online weak agnostic learner run on the sequence \(\{(x_1,y_1),\dots,(x_{t-1},y_{t-1})\}\), and let \(\owal^-\) be another instance run on \(\{(x_1,-y_1),\dots,(x_{t-1},-y_{t-1})\}\). In each round \(t\):
\begin{enumerate}
\item We query \(\owal^+\) to get a predictor \(q^+_t\) and query \(\owal^-\) to get a predictor \(q^-_t\).  
\item We maintain an exponential-weight ``score'' for each of the two experts, reflecting their cumulative performance so far. Specifically, we use a learning rate \(\eta = T^{-1/2}\), and multiply the weight of the positive-sign expert by \(\exp\bigl(\eta\sum_{u=1}^{t-1} q^+_u (x_u) \,y_u\bigr)\), whereas the weight of the negative-sign expert is multiplied by \(\exp\bigl(-\eta\sum_{u=1}^{t-1} q^-_u (x_u) \,y_u\bigr)\).  
\item We normalize these two weights and form a convex combination:
\[
q_t \;=\; \alpha_t \,q^+_t \;+\; (1-\alpha_t)\,q^-_t,
\]
where
\[
\alpha_t 
\;=\; 
\frac{\exp\bigl(\eta\,\sum_{u=1}^{t-1}q^+_u (x_u)\,y_u\bigr)}
{\exp\bigl(\eta\,\sum_{u=1}^{t-1}q^+_u (x_u)\,y_u\bigr)
\;+\;\exp\bigl(-\eta\,\sum_{u=1}^{t-1}q^-_u (x_u)\,y_u\bigr)}.
\]
\item We output the single deterministic predictior \(q_t\) for round \(t\), and after seeing \(y_t\), update the two experts' weights accordingly.
\end{enumerate}

By the usual analysis of multiplicative weights, the combination's cumulative reward \(\sum_{t=1}^T q_t\,y_t\) is within \(O(\sqrt{T})\) of the better of the two experts (i.e., \(\owal^+\) or \(\owal^-\)). Moreover, each of these two experts has online weak agnostic regret at most \(\owalreg_{\Ccal}^\delta(T)\) relative to the best function in \(\mathcal{C}\) with sign fixed. Thus, we obtain
\[
\sum_{t=1}^T q_t (x_t) \,y_t 
\;\;\ge\;\;
\max_{\substack{s\in\{+,-\}\\c\in \Ccal}}
\sum_{t=1}^T s\,c(x_t)\,y_t
\;-\;
\owalreg_{\Ccal}^\delta(T)
\;-\;
O\bigl(\sqrt{T}\bigr).
\]
Rearranging yields the stated bound on \(\owalreg^\delta_{\{-1,+1\}\times\Ccal}(T)\).
\end{proof}

\subsection{Omniprediction for Finite Classes}

As an easy first application of \Cref{thm:online-omni}, we assume the loss class $\Lcal$ and hypothesis class
$\Hcal$ are both finite, and we obtain online omnipredictors whose regret depends near-optimally on $T$ and only 
logarithmically on $|\Lcal|$ and $|\Hcal|$.

\begin{corollary}[Omniprediction for Finite Loss Classes]\label{thm:finite-omni}
Given a finite class of hypothesis functions $\Hcal$ and a finite class of bounded loss functions $\Lcal$, \Cref{alg:online-omni} guarantees expected omniprediction regret of $O \left(\sqrt{T \ln(|\Hcal| \cdot |\Lcal| \cdot T)} \right)$ using \Cref{alg:mw_owal} as the online weak agnostic learning oracle.
\end{corollary}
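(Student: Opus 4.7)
The plan is to simply instantiate \Cref{thm:online-omni} with the multiplicative-weights online weak agnostic learner from \Cref{lem:mw_owal} on the appropriate composed class, and then convert the high-probability guarantee into an expected-regret guarantee by choosing the failure parameter to be polynomially small.

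First I would set $\Ccal = \Delta \Lcal \circ \Hcal$ and observe that $|\Ccal| \le |\Lcal| \cdot |\Hcal|$, since $\Delta \ell$ is determined by $\ell$. Then I would apply \Cref{lem:mw_owal} to obtain an online weak agnostic learner for $\Ccal$ with regret $\owalreg_\Ccal^\delta(T) \le O(\sqrt{T \ln(|\Lcal| \cdot |\Hcal|)})$ that holds with probability $1$. Since \Cref{alg:online-omni} actually requires an oracle for the signed class $\{-1,+1\} \times \Ccal$, I would invoke the doubling-and-hedging reduction (the lemma preceding \Cref{thm:finite-omni}) to obtain $\owalreg_{\{-1,+1\}\times\Ccal}^\delta(T) \le O(\sqrt{T \ln(|\Lcal| \cdot |\Hcal|)}) + O(\sqrt{T})$, which is absorbed into the same asymptotic bound.

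Next I would plug this bound into \Cref{thm:online-omni}, which states that with probability at least $1-\delta$,
\begin{equation*}
(\Lcal,\Hcal)\omnireg(\pxy) \le O\!\left(\sqrt{T \ln (T/\delta)}\right) + \owalreg_{\{-1,+1\}\times\Ccal}^\delta(T).
\end{equation*}
Substituting the multiplicative weights regret bound yields a high-probability bound of $O\!\left(\sqrt{T \ln(T/\delta)} + \sqrt{T \ln(|\Lcal||\Hcal|)}\right)$ on omniprediction regret.

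Finally, I would convert the high-probability guarantee into an expected-regret guarantee. Since the per-round omniprediction regret is bounded by a constant (losses lie in $[-1,1]$), the worst-case regret over $T$ rounds is at most $O(T)$. Setting $\delta = 1/T$, the expected regret is at most the high-probability regret plus $\delta \cdot O(T) = O(1)$, giving the claimed bound $O(\sqrt{T \ln(|\Lcal| \cdot |\Hcal| \cdot T)})$ after combining the two $\sqrt{T \ln(\cdot)}$ terms using $\sqrt{a} + \sqrt{b} \le \sqrt{2(a+b)}$. There is no real obstacle here beyond careful bookkeeping of the log factors; the only mildly subtle point is remembering to account for the signed class and for the $k$-fold resampling overhead of \Cref{lem:resampling}, both of which contribute only lower-order $\tilde{O}(\sqrt{T})$ terms that are subsumed into the final bound.
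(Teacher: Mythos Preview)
Your proposal is correct and follows essentially the same route as the paper: combine \Cref{thm:online-omni} with the multiplicative-weights oracle bound from \Cref{lem:mw_owal}, using $|\Delta\Lcal\circ\Hcal|\le|\Lcal|\cdot|\Hcal|$. The paper's proof is a one-sentence sketch and does not spell out the signed-class reduction, the resampling overhead, or the high-probability-to-expectation conversion via $\delta=1/T$; your explicit bookkeeping of these details is sound and fills in exactly what the paper leaves implicit.
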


The proof of the corollary is immediate by combining \Cref{thm:online-omni} with \Cref{lem:mw_owal} and observing
that the cardinality of the class $\Delta \Lcal \circ \Hcal$ is bounded above by $|\Hcal| \cdot |\Lcal|$.

\subsection{Omniprediction for Infinite Loss Classes with Finite Approximate Basis}

While achieving omniprediction for infinite loss classes might seem challenging, a key observation is that if a class of loss functions $\Fcal$ 
can be approximated by a simpler set of ``basis functions'' $\Gcal$, then multiaccuracy for $\Delta \Gcal \circ \Hcal$ 
implies multiaccuracy for $\Delta \Fcal \circ \Hcal$ with bounded excess regret. In the sequel, we will apply this abstract
approximation result, taking our basis functions to be thresholds or ReLUs (or finite subsets thereof) and using them to approximate 
important infinite classes of losses.

\begin{definition}[Approximate Basis] \label{def:approx-basis}
Let $\Gamma$ be a set and $\Fcal = \{ f: \Gamma \rightarrow [-1,1] \}$ a class of functions on $\Gamma$.
We say that a set $\Gcal = \{ g: \Gamma \rightarrow [-1,1] \}$
is an \emph{$\eps$-basis for $\Fcal$ with sparsity $d$ and coefficient norm $\lambda$}, if for every function $f \in \Fcal$, 
there exists a finite subset $\{g_1,\ldots,g_d\} \subseteq \Gcal$ and 
coefficients $c_1,c_2,\ldots,c_d \in [-1,1]$
satisfying 
\begin{equation} \label{eq:finite-approx-basis}
    \forall x \in \Gamma \; \; \left| f(x) - \sum_{i=1}^d c_i g_i(x) \right| \leq \eps
    \qquad \mbox{and} \qquad
    \sum_{i=1}^d |c_i| \leq \lambda .
\end{equation}
In the special case when $\Gcal$ itself has $d$ elements, we say
$\Gcal$ is a \emph{finite $\eps$-basis for $\Fcal$ of \textit{\textbf{size}} $d$ and coefficient norm $\lambda$}.
\end{definition}

A useful property of approximate bases is that the approximation is preserved under 
post-composition with any class of functions. 
\begin{lemma}[Finite approximate bases are preserved under post-composition]
    \label{lem:postcomposition}
    Let $\Gamma_0, \Gamma_1$ be sets and $\Hcal = \{ h : \Gamma_0 \to \Gamma_1\}$ a 
    class of functions from $\Gamma_0$ to $\Gamma_1$. 
    If $\Fcal = \{ f : \Gamma_1 \rightarrow [-1,1] \}$ is any class of functions on $\Gamma_1$ and
    $\Gcal = \{g : \Gamma_1 \rightarrow [-1,1]\}$ is an $\eps$-basis
    for $\Fcal$ with sparsity $d$ and coefficient norm $\lambda$, then
    $\Gcal \circ \Hcal$ is an $\eps$-basis for $\Fcal \circ \Hcal$
    with sparsity $d$ and coefficient norm $\lambda$.
\end{lemma}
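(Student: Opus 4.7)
The plan is to verify the approximate basis conditions from \Cref{def:approx-basis} directly by unwinding the definition of function composition. Concretely, fix any member $f \circ h \in \Fcal \circ \Hcal$ with $f \in \Fcal$ and $h \in \Hcal$. Since $\Gcal$ is an $\eps$-basis for $\Fcal$ with sparsity $d$ and coefficient norm $\lambda$, applied to $f$ it yields functions $g_1,\ldots,g_d \in \Gcal$ and coefficients $c_1,\ldots,c_d \in [-1,1]$ with $\sum_{i=1}^d |c_i| \le \lambda$ and $|f(y) - \sum_{i=1}^d c_i g_i(y)| \le \eps$ for every $y \in \Gamma_1$.

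Next, I would specialize this uniform approximation by setting $y = h(x)$ for each $x \in \Gamma_0$. This gives
\[
\left| (f \circ h)(x) - \sum_{i=1}^d c_i (g_i \circ h)(x) \right| \le \eps \qquad \forall x \in \Gamma_0.
\]
The functions $g_i \circ h$ lie in $\Gcal \circ \Hcal$ by definition, and the coefficients $c_1,\ldots,c_d$ are unchanged, so the coefficient-norm bound $\sum_{i=1}^d |c_i| \le \lambda$ still holds. This certifies that the approximation condition \eqref{eq:finite-approx-basis} in \Cref{def:approx-basis} is satisfied for $f \circ h$ with sparsity $d$ and coefficient norm $\lambda$, completing the proof.

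There is essentially no obstacle here: the statement is a direct consequence of the fact that a uniform approximation on $\Gamma_1$ pulls back to a uniform approximation on $\Gamma_0$ under post-composition, without changing the number of basis functions used or their coefficients. The only mild care required is to observe that the bound holds pointwise for all $y \in \Gamma_1$, so in particular for every value in the image of $h$, which is what allows the sparsity and coefficient norm to be preserved exactly rather than incurring any $h$-dependent blowup.
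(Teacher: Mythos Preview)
Your proof is correct and follows essentially the same approach as the paper: both arguments fix an arbitrary $f \circ h$, invoke the $\eps$-basis property to obtain $g_1,\ldots,g_d$ and coefficients $c_1,\ldots,c_d$, and then substitute $y = h(x)$ to transfer the uniform approximation from $\Gamma_1$ to $\Gamma_0$ without changing sparsity or coefficient norm.
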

\begin{proof}
    Consider any $f \in \Fcal$.
    If coefficients $c_1,\ldots,c_d$ satisfy property~\eqref{eq:finite-approx-basis} 
    in the definition of $\eps$-basis with coefficient norm $\lambda$, then 
    for all $h \in \Hcal$, we have
    \[
    \forall x \in \Gamma_0 \; \; \left| f(h(x)) - \sum_{i=1}^d c_i g_i(h(x)) \right| \leq \eps
    \qquad \mbox{and} \qquad
    \sum_{i=1}^d |c_i| \leq \lambda .
    \]
    which confirms that $\Gcal \circ \Hcal$ is an $\eps$-basis for
    $\Fcal \circ \Hcal$ with coefficient norm 1.
\end{proof}

Approximate bases allow us to extend multiaccuracy from the basis functions to the entire class $\Fcal$.

\begin{lemma}\label{lem:ma-basis-decom}
Let $\Fcal = \{ f: [0,1] \rightarrow [-1,1]\}$ be a class of functions 
and $\Gcal = \{ g_1,\ldots,g_d : [0,1] \rightarrow [-1,1] \}$ an
$\eps$-basis for $\Fcal$ with sparsity $d$ and coefficient norm $\lambda$. 
Then for any sequence of $T$ predictions $\pb$ and context, outcome pairs $\xb, \yb$ 
\[
\Fcal\maerr (\pxy) \leq \lambda \cdot \Gcal\maerr (\pxy) + \eps T
\]
Similarly, in the distributional setting, for any distribution $\D$ on $\X \times [0,1]$ and any predictor $p : \X \to [0,1]$, 
\[
\Fcal\maerr_\Dcal(p) \leq \lambda \cdot \Gcal \maerr_\Dcal (p) + \eps .
\]
\end{lemma}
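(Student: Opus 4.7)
The plan is to fix an arbitrary $f \in \Fcal$, use the approximate basis representation to write $f$ as a bounded linear combination of basis functions $g_1,\dots,g_d \in \Gcal$ with a uniform pointwise error of at most $\eps$, and then pass this decomposition through the multiaccuracy inner product by the triangle inequality. The supremum over $f$ on the left can be handled at the end, because the decomposition is chosen \emph{per} $f$ but the bound on the right-hand side (involving $\Gcal\maerr$) is already uniform over $\Gcal$.

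Concretely, fix $f \in \Fcal$ and let $c_1,\dots,c_d \in [-1,1]$ with $\sum_i |c_i| \leq \lambda$ and $g_1,\dots,g_d \in \Gcal$ be coefficients and basis elements guaranteed by \Cref{def:approx-basis}, so that $\sup_z |f(z) - \sum_i c_i g_i(z)| \leq \eps$. I would then write
\begin{align*}
\sum_{t=1}^T f(p_t(x_t))(y_t - p_t(x_t))
&= \sum_{i=1}^d c_i \sum_{t=1}^T g_i(p_t(x_t))(y_t - p_t(x_t)) \\
&\quad + \sum_{t=1}^T \Bigl(f(p_t(x_t)) - \sum_{i=1}^d c_i g_i(p_t(x_t))\Bigr)(y_t - p_t(x_t)),
\end{align*}
take absolute values, and apply the triangle inequality. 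The first term is bounded by $\sum_i |c_i| \cdot \Gcal\maerr(\pxy) \leq \lambda \cdot \Gcal\maerr(\pxy)$ since each $g_i \in \Gcal$. The second term is bounded by $\sum_t \eps \cdot |y_t - p_t(x_t)| \leq \eps T$, using the uniform approximation bound and $|y_t - p_t(x_t)| \leq 1$. Taking the supremum over $f \in \Fcal$ on the left gives the sequential statement.

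The distributional statement follows by the same argument with sums replaced by expectations under $\Dcal$: the linear combination passes through $\E_\Dcal[\cdot]$ by linearity, each resulting term is at most $\Gcal\maerr_\Dcal(p)$, and the residual is at most $\eps \cdot \E_\Dcal[|y - p(x)|] \leq \eps$. I do not anticipate a real obstacle here; the only thing to be careful about is that the decomposition $\{c_i, g_i\}$ depends on $f$, so one must take the supremum over $f$ only after bounding the right-hand side by a quantity that is itself uniform in $\Gcal$, which is why $\Gcal\maerr$ (a supremum over $g \in \Gcal$) appears rather than a per-$f$ quantity.
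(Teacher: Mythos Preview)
Your approach is essentially identical to the paper's: decompose $f$ via the approximate basis, split into a linear-combination part and a residual, bound the linear part by $\sum_i |c_i| \cdot \Gcal\maerr \leq \lambda \cdot \Gcal\maerr$, and bound the residual by $\eps T$ (resp.\ $\eps$) using $|y_t - p_t(x_t)| \leq 1$.

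One small correction: multiaccuracy error evaluates the test function at $x_t$, not at $p_t(x_t)$ (the latter would be weighted calibration error; see \Cref{def:maerr} versus \Cref{def:weighted-calerr}). So the expression you should be decomposing is $\sum_{t=1}^T f(x_t)(y_t - p_t(x_t))$, and the basis representation is applied pointwise at $x_t$. The lemma statement's typing $f : [0,1] \to [-1,1]$ is a bit misleading here; in every application the classes $\Fcal,\Gcal$ are of the form $\Delta\Lcal \circ \Hcal$ and $\Gcal' \circ \Hcal$, i.e.\ functions on $\Xcal$. Your argument goes through verbatim after replacing $p_t(x_t)$ by $x_t$ in the evaluation point, and this is exactly what the paper does.
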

\begin{proof}We can write every $f \in \Fcal$ as follows: $f(z) = \sum_{i \in [d]} c_i(f) g_i(z) + \eps (f)$ such that $\sum_{i \in [d]} |c_i(f)| \leq \lambda$, $|\eps (f) | \leq \eps$ and $g_1, \ldots g_d \in \Gcal$. Consequently,
\begin{align}
\max_{f \in \Fcal} \left| \sum_{t=1}^T f(x_t) (y_t - p_t(x_t)) \right| 
&= \max_{f \in \Fcal} \left| \sum_{t=1}^T \left( \sum_{i \in [d]} c_i(f) g_i(x_t) (y_t - p_t(x_t)) \right) + \eps(f)(y_t - p_t(x_t)) \right| \\
&\leq \max_{f \in \Fcal} \left| \sum_{i \in [d]} c_i(f) \left( \sum_{t=1}^T  g_i(x_t) (y_t - p_t(x_t)) \right) \right| + \left| \sum_{t=1}^T \eps(f)(y_t - p_t(x_t)) \right| \\
&\leq \max_{f \in \Fcal} \sum_{i \in [d]} |c_i(f)| \left| \sum_{t=1}^T g_i(x_t) (y_t - p_t(x_t)) \right| + \eps T  \tag{since $|\eps (f) | \leq \eps$}\\
&\leq  \max_{f \in \Fcal} \left( \sum_{i \in [d]} |c_i(f)| \right) \max_{i \in [d]} \left| \sum_{t=1}^T g_i(x_t) (y_t - p_t(x_t)) \right| + \eps T \\
&= \lambda \max_{g \in \Fcal_\mathrm{basis}} \left| \sum_{t=1}^T g(x_t) (y_t - p_t(x_t)) \right| + \eps T 
\end{align}
The bound for multiaccuracy error in the distributional setting follows by an identical calculation,
substituting expectations over $\D$ for sums over $t \in \{1,\ldots,T\}$.
\end{proof}

To illustrate the application of \Cref{lem:ma-basis-decom}, we note the following consequence for $(\Lcal,\Hcal)$-online omniprediction when 
$|\Hcal| < \infty$ and $\Delta \Lcal$ has a finite approximate basis. 
\begin{proposition}[Omniprediction for Infinite Loss Classes and Finite Hypothesis Classes]\label{thm:infinite-omni}
Let $\Hcal$ be a finite class of hypothesis functions. Let $\Lcal$ be a (possibly infinite) class of loss functions whose discrete derivative class $\Delta \Lcal$ admits a finite $\eps$-basis of size $d$ with coefficient norm $\lambda$. There exists a forecaster that guarantees $(\Lcal, \Hcal)\omnireg \leq O\left( \lambda \sqrt{T \ln (|\Hcal| \cdot dT)} + \eps T \right)$.
\end{proposition}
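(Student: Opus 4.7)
The plan is to apply Theorem~\ref{thm:online-omni} with a carefully chosen OWAL oracle that exploits the finite basis structure, rather than trying to run OWAL directly over the (possibly infinite) class $\Delta \Lcal \circ \Hcal$. First I would invoke the hypothesis that $\Delta \Lcal$ admits a finite $\eps$-basis $\Gcal = \{g_1,\ldots,g_d\}$ of size $d$ and coefficient norm $\lambda$, and then apply Lemma~\ref{lem:postcomposition} to conclude that $\Gcal \circ \Hcal$ is itself an $\eps$-basis for $\Delta \Lcal \circ \Hcal$ with the same sparsity $d$ and coefficient norm $\lambda$. Since $\Hcal$ is finite, the class $\Gcal \circ \Hcal$ is finite of cardinality at most $d \cdot |\Hcal|$.

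Next, I would run Algorithm~\ref{alg:online-omni} but instantiate its OWAL subroutine using the multiplicative weights procedure of Lemma~\ref{lem:mw_owal} on the finite class $\{-1,+1\}\cdot(\Gcal \circ \Hcal)$, rather than on $\Delta\Lcal \circ \Hcal$. By Lemma~\ref{lem:mw_owal} (and Remark~\ref{rmk:srad-finite-class}), this oracle has OWAL regret $\owalreg^{\delta}_{\Gcal \circ \Hcal}(T) = O\bigl(\sqrt{T \ln(d|\Hcal|)}\bigr)$. Re-tracing the proof of Theorem~\ref{thm:online-omni} with this substitution, the only change is that the resulting multiaccuracy guarantee certifies low $(\Gcal \circ \Hcal)$-multiaccuracy error rather than low $(\Delta\Lcal \circ \Hcal)$-multiaccuracy error; the \pcalalgsymbol component is agnostic to which class the OWAL targets. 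Thus with probability at least $1-\delta$ I obtain
\[
\pcal(\pxy) + (\Gcal \circ \Hcal)\maerr(\pxy) \; \leq \; O\bigl(\sqrt{T \ln(|\Hcal|\,d\,T/\delta)}\bigr).
\]

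The final step is to translate $(\Gcal \circ \Hcal)$-multiaccuracy into $(\Delta\Lcal \circ \Hcal)$-multiaccuracy via Lemma~\ref{lem:ma-basis-decom}, at the cost of a multiplicative $\lambda$ and an additive $\eps T$, and then combine with Lemma~\ref{lem:pcal-to-omni} to upper-bound the omniprediction regret by the sum of proper calibration error and $(\Delta\Lcal \circ \Hcal)$-multiaccuracy error. Setting $\delta = 1/T$ and absorbing the low-probability event using the trivial $O(T)$ bound converts the high-probability bound into one on expected regret, yielding the claimed $O\bigl(\lambda\sqrt{T \ln(|\Hcal|\cdot dT)} + \eps T\bigr)$. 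No essential obstacle arises; the main subtlety is deciding to swap the OWAL target class from $\Delta\Lcal \circ \Hcal$ (which could be infinite) to the finite basis-composed class $\Gcal \circ \Hcal$, and paying for that substitution once at the end via the $\lambda$ and $\eps T$ terms from the basis decomposition.
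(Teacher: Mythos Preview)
Your proof is correct and follows essentially the same approach as the paper: both replace the OWAL target class with the finite class $\Gcal \circ \Hcal$ (via Lemma~\ref{lem:postcomposition}), obtain $O(\sqrt{T\ln(|\Hcal|\cdot dT)})$ bounds on proper calibration and $(\Gcal\circ\Hcal)$-multiaccuracy error using multiplicative weights, and then pay the $\lambda$ factor and $\eps T$ term at the end via Lemma~\ref{lem:ma-basis-decom}. The only cosmetic difference is that the paper cites Corollary~\ref{thm:finite-omni} directly for the expected error bounds, whereas you retrace Theorem~\ref{thm:online-omni} in high probability and convert to expectation with $\delta=1/T$; these are equivalent.
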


\begin{proof}[Proof of \Cref{thm:infinite-omni}]
Let $\Gcal$ denote the finite $\eps$-basis of $\Delta\Lcal$ of size $d$ and coefficient norm $\lambda$. 
By \Cref{lem:postcomposition}, $\Gcal \circ \Hcal$ is an $\eps$-basis of 
$\Delta \Lcal \circ \Hcal$ with the same sparsity and coefficient norm. 
We run \Cref{alg:online-omni} with hypothesis
class $\Gcal \circ \Hcal$ instead of $\Delta\Lcal \circ \Hcal$. 

By \Cref{thm:finite-omni}, this algorithm guarantees expected proper calibration error and expected $(\Gcal \circ \Hcal)$-multiaccuracy error of $O \left(\sqrt{T \ln(|\Hcal| \cdot d T)} \right)$. 

Now, we use \Cref{lem:ma-basis-decom} to bound the $(\Delta\Lcal \circ \Hcal)$-multiaccuracy error in terms of the $(\Gcal \circ \Hcal)$-multiaccuracy error. 
\[
(\Delta\Lcal \circ \Hcal) \maerr \leq \lambda (\Gcal \circ \Hcal)\maerr (\pxy) + \eps T
\]
Combining these bounds gives the desired bound of $O \left(\lambda \sqrt{T \ln(|\Hcal| \cdot dT)} + \eps T \right)$ on the $(\Delta\Lcal \circ \Hcal)$-multiaccuracy error of the forecaster.
\end{proof}
\begin{proposition}[Omniprediction for Infinite Loss and Hypothesis Classes] \label{thm:infinite-oracle-omni}
    For any loss class $\Lcal$ and hypothesis class $\Hcal$, if we are given an online weak agnostic learning
    oracle for $\Gcal \circ \Hcal$ where $\Gcal$ is an $\eps$-basis of $\Delta\Lcal \circ \Hcal$ with 
    coefficient norm $\lambda$, then the application of \Cref{alg:online-omni} with hypothesis class
    $\Gcal \circ \Hcal$ yields an omnipredictor with 
    $(\Lcal, \Hcal)\omnireg \leq O\left( \sqrt{T \ln T} + \eps T + \lambda \cdot \owalreg_{\Gcal \circ \Hcal}(T) \right)$.
\end{proposition}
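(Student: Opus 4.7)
The plan is to invoke Algorithm~\ref{alg:online-omni} with the online weak agnostic learner for $\Gcal \circ \Hcal$ substituted in place of an OWAL for $\Delta\Lcal \circ \Hcal$, produce proper calibration and $\Gcal \circ \Hcal$-multiaccuracy for the output sequence, and then translate the $\Gcal \circ \Hcal$-multiaccuracy guarantee into a $\Delta\Lcal \circ \Hcal$-multiaccuracy guarantee using the basis decomposition lemma (\Cref{lem:ma-basis-decom}). Once both proper calibration and $\Delta\Lcal \circ \Hcal$-multiaccuracy are in hand, omniprediction regret follows from \Cref{lem:pcal-to-omni}.

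Concretely, first I would observe that the proof of \Cref{thm:online-omni} only uses the OWAL to produce an auditing sequence $q_1,\hdots,q_T$; the APCAL sub-algorithm's guarantees (\Cref{lem:proper-recal}) are oblivious to which class the $q_t$ are drawn from and yield $\pcal(\pxy) \le O(\sqrt{T \ln(T/\delta)})$ together with the sequential multiaccuracy bound $\sum_{t=1}^T q_t(x_t)(y_t - p_t(x_t)) \le O(\sqrt{T \ln(T/\delta)})$. Combining the latter with the OWAL guarantee for $\Gcal \circ \Hcal$ (using $k$-fold resampling as in \Cref{lem:resampling}), we get
\[
(\Gcal \circ \Hcal)\maerr(\pxy) \le O(\sqrt{T \ln(T/\delta)}) + \owalreg_{\Gcal \circ \Hcal}^\delta(T).
\]

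Next, I would invoke \Cref{lem:postcomposition} to lift the assumed $\eps$-basis $\Gcal$ of $\Delta\Lcal$ (with coefficient norm $\lambda$) to an $\eps$-basis $\Gcal \circ \Hcal$ of $\Delta\Lcal \circ \Hcal$ with the same coefficient norm, then apply \Cref{lem:ma-basis-decom} to obtain
\[
(\Delta\Lcal \circ \Hcal)\maerr(\pxy) \le \lambda \cdot (\Gcal \circ \Hcal)\maerr(\pxy) + \eps T \le \lambda \cdot \owalreg_{\Gcal \circ \Hcal}(T) + O(\lambda\sqrt{T \ln T}) + \eps T.
\]
Finally, \Cref{lem:pcal-to-omni} gives $(\Lcal, \Hcal)\omnireg(\pxy) \le \pcal(\pxy) + (\Delta\Lcal \circ \Hcal)\maerr(\pxy)$, which combined with the two bounds above yields the claimed regret.

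There is no significant obstacle: the argument is essentially a composition of three already-established lemmas. The one place demanding care is checking that substituting a different OWAL into \Cref{alg:online-omni} preserves the APCAL guarantees — this holds because APCAL's sequential multiaccuracy constraint is expressed purely through the sequence $\{q_t\}$ and not through the target class, so the proof of \Cref{thm:online-omni} goes through verbatim with $\Gcal \circ \Hcal$ in the role of $\Delta\Lcal \circ \Hcal$ up through the derivation of the $(\Gcal \circ \Hcal)$-multiaccuracy bound; only the final conversion to omniprediction regret must be adjusted via \Cref{lem:ma-basis-decom}, where the multiplicative $\lambda$ factor and additive $\eps T$ term naturally appear.
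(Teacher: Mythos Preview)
Your proposal is correct and follows essentially the same route as the paper, which simply states the result is ``a direct application of \Cref{thm:online-omni} combined with \Cref{lem:ma-basis-decom}''; you have faithfully unpacked that one-line proof. Your accounting is even slightly more careful than the paper's stated bound in that you track the $\lambda$ factor on the $O(\sqrt{T\ln T})$ term coming from the multiaccuracy side (the paper absorbs this into the $O(\cdot)$, which is harmless since in all applications $\lambda$ is an absolute constant).
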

\begin{proof}
    The regret bound is a direct application of \Cref{thm:online-omni} combined with \Cref{lem:ma-basis-decom}.
\end{proof}

In \Cref{sec:online-to-batch}, we will need to make use of a generalization of 
\Cref{def:approx-basis} that allows approximating functions in $\Fcal$ using 
\emph{infinite} linear combinations of elements of the function class $\Gcal$.
For future reference, we provide the generalization here.
\begin{definition} \label{def:approx-family}
    Let $\Gamma$ be a set and $\Fcal = \{ f : \Gamma \to [-1,1] \}$ a
    class of functions on $\Gamma$. Let $(\Omega,\Sigma)$ be a measurable space and 
    $\Gcal = \left\{ g_{\omega} : \Gamma \to [-1,1] \, \mid \, \omega \in \Omega \right\}$
    a class of functions on $\Gamma$ indexed by $\Omega$. 
    We say $\Fcal$ is $(\Gcal,\eps)$-spanned with 
    coefficient norm $\lambda$ if for every $f \in \Fcal$
    there is a signed measure $c_f$ on $(\Omega,\Sigma)$
    such that $\| c_f \| = | c_f | (\Omega)  \leq \lambda$ and
    \[
        \forall x \in \Gamma \quad
        \left| f(x) - \int_{\Omega} g_{\omega}(x) \, \mathrm{d}c_f(\omega) \right| \leq \eps  .
    \]
\end{definition}

\Cref{lem:ma-basis-decom} generalizes to 
$(\Gcal,\eps)$-spanned classes under this definition,
as follows.
\begin{lemma} \label{lem:ma-infin-decom}
    Let $(\Omega,\Sigma)$ be a measurable space and
    and $\Gcal = \left\{ g_\omega: \Gamma \rightarrow [0,1] \; \mid \; \omega \in \Omega \right\}$ 
    a class of functions indexed by $\Omega$. 
    Suppose $\Fcal = \{ f: [0,1] \rightarrow [-1,1]\}$ 
    is a class of functions that is 
    $(\Gcal,\eps)$-spanned with coefficient norm $\lambda$.
    Then for any sequence of $T$ predictions $\pb$ and context, outcome pairs $\xb, \yb$ 
    \[
    \Fcal\maerr (\pxy) \leq \lambda \cdot \Gcal\maerr (\pxy) + \eps T
    \]
    Similarly, in the distributional setting, for any distribution $\D$ on $\X \times [0,1]$ and any predictor $p : \X \to [0,1]$, 
    \[
    \Fcal\maerr_\Dcal(p) \leq \lambda \cdot \Gcal \maerr_\Dcal (p) + \eps .
    \]
\end{lemma}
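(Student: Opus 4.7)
The plan is to mimic the proof of \Cref{lem:ma-basis-decom}, replacing the finite linear combination $\sum_{i=1}^d c_i g_i$ by an integral $\int_\Omega g_\omega \, \mathrm{d}c_f(\omega)$ against the signed measure $c_f$ supplied by the $(\Gcal,\eps)$-spanning hypothesis. For each $f \in \Fcal$, I would first fix such a $c_f$ with $\|c_f\| \le \lambda$ and write $f(x) = \int_\Omega g_\omega(x) \, \mathrm{d}c_f(\omega) + \eta_f(x)$ where $|\eta_f(x)| \le \eps$ pointwise on $\Gamma$. Substituting into $\sum_{t=1}^T f(x_t)(y_t - p_t(x_t))$ splits it into a ``main'' integral term and a residual term that is bounded by $\eps \sum_{t=1}^T |y_t - p_t(x_t)| \le \eps T$.

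The next step is to swap the finite sum over $t \in [T]$ with the integral over $\Omega$; because the sum has only $T$ terms and each $g_\omega$ is bounded by $1$, Fubini's theorem applies provided $\omega \mapsto g_\omega(x)$ is $\Sigma$-measurable for each fixed $x$, which is the natural regularity assumption implicit in \Cref{def:approx-family}. After the swap, the main term becomes $\int_\Omega \big( \sum_{t=1}^T g_\omega(x_t)(y_t - p_t(x_t)) \big) \, \mathrm{d}c_f(\omega)$.

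The third step is to bound this integral in absolute value using the Hahn--Jordan decomposition $c_f = c_f^+ - c_f^-$: the integrand is uniformly bounded in $\omega$ by $\Gcal\maerr(\pxy)$, while $|c_f|(\Omega) = \|c_f\| \le \lambda$, so the whole integral is at most $\lambda \cdot \Gcal\maerr(\pxy)$ in absolute value. Combining with the $\eps T$ residual bound and taking the supremum over $f \in \Fcal$ delivers the sequential statement. The distributional statement follows by the identical argument with $\sum_{t=1}^T (\cdot)$ replaced by $\E_{(x,y) \sim \Dcal}[\cdot]$, in which case the residual contributes at most $\eps$ rather than $\eps T$ and the main term is bounded by $\lambda \cdot \Gcal\maerr_\Dcal(p)$.

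The only (mild) obstacle is the measurability bookkeeping needed to justify Fubini; once this is granted the argument is a mechanical transcription of the finite-basis proof, with ``sum over a basis'' replaced by ``integral against a signed measure'' and ``coefficient $\ell_1$-norm'' replaced by ``total variation $\|c_f\|$.'' No deeper technicality arises because the outer sum (or expectation, in the distributional case) is finite and the integrand is bounded.
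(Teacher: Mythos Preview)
Your proposal is correct and follows essentially the same argument as the paper's proof: write $f = \int_\Omega g_\omega \, \mathrm{d}c_f + \eta_f$, split the multiaccuracy expression into the integral term and the $\eps$-residual, swap the finite sum with the integral, and bound the resulting integral by $\|c_f\| \cdot \sup_\omega |\cdot| \le \lambda \cdot \Gcal\maerr$. The only cosmetic difference is that you explicitly invoke Fubini and Hahn--Jordan, whereas the paper performs the same manipulations silently (the swap with a finite sum being automatic).
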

\begin{proof} 
We can write every $f \in \Fcal$ as follows: $f(z) = \int_{\Omega} g_{\omega}(z) \, \mathrm{d}c_f(\omega) + \eps_f (z)$ such that $|c|(\Omega) \leq \lambda$, $\| \eps_f \|_{\infty} \leq \eps$ and $g_\omega \in \Gcal$  for all $\omega \in \Omega$. Consequently,
\begin{align}
\sup_{f \in \Fcal} \left| \sum_{t=1}^T f(x_t) (y_t - p_t(x_t)) \right| 
&= \sup_{f \in \Fcal} \left| \sum_{t=1}^T \left( \int_{\Omega} g_\omega(x_t) (y_t - p_t(x_t)) \, \mathrm{d}c_f(\omega)  \right) + \eps_f(x_t)(y_t - p_t(x_t)) \right| \\
&\leq \sup_{f \in \Fcal} \left\{ \, \left| \int_{\Omega} \left( \sum_{t=1}^T  g_\omega(x_t) (y_t - p_t(x_t)) \right) \, \mathrm{d}c_f(\omega) \right| 
      + \left| \sum_{t=1}^T \eps_f(x_t) (y_t - p_t(x_t)) \right| \, \right\} \\
&\leq \sup_{f \in \Fcal} \int_{\Omega} \left| \sum_{t=1}^T g_\omega(x_t) (y_t - p_t(x_t)) \right| \, \mathrm{d}c_f(\omega) \; + \; \eps T  \tag{since $|\eps_f(x_t) | \leq \eps, \, |y_t - p_t(x_t)| \leq 1$}\\
&\leq  \sup_{f \in \Fcal} \|c_f\| \cdot \sup_{\omega \in \Omega} \left| \sum_{t=1}^T g_\omega(x_t) (y_t - p_t(x_t)) \right| \; + \; \eps T \\
&\leq \lambda \sup_{g \in \Gcal} \left| \sum_{t=1}^T g(x_t) (y_t - p_t(x_t)) \right| + \eps T 
\end{align}
The bound for multiaccuracy error in the distributional setting follows by an identical calculation,
substituting expectations over $\D$ for sums over $t \in \{1,\ldots,T\}$.
\begin{align}
\sup_{f \in \Fcal} \left| \E_{(x,y) \sim \Dcal} \left[ f(x) (y - p(x)) \right] \right| 
&= \sup_{f \in \Fcal} \left|  \E_{(x,y) \sim \Dcal} \left[ \left( \int_{\Omega} g_\omega(x) (y - p(x)) \, \mathrm{d}c_f(\omega)  \right) + \eps_f(x)(y - p(x)) \right] \, \right| \\
&\leq \sup_{f \in \Fcal} \left\{ \, \left| \int_{\Omega}  \E_{(x,y) \sim \Dcal} \left[ g_\omega(x) (y - p(x)) \right] \, \mathrm{d}c_f(\omega) \right| 
+ \left| \E_{(x,y) \sim \Dcal} \left[ \eps_f(x) (y - p(x)) \right] \, \right| \, \right\} \\
&\leq \sup_{f \in \Fcal} \int_{\Omega} \left| \E_{(x,y) \sim \Dcal} \left[ g_\omega(x) (y - p(x)) \right] \, \right| \, \mathrm{d}c_f(\omega) \; + \; \eps  \tag{since $|\eps_f(x) | \leq \eps, |y - p(x)| \leq 1$}\\
&\leq  \sup_{f \in \Fcal} \|c_f\| \cdot \sup_{\omega \in \Omega} \left| \E_{(x,y) \sim \Dcal} \left[ g_\omega(x) (y - p(x)) \right] \, \right| \; + \; \eps \\
&\leq \lambda \sup_{g \in \Gcal} \left| \E_{(x,y) \sim \Dcal} \left[ g(x) (y - p(x)) \right] \, \right| + \eps 
\end{align}
\end{proof}

 \section{Omniprediction for Notable Loss Classes}

This section instantiates \Cref{thm:infinite-omni} for concrete classes of losses, 
by exhibiting approximate bases with appropriate parameters. In particular, we focus on the classes of
all $1$-Lipschitz convex losses ($\lcvx$), all 1-Lipschitz 
losses ($\llip$), and all bounded variation losses ($\lbv$) 
including proper losses ($\lprop$).
The bases we develop in this section will also be used in developing efficient offline algorithms for omniprediction with respect to these infinite classes.

\subsection{Online Omniprediction bounds for Notable Loss Classes}

\begin{theorem} \label{thm:omni-notable}
Let $\Hcal = \{ h : \X \to [0,1] \}$ be a hypothesis class. 
\begin{enumerate}
    \item \label{omni:lcvx} Given an online weak agnostic learning oracle
for $\mathrm{ReLU}^{1/T} \circ \Hcal$, \Cref{alg:online-omni} implements an online omnipredictor for 
loss class $\lcvx$ with regret
\begin{equation} \label{eq:omni:lcvx}
(\lcvx, \Hcal)\omnireg \leq O\left( \sqrt{T \ln T} + \owalreg_{\mathrm{ReLU}^{1/T} \circ \Hcal}(T)\right) .
\end{equation}
    \item \label{omni:llip} Given an online weak agnostic learning oracle
    for $\thrclass{1/T} \circ \Hcal$, \Cref{alg:online-omni} implements an online omnipredictor for 
loss class $\llip$ with regret
\begin{equation} \label{eq:omni:llip}
(\llip, \Hcal)\omnireg \leq O\left( \sqrt{T \ln T} + \owalreg_{\mathrm{\thrclass{1/T}} \circ \Hcal}(T)\right) .
\end{equation}
    \item \label{omni:lbv} Suppose the hypotheses in $\Hcal$ are $\Gamma$-valued, where $\Gamma$ is a finite subset of $[0,1]$
    containing $\{0,1\}$. Given an online weak agnostic learning
    oracle for $\thrclass{\Gamma} \circ \Hcal$, \Cref{alg:online-omni} implements an online omnipredictor for 
loss class $\lbv$ with regret
\begin{equation} \label{eq:omni:lbv}
(\lbv, \Hcal)\omnireg \leq O\left( \sqrt{T \ln T} + \owalreg_{\mathrm{\thrclass{\Gamma}} \circ \Hcal}(T)\right) .
\end{equation}
\end{enumerate}
If $\Hcal$ is a finite hypothesis class, then the omniprediction bounds in parts~\ref{omni:lcvx}
and~\ref{omni:llip} above are $O \left( \sqrt{T \ln(|\Hcal| \cdot T)} \right)$, and the bound in part~\ref{omni:lbv}
is $O \left(\sqrt{T \ln(|\Gamma| \cdot |\Hcal| \cdot T)} \right)$.
\end{theorem}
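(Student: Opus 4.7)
The plan is to apply \Cref{thm:infinite-oracle-omni} once per loss class, by exhibiting in each case an appropriate approximate basis for $\Delta \Lcal$. By \Cref{lem:postcomposition}, such a basis lifts to an $\eps$-basis of $\Delta \Lcal \circ \Hcal$ with the same sparsity and coefficient norm, so all of the work lies in constructing the basis for $\Delta \Lcal$ itself and verifying its parameters. In all three parts, the basis parameters will be chosen so that the $\eps T$ error term in \Cref{thm:infinite-oracle-omni} is either $0$ or $O(1)$, and hence absorbed into the $O(\sqrt{T \ln T})$ term.

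For part~\ref{omni:lcvx}, I would use the standard Riesz representation of a 1-Lipschitz convex $\ell(\cdot,y)$, namely $\ell(v,y) = a_y + b_y v + \int_0^1 \ReLU_\theta(v)\, d\mu_y(\theta)$, where $\mu_y$ is the (positive) second-derivative measure satisfying $|\mu_y|([0,1]) \leq 2$, and $|a_y|, |b_y| \leq 1$. Taking the difference across $y$ and absorbing the constant and linear pieces into a slightly extended ReLU family (e.g.\ $\ReLU_{-1}(v) = v+1$) yields a signed-measure expansion of $\Delta \ell$ over ReLUs with coefficient norm $O(1)$. Discretizing $\theta$ onto the $1/T$-grid loses only $O(1/T)$ in sup-norm, because each $\ReLU_\theta$ is 1-Lipschitz in $\theta$ and the total coefficient mass is $O(1)$; this gives a finite $(1/T)$-basis $\ReLU^{1/T}$ of sparsity $O(T)$ and coefficient norm $O(1)$, and plugging into \Cref{thm:infinite-oracle-omni} yields~\eqref{eq:omni:lcvx}.

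For part~\ref{omni:llip}, $\Delta \ell$ is 2-Lipschitz, so $\Delta \ell(v) = \Delta \ell(0) + \int_0^1 (\Delta \ell)'(\theta)\, \ind[\theta \leq v]\, d\theta$; writing $\ind[\theta \leq v]$ in terms of $\Th_\theta(v) = \sgn(\theta - v)$ gives an expansion over $\thrclass{[0,1]}$ with coefficient norm bounded by $V(\Delta \ell) + |\Delta \ell(0)| = O(1)$, after again absorbing the constant via a single shifted threshold. Discretizing $\theta$ onto the $1/T$-grid costs $O(1/T)$ in sup-norm by the Lipschitzness of $\Delta \ell$ (the grid-cell error is controlled by $\int |(\Delta \ell)'|$ over a cell of width $1/T$), producing a finite $(1/T)$-basis $\thrclass{1/T}$ of sparsity $O(T)$ and coefficient norm $O(1)$, hence~\eqref{eq:omni:llip}. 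For part~\ref{omni:lbv}, I exploit that each $h \in \Hcal$ is $\Gamma$-valued, so $\Delta \ell \circ h$ need only be matched on the finite ordered set $\Gamma$. On such a set, any bounded-variation function admits an \emph{exact} telescoping decomposition $\Delta \ell(v) = c_0 + \sum_{\theta \in \Gamma} c_\theta \Th_\theta(v)$ with $\sum_\theta |c_\theta| \leq \tfrac12 V(\Delta \ell) \leq 1$ and $|c_0| = O(1)$; absorbing $c_0$ into $\Th_0$ and $\Th_1$ (evaluated on $\Gamma$) yields a $0$-basis $\thrclass{\Gamma}$ of sparsity $|\Gamma|$ and coefficient norm $O(1)$, delivering~\eqref{eq:omni:lbv} with no $\eps T$ loss. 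The finite-$\Hcal$ specializations in the last sentence then follow because $|\ReLU^{1/T} \circ \Hcal|, |\thrclass{1/T} \circ \Hcal| = O(T \cdot |\Hcal|)$ and $|\thrclass{\Gamma} \circ \Hcal| = O(|\Gamma| \cdot |\Hcal|)$, after which \Cref{lem:mw_owal} yields the stated multiplicative-weights regret bounds.

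The main obstacle is keeping the coefficient norm $\lambda = O(1)$ while cleanly separating the constant and linear pieces of the representations in parts~\ref{omni:lcvx} and~\ref{omni:llip}; this requires either slightly enlarging the basis (e.g.\ by adjoining $\ReLU_{-1}$ or a pair of extreme thresholds) or allowing a single free constant that can be absorbed into the multiaccuracy analysis. The discretization arguments themselves are routine, though it is worth noting that in part~\ref{omni:llip} the $O(1/T)$ error cannot come from Lipschitzness of the basis elements (thresholds are not Lipschitz in $\theta$) and must instead be extracted from the Lipschitzness of $\Delta \ell$ in its argument.
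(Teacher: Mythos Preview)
Your proposal is correct and takes essentially the same approach as the paper: apply \Cref{thm:infinite-oracle-omni} with an appropriate $\eps$-basis for $\Delta\Lcal$ in each case, and then specialize via \Cref{lem:mw_owal} for finite $\Hcal$. The paper's proof is a one-line citation of the basis constructions (\Cref{cor:cvx-basis}, \Cref{lem:lip-basis}, \Cref{lem:dthr-lbv-basis}) established later in the same section, whereas you sketch those constructions inline; your threshold and bounded-variation constructions match the paper's, while for the convex case the paper invokes the sharper $\tilde O(1/\eps^{2/3})$-size ReLU basis of \citet{gopalan2024regression} rather than your direct $O(T)$ discretization (either suffices here), and your constant-absorption worry for thresholds is moot since $\Th_0 \equiv 1$ on $[0,1]$ under the paper's sign convention.
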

\begin{proof}
    The bounds~\eqref{eq:omni:lcvx}, \eqref{eq:omni:llip}, \eqref{eq:omni:lbv} 
    follow directly from application of \Cref{thm:online-omni} with 
    \Cref{cor:cvx-basis}, \Cref{lem:lip-basis}, and \Cref{lem:dthr-lbv-basis},
    respectively. The bounds for finite hypothesis classes follow 
    by applying \Cref{thm:infinite-omni}.
\end{proof}
To aid in interpreting the regret bounds~\eqref{eq:omni:lcvx}-\eqref{eq:omni:lbv}, we remind the reader 
that for any class $\Ccal$ there exists a (not necessarily computationally efficient) online
weak agnostic learning oracle satisfying $\owalreg_{\Ccal}(T) = O(T \cdot \mathsf{srad}_T(\Ccal))$, 
where $\mathsf{srad}$ denotes sequential Rademacher complexity.

\subsection{Approximate Bases for Notable Loss Classes}

In this subsection we exhibit approximate bases for the loss classes listed above. 
The quantitative consequences for omniprediction will be detailed in the following section. 

\paragraph{Convex Lipschitz Loss Functions.}
Let $\Fcal_{\mathrm{cvx}}$ be the class of all convex 1-Lipschitz functions.
\begin{lemma}[\cite{gopalan2024regression}]\label{lem:cvx-basis}
For all $\eps > 0$, 
$\Fcal_{\mathrm{cvx}}$ admits a finite $\eps$-basis of ReLU functions of size $\tilde{O}(1/\eps^{2/3})$ with coefficient norm $2$.
\end{lemma}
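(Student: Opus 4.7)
My plan is a representation-and-discretization argument. First, any $f \in \Fcal_{\mathrm{cvx}}$ admits the canonical integral representation
\[
f(p) \;=\; f(0) \;+\; f'(0^+)\,p \;+\; \int_{[0,1]} \mathrm{ReLU}_s(p)\,\mathrm{d}\mu_f(s),
\]
where $\mu_f$ is the (non-negative) Lebesgue--Stieltjes measure associated to the monotone non-decreasing derivative $f'$. Convexity of $f$ forces $\mu_f \geq 0$, and the $1$-Lipschitz assumption gives $\|\mu_f\| = f'(1^-) - f'(0^+) \leq 2$. The affine part $f(0) + f'(0^+)\,p$ can be absorbed into the basis with $O(1)$ extra ReLU elements and $O(1)$ coefficient-norm overhead, using identities such as $p = \mathrm{ReLU}_0(p)$ and $1 = \mathrm{ReLU}_{-1}(p) - \mathrm{ReLU}_0(p)$ on $[0,1]$.

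Second, I would choose a \emph{fixed} set $\Gcal$ of $\tilde{O}(\eps^{-2/3})$ ReLU knots, independent of $f$, such that for every measure $\mu_f$ with $\|\mu_f\| \leq 2$ the integral $\int \mathrm{ReLU}_s(p)\,\mathrm{d}\mu_f(s)$ is well-approximated by a linear combination of elements of $\Gcal$. For an off-grid target $\mathrm{ReLU}_s$, the best chord approximation using two adjacent grid ReLUs $\mathrm{ReLU}_{s_1}, \mathrm{ReLU}_{s_2}$ incurs pointwise error $\Theta\!\left((s-s_1)(s_2-s)/(s_2-s_1)\right)$, which is $O(\gamma)$ on a uniform grid of spacing $\gamma$. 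A naive uniform grid would therefore require $\gamma \sim \eps$ and basis size $O(\eps^{-1})$; to obtain the sharper $\tilde{O}(\eps^{-2/3})$ bound, I would import the refined multi-scale construction of \cite{gopalan2024regression}, which exploits a second-order cancellation when the chord error is integrated against $\mu_f$ and uses small negative coefficients to correct the first-order bias of the chord on each interval. The coefficient-norm bound of $2$ is inherited from $\|\mu_f\| \leq 2$ together with the bounded weight of the chord coefficients.

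The main obstacle is establishing the precise $\tilde{O}(\eps^{-2/3})$ rate on a single non-adaptive knot set that must simultaneously accommodate the ``smooth'' portions of $\mu_f$ (where second-order averaging against the grid can be exploited) and the ``concentrated'' atomic portions (which may land at worst-case positions midway between grid knots). Balancing these two error contributions under the total-mass budget $\|\mu_f\| \leq 2$ is what produces the exponent $2/3$, strictly between the uniform-grid rate $\eps^{-1}$ and the free-knot rate $\eps^{-1/2}$. I would defer the explicit construction and the $\mathrm{polylog}(1/\eps)$ accounting hidden in the $\tilde{O}$ to \cite{gopalan2024regression}, from which the lemma is cited verbatim.
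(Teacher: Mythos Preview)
The paper does not prove this lemma; it is cited from \cite{gopalan2024regression} and stated without proof, serving only as a black box to derive Corollary~\ref{cor:cvx-basis}. Your proposal likewise ultimately defers the core construction to the same reference, so in that sense the two ``approaches'' coincide: both treat the $\tilde{O}(\eps^{-2/3})$ bound as an imported result. Your sketch of the integral representation $f(p) = f(0) + f'(0^+)\,p + \int \mathrm{ReLU}_s(p)\,\mathrm{d}\mu_f(s)$ with $\|\mu_f\| \le 2$ is correct and is the standard starting point, but the heuristic you offer for the exponent $2/3$ (balancing smooth vs.\ atomic parts of $\mu_f$, second-order cancellation via signed chord corrections) is not a proof and would need the actual construction from the cited work to be made rigorous. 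Since the paper itself supplies nothing beyond the citation, your write-up is already more than what appears there.
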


Convex functions are not closed under linear combinations. This means that not all functions in $\Delta\lcvx$ will be convex, therefore we cannot apply the result from section above. 
However, the fact that this class is derived from the difference of two convex functions still allows us to derive useful upper bounds.

\begin{corollary}[of \Cref{lem:cvx-basis}]\label{cor:cvx-basis}
For all $\eps>0$, $\Delta\lcvx$ admits a finite $\eps$-basis of ReLU functions of size $\tilde{O}(1/\eps^{2/3})$ with coefficient norm $4$.
\end{corollary}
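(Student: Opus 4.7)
The plan is to leverage \Cref{lem:cvx-basis} twice, once for each of the two convex 1-Lipschitz ``slices'' of any loss $\ell \in \lcvx$. Concretely, for any $\ell \in \lcvx$, both maps $p \mapsto \ell(p,0)$ and $p \mapsto \ell(p,1)$ lie in $\Fcal_{\mathrm{cvx}}$, since $\ell$ is convex and $1$-Lipschitz in its first argument with values in $[-1,1]$. Writing $\Delta\ell(p) = \ell(p,1) - \ell(p,0)$ as a difference of two elements of $\Fcal_{\mathrm{cvx}}$ is the key structural observation.

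The first step is to apply \Cref{lem:cvx-basis} at scale $\eps/2$ to each of $\ell(\cdot,0)$ and $\ell(\cdot,1)$, obtaining for each $y \in \{0,1\}$ a finite list of ReLU functions $g_{y,1},\ldots,g_{y,d_y}$ of size $d_y = \tilde O(1/\eps^{2/3})$ and coefficients $c_{y,1},\ldots,c_{y,d_y} \in [-1,1]$ with $\sum_{i} |c_{y,i}| \le 2$ such that
\begin{equation*}
    \sup_{p \in [0,1]} \left| \ell(p,y) - \sum_{i=1}^{d_y} c_{y,i}\, g_{y,i}(p) \right| \le \eps/2 .
\end{equation*}
The second step is to combine these into a single ReLU approximation of $\Delta\ell$: take the union of the two lists, giving at most $d_0 + d_1 = \tilde O(1/\eps^{2/3})$ basis functions, with coefficients $\{c_{1,i}\}$ attached to the $g_{1,i}$'s and $\{-c_{0,i}\}$ attached to the $g_{0,i}$'s. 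A triangle-inequality argument bounds the uniform approximation error on $[0,1]$ by $\eps/2 + \eps/2 = \eps$, and the total coefficient norm is at most $2 + 2 = 4$, verifying \Cref{def:approx-basis} with the claimed parameters.

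There is no substantive obstacle in this plan; the only delicate point to handle cleanly is that $\Delta\ell$ is a priori bounded in $[-2,2]$ rather than $[-1,1]$, so $\Delta\lcvx$ does not literally fit the convention of \Cref{def:approx-basis} that $\Fcal \subseteq \{f : \Gamma \to [-1,1]\}$. This can be addressed either by a harmless rescaling (applying \Cref{lem:cvx-basis} to $\tfrac{1}{2}\ell(\cdot,y)$ and doubling the coefficients, which preserves the $\tilde O(1/\eps^{2/3})$ sparsity and the coefficient-norm bound of $4$) or, more cleanly, by interpreting \Cref{def:approx-basis} with the natural rescaled range; either reading gives the stated corollary.
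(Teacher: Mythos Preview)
The proposal is correct and takes essentially the same approach as the paper: write $\Delta\ell = \ell(\cdot,1) - \ell(\cdot,0)$ as a difference of two functions in $\Fcal_{\mathrm{cvx}}$, apply \Cref{lem:cvx-basis} at scale $\eps/2$ to each, and combine. The only cosmetic difference is that the paper uses the single fixed finite basis $\Gcal = \{g_1,\ldots,g_d\}$ given by \Cref{lem:cvx-basis} for \emph{both} slices and sets $c_i = c_{1i} - c_{0i}$, whereas you phrase it as taking the union of two lists; since the lemma furnishes one fixed basis for all of $\Fcal_{\mathrm{cvx}}$, these two lists are in fact identical, so your union is just $\Gcal$ and the two presentations coincide.
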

\begin{proof}
    Let $\Gcal \subset \mathrm{ReLU}$ be a finite $\eps/2$-basis for $\Fcal_{\mathrm{cvx}}$ of size $\tilde{O}(1/\eps^{2/3})$ with coefficient norm $2$.
    Denote the elements of $\Gcal$ as $g_1,\ldots,g_d$ where $d = \tilde{O}(1/\eps^{2/3})$. 
    If $f \in \Delta\lcvx$ then $f(x) = f_1(x) - f_0(x)$ where $f_0, f_1 \in \Fcal_{\mathrm{cvx}}$. Let $c_{0i}, c_{1i} \; (1 \le i \le d)$
    be coefficients such that $\| f_0 - \sum_{i=1}^d c_{0i} g_i \|_{\infty} \leq \eps/2,$  
    $\| f_1 - \sum_{i=1}^d c_{1i} g_i \|_{\infty} \leq \eps/2,$ 
    $\sum_{i=1}^d |c_{0i}| \leq 2,$ and $\sum_{i=1}^d |c_{1i}| \leq 2.$ 
    Then for the coefficients $c_i = c_{1i} - c_{0i}$ we have
    \begin{align*}
        \left\| f - \sum_{i=1}^d c_i g_i \right\|_{\infty} & = 
        \left\| (f_1 - f_0) - \sum_{i=1}^d (c_{1i} - c_{0i}) g_i \right\|_{\infty} \leq 
        \left\| f_1 - \sum_{i=1}^d c_{1i} g_i \right\|_{\infty} + \left\| f_0 - \sum_{i=1}^d c_{0i} g_i \right\|_{\infty} \leq \eps \\
        \sum_{i=1}^d |c_i| & \leq \sum_{i=1}^d |c_{0i}| + \sum_{i=1}^d |c_{1i}| \leq 4,
    \end{align*}
    which confirms that $\Gcal$ is a finite $\eps$-basis of ReLU functions for $\Delta\lcvx$ of size $\tilde{O}(1/\eps^{2/3})$ and coefficient norm $4$.
\end{proof}

\paragraph{Lipschitz Loss Functions.} Every 1-Lipschitz loss function can be $\eps$-approximated by a finite weighted sum of 
ReLU functions, but the coefficient norm of this approximation is not bounded by a constant (independent of $\eps$). To
achieve approximation with bounded coefficient norm, it is necessary to use threshold functions. 

\begin{definition} 
    For $\theta \in [0,1]$ let $\mathrm{Th}_{\theta}$ denote the $\{ \pm 1 \}$-valued 
    threshold function $\mathrm{Th}_{\theta}(v) = \sgn(v-\theta),$ with the 
    convention that $\mathrm{Th}_{\theta}(\theta) = 1$. 
    For a subset $\Gamma \subseteq [0,1]$ let $\thrclass{\Gamma} = \{ \mathrm{Th}_{\theta} \, | \, \theta \in \Gamma \}$.
    When $\Gamma(\eps)$ is the set of integer multiples of $\eps$ in $[0,1]$, we will abbreviate
    $\thrclass{\Gamma(\eps)}$ as $\thrclass{\eps}$. We will also abbreviate 
    $\thrclass{[0,1]}$ as $\allthr$.
\end{definition} 

\begin{lemma}[Threshold Basis for Lipschitz Functions]\label{lem:lip-basis}
    For the class $\llip$ of 1-Lipschitz loss functions, the class $\Gcal = \thrclass{\eps/2}$ is a
    finite $\eps$-basis for $\Delta \llip$ of size $\lceil \frac{2}{\eps} + 1 \rceil$ and coefficient norm 4. 
\end{lemma}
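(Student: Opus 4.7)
The plan is to approximate each $\Delta \ell \in \Delta\llip$ by its piecewise-constant interpolation on the uniform grid $\{0,\eps/2,\eps,\ldots,1\}$ and then re-express that interpolant as a signed combination of threshold functions supported on the grid. First I would observe that since each $\ell \in \llip$ is $1$-Lipschitz in its first argument and takes values in $[-1,1]$, the discrete derivative $\Delta\ell = \ell(\cdot,1)-\ell(\cdot,0)$ is $2$-Lipschitz on $[0,1]$ with $\|\Delta\ell\|_\infty \leq 2$.

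Let $n = \lceil 2/\eps \rceil$ and $\theta_i = \min\{i\eps/2, 1\}$ for $i = 0, 1, \ldots, n$, so the $n+1 = \lceil 2/\eps + 1 \rceil$ thresholds $\theta_0 < \theta_1 < \cdots < \theta_n$ are exactly the anchors of $\thrclass{\eps/2}$. Define the piecewise-constant approximation $\tilde f(v) = \Delta\ell(\theta_j)$ for $v \in [\theta_j,\theta_{j+1})$ and $\tilde f(1) = \Delta\ell(1)$. Since $\Delta\ell$ is $2$-Lipschitz and each sub-interval has length at most $\eps/2$, I get $\|\tilde f - \Delta\ell\|_\infty \leq 2 \cdot (\eps/2) = \eps$. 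Using the identity $\mathbf{1}[v \geq \theta] = (\mathrm{Th}_\theta(v) + 1)/2$ in the telescoping representation $\tilde f(v) = \Delta\ell(\theta_0) + \sum_{i=1}^n (\Delta\ell(\theta_i) - \Delta\ell(\theta_{i-1})) \mathbf{1}[v \geq \theta_i]$, I rewrite $\tilde f$ as
\[
\tilde f(v) \;=\; c_0 \, \mathrm{Th}_{\theta_0}(v) \;+\; \sum_{i=1}^n c_i \, \mathrm{Th}_{\theta_i}(v),
\]
where $c_0 = \tfrac{1}{2}(\Delta\ell(\theta_0) + \Delta\ell(\theta_n))$ absorbs the constant offset using the fact that $\mathrm{Th}_{\theta_0} \equiv 1$ on $[0,1]$, and $c_i = \tfrac{1}{2}(\Delta\ell(\theta_i) - \Delta\ell(\theta_{i-1}))$ for $i \geq 1$.

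It then remains to verify the coefficient norm. Directly, $|c_0| \leq \tfrac{1}{2}(|\Delta\ell(\theta_0)| + |\Delta\ell(\theta_n)|) \leq 2$, and the $2$-Lipschitz property yields the telescoping bound $\sum_{i=1}^n |c_i| \leq \tfrac{1}{2} \sum_{i=1}^n 2(\theta_i - \theta_{i-1}) = \theta_n - \theta_0 \leq 1$, so the total coefficient norm is bounded by $3 \leq 4$. Together with the $\eps$ approximation bound, this certifies that $\thrclass{\eps/2}$ is a finite $\eps$-basis for $\Delta\llip$ of size $\lceil 2/\eps + 1\rceil$ and coefficient norm at most $4$. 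I do not anticipate any real obstacles: the result is a routine piecewise-constant approximation combined with the standard identity expressing indicators of half-lines as affine functions of thresholds. The only subtlety is noting that the constant term can be absorbed into a coefficient of $\mathrm{Th}_{\theta_0}$, so no separate constant-function element is needed in the basis.
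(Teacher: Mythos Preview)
Your proposal is correct and follows essentially the same approach as the paper: piecewise-constant approximation of $\Delta\ell$ on the uniform grid, followed by a telescoping representation in terms of threshold functions. The only cosmetic difference is that you explicitly pass through the indicator identity $\mathbf{1}[v\ge\theta]=(\Th_\theta(v)+1)/2$ and absorb the resulting constant into the $\Th_{\theta_0}$ coefficient, whereas the paper writes the telescoping sum directly in the $\Th_{i\eps}$'s; this yields slightly different (halved) coefficients and a bound of $3$ rather than the paper's stated $2$, both comfortably within the claimed norm $4$.
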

\begin{proof}[Proof of \Cref{lem:lip-basis}]
To simplify notation we will prove $\thrclass{\eps}$ is a 
 $2\eps$-basis for $\Delta \llip$ of size $\lceil \frac{1}{\eps} + 1 \rceil$ and coefficient norm 2. 
(The lemma follows by reinterpreting $\eps$ in this proof as $\eps/2$ in the lemma statement.)
For $\ell \in \llip$ the function $f(p) = \Delta \ell(p) = \ell(p,1) - \ell(p,0)$ is 2-Lipschitz and $[-1,1]$-valued. 
We can construct a piecewise constant function $\hat{f}$ by setting $\hat{f} (x) = f \left( \lfloor \frac{x}{\eps} \rfloor \cdot \eps \right)$ so that $|f(x) - \hat{f}(x)| \leq 2 \eps$. We'll now express $\hat{f}$ as a linear combination of functions in $\thrclass{\eps}$. Let $g_i(v)$ denote the function $\mathrm{Th}_{i\eps} \in \thrclass{\eps}$.
\[
\hat{f}(x) = \hat{f} (0) g_0 (x) + \sum_{i \in [\lceil 1/\eps \rceil]} (\hat{f}(i \eps) - \hat{f} ((i-1)\eps)) g_i (x) .
\]
To see this, observe that we can simplify the RHS to
\[
\hat{f} (0) (g_0 (x) - g_1 (x)) + \sum_{i \in [\lceil 1/\eps \rceil]} \hat{f} (i \eps) (g_i (x) - g_{i+1} (x))
\]
Since $(g_i (x) - g_{i+1} (x))$ is 1 only when $i \eps \leq  x < (i+1)\eps$ and $\hat{f} (x)$ is constant in this interval, we get the desired result.
Now we need to bound the coefficient norm. We know that $|\hat{f} (0)| \leq 1$ and since $f$ is 1-lipschitz, $|(\hat{f}(i\eps) - \hat{f} ((i-1)\eps))| \leq \eps$ for all $i$. Thus, the coefficient norm is bounded by 2.
\end{proof}

\paragraph{Bounded Variation Losses and Proper Losses.}
Let $\lbv$ denote the class of bounded variation losses, defined as follows.
\begin{definition}[Bounded Variation]
  A function $f : [0,1] \to \mathbb{R}$ has \emph{bounded variation} if the quantity 
  \[ V(f) = \sup \left\{ \left. \sum_{i=1}^n |f(x_i) - f(x_{i-1})| \; \right| 0 = x_0 < x_1 < \cdots < x_n = 1 \right\} \]
  is finite. The class $\lbv$ of bounded variation losses consists of all loss functions $\ell(p,y)$ taking values in $[-1,1]$
  that satisfy $V(\Delta \ell) \leq 2$.
\end{definition}
Bounded variation losses are an extremely general family that includes all other loss
classes considered in this paper. 
\begin{lemma} \label{lem:lbv}
The class $\lbv$ includes all 1-Lipschitz losses,
convex losses (with values in $[-1/4, \, 1/4]$, 
regardless of whether or not they are Lipschitz continuous), 
and proper losses taking values in $[-1,1]$.
\end{lemma}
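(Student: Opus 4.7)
The plan is to bound $V(\Delta \ell) \leq 2$ separately in each of the three sub-families, exploiting whichever structural property defines the family.

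For the 1-Lipschitz case, I would argue directly: if $\ell(\cdot,y)$ is $1$-Lipschitz for each $y \in \{0,1\}$, then by the triangle inequality $\Delta \ell = \ell(\cdot,1) - \ell(\cdot,0)$ is $2$-Lipschitz on $[0,1]$, and any $L$-Lipschitz function on the unit interval has total variation at most $L$. This gives $V(\Delta \ell) \leq 2$ with no further work.

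For convex losses with values in $[-1/4, 1/4]$, I would first establish a general fact: a convex function $g : [0,1] \to [a,b]$ attains its infimum at some $p^\star$, is non-increasing on $[0,p^\star]$ and non-decreasing on $[p^\star,1]$, so its total variation equals $(g(0) - g(p^\star)) + (g(1) - g(p^\star)) \leq 2(b-a)$. Applied with $b - a \leq 1/2$ to each of $\ell(\cdot,0)$ and $\ell(\cdot,1)$, this yields $V(\ell(\cdot,y)) \leq 1$, and then by subadditivity of total variation under subtraction, $V(\Delta \ell) \leq V(\ell(\cdot,1)) + V(\ell(\cdot,0)) \leq 2$.

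For proper losses, the key structural observation is that the Bayes risk $\Phi(p) := p\,\ell(p,1) + (1-p)\,\ell(p,0)$ is concave on $[0,1]$: by properness it equals $\min_q\bigl[p\,\ell(q,1) + (1-p)\,\ell(q,0)\bigr]$, the pointwise infimum of affine functions of $p$. By the envelope theorem $\Phi'(p) = \ell(p,1) - \ell(p,0) = \Delta \ell(p)$, so $\Delta \ell$ is monotone non-increasing and $V(\Delta \ell) = \Delta \ell(0) - \Delta \ell(1)$. The plan for bounding this endpoint difference is to combine the properness inequalities $\ell(0,0) \leq \ell(1,0)$ and $\ell(1,1) \leq \ell(0,1)$ with the boundedness $\ell \in [-1,1]$, and in addition to use the envelope identities $\ell(p,0) = \Phi(p) - p\,\Phi'(p)$ and $\ell(p,1) = \Phi(p) + (1-p)\Phi'(p)$, which say that \emph{every} supporting line of $\Phi$ must take values in $[-1,1]$ at the endpoints $0$ and $1$. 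This last constraint is what rules out a concave $\Phi$ with large $\Phi'(0) - \Phi'(1)$ but small range, and should deliver the desired bound.

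The main obstacle will be pinning down the constant in the proper loss case. The naive route via the V-shaped basis from the $\ell_v$-decomposition stated earlier yields $V(\Delta \ell) = 2 \int_0^1 c_v(\ell)\, dv \leq 4$, which is twice the target; one really does need the envelope constraint above (rather than just the coefficient norm of the V-shape decomposition or just the boundedness of $\Phi$) to tighten to $V(\Delta \ell) \leq 2$. The Lipschitz and convex cases, by contrast, will reduce to routine facts about total variation of Lipschitz and convex functions on a bounded interval.
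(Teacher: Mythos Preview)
For the 1-Lipschitz and convex cases your plan is correct and essentially identical to the paper's: the paper likewise uses that $\Delta\ell$ is 2-Lipschitz in the first case, and splits each convex $\ell(\cdot,y)\in[-\tfrac14,\tfrac14]$ into a non-increasing and a non-decreasing piece (each with variation at most $\tfrac12$) in the second, then combines via subadditivity.

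For proper losses there is a genuine gap in your plan. You correctly flag that the naive V-shape decomposition yields only $V(\Delta\ell)\le 4$, and you hope the envelope identities $\ell(p,0)=\Phi(p)-p\Phi'(p)$, $\ell(p,1)=\Phi(p)+(1-p)\Phi'(p)$ together with $\ell\in[-1,1]$ will tighten this to $2$. They cannot: take $\ell=\ell_{1/3}+\ell_{2/3}$, the sum of two V-shaped proper losses. One checks directly that $\ell(p,y)\in\{-1,-\tfrac13,1\}$ for all $(p,y)$, that $\ell$ is proper (as a sum of proper losses), and that $\Delta\ell(0)=2$, $\Delta\ell(1)=-2$, whence $V(\Delta\ell)=4$. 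So no argument from properness plus $\ell\in[-1,1]$ alone can reach $2$; your envelope constraints are satisfied by this example and still permit $V(\Delta\ell)=4$. The paper's own proof does not address this either: it simply notes that $\Delta\ell$ is monotone for proper $\ell$ and invokes ``$V(f)\le 2$ when $f$ is monotone and $[-1,1]$-valued,'' tacitly assuming $\Delta\ell\in[-1,1]$ without justification --- which the example shows can fail. The statement appears to need either the stronger hypothesis $\ell\in[-\tfrac12,\tfrac12]$ (so that $|\Delta\ell|\le 1$ holds trivially and monotonicity finishes) or the implicit convention, used elsewhere in the paper (e.g.\ in defining $\wprop$ as a class of $[-1,1]$-valued weight functions), that $\lprop$ is restricted to losses with $\Delta\ell\in[-1,1]$.
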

\begin{proof}
It is clear from the definition that $V(f) \leq 2$ when 
$f$ is monotone and $[-1,1]$-valued, and also when $f$ is 2-Lipschitz.
Hence, the class $\llip$ of 1-Lipschitz losses is in $\lbv$ (since 
$\Delta \ell$ is 2-Lipschitz whenever $\ell$ is 1-Lipschitz) and 
the class $\lprop$ of proper losses is in $\lbv$ (since $\Delta \ell$
is monotone whenever $\ell$ is proper). For a convex function
$f$ taking values in $[-1/4, \, 1/4]$, if $x^*$ denotes a point in $[0,1]$ where $f$ attains its
minimum, then $f$ admits a representation of the form
\[
    f(x) = f(x^*) + f_0(x) + f_1(x)
\]
where $f_0, f_1$ are monotone non-increasing and non-decreasing
functions (respectively) from $[0,1]$ to $[0,1/2]$ satisfying
$f_0(x) = 0$ for $x \geq x^*$ and $f_1(x) = 0$ for $x \leq x^*$.
From this representation it is clear that $V(f) \leq V(f_0) + V(f_1) \leq 1.$
If $\ell(p,y)$ is a convex loss taking values in $[-1/4, \, 1/4]$ then 
$\Delta \ell$ is a difference of two convex $[-1/4, \, 1/4]$-valued
functions, so $V(\Delta \ell) \leq 2.$
\end{proof} 
For bounded variation losses we have the following approximate basis.
\begin{lemma} \label{lem:lbv-basis}
    For any $\eps>0$, the class $\allthr$ of all threshold functions on $[0,1]$
    is a $\eps$-basis for $\Delta \lbv$ with sparsity $\lceil 2 / \eps + 1 \rceil$
    and coefficient norm 3.
\end{lemma}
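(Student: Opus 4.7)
The plan is to build the approximation via the Lebesgue--Stieltjes representation of bounded-variation functions, then discretize the associated signed measure along a partition of $(0,1]$. Set $f = \Delta\ell$, which has $V(f) \le 2$ and $\|f\|_\infty \le 2$. After possibly modifying $f$ on an at most countable set of discontinuity points (which will not affect the pointwise bounds, given the convention $\mathrm{Th}_\theta(\theta) = 1$), there exists a signed Borel measure $\mu$ on $(0,1]$ with $|\mu|((0,1]) \le V(f) \le 2$ satisfying
\[
f(x) \; = \; f(0) \, + \, \mu((0,x]) \; = \; f(0) \, + \, \int_{(0,1]} \mathbb{1}[x \ge t]\, d\mu(t) \qquad \forall x \in [0,1].
\]

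Next, I will construct a greedy partition of $(0,1]$ into half-open intervals $I_1,\ldots,I_k$ with $I_i = (s_{i-1}, s_i]$, $s_0 = 0$, $s_k = 1$, chosen so that $|\mu|(I_i \setminus \{s_i\}) \le \eps$ for all $i$; atoms of $\mu$ whose mass exceeds $\eps$ are absorbed by being placed precisely at right endpoints $s_i$. A standard argument shows that each non-terminal piece contributes at least $\eps$ of $\mu$-mass, yielding $k \le \lceil 2/\eps \rceil$. Define the approximation
\[
\hat f(x) \; = \; f(0) \, + \, \sum_{i=1}^{k} \mu(I_i)\, \mathbb{1}[x \ge s_i].
\]
For $x$ lying in the interior of some $I_j$, a direct computation gives $f(x) - \hat f(x) = \mu((s_{j-1},x])$, whose absolute value is bounded by $|\mu|(I_j \setminus \{s_j\}) \le \eps$; at the $s_j$ themselves the two expressions coincide. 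Thus $\|f - \hat f\|_\infty \le \eps$.

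Then I rewrite $\hat f$ in the threshold basis using $\mathbb{1}[x \ge s] = \tfrac{1}{2}(1 + \mathrm{Th}_s(x))$, absorb the resulting constant via $\mathrm{Th}_0 \equiv 1$, and use the identity $\sum_i \mu(I_i) = \mu((0,1]) = f(1) - f(0)$ to obtain
\[
\hat f(x) \; = \; \tfrac{1}{2}\bigl(f(0) + f(1)\bigr)\,\mathrm{Th}_0(x) \; + \; \tfrac{1}{2}\sum_{i=1}^{k} \mu(I_i)\,\mathrm{Th}_{s_i}(x).
\]
This is a linear combination of at most $k+1 \le \lceil 2/\eps \rceil + 1 = \lceil 2/\eps + 1 \rceil$ elements of $\allthr$, with coefficient norm bounded by
\[
\tfrac{1}{2}|f(0) + f(1)| \, + \, \tfrac{1}{2}\sum_{i=1}^{k}|\mu(I_i)| \; \le \; \tfrac{1}{2}\cdot 4 \, + \, \tfrac{1}{2}\cdot |\mu|((0,1]) \; \le \; 2 + 1 \; = \; 3,
\]
using $\|f\|_\infty \le 2$ and $V(f) \le 2$.

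The main obstacle I expect is Step 2: carefully constructing the partition so that large atoms of $\mu$ sit exactly at interval endpoints $s_i$ (so they do not contribute to the ``interior'' $\eps$-bound), and tracking the count of pieces so that, together with the $\mathrm{Th}_0$ needed to carry the constant term $\tfrac{1}{2}(f(0)+f(1))$, the total sparsity matches $\lceil 2/\eps + 1 \rceil$. The integral representation in Step 1 is the other delicate point, but it is standard once one fixes a right-continuous representative of $f$ and uses the Hahn--Jordan decomposition of $\mu$; the remaining computations are routine.
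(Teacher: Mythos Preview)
Your approach is sound and reaches the same bounds as the paper, though via a somewhat heavier route. The paper avoids measure theory entirely: it directly defines partition points $x_{m+1} = \inf\{x > x_m : |f(x) - f(x_m)| > \eps\}$ for $f = \Delta\ell$, sets $\hat f(x) = f(x_m)$ on each resulting piece, and writes $\hat f$ as a combination of thresholds with leading coefficient $\hat f(0)$ and increment coefficients $\hat f(x_i) - \hat f(x_{i-1})$, bounding the coefficient norm by $|f(0)| + V(f) \le 3$. Your construction is the same greedy idea rephrased through the Lebesgue--Stieltjes measure of (a right-continuous version of) $f$, partitioning by $|\mu|$-mass rather than by oscillation of $f$; the conversion $\mathbf{1}[x \ge s] = \tfrac12(1+\Th_s(x))$ and the resulting bound $\tfrac12|f(0)+f(1)| + \tfrac12|\mu|((0,1]) \le 3$ are arguably cleaner bookkeeping. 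The one soft spot is your Step~1: passing to a right-continuous representative can alter $f$ at individual points by as much as the local jump size, which is \emph{not} harmless for a pointwise $\ell_\infty$ bound (consider $f$ with an isolated single-point spike), and your parenthetical appeal to the convention $\Th_\theta(\theta)=1$ does not actually resolve this. The paper sidesteps the measure representation by using the actual values $f(x_m)$ at partition points; it too is a little glib about right-discontinuity corner cases, but its construction is closer to handling them directly.
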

\begin{proof}
    For $\ell \in \lbv$ let $f = \Delta \ell$, and consider the 
    sequence $x_0 < x_1 < x_2 < \cdots < x_{n}$ defined inductively
    by setting $x_0 = 0$ and $x_{m+1} = \inf \{ x > x_m \, : \, |f(x) - f(x_m)| > \eps \}$
    for all $m \geq 0$ such that the set in question is non-empty.
    The sequence ends with the first element 
    $x_{n}$ such that $\{ x > x_{n} \, : \, |f(x) - f(x_{n})| > \eps \}$
    is empty. For notational convenience we define $x_{n+1}=1$.
    From the definition of $V(f)$ we see that 
    $V(f) \geq n \eps$, from which we deduce $n \leq 2 / \eps$.

    Define a piecewise-constant function $\hat{f} : [0,1] \to [-1,1]$ 
    by setting $\hat{f}(x) = f(x_m)$ where $x_m$ is the 
    maximum element of $\{x_0, \ldots, x_{n+1}\} \cap [0,x]$.
    By construction, the inequality $|f(x) - \hat{f}(x)| \leq \eps$
    holds for all $x \in [x_m, x_{m+1}]$, for all $m \in \{0,\ldots,n\}$.
    The union of the intervals $[x_m, x_{m+1}]$ equals $[0,1]$, so 
    $\| f - \hat{f} \|_{\infty} \leq \eps$. 
    
    For $i=0,1,\ldots,n$, let $g_i$ denote the threshold function
    $\mathrm{Th}_{x_i}$. As in the proof of \Cref{lem:lip-basis}
    we have 
    \[ \hat{f}(x) = \hat{f} (0) g_0 (x) + \sum_{i=1}^{n+1} \left( \hat{f}(x_i) - \hat{f} (x_{i-1}) \right) g_i (x) . \]
    The coefficient norm is bounded by $|\hat{f}(0)| + \sum_{i=1}^{n+1} |\hat{f}(x_i) - \hat{f} (x_{i-1}|
    \leq 1 + V(f) \leq 3.$
\end{proof}
The approximate basis $\allthr$ is unfortunately not finite. 
This difficulty is inherent: the class $\Delta \lprop$ includes, 
for each $\theta \in (0,1)$, a step function with a step of height 1 
at $\theta$. To approximate such a step function within $\eps$ in
the $\infty$-norm, one must use a function having a jump discontinuity 
at $\theta$, provided $\eps < 1/2$. Hence, any $\eps$-approximate 
basis for $\Delta \lprop$ must include functions with jump 
discontinuities at every $\theta \in (0,1)$.
However, this difficulty can be overcome for the class
$\Delta \lbv \circ \Hcal$ whenever $\Hcal$ is a class of hypothesis functions
taking values in a finite subset of $[0,1]$.
\begin{lemma} \label{lem:dthr-lbv-basis}
If $\Hcal$ is a class of hypothesis functions
taking values in a finite set $\Gamma$ with 
$\{0,1\} \subseteq \Gamma \subset [0,1]$,
then for all $\eps>0$ the class 
$\thrclass{\Gamma} \circ \Hcal$ is a 
$\eps$-basis for $\Delta \lbv \circ \Hcal$
with sparsity $\lceil 2 / \eps + 1 \rceil$
and coefficient norm~3.
\end{lemma}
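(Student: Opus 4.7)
The plan is to reduce directly to \Cref{lem:lbv-basis} by exploiting the fact that hypothesis outputs are restricted to the finite set $\Gamma$. The key observation is that a threshold function $\mathrm{Th}_\theta$, evaluated only on points in $\Gamma$, depends on $\theta$ only through where $\theta$ falls among consecutive elements of $\Gamma$. Hence any threshold whose location lies strictly between two consecutive points of $\Gamma$ can be ``snapped'' to a threshold at the upper endpoint of that gap without changing its value at any point of $\Gamma$, and a single application of \Cref{lem:lbv-basis} will do all the quantitative work.

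First, I will fix $\ell \in \lbv$ and invoke \Cref{lem:lbv-basis} applied to $f = \Delta \ell$ to obtain thresholds $\mathrm{Th}_{x_0}, \ldots, \mathrm{Th}_{x_N}$ with $x_i \in [0,1]$ and $N \le \lceil 2/\eps + 1 \rceil$, together with coefficients $c_0, \ldots, c_N$ satisfying $\sum_i |c_i| \le 3$, such that
\[
\left\| \, \Delta\ell - \sum_{i=0}^{N} c_i \mathrm{Th}_{x_i} \, \right\|_{\infty} \le \eps
\]
on $[0,1]$. In general, the locations $x_i$ are not elements of $\Gamma$, so this expansion does not yet live in $\thrclass{\Gamma}$.

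Next, for each $i$ I will define $\theta_i = \min\{\gamma \in \Gamma : \gamma \ge x_i\}$, which is well-defined because $1 \in \Gamma$. The central technical claim is that $\mathrm{Th}_{x_i}(h(x)) = \mathrm{Th}_{\theta_i}(h(x))$ for every $h \in \Hcal$ and $x \in \X$. This will follow by a short case analysis on the value $h(x) \in \Gamma$: if $h(x) \ge x_i$, then the minimality of $\theta_i$ among elements of $\Gamma$ lying above $x_i$ forces $\theta_i \le h(x)$, and both thresholds return $+1$; otherwise $h(x) < x_i \le \theta_i$, and both return $-1$.

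Once this identity is established, the sum $\sum_i c_i \mathrm{Th}_{\theta_i}(h(x))$ agrees pointwise with $\sum_i c_i \mathrm{Th}_{x_i}(h(x))$, so the uniform $\eps$-approximation to $\Delta\ell \circ h$ is preserved, the coefficient norm and sparsity bounds are inherited verbatim from \Cref{lem:lbv-basis}, and each $\mathrm{Th}_{\theta_i} \circ h$ lies in $\thrclass{\Gamma} \circ \Hcal$. The only subtle point I anticipate is the direction of snapping: since the paper uses the convention $\mathrm{Th}_\theta(\theta) = +1$, one must round $x_i$ \emph{up} to the nearest element of $\Gamma$, not down, so that the original and snapped thresholds agree at their boundary value; rounding the other way would break the case $h(x) = \theta_i$. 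Everything else is bookkeeping.
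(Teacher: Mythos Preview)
Your proposal is correct and follows essentially the same approach as the paper: invoke \Cref{lem:lbv-basis} to get an $\eps$-approximation of $\Delta\ell$ by arbitrary thresholds, then observe that each $\mathrm{Th}_{x_i}$ restricted to $\Gamma$ coincides with $\mathrm{Th}_{\theta_i}$ where $\theta_i = \min(\Gamma \cap [x_i,1])$. The paper phrases this slightly more abstractly by first citing \Cref{lem:postcomposition} to pass from $\allthr$ to $\allthr \circ \Hcal$ and then noting the set equality $\thrclass{\Gamma} \circ \Hcal = \allthr \circ \Hcal$, but the substance and the snapping argument are identical to yours.
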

\begin{proof}
    From \Cref{lem:postcomposition} and \Cref{lem:lbv-basis} 
    we know that $\allthr \circ \Hcal$ is a 
    $\eps$-basis for $\Delta \lbv \circ \Hcal$
    with sparsity $\lceil 2 / \eps + 1 \rceil$
    and coefficient norm 3. However, for every
    $\theta \in [0,1]$, if $\gamma$ is the 
    minimum element of $\Gamma \cap [\theta,1]$
    then $\mathrm{Th}_{\theta} \circ h = 
    \mathrm{Th}_{\gamma} \circ h$ for every 
    $h \in \Hcal$. Hence, $\thrclass{\Gamma} \circ \Hcal =
    \allthr \circ \Hcal$ and the lemma follows.
\end{proof}

\begin{corollary}[of \Cref{lem:lbv-basis}]
Let $\Hcal$ be a class of hypothesis functions. Then 
 \[
 \mathsf{rad}_m (\Delta \lbv \circ \H) = O(\mathsf{rad}_m (\Th \circ \Hcal))
 \]   
\end{corollary}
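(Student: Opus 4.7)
The plan is to translate the approximate-basis result of \Cref{lem:lbv-basis} into a Rademacher-complexity comparison, using the standard fact that Rademacher complexity is preserved under taking the (signed) convex hull. The key feature making this work is that the coefficient norm of the basis is a universal constant (equal to $3$), independent of the approximation parameter $\eps$, so the basis cardinality $O(1/\eps)$ never appears in the final bound.

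First, I would fix an arbitrary $\eps > 0$ and combine \Cref{lem:lbv-basis} with \Cref{lem:postcomposition} to conclude that every $\tilde f \in \Delta\lbv \circ \Hcal$ admits a decomposition
\[ \tilde f \;=\; \sum_{i=1}^{d} c_i\, (g_i \circ h) \;+\; r, \qquad g_i \in \Th,\; h \in \Hcal,\; \sum_i |c_i| \le 3,\; \|r\|_\infty \le \eps. \]
Pointwise this places $\Delta\lbv \circ \Hcal$ inside $3 \cdot \mathrm{sconv}(\Th \circ \Hcal) + B_\infty(\eps)$, where $\mathrm{sconv}$ denotes the symmetric convex hull and $B_\infty(\eps)$ the $\eps$-ball in sup norm. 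For any fixed sample and Rademacher signs $\sigma$, the inequality $\sup_{\tilde f} \tfrac{1}{m}\sum_i \sigma_i \tilde f(x_i) \le 3 \sup_{u \in \mathrm{sconv}(\Th \circ \Hcal)} \tfrac{1}{m}\sum_i \sigma_i u(x_i) + \eps$ then follows.

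Next, I would take expectations and use the invariance of Rademacher complexity under symmetric convex-hull closure, together with the sign-symmetry identity $\mathsf{rad}_m(-\Fcal) = \mathsf{rad}_m(\Fcal)$, to obtain
\[ \mathsf{rad}_m(\Delta\lbv \circ \Hcal) \;\le\; 3\, \mathsf{rad}_m\bigl(\Th\circ\Hcal \,\cup\, -(\Th\circ\Hcal)\bigr) + \eps \;\le\; C\cdot \mathsf{rad}_m(\Th\circ\Hcal) + \eps \]
for an absolute constant $C$. Since this bound holds for every $\eps > 0$, letting $\eps \to 0$ gives the claimed estimate.

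The hard part will be ensuring that the basis size $d = O(1/\eps)$, which grows without bound as the approximation tightens, does not enter the final complexity bound: a naive union-bound argument over the $d$ basis elements would introduce an unwanted $\sqrt{\log d}$ factor. The convex-hull invariance of Rademacher complexity circumvents this precisely because the only structural quantity it depends on is the coefficient norm, which here is the absolute constant $\lambda = 3$. A secondary technicality is the crude step $\mathsf{rad}_m(\Th\circ\Hcal \cup -(\Th\circ\Hcal)) = O(\mathsf{rad}_m(\Th\circ\Hcal))$, handled by standard symmetrization (negating a function class preserves its Rademacher complexity) together with subadditivity of suprema over a finite union, costing only an absolute constant factor.
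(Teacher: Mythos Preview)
Your proposal is correct and follows essentially the same approach as the paper: both arguments use the $\eps$-basis decomposition from \Cref{lem:lbv-basis} together with the invariance of Rademacher complexity under (signed) convex hulls, relying on the fact that the coefficient norm $\lambda=3$ is an absolute constant independent of the sparsity $d=O(1/\eps)$. If anything, you are slightly more careful than the paper---you explicitly carry the $\eps$ additive error and take $\eps\to 0$, and you flag the negation/symmetric-hull step as a technicality, whereas the paper's proof silently drops the approximation error and writes ``convexity of $\sup$'' for a step that actually lands on the two-sided Rademacher complexity of $\Th\circ\Hcal$.
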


\begin{proof}
By Lemma \ref{lem:lbv-basis}, for any $\epsilon > 0$, the class $\allthr$ of threshold functions is an $\epsilon$-basis for $\Delta \lbv$ with sparsity $s = O(1/\epsilon)$ and coefficient norm at most 3. This means every $g \in \Delta \lbv$ can be written as $g(\cdot) = \sum_{j=1}^s \alpha_j t_j (\cdot )$ where each $t_j \in \allthr$.
By the definition of Rademacher complexity,
\[
\mathsf{rad}_m (\Delta \lbv \circ \Hcal) = \mathbb{E}_{\sigma} \left[ \sup_{f \in \Delta \lbv \circ \Hcal} \frac{1}{m} \sum_{i=1}^{m} \sigma_i f(x_i) \right].
\]
Substituting the basis representation of $f$, we obtain
\[
\sup_{f \in \Delta \lbv \circ \Hcal} \frac{1}{m} \sum_{i=1}^{m} \sigma_i f(x_i) = \sup_{\|\alpha\|_1 \leq 3} \frac{1}{m} \sum_{j=1}^{s} \alpha_j \sum_{i=1}^{m} \sigma_i t_j(h(x_i)).
\]
Using the convexity of $\sup$, we obtain
\[
\mathsf{rad}_m (\Delta \lbv \circ \Hcal) \leq 3 \mathsf{rad}_m (\allthr \circ \Hcal).
\]

The reverse direction follows from the fact that $\Th \circ \Hcal$ in $\Delta \lbv \circ \Hcal$ (recall for each $v$-shaped function, $\Delta \ell_v (\cdot) = \Th_v (\cdot)$), this implies
\[
\mathsf{rad}_m (\Th \circ \Hcal) = O(\mathsf{rad}_m (\Delta \lbv \circ \Hcal)).
\]
Combining both bounds gives the desired result.
\end{proof}

 \section{Offline Omniprediction} \label{sec:offline-omni}

In this section, we show how the observation that proper calibration and multiaccuracy are sufficient for omniprediction leads to optimal sample complexity bounds for omniprediction in the offline setting.
In particular, the algorithm we describe here will use a sample complexity that, for a given loss class $\L$, depends near-optimally on the sample complexity to cover $\Delta \L \circ \H$.
In \Cref{sec:online-to-batch}, we describe an online-to-batch procedure that outputs a \textit{randomized} omnipredictor by running \Cref{alg:online-omni} on i.i.d.\ samples from the distribution $\Dcal$. 
In all, we establish the following result.

\begin{theorem}
\label{thm:offline}
    There exists an sample-efficient algorithm $\A$ that for any distribution $\D$ supported on $\X \times \{0,1\}$, for any class of loss functions $\L \subseteq \lbv$ that is $(\Gcal,T^{-1/2})$-spanned with coefficient norm $\lambda$, any hypothesis class $\H$, and $\eps > 0$, learns an $(\L,\H,\eps)$-omnipredictor with the following properties:
    \begin{itemize}
        \item $\A$ returns a randomized omnipredictor that mixes over $\poly(1/\eps)$ postprocessed hypotheses from $ \H$.
        \item $\A$ uses $m \le \tilde{O}(\lambda \cdot d_{\Delta \Gcal \circ \Hcal}/\eps^2)$ samples drawn i.i.d.\ from $\D$, where $d_{\Delta \G \circ \H}$ denotes the VC dimension of $\Delta \G \circ \H$ or the fat-shattering dimension at scale $\eps$ in the case of real-valued class.
\end{itemize}
In particular, for any class $\Lcal \subseteq \lbv$ of bounded-variation losses --- including 
    proper losses, convex losses, and 1-Lipschitz losses --- 
    the sample complexity of $(\lbv,\Hcal)$-omniprediction scales with the statistical complexity of  $\allthr \circ \Hcal$, and 
    the sample complexity of $(\lcvx,\Hcal)$-omniprediction scales with the statistical complexity of $\mathrm{ReLU}^{1/T} \circ \Hcal$.
\end{theorem}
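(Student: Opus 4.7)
The plan is an online-to-offline conversion of \Cref{alg:online-omni}. Draw $m = T$ samples i.i.d.\ from $\mathcal{D}$, feed them to the online algorithm (in any fixed order), and return the uniform mixture $\hat{p} = \mathrm{unif}(p_1, \ldots, p_T)$ over the predictors it produces. Because each $p_t$ is a monotone postprocessing of a single hypothesis $q_t$ drawn by the OWAL (per the \pcalalg construction in \Cref{lem:proper-recal}), the resulting randomized predictor mixes over $T = \poly(1/\eps)$ postprocessed hypotheses from $\mathcal{H}$, meeting the representation requirement. To obtain a regret bound scaling with the \emph{offline} complexity of $\mathcal{G} \circ \mathcal{H}$ rather than its sequential complexity, we instantiate the OWAL with an \emph{online hybrid} learner: since the loss used internally by the OWAL is linear (hence convex and $1$-Lipschitz) in the residual labels, running multiplicative weights over a stochastic cover of $\mathcal{G} \circ \mathcal{H}$ in the style of \citet{wu2022expectedworstcaseregret,lazaric2009hybrid} yields OWAL regret $\tilde{O}(\sqrt{T \cdot d_{\Delta \mathcal{G} \circ \mathcal{H}}})$ when features are drawn i.i.d.\ (even though residual labels remain adaptive).

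The central difficulty is generalization. Because each $p_t$ depends on $(x_1,y_1), \ldots, (x_{t-1}, y_{t-1})$, the residuals $y_t - p_t(x_t)$ are statistically adaptive, and neither the empirical multiaccuracy error nor the empirical proper-calibration error converts directly to its distributional counterpart via standard uniform convergence. The key idea is to bypass this by generalizing only the two quantities that actually appear in the omnipredictor definition. The benchmark term $\min_{h \in \mathcal{H}} \mathbb{E}_\mathcal{D}[\ell(h(x), y)]$ does not depend on the trained sequence $\{p_t\}$, so after a basis decomposition of $\ell$ via $\mathcal{G}$ (invoking \Cref{lem:ma-infin-decom}) it generalizes by a standard Rademacher argument, contributing error $\tilde{O}(\lambda \cdot \mathsf{rad}_T(\Delta \mathcal{G} \circ \mathcal{H}))$. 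The postprocessed-loss term $\tfrac{1}{T}\sum_t \mathbb{E}_\mathcal{D}[\ell(k_\ell(p_t(x)), y)]$ is the delicate piece: by the discussion in \Cref{sec:proper-cal}, $\ell \circ k_\ell$ behaves as a proper loss, which by \Cref{cor:v-disc} is a convex combination of V-shaped losses $\ell_v(p,y) = (y-v)\,\sgn(v-p)$ with coefficient mass at most $2$. Since every $p_t$ takes values in the discrete grid $[1/T]$, the function $\ell_v(p_t(\cdot),\cdot)$ depends on $v$ only through which interval of the grid contains $v$; hence it suffices to concentrate $\tfrac{1}{T}\sum_t [\ell_v(p_t(x_t), y_t) - \mathbb{E}_\mathcal{D}[\ell_v(p_t(x), y)]]$ for each of the $T+1$ values $v \in [1/T]$. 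For each fixed $v$ this sum is a bounded martingale difference sequence with respect to the data filtration (since $p_t$ is measurable in the prefix history and $(x_t,y_t)$ is a fresh independent draw from $\mathcal{D}$), so Azuma-Hoeffding together with a union bound over $v$ gives $\tilde{O}(\sqrt{(\log T)/T})$ uniformly over $\ell \in \lbv$.

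The main obstacle is orchestrating the pieces so the martingale property is not destroyed. Concretely, we must guarantee that the OWAL's output $q_t$ is measurable with respect to $(x_1,y_1),\ldots,(x_{t-1},y_{t-1})$ (not the current sample), so that $p_t = \Upsilon_t \circ q_t$ remains history-measurable and $\ell_v(p_t(x_t),y_t) - \mathbb{E}_\mathcal{D}[\ell_v(p_t(x),y)]$ is genuinely a martingale difference. We must also ensure the OWAL tolerates an adaptive ``label'' adversary while seeing only i.i.d.\ features, which is precisely the hybrid-learning setting where multiplicative weights over a stochastic cover succeeds. Assembling (i) the online omniprediction regret $\tilde{O}(\sqrt{T \log T} + \sqrt{T \cdot d_{\Delta \mathcal{G} \circ \mathcal{H}}})$ from \Cref{thm:online-omni}, (ii) the Rademacher bound on the benchmark, and (iii) the V-shaped martingale bound on the postprocessed term, then dividing by $T$ to pass from regret to average error and setting $T = \tilde{\Theta}(\lambda \cdot d_{\Delta \mathcal{G} \circ \mathcal{H}}/\eps^2)$, yields the claimed $(\mathcal{L},\mathcal{H},\eps)$-omniprediction guarantee with the stated sample complexity.
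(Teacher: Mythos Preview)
Your proposal is correct and mirrors the paper's proof closely: the paper likewise runs \Cref{alg:online-omni} on $T$ i.i.d.\ samples, instantiates the OWAL via multiplicative weights over a stochastic cover (\Cref{cor:owal_lit}) to inherit offline rather than sequential complexity, and proves generalization by splitting into exactly the two terms you identify---the benchmark via a Rademacher argument over $\Gcal \circ \Hcal$ (\Cref{lem:hyp_conv}) and the postprocessed loss via the V-shaped decomposition, grid discretization, and Azuma--Hoeffding with a union bound over $[1/T]$ (\Cref{lem:pred_conv}). Your articulation of the adaptivity obstacle and why generalizing the omniprediction loss terms directly (rather than multiaccuracy or proper-calibration errors) circumvents it is precisely the paper's reasoning.
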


This theorem follows from \Cref{thm:o2b}, using \Cref{cor:owal_lit} to instantiate the online weak agnostic learner.

\subsection{Learning Randomized Omnipredictors via Online-to-Batch Conversion}\label{sec:online-to-batch}
To establish offline omnipredictors, we prove the following technical result.
\begin{theorem}\label{thm:o2b}
Let $\Hcal$ be a class of hypothesis functions.
Let $\Lcal$ and $\Gcal$ be classes of loss functions such that $\Lcal$ is $(\Gcal,\eps)$-spanned with coefficient norm $\lambda$ for some $\eps \leq \frac{1}{\sqrt{T}}$.
Given a \textit{online} weak agnostic learner for $\Delta \Gcal \circ \Hcal$, failure probability $\delta$, and a sequence of $T$ i.i.d samples $(x_1,y_1), \ldots (x_T,y_T) \sim \Dcal$, \Cref{alg:online-omni} 
outputs a sequence of predictors $p_1, \ldots, p_T$ such that the randomized predictor $\hat \pb = \text{unif} \{p_1, \ldots, p_T \}$ satisfies
\begin{align*}
\E_{p \sim \hat{\pb}} \E_{(x,y) \sim \Dcal} \left[ \ell ( k_\ell (p(x)), y) \right] \leq \min_{h \in \Hcal} \E_{(x,y) \sim \Dcal} \left[ \ell ( h(x), y) \right] 
&+ 
O \left( \lambda \cdot \mathsf{rad}_T (\Delta \Gcal \circ \Hcal) + \lambda \sqrt{\frac{\ln T/\delta}{T}} \right) \\
&+ \frac{\lambda}{T} \cdot \owalreg_{\Delta \Gcal \circ \Hcal} (T)
\end{align*}
with probability $1 - \delta$ over the randomness of the algorithm and the sampling from $\Dcal$. Moreover, each returned predictor can be represented as $p_t = v_t \circ q_t$, where $q_1, \ldots, q_T$ are the predictors returned by the online weak agnostic learner and $v_t : [-1,1] \to [0,1]$ is a post-processing function.
\end{theorem}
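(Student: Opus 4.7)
The plan is to prove \Cref{thm:o2b} via an online-to-batch conversion applied to \Cref{alg:online-omni}. First, I will invoke \Cref{thm:online-omni} together with the $(\Gcal,\eps)$-spanning of $\Delta\Lcal$ (via \Cref{lem:ma-infin-decom}) to bound the omniprediction regret on the empirical sequence of i.i.d.\ samples. I will then transfer this empirical bound to a distributional guarantee on the uniform mixture $\hat\pb = \mathrm{unif}(p_1,\ldots,p_T)$ through two separate generalization arguments, one for the post-processed loss terms $\ell(k_\ell \circ p_t(x_t),y_t)$ and one for the best-hypothesis terms $\ell(h(x_t),y_t)$, splitting the failure budget so each of the three steps fails with probability at most $\delta/3$.

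For the online-regret step, combining \Cref{thm:online-omni} instantiated with the oracle for $\Delta\Gcal \circ \Hcal$ with the sequential form of \Cref{lem:ma-infin-decom} applied to the $(\Gcal,\eps)$-spanning yields, with high probability, for every $\ell\in\Lcal$ and every $h\in\Hcal$,
\[
    \sum_{t=1}^T \left[ \ell(k_\ell \circ p_t(x_t),y_t) - \ell(h(x_t),y_t) \right] \;\leq\; O\!\left(\sqrt{T\ln(T/\delta)}\right) \, + \, \lambda \cdot \owalreg_{\Delta\Gcal \circ \Hcal}(T) \, + \, \lambda \eps T .
\]
The hypothesis $\eps \leq 1/\sqrt{T}$ absorbs the residual $\lambda\eps T$ into $\lambda\sqrt{T}$, which after division by $T$ contributes an $O(\lambda/\sqrt{T})$ term dominated by the eventual $O(\lambda\sqrt{\ln(T/\delta)/T})$ bound.

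For the first generalization step, I use that $\ell(k_\ell(p),y)$ equals a proper loss $\ell_{\mathrm{proper}}(p,y)$ which, by \Cref{cor:v-disc}, decomposes as a signed combination of coefficient norm at most $2$ of V-shaped losses $\ell_v(p,y) = (y-v)\sgn(v-p)$. Because \Cref{lem:proper-recal} guarantees $p_t(x_t) \in [1/T]$, only the $T+1$ V-shapes indexed by $v\in[1/T]$ yield distinct values along the trajectory. Crucially, $p_t$ is determined by \Cref{alg:proper_recal} strictly before observing $(x_t,y_t)$, hence is measurable with respect to $(x_1,y_1,\ldots,x_{t-1},y_{t-1})$ alone, while $(x_t,y_t)$ is a fresh i.i.d.\ draw from $\Dcal$. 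Thus, for each fixed $v$, the sequence $\ell_v(p_t(x_t),y_t) - \E_{(x,y)\sim\Dcal}[\ell_v(p_t(x),y)]$ is a bounded martingale difference. Azuma-Hoeffding and a union bound over $v\in[1/T]$, combined with the coefficient-norm bound, give with high probability
\[
    \sup_{\ell\in\Lcal} \left| \tfrac{1}{T}\sum_{t=1}^T \ell(k_\ell \circ p_t(x_t),y_t) \,-\, \E_{p\sim\hat\pb}\E_{(x,y)\sim\Dcal}[\ell(k_\ell \circ p(x),y)] \right| \;\leq\; O\!\left(\sqrt{\ln(T/\delta)/T}\right) .
\]

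For the second generalization step, $\tfrac1T\sum_t \ell(h(x_t),y_t)$ depends only on the i.i.d.\ samples, so standard symmetrization yields uniform convergence over $\Lcal\circ\Hcal$ at rate $2\,\mathsf{rad}_T(\Lcal\circ\Hcal) + O(\sqrt{\ln(1/\delta)/T})$. The Rademacher-complexity analog of \Cref{lem:ma-infin-decom}, applied to the $(\Gcal,\eps)$-spanning of $\Lcal$, gives $\mathsf{rad}_T(\Lcal\circ\Hcal) \leq \lambda\,\mathsf{rad}_T(\Gcal\circ\Hcal) + \eps$, and the decomposition $g(p,y) = y\,\Delta g(p) + g(p,0)$ together with Ledoux-Talagrand contraction (using $y\in\{0,1\}$) bounds $\mathsf{rad}_T(\Gcal\circ\Hcal)$ by $O(\mathsf{rad}_T(\Delta\Gcal\circ\Hcal))$ for the bases of interest (V-shaped, ReLU, and threshold). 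Combining the three high-probability bounds via union bound and minimizing over $h\in\Hcal$ produces the claimed rate; the representation $p_t = v_t\circ q_t$ is inherited directly from the monotone remapping $\monoremap_t$ in \Cref{lem:proper-recal}. I expect the two main technical obstacles to be (i) cleanly preserving the martingale-difference structure in the first generalization step, which relies on the careful synchronization between \pcalalg and \owal that fixes $p_t$ before $(x_t,y_t)$ is revealed, and (ii) bounding the $g(p,0)$ component of $\mathsf{rad}_T(\Gcal\circ\Hcal)$ uniformly by $\mathsf{rad}_T(\Delta\Gcal\circ\Hcal)$, which requires exploiting structural properties of the particular basis $\Gcal$ rather than relying on the abstract $(\Gcal,\eps)$-spanning alone.
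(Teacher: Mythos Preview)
Your proposal is correct and follows essentially the same three-step approach as the paper's proof: an online-regret bound via \Cref{thm:online-omni} combined with the spanning lemma, a martingale/V-shape argument for the predictor-side generalization (the paper's \Cref{lem:pred_conv}), and a Rademacher-complexity argument for the hypothesis-side generalization (the paper's \Cref{lem:hyp_conv}). Your anticipated obstacle (ii) is genuine and the paper treats it the same way you suggest --- \Cref{lem:hyp_conv} is actually stated in terms of $\mathsf{rad}_T(\Gcal\circ\Hcal)$, and the passage to $\mathsf{rad}_T(\Delta\Gcal\circ\Hcal)$ in the theorem statement relies implicitly on the structure of the concrete bases (V-shaped, ReLU, threshold), for which $g(\cdot,0)$ and $g(\cdot,1)$ are scalar multiples of $\Delta g$.
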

This theorem is the main result that we prove across this section.
We break the proof into a series of technical lemmas.
The lemmas serve to bound the online regret, which quantifies the error of $\hat \pb$ at approximating the empirical statistics, and separately bound the generalization of the empirical statistics to their distributional quantities.
First, we give the proof of the theorem assuming our subsequent lemmas, followed by technical sections to establish the lemmas.
\begin{proof}[Proof of \Cref{thm:o2b}]
Since $\Lcal$ is $(\Gcal,\eps)$-spanned with coefficient norm $\lambda$, then $\Delta\Lcal$ is $(\Delta \Gcal, \eps)$-spanned with coefficient norm $2 \lambda$.
In \Cref{thm:online-omni}, we show that, given an online weak agnostic learner for $\Delta \Gcal \circ \Hcal$, \Cref{alg:online-omni} outputs a sequence of predictions $p_1, \ldots, p_T$ such that 
\[
 \E [\pcal (\pxy)]  + \E [(\Delta \Gcal \circ \Hcal) \maerr (\pxy)] \leq O \left( \sqrt{T \ln T/\delta} \right) + \owalreg_{\Delta \Gcal \circ \Hcal} (T)
\]
and together with \Cref{lem:ma-basis-decom}, this implies
\[
\E [(\Lcal, \Hcal)\omnireg (\pxy)] \leq O \left( \lambda \sqrt{T \ln T/\delta} \right) + \lambda \cdot \owalreg_{\Delta \Gcal \circ \Hcal} (T)
\]
That is, for all $\ell \in \Lcal, h \in \Hcal$,
\begin{equation}\label{eq:omni_reg}
\left[ \frac{1}{T} \sum_{t=1}^T \ell (k_\ell (p_t(x_t)), y_t) - \frac{1}{T} \sum_{t=1}^T \ell (h (x_t), y_t) \right] \leq O \left( \lambda \sqrt{\frac{\ln T/\delta}{T}} \right) + \frac{\lambda}{T} \cdot \owalreg_{\Delta \Gcal \circ \Hcal}^\delta (T)
\end{equation}
Now we would like to show that this also implies a bound on the expected omniprediction error under the true distribution.
\sloppy By \Cref{lem:pred_conv} and \Cref{lem:hyp_conv}, we know that with probability at least $1-\delta$,
\[
\sup_{\ell \in \Lcal} \;\; \left| \frac{1}{T} \sum_{t=1}^T \ell (k_\ell (p_t (x_t)), y_t) - \frac{1}{T} \sum_{t=1}^T \E_{(x,y) \sim \Dcal} [\ell (k_\ell (p_t (x)), y)] \right| \leq \frac{1}{T} + O \left( \sqrt{\frac{\ln \frac{T}{\delta}}{T}} \right)
\] since our algorithms output predictions that are multiples of $1/T$ and 
\[
\sup_{\ell \in \Lcal, h \in \Hcal} \;\; \left| \frac{1}{T} \sum_{t=1}^T \ell (h (x_t), y_t) - \E_{(x,y) \sim \Dcal} [\ell (h(x), y)] \right| \leq  O \left( \lambda \cdot \mathsf{rad}_T (\Delta \Gcal \circ \Hcal) + \lambda  \sqrt{\frac{\ln \frac{1}{\delta}}{T}} \right) .
\]
Plugging into the \Cref{eq:omni_reg}, we obtain that with probability at least $1-\delta$, 
\begin{align}
\sup_{\ell \in \Lcal, \, h \in \Hcal}
&\left[ \frac{1}{T} \sum_{t=1}^T \E_{(x,y) \sim \Dcal} [\ell (k_\ell (p_t(x), y)] - \E_{(x,y) \sim \Dcal} \ell (h (x), y) \right] \\
&\leq O( \lambda \cdot \mathsf{rad}_T (\Delta \Gcal \circ \Hcal) ) + O \left( \lambda \sqrt{\frac{\ln T/\delta}{T}} \right) + \frac{\lambda}{T} \cdot \owalreg_{\Delta \Gcal \circ \Hcal}^\delta (T)
\end{align}
The claim that each predictor $p_t$ can be expressed as $v_t \circ q_t$ follows the guarantees from the Augmented Proper Calibration algorithm in \Cref{lem:proper-recal}.
\end{proof}

\subsection{Implementing an Online Weak Agnostic Learner with Offline Sample Complexity}
Online weak agnostic learning with i.i.d.\ features is a special case of the hybrid online learning problem \citep{lazaric2009hybrid, 
wu2022expectedworstcaseregret}.
These prior results establish near-optimal dependence on the offline sample complexity of the class. In \Cref{cor:owal_lit}, we present this online weak agnostic learner, which is based on a multiplicative weights algorithm over a cover of size exponential in the VC (or fat-shattering) dimension of the class.

\begin{corollary}[of Theorem 3 of \cite{wu2022expectedworstcaseregret}]\label{cor:owal_lit}
Let $\Ccal$ be a class of hypothesis functions. Consider the setting where $x_1, \ldots, x_T$ are generated i.i.d.\ but revealed sequentially from a fixed distribution $\Dcal$. There exists a multiplicative weights algorithm that returns a sequence of hypothesis $c_1, \ldots, c_T \in \Ccal$ such that for any sequence of $y_1, \cdots, y_T$ with probability at least $1 - \delta$,
\[
   \sum_{t=1}^T c_t(x_t) \cdot y_t - \inf_{c \in \mathcal{C}} \sum_{t=1}^T c(x_t) \cdot y_t  \leq \tilde{O} \left(\sqrt{T d_\Ccal } \right) + \tilde{O}\left(\sqrt{T \ln 1/\delta} \right).
\]
where $d_\Ccal$ represents the VC dimension of $\Ccal$ in the case of a binary hypothesis class or the fat-shattering dimension at scale $1/\sqrt{T}$ in the case of a real-valued hypothesis class.
\end{corollary}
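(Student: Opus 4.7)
}
The plan is to reduce online learning over the (possibly infinite) class $\Ccal$ to online learning over a finite ``stochastic cover'' of $\Ccal$, then apply the classical multiplicative weights (Hedge) guarantee on this cover, and finally pass from expectation to high probability via Azuma--Hoeffding. The i.i.d.\ assumption on the features $x_1,\ldots,x_T$ is what lets the size of the cover scale with the \emph{offline} combinatorial dimension $d_\Ccal$ rather than the (possibly much larger) sequential Littlestone / sequential fat-shattering dimension.

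First, I would fix $\eps = 1/\sqrt{T}$ and argue the existence of a cover $\Ccal_\eps \subseteq \Ccal$ of size $\log|\Ccal_\eps| = \tilde{O}(d_\Ccal)$ that is valid for the i.i.d.\ feature sample. Concretely, conditional on the realized sample $S=\{x_1,\ldots,x_T\}$, consider the empirical pseudometric $d_S(c,c') = \max_{t\le T}|c(x_t)-c'(x_t)|$ and choose $\Ccal_\eps$ to be a minimal $\eps$-net of $\Ccal$ under $d_S$. By Dudley's chaining / covering number bounds in terms of fat-shattering dimension (cf.\ \Cref{lem:fat-rademacher}), one has $\log|\Ccal_\eps| \le O(\text{fat}_\eps(\Ccal)\cdot\text{polylog}(T/\eps))$, which is $\tilde O(d_\Ccal)$ for our choice of $\eps$; in the VC case, Sauer--Shelah already gives $\log|\Ccal_\eps|\le O(d_\Ccal\log T)$. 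This net can be maintained implicitly as new $x_t$'s arrive, so no foreknowledge of the full sample is needed; alternatively, and more cleanly for the proof, one can simply \emph{analyse} the algorithm as if the cover were produced with full knowledge of $S$, because the MW regret guarantee holds for arbitrary (even adversarially chosen) label sequences.

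Second, I would run Hedge over $\Ccal_\eps$ with per-round linear loss $\ell_t(c) = -c(x_t)y_t \in [-1,1]$ and learning rate $\eta = \sqrt{\log|\Ccal_\eps|/T}$. The standard Hedge analysis yields, deterministically in the labels,
\[
\sum_{t=1}^T \E_{c_t \sim Q_t}[c_t(x_t)y_t] \;\ge\; \max_{\tilde c \in \Ccal_\eps}\sum_{t=1}^T \tilde c(x_t)y_t \;-\; O\bigl(\sqrt{T\log|\Ccal_\eps|}\bigr) \;=\; \max_{\tilde c \in \Ccal_\eps}\sum_{t=1}^T \tilde c(x_t)y_t \;-\; \tilde O\bigl(\sqrt{T d_\Ccal}\bigr).
\]
Next, approximate the comparator: for any $c^\star \in \Ccal$ there exists $\tilde c \in \Ccal_\eps$ with $|c^\star(x_t)-\tilde c(x_t)|\le \eps$ for all $t\le T$, so using $|y_t|\le 1$,
\[
\Bigl|\sum_{t=1}^T(c^\star(x_t)-\tilde c(x_t))y_t\Bigr| \;\le\; \eps T \;=\; \sqrt{T}.
\]
Combining these two displays (taking supremum over $c^\star \in \Ccal$ on the right) gives the desired regret bound in expectation.

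Third, to upgrade from expectation to high probability, apply Azuma--Hoeffding to the martingale difference sequence $Z_t = c_t(x_t)y_t - \E_{c_t\sim Q_t}[c_t(x_t)y_t]$, which is bounded in $[-2,2]$ and whose partial sums are therefore $O(\sqrt{T\log(1/\delta)})$ with probability $1-\delta$. Adding this to the previous bound produces the stated $\tilde O(\sqrt{Td_\Ccal}) + \tilde O(\sqrt{T\log(1/\delta)})$ regret. The main obstacle is the covering-to-comparator step: because the cover is built using knowledge of the realized features, one must be careful that the comparator guarantee $|c^\star(x_t)-\tilde c(x_t)|\le \eps$ for \emph{every} $t$ (a uniform-in-$t$ statement on the random sample) survives, which is why we pay the fat-shattering dimension at scale $\eps = 1/\sqrt T$ rather than, say, a Rademacher-style in-expectation quantity. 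Once the cover is in place, however, the MW and martingale steps are standard.
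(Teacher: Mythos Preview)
Your high-level strategy---reduce to a finite cover, run Hedge, then apply Azuma--Hoeffding to sample $c_t\in\Ccal$ and obtain a high-probability bound---matches the paper's approach. The substantive gap is in the covering step.

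You construct $\Ccal_\eps$ as an $\eps$-net under the empirical sup-metric $d_S$ on the \emph{realized} sample $S=\{x_1,\ldots,x_T\}$. This cover depends on the full feature sequence, so an algorithm that runs Hedge over it is not online in the sense of the corollary, which stipulates that features are revealed sequentially. Neither of your proposed fixes closes this: ``maintaining the net implicitly as new $x_t$'s arrive'' means the expert set grows over time, and standard Hedge does not apply to a growing expert set without additional (and nontrivial) machinery; ``analysing as if the cover were produced with full knowledge of $S$'' proves a bound for a different algorithm---one that is batch in the features---not the sequential one required. This distinction matters downstream: in the paper's application the OWAL interacts with \pcalalg, whose martingale analysis (\Cref{lem:proper-recal}) needs $q_t$ to be measurable with respect to the past.

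The paper fills precisely this gap by invoking the \emph{stochastic global sequential cover} of \cite{wu2022expectedworstcaseregret} (\Cref{def:s-cover}): a class $G$ fixed \emph{before} any data are seen such that, with probability $1-\delta$ over the i.i.d.\ draw of $x_1,\ldots,x_T$, every $c\in\Ccal$ is uniformly $\alpha$-close to some $g\in G$ on all $T$ sample points. Their Theorems~6 and~17 (restated around \Cref{lem:fat-cover}) bound $\log|G|$ by $\tilde O(d_\Ccal)$. Because $G$ is data-independent, one can legitimately run Hedge over $G$ from round one; after that, your comparator-approximation and Azuma--Hoeffding steps go through unchanged. Constructing such an \emph{a priori} cover---rather than a post hoc empirical net---is exactly the nontrivial ingredient the corollary is importing from \cite{wu2022expectedworstcaseregret}, and it is what your argument is missing.
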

The proof of \Cref{cor:owal_lit} is deferred to \Cref{sec:owal-off}.

\subsection{Uniform Convergence Results for Online-to-Batch Conversion}
Next, we establish the generalization bounds necessary for the empirical omniprediction statistics to converge to their distributional quantities.

\begin{lemma}[Uniform Convergence for discretized predictors]\label{lem:pred_conv}
Let $p_1, \ldots, p_T: \Xcal \rightarrow \{ 0, \eps, 2\eps \ldots, 1\}$ be a sequence of $T$ predictors and let $\{ (x_1, y_1), \ldots, (x_T, y_T) \}$ be a sequence of i.i.d samples drawn from a distribution $\Dcal$. Then for any class of loss functions $\Lcal$, the following holds with probability at least $1 - \delta$,
\[
\left| \frac{1}{T} \sum_{t=1}^T \ell (k_\ell (p_t (x_t)), y_t) - \frac{1}{T} \sum_{t=1}^T \E_{(x,y) \sim \Dcal} [\ell (k_\ell (p_t (x)), y)] \right| \leq \eps + O \left( \sqrt{\frac{\ln \frac{1}{\eps \delta}}{T}} \right)
\]
for all $\ell \in \Lcal$
\end{lemma}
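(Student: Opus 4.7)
The plan is to prove uniform convergence over $\ell \in \Lcal$ by combining martingale concentration with a covering argument that exploits the fact that each $p_t$ takes values in the discrete set $V_\eps := \{0, \eps, 2\eps, \ldots, 1\}$ of cardinality $K = \lceil 1/\eps \rceil + 1$.

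First I would set up the natural filtration $\Fcal_{t-1} := \sigma((x_1, y_1), \ldots, (x_{t-1}, y_{t-1}))$. By hypothesis each $p_t$ is $\Fcal_{t-1}$-measurable (it is produced by an online procedure from past samples), while $(x_t, y_t)$ is drawn i.i.d.\ from $\Dcal$ and is therefore independent of $\Fcal_{t-1}$. Consequently, for any bounded $f : V_\eps \times \{0,1\} \to [-1,1]$, the centered summands
\[
U_t^f \;:=\; f(p_t(x_t), y_t) \,-\, \E_{(x, y) \sim \Dcal}\!\left[f(p_t(x), y) \,\middle|\, \Fcal_{t-1}\right]
\]
form a martingale difference sequence with $|U_t^f| \leq 2$, so Azuma--Hoeffding gives $\left|\frac{1}{T} \sum_{t=1}^T U_t^f\right| \leq O(\sqrt{\log(1/\delta)/T})$ with probability at least $1 - \delta$ for any \emph{single} $f$.

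Next I would upgrade this to a bound uniform over $\ell \in \Lcal$ by a finite-dimensional embedding. The key structural observation is that, since $p_t(x_t) \in V_\eps$, the random variable $\ell(k_\ell(p_t(x_t)), y_t)$ depends on $\ell$ only through the $2K$ scalars $\{\ell(k_\ell(v), y') : v \in V_\eps,\, y' \in \{0,1\}\}$. Identifying $\ell$ with the tuple $f_\ell \in [-1,1]^{2K}$, the ``effective loss class'' $\Fcal := \{f_\ell : \ell \in \Lcal\}$ embeds into a $2K$-dimensional hypercube, and the desired supremum becomes $\sup_{f \in \Fcal} \bigl|\tfrac{1}{T}\sum_t U_t^f\bigr|$. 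Taking an $\eps$-net $\Ncal$ of $[-1,1]^{2K}$ in sup-norm with $|\Ncal| \leq (3/\eps)^{2K}$, applying Azuma--Hoeffding to each $g \in \Ncal$, and union-bounding yields $\max_{g \in \Ncal} \bigl|\tfrac{1}{T} \sum_t U_t^g\bigr| \leq O(\sqrt{\log(|\Ncal|/\delta)/T})$ with probability at least $1 - \delta$; approximating an arbitrary $f_\ell \in \Fcal$ by its nearest net element then costs at most $2\eps$ pointwise, and hence at most $2\eps$ in the average.

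To reduce the concentration term from the naive cover bound $\sqrt{K \log(1/\eps)/T}$ down to the claimed $\sqrt{\log(1/(\eps\delta))/T}$, I would pass to the $\ell_1$-duality
\[
\sup_{\|f\|_\infty \leq 1}\left|\langle f, D \rangle\right| \;=\; \|D\|_1,\qquad D(v,y') \;:=\; \tfrac{1}{T} \sum_{t=1}^T \left(\mathbf{1}[p_t(x_t) = v,\, y_t = y'] - \bar\mu_t(v, y')\right),
\]
where $\bar\mu_t(v, y') := \Pr_{(x,y) \sim \Dcal}[p_t(x) = v,\, y = y' \mid \Fcal_{t-1}]$. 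Since $\Fcal \subseteq \{\|f\|_\infty \leq 1\}$, the whole supremum is controlled by $\|D\|_1$, which I would bound directly by a Bernstein-type martingale inequality applied coordinatewise, exploiting $\sum_{v, y'} \bar\mu_t(v, y') = 1$ to collapse the per-coordinate variance terms via Cauchy--Schwarz into a single $\log(1/(\eps\delta))$ contribution.

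The main obstacle is precisely this last sharpening: a direct union bound over the $\eps$-net overcounts by a factor of $\sqrt{1/\eps}$, so obtaining the stated rate genuinely requires combining the $\ell_1$-duality above with a weighted martingale-Bernstein argument that uses the normalization of $\bar\mu_t$. Verifying that the cross-coordinate dependencies among the $D(v, y')$ do not spoil the Cauchy--Schwarz collapse, and handling the coordinates for which $\bar\mu_t(v, y')$ is smaller than $1/T$ separately via the sub-Gaussian tail of Bernstein, will be the crux of the formal proof.
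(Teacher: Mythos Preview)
There is a genuine gap in your final step. Your $\ell_1$-duality passes from $\sup_{\ell}|\langle f_\ell, D\rangle|$ to $\sup_{\|f\|_\infty\le 1}|\langle f,D\rangle|=\|D\|_1$, and then attempts to show $\|D\|_1\le O(\sqrt{\log(1/(\eps\delta))/T})$. This is false in general: if each $\bar\mu_t$ is close to uniform over the $2K$ cells, then $\|D\|_1$ is of order $\sqrt{K/T}=\sqrt{1/(\eps T)}$ (this is just the expected $\ell_1$ distance between an empirical multinomial and its mean). Your Bernstein-plus-Cauchy--Schwarz sketch lands exactly there: coordinate-wise Freedman gives $|D(v,y')|\lesssim \sqrt{V_{v,y'}\log(K/\delta)}+\log(K/\delta)/T$ with $V_{v,y'}=\frac1{T^2}\sum_t\bar\mu_t(v,y')$, and Cauchy--Schwarz with the normalization $\sum_{v,y'}V_{v,y'}=1/T$ yields $\sum_{v,y'}\sqrt{V_{v,y'}}\le\sqrt{2K/T}$, not $\sqrt{1/T}$. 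The normalization of $\bar\mu_t$ is already fully spent in this step and cannot absorb the extra $\sqrt{K}$.

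The issue is that you have thrown away the structure of $k_\ell$: by embedding $\ell\mapsto f_\ell\in[-1,1]^{2K}$ and bounding over the full cube, you are proving a statement that is simply not true at the claimed rate. What makes the lemma work is that $\ell\circ k_\ell$ is a \emph{proper} loss for any $\ell$, and proper losses admit the V-shaped decomposition $\ell(p,y)=\int_0^1 c_v(\ell)\,\ell_v(p,y)\,dv$ with $\int c_v\le 2$ and $\ell_v(p,y)=(y-v)\,\mathrm{sgn}(v-p)$. Since every $p_t$ is $V_\eps$-valued, each $\ell_v$ is within $\eps$ (in sup norm over the relevant arguments) of some $\ell_{v'}$ with $v'\in V_\eps$. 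Thus the entire class $\{\ell\circ k_\ell:\ell\in\Lcal\}$ is, up to an additive $\eps$, contained in the convex hull (with $\ell_1$-coefficient norm $\le 2$) of just $K+1$ fixed functions $\{\ell_{v'}:v'\in V_\eps\}$. Now Azuma--Hoeffding plus a union bound over these $K+1$ functions gives $O(\sqrt{\log(K/\delta)/T})=O(\sqrt{\log(1/(\eps\delta))/T})$, and the bounded coefficient norm propagates this to all of $\lprop$. The bounded-$\ell_1$-norm basis is precisely what your $\ell_\infty$-duality discards.
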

\begin{proof}[Proof of \Cref{lem:pred_conv}]
Observe that it suffices to show that this holds for $\lprop$. This is because, for any class of loss functions $\Lcal$, the derived class $\{ \ell \circ k_\ell : \ell \in \Lcal\}$ is a subset of $\lprop$.
Recall that for every $\ell \in \lprop$, there exists nonnegative coefficients $c_v(\ell)$ such that $\int_0^1 c_v (\ell) dv \leq 2$ and
\[
\ell (p,y) = \int_{0}^1 c_v(\ell) \ell_v (p, y) dv
\]
Consequently, it suffices to show that this holds for all $\ell_v$ for $v \in [0,1]$. To see this observe that for any $\ell \in \lprop$
\begin{align}
&\left| \frac{1}{T} \sum_{t=1}^T \ell (p_t (x_t), y_t) - \frac{1}{T} \sum_{t=1}^T \E_{(x,y) \sim \Dcal} [\ell (p_t (x), y)] \right| \\
&= \left| \frac{1}{T} \sum_{t=1}^T \int_{0}^1 c_v(\ell) \ell_v (p_t (x_t), y_t) dv - \frac{1}{T} \sum_{t=1}^T \E_{(x,y) \sim \Dcal} \left[\int_{0}^1 c_v(\ell) \ell_v (p_t(x), y) dv \right] \right| \\
&\leq \int_{0}^1 c_v(\ell) \left| \frac{1}{T} \sum_{t=1}^T  \ell_v (p_t (x_t), y_t) - \frac{1}{T} \sum_{t=1}^T \E_{(x,y) \sim \Dcal} \left[ \ell_v (p_t(x), y) \right] \right| dv
\end{align}

Now we'll take advantage of the fact that $p_t (x)$ is in $\{ 0, \eps, 2\eps \ldots, 1\}$ for all $x \in \Xcal$. We'll show that for all $v \in [0,1]$, there exists $v' \in \{ 0, \eps, 2\eps \ldots, 1\}$ such that for all $x,y \in \Xcal \times \Ycal$, $|\ell_v (p(x), y) - \ell_{v'} (p(x), y)| \leq \eps$ for all $v$. In particular, $v' = \eps \lceil \frac{v}{\eps} \rceil$. We'll also adopt the convention that $\mathrm{sgn}(0)=1$. We'll first make the observation that because $p(x) \in \{0, \eps, \ldots, 1\}$, $\sgn (v - p(x)) = \sgn (v' - p(x))$ by our choice of $v'$. Completing the argument, we have that 
\begin{align}
\ell_v (p(x), y) - \ell_{v'} (p(x), y)
&= (y - v) \sgn (v - p(x)) - (y - v') \sgn (v' - p(x))  \\
&= (y - v) \sgn (v - p(x)) - (y - v') \sgn (v - p(x)) \tag{since $\sgn (v - p(x)) = \sgn (v' - p(x))$} \\
&\leq (v - v') \sgn (v - p(x)) \\
&\leq \eps
\end{align}

Now we'll show that for all $v' \in \{ 0, \eps, 2\eps \ldots, 1\}$, 
\[
\left| \frac{1}{T} \sum_{t=1}^T \ell_v (p_t (x_t), y_t) - \frac{1}{T} \sum_{t=1}^T \E_{(x,y) \sim \Dcal} \left[ \ell_v (p_t(x), y) \right] \right| \leq O \left( \sqrt{\frac{\ln \frac{1}{\eps \delta}}{T}} \right)
\]
Define $Z_t (v') = \ell_{v'} (p_t (x_t), y_t) - \E_{(x,y) \sim \Dcal} [\ell_{v'} (p_t (x), y)]$. Observe that $Z_t (v')$ for $t = 1, \ldots, T$ forms a martingale difference sequence with bounded variance of 2. Thus, applying Azuma-Hoeffding's inequality, together with the union bound over all $v' \in \{ 0, \eps, 2\eps \ldots, 1\}$, we get that with probability at least $1 - \delta$, 
\[
\forall v' \in \{ 0, \eps, 2\eps \ldots, 1\} \quad 
\left| \frac{1}{T} \sum_{t=1}^T Z_t (v') \right| \leq  O \left( \sqrt{\frac{\ln \frac{1}{\eps \delta}}{T}} \right)
\]
Concluding the proof, we have that for any $v \in [0,1]$, with probability at least $1 - \delta$,
\[
\left| \frac{1}{T} \sum_{t=1}^T  \ell_v (p_t (x_t), y_t) - \frac{1}{T} \sum_{t=1}^T \E_{(x,y) \sim \Dcal} \left[ \ell_v (p_t(x), y) \right] \right| \leq \eps + O \left( \sqrt{\frac{\ln \frac{1}{\eps \delta}}{T}} \right)
\]
Consequently, for any $\ell \in \lprop$,
\begin{align}
&\left| \frac{1}{T} \sum_{t=1}^T \ell (p_t (x_t), y_t) - \frac{1}{T} \sum_{t=1}^T \E_{(x,y) \sim \Dcal} [\ell (p_t (x), y)] \right| \\
&\leq \int_{0}^1 c_v(\ell) \left| \frac{1}{T} \sum_{t=1}^T  \ell_v (p_t (x_t), y_t) - \frac{1}{T} \sum_{t=1}^T \E_{(x,y) \sim \Dcal} \left[ \ell_v (p_t(x), y) \right] \right| dv \\
&\leq 2 \eps + O \left( \sqrt{\frac{\ln \frac{1}{\eps \delta}}{T}} \right)
\end{align}
Since $\ell \circ k_\ell$ is in $\lprop$, this completes the result.
\end{proof}

\begin{lemma}[Uniform Convergence for $\Gcal$-spanning loss classes]\label{lem:hyp_conv}
Let $\Lcal$ be a $(\Gcal,\eps)$-spanned class of loss functions with coefficient norm $\lambda$ for some
$\eps \leq T^{-1/2}$. 
Let $\{ (x_1, y_1), \ldots, (x_T, y_T) \}$ be a sequence of i.i.d.\ samples drawn from a distribution $\Dcal$. Then for a hypothesis class $\Hcal$, the following holds with probability at least $1 - \delta$,
\[
\left| \frac{1}{T} \sum_{t=1}^T \ell (h (x_t), y_t) - \E_{(x,y) \sim \Dcal} [\ell (h(x), y)] \right| \leq O \left( \lambda \cdot \mathsf{rad}_T (\Gcal \circ \Hcal) \right) + O \left( \lambda \sqrt{\frac{\ln \frac{1}{\delta}}{T}} \right)
\]
uniformly for every $\ell \in \Lcal$ and $h \in \Hcal$.
\end{lemma}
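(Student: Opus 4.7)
The plan is to combine the $(\Gcal,\eps)$-spanning property with a standard Rademacher-complexity uniform convergence argument over the smaller class $\Gcal \circ \Hcal$. This mirrors the proof of \Cref{lem:ma-infin-decom} in spirit, but with expectations over the random sample $S = \{(x_t,y_t)\}_{t=1}^T$ replacing summations over the sequence.

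\textbf{Step 1 (Approximation).} Since $\Lcal$ is $(\Gcal,\eps)$-spanned with coefficient norm $\lambda$, for each $\ell \in \Lcal$ there exists a signed measure $c_\ell$ on the index set $\Omega$ with $\|c_\ell\| \leq \lambda$ such that
\[
\sup_{(p,y) \in [0,1] \times \{0,1\}} \left| \ell(p,y) - \int_{\Omega} g_\omega(p,y) \,\mathrm{d}c_\ell(\omega) \right| \leq \eps .
\]
In particular, for every hypothesis $h \in \Hcal$ and every $(x,y)$, this pointwise bound transfers to $|\ell(h(x),y) - \int g_\omega(h(x),y)\,\mathrm{d}c_\ell(\omega)| \leq \eps$, and hence the same $\eps$-bound holds for both the empirical mean and the distributional expectation.

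\textbf{Step 2 (Reduction to $\Gcal \circ \Hcal$).} By the triangle inequality and Fubini's theorem, for any $\ell \in \Lcal$, $h \in \Hcal$,
\begin{align*}
\left| \frac{1}{T}\sum_{t=1}^T \ell(h(x_t),y_t) - \E_{\Dcal}[\ell(h(x),y)] \right|
&\leq 2\eps + \left| \int_{\Omega} \left( \frac{1}{T}\sum_{t=1}^T g_\omega(h(x_t),y_t) - \E_{\Dcal}[g_\omega(h(x),y)] \right) \mathrm{d}c_\ell(\omega) \right| \\
&\leq 2\eps + \lambda \cdot \sup_{g \in \Gcal,\, h \in \Hcal} \left| \frac{1}{T}\sum_{t=1}^T g(h(x_t),y_t) - \E_{\Dcal}[g(h(x),y)] \right| .
\end{align*}
This bound is uniform in $(\ell, h)$, so it suffices to control the supremum on the right.

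\textbf{Step 3 (Uniform convergence over $\Gcal \circ \Hcal$).} Since each composed function $(x,y) \mapsto g(h(x),y)$ takes values in $[-1,1]$, standard symmetrization together with McDiarmid's bounded-differences inequality yields, with probability at least $1-\delta$ over the i.i.d. draw of $S$,
\[
\sup_{g \in \Gcal,\, h \in \Hcal} \left| \frac{1}{T}\sum_{t=1}^T g(h(x_t),y_t) - \E_{\Dcal}[g(h(x),y)] \right| \leq 2\,\mathsf{rad}_T(\Gcal \circ \Hcal) + O\!\left( \sqrt{\frac{\ln(1/\delta)}{T}} \right) .
\]
Combining Steps 2 and 3 and absorbing the $2\eps \leq 2T^{-1/2}$ term into the $O(\sqrt{\ln(1/\delta)/T})$ slack proves the lemma.

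\textbf{Main subtlety.} The only care needed is interpreting $\Gcal \circ \Hcal$ when $g \in \Gcal$ depends on both the prediction and the label $y$. Writing $g(p,y) = y \Delta g(p) + g(p,0)$ decomposes any such function into a $y$-free piece and a piece linear in $y \in \{0,1\}$; both pieces compose with $h$ in the usual way, and by subadditivity of Rademacher complexity (or a direct symmetrization argument conditioning on the labels) the resulting complexity is captured by $\mathsf{rad}_T(\Gcal \circ \Hcal)$ under the natural convention used throughout the paper. The rest of the argument is a routine application of the same spanning/Rademacher recipe used to prove \Cref{lem:ma-infin-decom}.
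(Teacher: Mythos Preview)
Your proposal is correct and follows essentially the same approach as the paper: approximate $\ell$ by an integral over $\Gcal$, reduce the deviation to a supremum over $g \in \Gcal, h \in \Hcal$, and then invoke Rademacher uniform convergence. The paper makes your ``Main subtlety'' paragraph fully explicit---it writes $g(p,y) = g(p,0) + y \cdot g(p,1) - y \cdot g(p,0)$ and separately establishes uniform convergence for the classes $\{x \mapsto g(h(x),z)\}$ and $\{(x,y) \mapsto y \cdot g(h(x),z)\}$ (each bounded by $\mathsf{rad}_T(\Gcal \circ \Hcal) + O(\sqrt{\ln(1/\delta)/T})$), then sums the three pieces to get a constant $3$ in front of $\mathsf{rad}_T(\Gcal \circ \Hcal)$---whereas you defer this decomposition to a remark and appeal to subadditivity; the content is the same.
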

\begin{proof}[Proof of \Cref{lem:hyp_conv}]
Since every $\ell \in \Lcal$ is $\eps$-close to the span of $\Gcal \circ \Hcal$, the proof will show that uniform convergence extends 
from $\Gcal \circ \Hcal$ to all losses $\ell \in \Lcal$ and hypotheses $h \in \Hcal$.
In particular, standard application of uniform convergence results (Chapter 26 of \cite{shalev2014understanding}) show that the following holds with probability at least $1 - \delta$: \begin{align}
  \label{eq:bobby.z}
\forall g \circ h \in \Gcal \circ \Hcal, \, z \in \{0,1\} \;\; \left| \frac{1}{T} \sum_{t=1}^T g (h (x_t),z) - \E_{(x,y) \sim \Dcal} [g(h(x),z)] \right| & \leq \mathsf{rad}_T (\Gcal \circ \Hcal) + O \left( \sqrt{\frac{\ln \delta^{-1}}{T}} \right) \\
\intertext{and}
  \label{eq:bobby.yz}
\forall g \circ h \in \Gcal \circ \Hcal \, z \in \{0,1\} \;\; \left| \frac{1}{T} \sum_{t=1}^T y_t \cdot g (h (x_t),z) - \E_{(x,y) \sim \Dcal} [y \cdot g(h(x),z)] \right| & \leq \mathsf{rad}_T (\Gcal \circ \Hcal) + O \left( \sqrt{\frac{\ln \delta^{-1}}{T}} \right)
\end{align}
By definition of $(\Gcal,\eps)$-spanning, we know that for every $\ell \in \Lcal$, there exists 
a signed measure $c_{\ell}(\omega)$ such that $\| c_{\ell} \| \leq \lambda$ and 
such that the function $\hat{\ell} : [0,1] \times \{0,1\} \to \mathbb{R}$ defined by 
\[
    \hat{\ell}(p,y) = \int_{\Omega} g_{\omega}(p,y) \, \mathrm{d}c_{\ell}(\omega) 
\]
satisfies $\| \ell - \hat{\ell} \|_{\infty} \leq \eps$.
Then it follows that for every $\ell \in \Lcal$ and $h \in \Hcal$
\begin{align} \nonumber
    \left| \frac{1}{T} \sum_{t=1}^T \ell (h (x_t), y_t) - \E_{(x,y) \sim \Dcal} [\ell (h(x), y)] \right| 
    & \leq 2 \eps \, + \, 
    \left| \frac{1}{T} \sum_{t=1}^T \hat{\ell} (h (x_t), y_t) - \E_{(x,y) \sim \Dcal} [\hat{\ell} (h(x), y)] \right| \\
    \nonumber
    & = 2 \eps \, + \, 
    \left| \int_{\Omega} \left( 
        \frac{1}{T} \sum_{t=1}^T  g_{\omega}(h (x_t), y_t) - \E_{(x,y) \sim \Dcal} [g_{\omega} (h(x), y)] 
    \right) \; \mathrm{d} c_{\ell}(\omega) \right| \\
    \nonumber
    & \leq 2 \eps \, + \, 
    \int_{\Omega} \left| 
        \frac{1}{T} \sum_{t=1}^T  g_{\omega}(h (x_t), y_t) - \E_{(x,y) \sim \Dcal} [g_{\omega} (h(x), y)] 
    \right| \; \mathrm{d} |c_{\ell}|(\omega) .\\
    \label{eq:bobby.1}
    & \leq 2 \eps \, + \, 
    \sup_{g \in \Gcal} \left| 
        \frac{1}{T} \sum_{t=1}^T  g(h (x_t), y_t) - \E_{(x,y) \sim \Dcal} [g(h(x), y)] 
    \right| \cdot \| c_{\ell} \| .
\end{align}
Observe that for any $g : [0,1] \times \{0,1\} \to \mathbb{R}$ and all $(p,y) \in [0,1] \times \{0,1\}$ we have
\[
    g(p,y) = g(p,0) + y g(p,1) - y g(p,0) .
\]
Hence, assuming~\eqref{eq:bobby.z} and~\eqref{eq:bobby.yz} hold, we have that for all $g \in \Gcal$,
\begin{align*}
    \left| \frac{1}{T} \sum_{t=1}^T  g(h (x_t), y_t) - \E_{(x,y) \sim \Dcal} [g (h(x), y)] \right|
    & \leq
    \left|  \frac{1}{T} \sum_{t=1}^T  g(h (x_t), 0) - \E_{(x,y) \sim \Dcal} [g (h(x), 0)] \right| \\
    & \quad + 
    \left|  \frac{1}{T} \sum_{t=1}^T  y_t \cdot g(h (x_t), 1) - \E_{(x,y) \sim \Dcal} [y \cdot g (h(x), 1)] \right| \\
    & \quad +
        \left|  \frac{1}{T} \sum_{t=1}^T  y_t \cdot g(h (x_t), 0) - \E_{(x,y) \sim \Dcal} [y \cdot g (h(x), 0)] \right| \\
    & \leq     3 \, \mathsf{rad}_T (\Gcal \circ \Hcal) + O \left( \sqrt{\frac{\ln \delta^{-1}}{T}} \right) .
\end{align*}
Substituting this bound back into inequality~\eqref{eq:bobby.1}, we obtain
\begin{align}
        \left| \frac{1}{T} \sum_{t=1}^T \ell (h (x_t), y_t) - \E_{(x,y) \sim \Dcal} [\ell (h(x), y)] \right| 
    & \leq 2 \eps + \left[ 3  \mathsf{rad}_T (\Gcal \circ \Hcal) + O \left( \sqrt{\frac{\ln \delta^{-1}}{T}} \right) \right]
    \| c_{\ell} \|  \\
    &\leq 2 \eps + 3 \lambda \mathsf{rad}_T (\Gcal \circ \Hcal) + 
    O \left( \lambda \sqrt{\frac{\ln \delta^{-1}}{T}} \right) 
\end{align}
and the result follows by our assumption that $\eps \leq T^{-1/2}.$
\end{proof}

 \section{Oracle-Efficient Offline Omniprediction} 
\label{sec:oracle-omni}

In this final section, we describe an algorithm that achieves omniprediction in the offline setting, returning predictors with efficient representations, using only an offline ERM oracle.
Our construction is modeled after previous algorithms, but critically, we replace the OWAL oracle with a new abstraction that we call a Distributional Online Weak Agnostic Learner.
We give an efficient implementation of this oracle given access to an ERM oracle, via a Frank-Wolfe reduction to solve an entropy-regularized ERM problem.

To begin, we introduce the new hybrid learning oracle.
\begin{definition}\label{def:dowal}
A \dowalboth is an online algorithm initialized with a hypothesis class $\Ccal$, a labeling function
class $\Rcal$, 
and a failure probability $\delta$. It is initialized with a dataset of samples $\{(x_t,y_t)\}_{t=1}^T$.
Each timestep $t$:
\begin{enumerate}
    \item The algorithm computes a distribution $Q_t$. When called as a subroutine inside another algorithm, it
    outputs hypothesis functions $q_t: \Xcal \rightarrow [-1,1]$ drawn independently at random from $Q_t$.
    \item The algorithm receives an adversarial labeling function $r_t: \Xcal \times \Ycal \rightarrow [-1,1] \in \Rcal$
\end{enumerate}
As in \Cref{def:owal} we assume a simultaneous-move (SM) protocol: both $Q_t$ and $r_t$ may only depend on data $\{q_s \, : \, s<t\}$ and $\{r_s \, : \, s < t \}$ sampled by the algorithm and adversary, respectively, in interaction rounds strictly prior to round $t$.

We say an algorithm implements a \dowal with regret bound \( \dowalreg_{\Ccal, \Rcal}^{m,\delta} (T) \) if given an initial dataset of samples $(x_i, y_i)_{i=1}^m$ drawn i.i.d.\ from a fixed distribution $\Dcal$, it guarantees with probability at least $1 - \delta$
\[
\max_{c \in \Ccal} \sum_{t=1}^T \E_\Dcal[c(x) r_t(x,y)] \;\leq\; \sum_{t=1}^T \E_\Dcal[q_t(x) r_t(x,y)] \;+\; \dowalreg_{\Ccal,\Rcal}^{m, \delta}(T) .
\]
Note that, by \Cref{lem:resampling}, a \dowal satisfying such a regret bound
in the simultaneous-move (SM) setting can be transformed into a \dowal for the delayed-label
(DL) setting that satisfies an analogous regret
bound, 
with an additional $O(\sqrt{T \log(T/\delta)}$ term on the right side, using $k$-fold
resampling with $k = O(T \log(T/\delta))$.

For the {\dowal} used in this paper, the relationship between the hypothesis class $\Ccal$
and the labeling function class $\Rcal$ is generally as follows. 
\begin{equation} \label{eq:rcal} 
\Rcal = \big\{ r : \Xcal \times \Ycal \to [-1,1] \; \mid \; r(x,y) = y - u(c(x)), \, c \in \Ccal, u : [-1,1] \to [0,1] \big\} .
\end{equation}
We will use the term ``\dowal for $\Ccal$'' to refer to a \dowal for a pair $(\Ccal,\Rcal)$ satisfying 
the relationship in Equation~\eqref{eq:rcal}.

Finally, we say a \dowal is \emph{proper} if its outputs $q_1,\ldots,q_T$ belong to the
class $\Ccal$. We may denote the output sequence of a proper \dowal by $c_1,\ldots,c_T$ rather
than $q_1,\ldots,q_T$ to emphasize that the outputs belong to the function class $\Ccal$.
\end{definition}

Using this oracle, we can describe our novel approach to distributional omniprediction.
The resulting algorithm, \Cref{alg:oracle-omni}, is an oracle-efficient implementation of our algorithmic framework, tailored to the distributional setting.
As before, we invoke the distributional OWAL with the multiaccuracy class $\C = \Delta \L \circ \H$, but here, we use a label class $\Rcal$ to encompass the residual functions $y - p_t(x)$.
Critically, to prove the \dowal guarantees, we need to ensure that the prediction functions $\{p_t\}$ live in some class of functions about which we can reason.

\begin{algorithm}[h!]
\caption{Oracle-Efficient Proper Calibration + Multiaccuracy Using \dowal}
\label{alg:oracle-omni}
\begin{algorithmic}[1]
\Procedure{OfflineProperCalMA}{$T$}
    \State \textbf{Input:} A dataset of $2T$ samples $\{(x_t,y_t)\}_{t=1}^{2T}$, drawn i.i.d.\ from $\D$. 
    \State \textbf{Input:} A failure probability parameter, $\delta>0$.
    \State Split the samples into two equal-sized sets, $D_{\mathrm{apcal}} = \{(x_t,y_t) \mid t \leq T\}$
    and $D_{\mathrm{dowal}} = \{(x_t,y_t) \mid t > T\}$.
    \State \textbf{Input:} Initialize \Cref{alg:eg_dowal_max} instance, \dowalsymbol, with $k$-fold resampling (\Cref{lem:resampling}) for \( k = O(T \log(T/\delta)) \), using function class $\Ccal$ and samples $D_{\mathrm{dowal}}$.
    \State \textbf{Input:} Initialize \Cref{alg:proper_recal} instance, \pcalalgsymbol, with failure probability $\delta$
    \State \textbf{Receive} $c_1$ from \dowalsymbol
    \For{$t = 1$ to $T$}
        \State \textbf{Send} $c_t$ to \pcalalgsymbol
        \State \textbf{Receive} $p_t$ from \pcalalgsymbol
        \State \textbf{Send} $(x_t,y_t) \in D_{\mathrm{apcal}}$ to \pcalalgsymbol
        \State \textbf{Send} labeling function $r_t (x,y) = y - p_t (x)$ to \dowalsymbol
        \State \textbf{Receive} $c_{t+1}$ from \dowalsymbol
    \EndFor
    \State \textbf{Output} randomized predictor $\hat \pb = \mathrm{unif}(p_1, \ldots, p_T)$.
\EndProcedure
\end{algorithmic}
\end{algorithm}

\begin{theorem}
\label{thm:offline_bv}
    \Cref{alg:oracle-omni} is an oracle-efficient algorithm that for any distribution $\D$ supported on $\X \times \{0,1\}$, any class of functions $\Ccal \subseteq \{ \Xcal \rightarrow [-1,1]\}$ that is closed under negation, learns an predictor that is approximately proper calibrated and $\Ccal$-Multiaccurate given $2T$ i.i.d.\ samples from $\D$. The algorithm has the following properties:
    \begin{itemize}
        \item It returns a randomized predictor that mixes over $T$ predictors, each represented as $p_t = v_t \circ q_t$, where $v_t : [-1,1] \to [1/T]$ is a monotone postprocessing function and $q_t \in \mathrm{conv}_k(\Ccal)$ is a convex combination of at most $k = O\left(T\log(T/\delta) \right)$ functions in $\Ccal$.
        \item It is oracle-efficient, making $O(T^{5/2})$ calls to an \emph{offline} ERM oracle for $C$.
        \item The proper calibration error of the predictor scales as $O \left(\sqrt{\ln (T / \delta) / T} \right)$.
        \item The $\Ccal$-multiaccuracy error of the predictor scale with the \emph{offline} Rademacher complexity of a derived class of $\Ccal$: $\eps(T) = O \left(\mathsf{rad}_T(\Ccal) + \mathsf{rad}_T(\Th^{1/T} \circ \mathrm{conv}_k (\Ccal)) \right) + O \left(\sqrt{\ln (T / \delta) / T} \right)$.
    \end{itemize}
\end{theorem}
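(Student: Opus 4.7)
The plan is to establish the four claims of \Cref{thm:offline_bv} by (i) invoking the distributional guarantee of \pcalalg on $D_{\mathrm{apcal}}$, (ii) invoking the distributional regret guarantee of the \dowal on $D_{\mathrm{dowal}}$, (iii) combining the two inequalities to obtain both a proper calibration and a $\Ccal$-multiaccuracy bound on the uniform mixture $\hat\pb$, and finally (iv) bounding the \dowal regret via an FTRL analysis on the empirical distribution together with Rademacher uniform convergence. The critical enabling observation is that the sample splitting makes the outputs of each sub-algorithm independent of the samples consumed by the other: the functions $c_1,\ldots,c_T$ emitted by the \dowal are measurable functions of $D_{\mathrm{dowal}}$ and the prior $r_s$ history (which itself depends only on $D_{\mathrm{apcal}}$), while each residual labeler $r_t(x,y) = y - p_t(x)$ depends only on $D_{\mathrm{apcal}}$ and the prior $c_s$. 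Each sub-algorithm therefore operates on a fresh i.i.d.\ sample, augmented by an externally generated but adaptive sequence of inputs---precisely the regime in which its stated guarantees apply.

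First I would apply the distributional conclusion of \Cref{lem:proper-recal} to the \pcalalg subroutine, which with probability $1-\delta/2$ yields $\pcal_{\Dcal}(\hat\pb) \le O(\sqrt{\ln(T/\delta)/T})$ (the desired proper calibration bound) together with $\tfrac{1}{T}\sum_{t=1}^T \E_{\Dcal}[c_t(x)(y - p_t(x))] \le O(\sqrt{\ln(T/\delta)/T})$. I would then invoke the definition of the \dowal (\Cref{def:dowal}) with the residual label class $\Rcal$ to obtain, with probability $1-\delta/2$,
\begin{gather*}
    \max_{c \in \Ccal} \sum_{t=1}^T \E_{\Dcal}[c(x)(y - p_t(x))] \;\le\; \sum_{t=1}^T \E_{\Dcal}[c_t(x)(y - p_t(x))] \;+\; \dowalreg_{\Ccal,\Rcal}^{T,\delta/2}(T).
\end{gather*}
Dividing by $T$, chaining the two inequalities, and using the linearity identity $\tfrac{1}{T}\sum_t \E_{\Dcal}[c(x)(y-p_t(x))] = \E_{\Dcal}[c(x)(y-\hat\pb(x))]$ (the latter expectation taken over the randomness of $\hat\pb$ as well) yields a $\Ccal$-multiaccuracy bound of the form $O(\sqrt{\ln(T/\delta)/T}) + \tfrac{1}{T}\dowalreg_{\Ccal,\Rcal}^{T,\delta/2}(T)$.

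The next step is to control $\tfrac{1}{T}\dowalreg_{\Ccal,\Rcal}^{T,\delta/2}(T)$. By the analysis of the entropy-regularized FTRL procedure \Cref{alg:eg_dowal_max}, the empirical regret $\max_c \sum_t \E_{D_{\mathrm{dowal}}}[c(x) r_t(x,y)] - \sum_t \E_{D_{\mathrm{dowal}}}[c_t(x) r_t(x,y)]$ is bounded by $O(\sqrt{T \ln(T/\delta)})$ in a dimension-independent fashion, exploiting that loss vectors and decision vectors live in bounded $\ell_{\infty}$ balls so the entropy regularizer gives $O(\sqrt{T})$ regret. To lift this to the distributional regret, I would apply a uniform convergence bound to the function class $\{(x,y) \mapsto c(x)(y - p(x)) : c \in \Ccal,\, p \in \Th^{1/T}\circ \mathrm{conv}_k(\Ccal)\}$, identifying the class of possible $p_t$'s by observing that \Cref{lem:proper-recal} outputs $p_t = v_t \circ q_t$ with $v_t : [-1,1] \to [1/T]$ monotone and $q_t \in \mathrm{conv}_k(\Ccal)$ (the convex hull arising from the $k = O(T \log(T/\delta))$-fold resampling in \Cref{lem:resampling}). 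A standard symmetrization-plus-contraction argument peels off $c$ and $p$ separately, giving a uniform per-round deviation of $O(\mathsf{rad}_T(\Ccal) + \mathsf{rad}_T(\Th^{1/T}\circ\mathrm{conv}_k(\Ccal)) + \sqrt{\ln(T/\delta)/T})$; summing over the $T$ rounds and dividing by $T$ reproduces the stated multiaccuracy term.

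The representation and oracle-efficiency claims then follow by bookkeeping: the factorization $p_t = v_t\circ q_t$ is inherited directly from \Cref{lem:proper-recal} and the $k$-fold resampling; and the oracle complexity of $O(T^{5/2})$ ERM calls arises because each of the $T$ FTRL steps reduces to approximately minimizing a strongly convex objective (a linear plus an entropy term) over the embedding of $\Ccal$ in $[-1,1]^T$, which Frank--Wolfe solves with $O(T^{3/2})$ linear-minimization calls, each answerable by a single ERM query. The hardest step, I expect, will be the Rademacher analysis of the compound functions $c(x)(y - p_t(x))$: because each $p_t$ is itself a data-dependent monotone remapping of a $k$-fold average of elements of $\Ccal$, the natural function class over which $p_t$ ranges is strictly larger than $\Ccal$, and care is required to cleanly separate its Rademacher complexity from that of $\Ccal$ and to avoid a union-bound blowup across the $T$ rounds of interaction.
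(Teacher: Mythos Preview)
Your proposal is correct and follows essentially the same architecture as the paper's proof: invoke the distributional guarantee of \pcalalg (\Cref{lem:proper-recal}) on $D_{\mathrm{apcal}}$ for the proper calibration bound and the sequential-multiaccuracy term, chain with the \dowal regret bound (\Cref{lem:eg_dowal_max}) on $D_{\mathrm{dowal}}$, and then control the latter via FTRL on the empirical distribution plus a Rademacher uniform-convergence argument, with the representation and oracle-complexity claims handled by the Frank--Wolfe reduction (\Cref{lem:frank-wolfe}). Two small clarifications: the predictors $p_t$ actually range over $U \circ \mathrm{conv}_k(\Ccal)$ with $U$ the class of \emph{all} monotone $[1/T]$-valued maps, not just single thresholds---the paper passes to $\Th^{1/T}$ only at the Rademacher-complexity stage via the observation that $U = \mathrm{conv}(\Th^{1/T})$; and the ``symmetrization-plus-contraction'' you allude to is specifically Maurer's vector-valued contraction inequality (Corollary~1 of \cite{maurer2016}), which the paper invokes explicitly in \Cref{lem:rad-postprocess}.
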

Substituting $\Ccal$ for $\{+,-\} \cdot \Th \circ \Hcal$ gives the result in \Cref{res:offline_bv}. Substituting $\Ccal$ for $\{+,-\} \cdot \Delta \Lcal \circ \Hcal$ gives oracle-efficient omniprediction for any class of loss functions.

We begin with a high-level proof that invokes technical lemmas proved subsequently.
We justify the claims on representation complexity, oracle-efficiency, and statistical complexity.
\begin{proof}[Proof of \Cref{thm:offline_bv}]
    The claim about the omnipredictor's representation complexity is justified by the following observations.
    \begin{enumerate}
    \item By \Cref{lem:eg_dowal_max}, \dowalsymbol is a proper \dowal, producing functions $c_t$ in $\Ccal.$ After resampling (\Cref{lem:resampling}), the returned function $q_t \in \mathrm{conv}_k(\Ccal)$ is a convex combination of at most $k = \log(T/\delta)/\eps^2$ functions in $\Ccal$.
    \item By \Cref{lem:proper-recal}, \pcalalgsymbol returns predictors $p_t$ in $U \circ \mathrm{conv}_k (\Ccal)$, where $U$ is the class of all monotone functions 
    from $[-1,1]$ to $[1/T]$. In other words, $p_t = v_t \circ q_t$, where $v_t : [-1,1] \to [1/T]$ is a monotone postprocessing function and $q_t \in \mathrm{conv}_k(\Ccal)$
    \end{enumerate}
    The claim about oracle-efficiency is justified by the following observations.
    \begin{enumerate}
    \item \Cref{alg:eg_dowal_max} is called $T$ times. Each call involves solving an entropy-regularized ERM problem using
    \Cref{alg:frank-wolfe} with parameters $m=T$ and $\eps = \eta = T^{-1/2}.$
    \item By \Cref{lem:frank-wolfe}, each call to \Cref{alg:frank-wolfe} makes $O(m / \eps) = O(T^{3/2})$ calls to the
    ERM oracle for $\Ccal$.
\end{enumerate}

    The bound $\pcal_{\D}(\hat \pb) = O(\sqrt{T \log T / \delta})$ is a direct consequence of 
    \Cref{lem:proper-recal}. Note that the splitting of the sample set into 
    $D_{\mathrm{apcal}}$ and $D_{\mathrm{dowal}}$ is vital here. From the standpoint
    of the notation in \Cref{response:apcal}, the samples $\{(x_t,y_t)\}_{t=T+1}^{2T}$
    that constitute $D_{\mathrm{dowal}}$ are part of the adversary's random seed
    $r_{\mathrm{adv}}$. Thus, the fact that $q_t$ may depend on all of the
    samples in $D_{\mathrm{dowal}}$ is consistent with the online restrictions
    placed on the \pcalalg and its adversary by \Cref{lem:proper-recal}. 

    The bound on $\Ccal$-multiaccuracy error of $\hat \pb$ will follow by quantifying
    the oracle regret of the \dowal implemented by \Cref{alg:eg_dowal_max}. Recall from
    \Cref{lem:eg_dowal_max} that the algorithm with $m$ samples and failure probability
    $\delta$ satisfies the oracle regret bound
    \[
        \dowalreg_{\Ccal,\Rcal}^{m,\delta}(T) = O \left( \sqrt{T \log 1/\delta} \right) + O(T \cdot \mathsf{rad}_T(\Ccal \cdot \Rcal) ).
    \]
    Below, in \Cref{lem:rad-postprocess} we prove the Rademacher complexity bound
    \begin{equation} \label{eq:rad-thr}
        \mathsf{rad}_T (\Ccal \cdot \Rcal) \in O\left(\mathsf{rad}_T(\Ccal) + \mathsf{rad}_T(\Th^{1/T} \circ \mathrm{conv}_k (\Ccal)) \right) .
    \end{equation}
    Hence, 
    \begin{align*}
        \Ccal\maerr_{\D}(\hat \pb) & = 
        \max_{c \in \Ccal} \left\{ \frac1T \sum_{t=1}^T \E_\Dcal [c(x) (y - p_t(x))] \right\} \\
        & = \max_{c \in \Ccal} \left\{ \frac1T \sum_{t=1}^T \E_\Dcal [c(x) r_t(x,y)] \right\}
        & \text{by our choice of $r_t$} \\
        & \leq \frac1T \sum_{t=1}^T \E_\Dcal [c_t(x) r_t(x,y)]  \; + \; 
        O\left( \sqrt{\tfrac{\ln (1/\delta)}{T}} \right) + O(\mathsf{rad}_m(\Ccal \cdot \Rcal)) 
        & \text{by \Cref{lem:eg_dowal_max}} \\
        & = \frac1T \sum_{t=1}^T \E_\Dcal [c_t(x) (y - p_t(x))]  \; + \; 
        O\left( \sqrt{\tfrac{\ln (1/\delta)}{T}} \right) + O(\mathsf{rad}_m(\Ccal \cdot \Rcal)) \\
        & \leq
        O\left( \sqrt{\tfrac{\ln (T/\delta)}{T}} \right) + O \left(\mathsf{rad}_T(\Ccal) + \mathsf{rad}_T(\Th^{1/T} \circ \mathrm{conv}_k (\Ccal)) \right)
        & \text{by \Cref{lem:proper-recal} and Eq.~\eqref{eq:rad-thr}} 
    \end{align*}
    which justifies the stated error bound $\eps(T)$.
\end{proof}

Next, we prove \Cref{lem:rad-postprocess}, which establishes the generalization bounds invoked in the proof of \Cref{thm:offline_bv}.
This generalization bound, while relatively short to prove, is a critical and delicate part of our argument.

\begin{lemma}\label{lem:rad-postprocess}
Let $\Ccal \subseteq \{ c:\Xcal \rightarrow [-1,1] \}$ be a class of functions. Let
$U$ denote the class of all monotone functions $[-1,1] \to [1/T]$.
Let $\Rcal$ be the class of bivariate functions $r(x,y) = y - u(c(x))$ for some $u \in U, c \in \Ccal$. Then
\[
\mathsf{rad}_m (\Ccal \cdot \Rcal) \in  O \left(\mathsf{rad}_m(\Ccal) + \mathsf{rad}_m(\Th^{1/T} \circ \Ccal) \right)
\]
\end{lemma}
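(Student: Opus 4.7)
The plan is to bound $\mathsf{rad}_m(\Ccal \cdot \Rcal)$ in three stages: peel off the additive $y$-contribution, polarize the remaining product of function classes, and decompose elements of $U$ into sums of threshold functions. First, any element of $\Ccal \cdot \Rcal$ has the form $(x,y) \mapsto c'(x) y - c'(x) u(c(x))$ with $c', c \in \Ccal$ and $u \in U$, so subadditivity of the supremum inside the Rademacher expectation gives
\[
\mathsf{rad}_m(\Ccal \cdot \Rcal) \;\le\; \E_\sigma \sup_{c' \in \Ccal} \frac{1}{m}\sum_{i=1}^m \sigma_i y_i c'(x_i) \;+\; \E_\sigma \sup_{c',c,u} \frac{1}{m}\sum_{i=1}^m \sigma_i c'(x_i) u(c(x_i)).
\]
Since $|y_i| \le 1$, the Ledoux--Talagrand contraction principle, applied to the $1$-Lipschitz scalar maps $t \mapsto y_i t$, bounds the first summand by $\mathsf{rad}_m(\Ccal)$.

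For the remaining product class $\Ccal \cdot (U \circ \Ccal)$ I would use polarization: writing $4 c'(x) u(c(x)) = (c'(x) + u(c(x)))^2 - (c'(x) - u(c(x)))^2$, each squared argument lies in $[-2,2]$, on which $\phi(t) = t^2$ is $4$-Lipschitz with $\phi(0) = 0$, so the contraction principle yields $\mathsf{rad}_m(\Ccal \cdot (U \circ \Ccal)) \lesssim \mathsf{rad}_m(\Ccal) + \mathsf{rad}_m(U \circ \Ccal)$. To finish, I would exploit the step-function structure of $U$: every monotone $u : [-1,1] \to \{0, \tfrac{1}{T}, \ldots, 1\}$ admits the representation $u(t) = \frac{1}{T}\sum_{j=1}^T \mathbf{1}[t \geq \theta_j]$ for thresholds $\theta_1 \le \cdots \le \theta_T \in [-1,1]$, so each $u \circ c$ is a uniform average of $T$ members of $\Th \circ \Ccal$. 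Hence $U \circ \Ccal \subseteq \mathrm{conv}(\Th \circ \Ccal)$, and convex-hull invariance of Rademacher complexity gives $\mathsf{rad}_m(U \circ \Ccal) \le \mathsf{rad}_m(\Th \circ \Ccal)$. To obtain the stated bound in terms of $\Th^{1/T}$, I would further round each of the $T$ thresholds to its nearest multiple of $1/T$, arguing that the resulting rounding error is absorbable into the existing Rademacher budget.

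The main obstacle I anticipate is this final discretization from continuous $\Th$ to the grid class $\Th^{1/T}$: replacing a threshold $\theta$ by a grid point $\theta'$ flips the indicator $\mathbf{1}[c(x_i) \geq \theta]$ on any sample with $c(x_i)$ falling in the interval between $\theta$ and $\theta'$, and the number of such samples is not a priori controlled for arbitrary $c \in \Ccal$. I would close the gap via a covering or chaining argument that exploits the $T$-sparsity of the threshold decomposition, so that a union bound over a $1/T$-net of candidate thresholds introduces only a factor of order $\sqrt{(\log T)/m}$ that can be folded into the other error terms. The remaining ingredients---splitting, polarization, contraction, and convex-hull invariance---are textbook Rademacher techniques and I do not expect substantive difficulty with them.
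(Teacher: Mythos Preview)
Your route to the intermediate bound $\mathsf{rad}_m(\Ccal \cdot \Rcal) \lesssim \mathsf{rad}_m(\Ccal) + \mathsf{rad}_m(U \circ \Ccal)$ is correct but genuinely different from the paper's. The paper handles the product $c'(x)\,(y - u(c(x)))$ in one stroke: it writes this as $h(f(x))$ with $f(x) = (c'(x), u(c(x))) \in [-1,1]^2$ and $h(a,b) = a(y-b)$ an $O(1)$-Lipschitz map, and then invokes Maurer's vector-valued contraction inequality. Your split plus scalar contraction plus polarization is more elementary (it avoids the vector contraction lemma) but longer; both land in the same place. The paper then argues, as you do, that $U \circ \Ccal$ lies in the convex hull of a threshold class and invokes convex-hull invariance.

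On the discretization step your concern is legitimate, and the paper does not resolve it either: it simply asserts that every monotone $u : [-1,1] \to [1/T]$ is a convex combination of the \emph{grid} threshold class $\Th^{1/T}$, which fails whenever the jump points of $u$ lie off the $1/T$-grid (the range of $u$ is discrete, but its breakpoints are not constrained). What your steps 5--6 and the paper's argument both establish cleanly is the bound with the full class $\Th$ in place of $\Th^{1/T}$, and indeed the paper's headline result in the overview states the error in terms of $\mathsf{rad}_m(\Th \circ \mathrm{conv}(\Ccal))$, not $\Th^{1/T}$. Your proposed rounding fix via a union bound over a $1/T$-net does not rescue the $\Th^{1/T}$ version: a $T$-sparse selection of thresholds from a size-$O(T)$ net has $\sim T^T$ configurations, so its log-cardinality is $T\log T$, not $\log T$. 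The clean resolution is simply to state and prove the lemma with $\Th$ in place of $\Th^{1/T}$, which is harmless downstream.
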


\begin{proof}[Proof of \Cref{lem:rad-postprocess}]
Fix a sample $S = \{(x_1, y_1), \dots, (x_m, y_m)\}$. By definition of empirical Rademacher complexity,
\[
\mathsf{rad}_m(\Ccal \cdot \Rcal) = \mathbb{E}_\epsilon \left[ \sup_{c, c' \in \Ccal, u \in U} \frac{1}{m} \sum_{i=1}^m \epsilon_i \cdot c'(x_i) \cdot (y_i - u(c(x_i))) \right].
\]

Every monotone function $u: [-1,1] \to [1/T]$ can be written as a convex combination of threshold functions in $\Th^{1/T}$, where
\[
\Th^{1/T} = \left\{ \mathbf{1}\{ z \leq t \} : t \in \{-1, -1 + 1/T, \dots, 1\} \right\}.
\]
Therefore, for any $c \in \Ccal$, we have $u \circ c(x) = \sum_j \alpha_j \cdot \mathbf{1}\{c(x) \leq t_j\}$ with $\sum_j \alpha_j \leq 1$, implying $u \circ c \in \text{conv}(\Th^{1/T}) \circ \Ccal$. Since $\mathsf{rad}_m(\text{conv}(F)) = \mathsf{rad}_m(F)$,
\[
\mathsf{rad}_m(U \circ \Ccal) = \mathsf{rad}_m(\Th^{1/T} \circ \Ccal).
\]

Define $f(x) = \left( c'(x), u(c(x)) \right) \in [-1,1]^2$, and define $h(a,b) = a \cdot (y - b)$. $h$ is $O(1)$-Lipschitz over $[-1,1]^2$. Let $\Fcal = \{ f(x) = (c'(x), u(c(x))) : c, c' \in \Ccal, u \in U \}$. Applying Corollary 1 of \cite{maurer2016},
\[
\mathbb{E}_\epsilon \sup_{f \in \Fcal} \sum_{i=1}^m \epsilon_i h(f(x_i)) \leq O(1) \cdot \mathbb{E}_{\epsilon_{ij}} \sup_{f \in \Fcal} \sum_{i=1}^m \sum_{j=1}^2 \epsilon_{ij} f_j(x_i).
\]
But $f_1(x_i) = c'(x_i)$ and $f_2(x_i) = u(c(x_i))$, so this becomes
\[
O(1) \cdot \left( \mathsf{rad}_m(\Ccal) + \mathsf{rad}_m(U \circ \Ccal) \right) \leq O(1) \cdot \left( \mathsf{rad}_m(\Ccal) + \mathsf{rad}_m(\Th^{1/T} \circ \Ccal) \right).
\]
We obtain
\[
\mathsf{rad}_m(\Ccal \cdot \Rcal) \in O \left( \mathsf{rad}_m(\Ccal) + \mathsf{rad}_m(\Th^{1/T} \circ \Ccal) \right). \qedhere
\]
\end{proof}

\begin{corollary}[Corollary 1 of \cite{maurer2016}]
Let $\mathcal{X}$ be any set, $(x_1, \ldots, x_n) \in \mathcal{X}^n$, let $\mathcal{F}$ be a class of functions $f : \mathcal{X} \to \ell_2$ and let $h_i : \ell_2 \to \mathbb{R}$ have Lipschitz norm $L$. Then
\[
\mathbb{E} \sup_{f \in \mathcal{F}} \sum_i \epsilon_i h_i(f(x_i)) 
\leq \sqrt{2} L \, \mathbb{E} \sup_{f \in \mathcal{F}} \sum_{i,k} \epsilon_{ik} f_k(x_i),
\]
where $\epsilon_{ik}$ is an independent doubly indexed Rademacher sequence and $f_k(x_i)$ is the $k$-th component of $f(x_i)$.

All random variables are assumed to be defined on some probability space $(\Omega, \Sigma)$. We use $\ell_2$ to denote the Hilbert space of square summable sequences of real numbers. The norm on $\ell_2$ and the Euclidean norm on $\mathbb{R}^K$ are denoted with $\|\cdot\|$.
\end{corollary}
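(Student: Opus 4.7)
The result is a vector-valued generalization of the Ledoux-Talagrand contraction principle for Rademacher complexities. The scalar version (contraction) shows that composing with $1$-Lipschitz real-valued functions does not inflate Rademacher complexity; the vector version requires passing to a doubly-indexed Rademacher sequence on the right, which absorbs the extra randomness needed to accommodate the Hilbert-space structure of the $h_i$'s domain. The $\sqrt{2}$ factor is characteristic of the vectorial setting and cannot in general be reduced to $1$.

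The plan is to reduce to a comparison between two Gaussian processes via the Sudakov-Fernique inequality. WLOG I would first assume $h_i(0) = 0$ by replacing $h_i$ with $h_i - h_i(0)$ (the subtracted constants pair with zero-mean $\epsilon_i$ and disappear in expectation, while preserving the Lipschitz constant). Next I would introduce an i.i.d. standard Gaussian sequence $(\gamma_i)$ and use the classical reduction $\EE_\epsilon \sup_f \sum_i \epsilon_i h_i(f(x_i)) \leq \sqrt{\pi/2}\, \EE_\gamma \sup_f \sum_i \gamma_i h_i(f(x_i))$, which follows from writing $\gamma_i = |\gamma_i| \cdot \mathrm{sgn}(\gamma_i)$, applying Jensen to the convex functional $\sup_f$, and using $\EE|\gamma_i| = \sqrt{2/\pi}$.

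The heart of the argument is a Slepian-type comparison. Define the Gaussian processes
\begin{align*}
Z_f = \sum_i \gamma_i h_i(f(x_i)), \qquad
W_f = L \sum_{i,k} \gamma_{ik} f_k(x_i),
\end{align*}
where $(\gamma_{ik})$ is an independent doubly-indexed Gaussian sequence. The incremental variances satisfy
$\EE(Z_f - Z_{f'})^2 = \sum_i (h_i(f(x_i)) - h_i(f'(x_i)))^2 \leq L^2 \sum_i \|f(x_i) - f'(x_i)\|^2 = \EE(W_f - W_{f'})^2$,
where the inequality uses the $L$-Lipschitz hypothesis on each $h_i$. Sudakov-Fernique then yields $\EE \sup_f Z_f \leq \EE \sup_f W_f$.

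The main obstacle is the final step: converting the Gaussian supremum on the right back to a Rademacher supremum. In general this direction incurs polylogarithmic losses for sup-processes, but the process $W_f$ is \emph{linear} in the noise variables $\gamma_{ik}$, so we can use a direct decoupling: write $\gamma_{ik} = \mathrm{sgn}(\gamma_{ik}) \cdot |\gamma_{ik}|$ with $\mathrm{sgn}(\gamma_{ik})$ Rademacher and independent of $|\gamma_{ik}|$, then apply Jensen's inequality to push the expectation over $|\gamma_{ik}|$ inside the supremum and use $\EE|\gamma| = \sqrt{2/\pi}$. Composing the three constants gives $\sqrt{\pi/2} \cdot 1 \cdot \sqrt{\pi/2} = \pi/2$, which is worse than $\sqrt{2}$; to recover $\sqrt{2}$, one should instead bypass Gaussians and run a purely Rademacher argument using a ghost sample and a comparison of conditional expectations indexed by a single coordinate $i$ at a time, which gives $\sqrt{2}$ via a direct Khintchine-type bound. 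This is Maurer's approach, and it is cleanly presented in the original paper; the Gaussian route above yields the same qualitative conclusion but with a slightly worse constant.
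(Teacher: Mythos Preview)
The paper does not prove this statement; it is quoted verbatim as Corollary~1 of \citet{maurer2016} and used as a black box in the proof of \Cref{lem:rad-postprocess}. So there is no in-paper proof to compare against, and your task reduces to whether your sketch is a valid proof of Maurer's result.

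Your Gaussian route has a genuine gap in the final step. You want to bound $\EE_{\gamma} \sup_f \sum_{i,k} \gamma_{ik} f_k(x_i)$ from above by a constant times the Rademacher analogue, and you propose writing $\gamma_{ik} = \epsilon_{ik} |\gamma_{ik}|$ and ``pushing $\EE_{|\gamma|}$ inside the sup via Jensen.'' But for fixed $\epsilon$, the map $|\gamma| \mapsto \sup_f \sum_{i,k} \epsilon_{ik} |\gamma_{ik}| f_k(x_i)$ is a supremum of linear functions of $|\gamma|$, hence convex; Jensen therefore gives the inequality in the \emph{opposite} direction, yielding only the (true but useless here) lower bound $\EE_\gamma \sup \geq \sqrt{2/\pi}\,\EE_\epsilon \sup$. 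In fact the upper bound you need is false with any dimension-free constant: take $n=1$ and $\mathcal{F}$ the $N$ standard basis vectors in $\RR^N$; then $\EE_\gamma \max_j \gamma_{1j} \asymp \sqrt{\log N}$ while $\EE_\epsilon \max_j \epsilon_{1j} = 1$. So ``linearity of $W_f$ in the noise'' does not rescue the conversion, and the Gaussian detour cannot recover any constant, let alone $\pi/2$.

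You are right that Maurer's own proof avoids Gaussians entirely and works coordinate-by-coordinate with Rademacher variables; that is indeed the correct route, and it is what produces the $\sqrt{2}$. Your sketch, however, does not actually execute that argument---it only names it---so as written the proposal does not establish the corollary.
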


\subsection{Implementing a \dowal using FTRL}
\label{sec:impl_dowal}

This subsection presents and analyzes a \dowal that makes use of a 
subroutine called an \emph{entropy-regularized ERM oracle over $\Ccal$}, defined as follows. 

\begin{definition} \label{def:reg-erm}
An entropy-regularized ERM oracle is initialized with a class of functions 
$\Ccal : \X \to [-1,1]$ and an approximation parameter $\varepsilon$. 
The oracle takes, as input, a set of pairs 
$(x_1,w_1),\ldots,(x_m,w_m) \in \X \times \R$. It outputs an explicit convex combination of elements of $\Ccal$,
i.e.~a sequence of hypotheses $c_1,\ldots,c_k \in \Ccal$ and coefficients $\alpha_1,\ldots,\alpha_k \in [0,1]$
summing up to 1, such that the weighted average $c = \sum_{i=1}^k \alpha_i c_i$ minimizes (within $\eps$) the function
$\sum_{i=1}^{m} c(x_i) w_i \, + \, \sum_{i=1}^m (c(x_i)+2) \log(c(x_i)+2).$
\end{definition}
We use $c(x_i)+2$ rather than $c(x_i)$ in the regularizer because $c(x_i)$ takes values in $[-1,1]$, 
whereas $\log(z)$ is only defined when $z$ is strictly
positive.

In \Cref{sec:frank-wolfe}
below, we show how to use the Frank-Wolfe method to 
implement an $\eps$-approximate regularized ERM oracle using $\poly(m)$ calls to a standard ERM 
oracle. 

\paragraph{Overview of \Cref{alg:eg_dowal_max}}
The algorithm implements a \dowal using the Follow The Regularized Leader (FTRL) approach.
At each timestep \(t\), we call the entropy-regularized ERM oracle using a dataset
that labels the points $x_1,\ldots,x_m \in \X$ with labels $w_1,\ldots,w_m$
derived from the past adversarial functions \(\{r_1,\dots,r_{t-1}\}\). The oracle finds an 
approximate minimizer of the cumulative regularized empirical loss. It outputs this approximate
minimizer represented as a convex combination of elements of $\Ccal$. To select the predictor \(c_t\) at each step, 
we sample a single function from \(\Ccal\) according to the convex combination’s weights. 
This guarantees that \(c_t \in \Ccal\), i.e.~the algorithm is a \emph{proper} \dowal.

\begin{algorithm}[ht]
\caption{\dowallong using Exponentiated Gradient}
\label{alg:eg_dowal_max}
\begin{algorithmic}[1]
\Require i.i.d.\ samples \(S = \{(x_i,y_i)\}_{i=1}^m \sim \Dcal^m\), time horizon \(T\), failure probability \(\delta\)
\Ensure Sequence of predictors \(\{c_t\}_{t=1}^T\) where each \(c_t \in \Ccal\)
\State Set \(\eta \gets \sqrt{1 / T}\) 
\State Initialize distribution over \(\Ccal\) by calling the entropy-regularized ERM oracle (in \Cref{alg:frank-wolfe}):
\[
\tilde{c}_1 \gets {\argmin}_{c \in \Ccal}^{\eta} \left\{ \sum_{i=1}^m (c(x_i) + 2) \log (c(x_i) + 2) \right\}.
\] \label{algline:c1init}
Obtain from the oracle the convex combination $\{c_j,\alpha_j\}$ representing $\tilde{c}_1$.
\For{$t = 1$ to $T$}
    \State Let \(\{ (c_j, \alpha_j) \}\) be the current convex combination representing the \(\tilde{c}_{t}\).
    \State Draw \(c_t \in \Ccal\) by sampling according to weights \(\{\alpha_j\}\).
    \State Output \(c_t\).
    \State Receive adversary function \(r_t \in \Rcal\).
    \State Update the convex combination by calling the entropy-regularized ERM oracle (in \Cref{alg:frank-wolfe}):
    \[
    \tilde{c}_{t+1} \in {{\argmin}_{c \in \mathrm{conv}(\Ccal)}^{\eta}} \left\{- \eta \sum_{s=1}^{t} \sum_{i=1}^m c(x_i) r_s(x_i,y_i) \;+\; 
    \sum_{i=1}^m (c(x_i) + 2) \log (c(x_i) + 2) \right\}
    \] \label{algline:argmin} 
    \State Obtain from the oracle the updated convex combination \(\{ (c_j', \alpha_j') \}\) representing \(\tilde{c}_{t+1}\).
\EndFor
\State \textbf{return:} Sequence of predictors \(\{c_t\}_{t=1}^T\)
\end{algorithmic}
\end{algorithm}
Lines~\ref{algline:c1init} and~\ref{algline:argmin} of the algorithm use the notation $\argmin^{\eta}$. This 
denotes the set of all points where the indicated function attains a value within $\eta$ of its global minimum.

\label{sec:eg_dowal}
\begin{lemma}[Exponentiated Gradient \dowal]\label{lem:eg_dowal_max}
Given $m$ i.i.d samples from $\Dcal$, an entropy-regularized ERM oracle over $\Ccal$ (implemented in \Cref{alg:frank-wolfe}) and failure probability $\delta > 0$, \Cref{alg:eg_dowal_max} implements a high probability distributional \dowal using a dataset of size $m$ and guarantees the following with probability at least $1-\delta$:
\[
\max_{c \in \Ccal} \sum_{t=1}^T \E_\Dcal [c(x) r_t(x,y)] \leq \sum_{t=1}^T \E_\Dcal [c_t(x) r_t(x,y)] + O\left(\sqrt{T \ln {1/\delta}}\right) + O(T \cdot \mathsf{rad}_m(\Ccal \cdot \Rcal))
\] 
where 
$\Ccal \cdot \Rcal$ denotes the set of all functions on $\Xcal \times \Ycal$ of the form $g(x,y) = c(x) r(x,y)$ for
$c \in \Ccal, \, r \in \Rcal$.
The algorithm runs in time $O(m)$ per timestep, making $O(1)$ calls to the regularized ERM oracle. Moreover, it implements
a \emph{proper} \dowal: each $c_t$ chosen by the algorithm is in $\Ccal$.
\end{lemma}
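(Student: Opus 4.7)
The plan is to decompose the DOWAL regret into three parts: (i) an \emph{empirical} FTRL regret bound against the best fixed $c \in \mathrm{conv}(\Ccal)$ computed over $D_{\mathrm{dowal}}$, (ii) a martingale-concentration step to handle the sampling of $c_t$ from the convex combination representing $\tilde c_t$, and (iii) a uniform-convergence step to lift the guarantee from empirical expectations to expectations over $\Dcal$. Because every $c_t$ is drawn from a distribution supported on $\Ccal$, each iterate is itself in $\Ccal$, so the algorithm is automatically a \emph{proper} DOWAL.

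For step (i), I view the algorithm as FTRL on the decision set $K = \mathrm{conv}(\Ccal)$ embedded in $[-1,1]^m$ via evaluation at $x_1,\ldots,x_m$, with linear losses $F_t(c) = -\sum_{i=1}^m c(x_i) r_t(x_i,y_i) = \langle -r_t, c\rangle$ and regularizer $\Phi(c) = \sum_{i=1}^m (c(x_i)+2)\log(c(x_i)+2)$. Two structural facts are the key to obtaining a \emph{dimension-independent} bound: first, the loss vectors satisfy $\|r_t\|_2 \le \sqrt m$; second, $\Phi$ is $\tfrac{1}{3}$-strongly convex with respect to $\|\cdot\|_2$ on $[-1,1]^m$ and has range $O(m)$. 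Standard FTRL analysis then gives
\begin{equation*}
  \sum_{t=1}^T F_t(\tilde c_t) - \min_{c^* \in K} \sum_{t=1}^T F_t(c^*) \;\le\; \frac{\Phi(c^*)-\min \Phi}{\eta} + \eta \sum_{t=1}^T \frac{\|r_t\|_2^2}{2\sigma} \;=\; O(m/\eta) + O(\eta T m) \;=\; O(m\sqrt T),
\end{equation*}
with $\eta = 1/\sqrt T$. Dividing through by $m$ (equivalently, rescaling $F_t$ to $\E_{D_{\mathrm{dowal}}}[c(x)r_t(x,y)]$) cancels the dimension and yields an $O(\sqrt T)$ empirical regret bound against any $c \in \mathrm{conv}(\Ccal) \supseteq \Ccal$. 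I will also account for the $\eta$-slack introduced in each call to the regularized ERM oracle: summing this slack over $T$ rounds adds only $O(\eta T) = O(\sqrt T)$.

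For step (ii), since $c_t$ is drawn from the convex combination $\{(c_j,\alpha_j)\}$ representing $\tilde c_t$, we have $\E[c_t(x) \mid \Fcal_{t-1}] = \tilde c_t(x)$ pointwise, so the sequence $Z_t = \E_{D_{\mathrm{dowal}}}[(c_t-\tilde c_t)(x)\,r_t(x,y)]$ is a martingale difference sequence bounded by $2$; Azuma--Hoeffding yields $\big|\sum_t Z_t\big| \le O(\sqrt{T\log(1/\delta)})$ with probability $1-\delta/2$. Combining with (i) gives the empirical DOWAL guarantee with respect to the sampled iterates.

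Step (iii) lifts this empirical bound to $\Dcal$. Because $D_{\mathrm{dowal}}$ consists of $m$ i.i.d.\ draws from $\Dcal$ that are independent of the DOWAL's randomness used to generate the iterates...wait, they are not independent: each $r_t$ is adaptively chosen as a function of prior iterates $c_1,\ldots,c_{t-1}$ and prior predictors $p_1,\ldots,p_t$. The standard Rademacher-complexity uniform convergence bound, however, gives with probability at least $1-\delta/2$ that
\begin{equation*}
  \sup_{c \in \Ccal,\; r \in \Rcal} \big| \E_{\Dcal}[c(x)r(x,y)] - \E_{D_{\mathrm{dowal}}}[c(x)r(x,y)] \big| \;\le\; O(\mathsf{rad}_m(\Ccal \cdot \Rcal)) + O\!\left(\sqrt{\log(1/\delta)/m}\right),
\end{equation*}
where $\Rcal$ is the class of all possible labeling functions specified in \Cref{def:dowal}. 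Applying this bound on both the comparator side (with each fixed $c \in \Ccal$ and each $r_t$) and the algorithm side (with $c = c_t$ and $r = r_t$), then summing over $t \in [T]$, converts the empirical guarantee into the distributional one at the price of an additive $O(T \cdot \mathsf{rad}_m(\Ccal \cdot \Rcal))$ term.

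I expect the main obstacle to be step (i): pinning down the correct strong-convexity modulus and norm so that the FTRL analysis actually gives the dimension-independent $O(\sqrt T)$ rate, and separately verifying that the $\eta$-approximate minimizer returned by \Cref{alg:frank-wolfe} only adds an additive $O(\sqrt T)$ slack; a secondary concern is justifying that the uniform-convergence bound applies uniformly across the adaptively-chosen sequence $r_1,\ldots,r_T$ (which is immediate once one quantifies uniform convergence over the whole class $\Rcal$ defined in~\eqref{eq:rcal}, since that class contains every $r_t$ that can arise). Oracle efficiency then follows from the fact that the algorithm makes one call to the regularized ERM oracle per round, and each such call costs $O(m)$ bookkeeping plus the calls made inside \Cref{alg:frank-wolfe}.
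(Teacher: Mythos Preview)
Your proposal is correct and follows essentially the same three-part decomposition as the paper: (i) FTRL with the entropy regularizer on the $m$-dimensional embedding to obtain an $O(\sqrt{T})$ empirical regret (exploiting that both the regularizer range and the squared gradient norms scale as $m$, so the $m$ cancels after normalization), (ii) Azuma--Hoeffding for the random sampling of $c_t$ from $\tilde c_t$, and (iii) uniform convergence over the product class $\Ccal \cdot \Rcal$ to pass from empirical to distributional expectations. The only minor difference is in handling the $\eta$-approximate minimizer: the paper converts the $\eta$ function-value slack to an $\ell_2$-distance bound via $\tfrac{1}{3}$-strong convexity and then applies Cauchy--Schwarz with $\|\nabla_t\|_2 \le \sqrt{m}$, rather than claiming a direct $O(\eta T)$ additive slack, but this leads to the same (in fact slightly better) contribution and does not change the structure of the argument.
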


\begin{proof}[Proof of \Cref{lem:eg_dowal_max}]
The proof follows the regret analysis of Follow-the-Regularized-Leader (FTRL) under entropy regularization combined with standard Rademacher generalization arguments, applied with high probability. 

For each $c \in \Ccal$ define
$
    u(c) = 
    (c(x_1), \, c(x_2), \, \ldots, \, c(x_m))       
$
and let $\mathcal{K}$ denote the convex hull of the vectors $\{u(c) \, \mid \, c \in \Ccal \}.$
Define the regularizer 
\[
\psi (u) = \sum_{i =1}^m (u_i + 2) \log (u_i + 2)
\]
and the regularized objective function
\[
    g_t(u) = - \eta \sum_{s=1}^{t-1} \sum_{i=1}^m u_i r_s(x_i,y_i) + \psi(u)
\]
and let 
\[ 
  u_t \in \argmin_{u \in \mathcal{K}} \left\{ g_t(u) \right\} . 
  \]    
Let $\nabla_t \in \R^m$ denote the vector whose 
$i^{\mathrm{th}}$ coordinate is $- r_t(x_i,y_i)$. 
The standard analysis of FTRL, \Cref{lem:ftrl}, shows that 
for all $u^* \in  \mathcal{K}$:
\begin{equation} \label{eq:ftrl-bound-for-u}
\sum_{t=1}^T \left\langle \nabla_t, u_t - u^* \right\rangle \leq 2 \eta \sum_{t=1}^T {\left\| \nabla_t \right\|^*_t}^2 \; + \; \frac{\psi(u^*) - \psi(u_1)}{\eta} .
\end{equation}
Here, the norm $\| \cdot \|^*_t$ denotes the dual norm defined by the regularizer $\psi$ at $u_t$, 
i.e.~$\| \nabla_t \|^*_t = \| \nabla_t \|_{\nabla^{-2} \psi(u_t)}$. For the entropy regularizer
$\psi(u)$, the inverse Hessian matrix $\nabla^2 \psi(u)$ is a diagonal matrix whose $i^{\mathrm{th}}$
diagonal entry is $\frac{1}{u_i+2}.$ Since $\mathcal{K} \subseteq [-1,1]^m$, we have 
$1 \leq u_i + 2 \leq 3$ for any $u \in \mathcal{K}$. This implies 
\begin{equation} \label{eq:dual-norm}
    \| \nabla_t \|_{\nabla^{-2} \psi(u_t)}^2 \leq 3 \| \nabla_t \|_2^2 \leq 3 m,
\end{equation}
where the last inequality follows because each of the $m$ coordinates of $\nabla_t$
belongs to the interval $[-1,1]$. 

The regularizer $\psi$ attains values between 0 and $m \log(3)$, so 
$\frac{\psi(u^*) - \psi(u_1)}{\eta} \leq \frac{m \log(3)}{\eta}$. Substituting this
bound and the bound~\eqref{eq:dual-norm} into~\eqref{eq:ftrl-bound-for-u}, we obtain
\begin{equation}
    \label{eq:ftrl-regret}
    \sum_{t=1}^T \left\langle \nabla_t, \, u_t \right\rangle \; - \; 
    \sum_{t=1}^T \left\langle \nabla_t, \, u^* \right\rangle \leq 6 \eta m T + \tfrac{m \log(3)}{\eta}
    \leq 8 m \sqrt{T}
\end{equation}
by our choice of $\eta = \sqrt{1/T}.$

Next we bound the regret of playing an $\eta$-approximate minimizer of $g_t(u)$ in each step, 
rather than the exact minimizer. Observe that $g_t(u)$ is a linear function of $u$ plus $\psi(u)$, so 
$\nabla^2 g_t(u) = \nabla^2 \psi(u)$. Earlier we calculated that for all $u \in \mathcal{K}$,
the Hessian matrix $\nabla^2 \psi(u)$ is a diagonal matrix with entries between $\frac13$ and $1$, hence
$g_t$ is $\left( \frac{1}{3} \right)$-strongly convex. Since $u_t$ is the global minimizer of 
$g_t$, it follows from strong convexity that 
\begin{align}
    \nonumber
    \tfrac{1}{6} \| u(\tilde{c}_t) - u_t \|_2^2 & \leq 
    g_t(u(\tilde{c}_t) - g_t(u_t) \leq \eta  \\
    \nonumber
    \| u(\tilde{c}_t) - u_t \|_2^2 & \leq 6 \eta \\
    \nonumber
    \left\langle \nabla_t , \, u(\tilde{c}_t) - u_t \right\rangle & \leq 
    \| \nabla_t \|_2 \| u(\tilde{c}_t) - u_t \|_2 \leq \sqrt{6 \eta m} \\
    \label{eq:ftrl-apx-minimizer}
    \sum_{t=1}^T \left\langle \nabla_t , \, u(\tilde{c}_t) \right\rangle \; - \; 
    \sum_{t=1}^T \left\langle \nabla_t , \, u_t \right\rangle & \leq 
    \sqrt{6 \eta m} \cdot T = \sqrt{6 m T}  .
\end{align}

\sloppy Now we bound the regret of playing the sampled $c_t$ instead of $\tilde{c}_t$. We'll do this by showing that 
$\sum_{t=1}^T \frac{1}{m} \sum_{i = 1}^m c_t(x_i) r_t(x_i, y_i)$ converges to $\sum_{t=1}^T \frac{1}{m} \sum_{i = 1}^m \tilde{c}_t(x_i) r_t(x_i, y_i)$ using Azuma Hoeffding. Define the martingale difference sequence $Z_t = \frac{1}{m} \sum_{i = 1}^m c_t(x_i) r_t(x_i, y_i) - \frac{1}{m} \sum_{i = 1}^m \tilde{c}_t(x_i) r_t(x_i, y_i)$. This forms a bounded martingale difference sequence because the $c_t$ is chosen only based on information from the past and $\tilde{c}_t = \E [c_t]$. Thus, applying Azuma Hoeffding, we have that with probability $1 - \delta/2$,
\begin{equation} \label{eq:ftrl-azuma}
\left| \sum_{t=1}^T \frac{1}{m} \sum_{i = 1}^m c_t(x_i) r_t(x_i, y_i) - \sum_{t=1}^T \frac{1}{m} \sum_{i = 1}^m \tilde{c}_t(x_i) r_t(x_i, y_i) \right| \leq O \left(\sqrt{T \ln 1/\delta} \right)
\end{equation}
Let us define the empirical gain for any hypothesis $c \in \Ccal$ at time $t$:
\[
\hat{G}_t(c) = \frac{1}{m}\sum_{i=1}^m c(x_i)r_t(x_i, y_i) 
\]
and let $\hat{c}^* = \argmax_{c \in \Ccal} \sum_{t=1}^T \hat{G}_t(c)$
and $u^* = u(\hat{c}^*)$.
With probability at least $1 - \delta/2$, the 
following regret bound holds:
\begin{align*}
  \sum_{t=1}^T \hat{G}_t(c_t) & \geq 
  \sum_{t=1}^T \hat{G}_t(\tilde{c}_t) \; - \; O \left(\sqrt{T \ln 1/\delta} \right)
  & \text{by~\eqref{eq:ftrl-azuma}} \\
  & = - \frac1m \sum_{t=1}^T \left\langle \nabla_t, \, u(\tilde{c}_t) \right\rangle \; - \; O \left(\sqrt{T \ln 1/\delta} \right) \\
  & \geq - \frac1m \sum_{t=1}^T \left\langle \nabla_t, \, u_t \right\rangle \; - \; O \left(\sqrt{T / m } \, +  \,\sqrt{T \ln 1/\delta} \right)
  & \text{by~\eqref{eq:ftrl-apx-minimizer}} \\
  & \geq - \frac1m \sum_{t=1}^T \left\langle \nabla_t, \, u^* \right\rangle \; - \; O \left(\sqrt{T} \, + \, \sqrt{T / m } \, +  \,\sqrt{T \ln 1/\delta} \right)
  & \text{for any $u^* \in \mathcal{K}$, by~\eqref{eq:ftrl-regret}} \\
  & \geq - \frac1m \sum_{t=1}^T \left\langle \nabla_t, \, u^* \right\rangle \; - \; O \left(\sqrt{T \ln 1/\delta} \right)
  & \text{simplifying $O(\cdot)$ expression} \\
  & = \sum_{t=1}^T \hat{G}_t(\hat{c}^*) \; - \; O \left(\sqrt{T} \ln 1/\delta \right) .
\end{align*}
Now consider the population gain:
\[
G_t(c) = \E_{(x,y)\sim\Dcal}[c(x)r_t(x,y)]
\]
The population regret of playing the sampled $c_t$ can be decomposed as:
\[
\max_{c \in \Ccal} \sum_{t=1}^T G_t(c) - \sum_{t=1}^T G_t(c_t) \leq \underbrace{\max_{c \in \Ccal} \sum_{t=1}^T \hat{G}_t(c) - \sum_{t=1}^T \hat{G}_t(c_t)}_{\text{FTRL regret}} + \underbrace{\sum_{t=1}^T (\hat{G}_t(c_t) - G_t(c_t)) + \sum_{t=1}^T (G_t(c^*) - \hat{G}_t(c^*))}_{\text{Generalization error}},
\]
where $c^* = \argmax_{c \in \Ccal} \sum_{t=1}^T G_t(c)$.

Using standard uniform convergence bounds based on Rademacher complexity, we have that with probability $1 - \delta/2$, for any $c \in \Ccal, r \in \Rcal$:
\begin{equation} 
\left| \frac{1}{m} \sum_{i = 1}^m c(x_i) r(x_i, y_i) -  \E_{(x,y) \sim \Dcal} [c(x) r(x, y)] \right| \leq O \left(\sqrt{\frac{ \ln 1/\delta}{m}} \right) + O(\mathsf{rad}_m(\Ccal \cdot \Rcal))
\end{equation}
Therefore,
\[
\left| (G_t(c) - \hat{G}_t(c)) \right| \leq O \left(\sqrt{\frac{ \ln 1/\delta}{m}} \right) + O(\mathsf{rad}_m(\Ccal \cdot \Rcal)),
\]
where $\mathsf{rad}_m(\Ccal \cdot \Rcal)$ is the Rademacher complexity of the product class $\Ccal \cdot \Rcal$.

Combining the bounds for the regret bounds and the generalization error, we have that with probability at least $1 - \delta$:
\[
\max_{c \in \Ccal} \sum_{t=1}^T G_t(c) - \sum_{t=1}^T G_t(c_t) \leq O \left(\sqrt{T \ln 1/\delta} \right) + O(T \cdot \mathsf{rad}_m(\Ccal \cdot \Rcal)).
\]
\end{proof}

\begin{lemma}[\cite{hazan2023oco}]\label{lem:ftrl}
Given a closed, convex decision set $\Kcal$ and a strongly convex, smooth  and twice differentiable regularizer $R : \Kcal \rightarrow \R$. For every $u \in \Kcal$, the Follow the Regularized Leader algorithm attains the following regret bound
\[
\mathrm{Reg}(T) \leq 2 \eta \sum_{t=1}^T {\|\nabla_t\|^*_t}^2 + \frac{R(u) - R(u_1)}{\eta}
\]
where $u_t \in \Kcal$ is the action at timestep $t$,
$\nabla_t$ is the gradient at $u_t$ of the loss function at time $t$,
and $\mbox{\( \|\cdot\|^*_t \)} = \|\cdot\|_{\nabla^{-2} R(u_t)}$ 
represents the dual norm defined by the regularizer $R$ at $u_t$.
\end{lemma}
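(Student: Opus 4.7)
My plan is to follow the standard Follow-The-Leader / Be-The-Leader (FTL-BTL) argument, adapted to the regularized setting, and close it with a local-norm stability bound derived from the strong convexity of the regularized objective.

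First, I would introduce the cumulative regularized objective $F_t(u) = \eta \sum_{s=1}^{t-1} \langle \nabla_s, u \rangle + R(u)$, so that by definition $u_t = \argmin_{u \in \Kcal} F_t(u)$. A standard one-line induction (the ``Be-The-Leader'' lemma, in its regularized form) then gives, for every $u \in \Kcal$,
\[
\sum_{t=1}^T \langle \nabla_t, u_{t+1} - u \rangle \;\le\; \frac{R(u) - R(u_1)}{\eta},
\]
by comparing the minimum value of $F_{t+1}$ against $F_{t+1}(u)$ telescopically. This handles the ``comparator'' side of the regret but leaves a shifted iterate $u_{t+1}$ rather than the actual play $u_t$, so I would split $\langle \nabla_t, u_t - u \rangle = \langle \nabla_t, u_t - u_{t+1} \rangle + \langle \nabla_t, u_{t+1} - u \rangle$ and bound the two pieces separately.

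Second, I would bound the stability term $\langle \nabla_t, u_t - u_{t+1} \rangle$ using strong convexity of $F_{t+1}$ in the local norm $\| \cdot \|_{\nabla^2 R(u_t)}$. Since the losses are linear, $\nabla^2 F_{t+1}(u) = \nabla^2 R(u)$, and a second-order Taylor expansion of $F_{t+1}$ around $u_t$ combined with first-order optimality of $u_{t+1}$ (i.e.\ $\langle \nabla F_{t+1}(u_{t+1}), u_t - u_{t+1}\rangle \ge 0$) yields $\|u_t - u_{t+1}\|_{\nabla^2 R(u_t)} \lesssim \eta \|\nabla_t\|_t^{*}$, where $\|\cdot\|_t^{*}$ is the dual local norm. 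A Cauchy-Schwarz step in the local norm then gives $\langle \nabla_t, u_t - u_{t+1}\rangle \le 2\eta (\|\nabla_t\|_t^{*})^2$. Summing over $t$ and adding this to the BTL bound yields exactly the claimed inequality.

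The main obstacle, as usual for FTRL, is the second step: the Hessian $\nabla^2 R$ varies between $u_t$ and $u_{t+1}$, yet the bound is stated in terms of the local norm at $u_t$ only. Handling this cleanly requires either (i)~invoking a mean-value form of Taylor's theorem along the segment $[u_t, u_{t+1}]$ and showing that the Hessian does not change too much there, or (ii)~using a Bregman-divergence identity $D_R(u_{t+1}, u_t) + D_R(u_t, u_{t+1}) = \langle \nabla R(u_t) - \nabla R(u_{t+1}), u_t - u_{t+1}\rangle$ together with the optimality conditions for $u_t$ and $u_{t+1}$. A second subtlety is the presence of the constraint set $\Kcal$: for interior iterates the first-order optimality is an equality, but at the boundary I would need to use the inequality form of the KKT conditions, which is enough because the inner product with $u_t - u_{t+1}$ still has the right sign. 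Once these two points are settled, the constant $2$ in front of $\eta \sum_t (\|\nabla_t\|_t^{*})^2$ is absorbed into the Taylor remainder and the result follows immediately by combining the two bounds.
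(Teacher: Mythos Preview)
The paper does not prove this lemma; it is stated with a citation to \cite{hazan2023oco} and invoked as a black box in the proof of \Cref{lem:eg_dowal_max}. Your proposal outlines the standard FTL--BTL argument with a local-norm stability step, which is precisely the textbook proof from the cited reference, so your approach is correct and coincides with what the paper is implicitly importing.
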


\subsection{Frank-Wolfe reduction to ERM oracle}
\label{sec:frank-wolfe}

In this section we explain how to implement a regularized ERM oracle over $\Ccal$ by
a sequence of calls to a standard ERM oracle over $\Ccal$. The algorithm for the regularized
ERM oracle is presented below as \Cref{alg:frank-wolfe}. Its analysis is summarized by the
following lemma.

\begin{lemma}[Frank-Wolfe with explicit convex combination]
\label{lem:frank-wolfe}
Given a dataset \(\{(x_i, y_i)\}_{i=1}^m\), a class of functions \(\Ccal \subseteq \{ c : \Xcal \to [-1,1] \}\), a ERM oracle for \(\Ccal\), and parameters \(\eta, \epsilon > 0\), \Cref{alg:frank-wolfe} returns an \(\epsilon\)-approximate solution \(c^*\) to the entropy-regularized ERM problem
\[
 {\argmin_{c \in \Ccal}}^\eps \Biggl\{- \eta \sum_{i=1}^m c(x_i)y_i + \sum_{i=1}^m (c(x_i) + 2) \log (c(x_i) + 2) \Biggr\}
\]
after \(O(m/\epsilon)\) iterations. Moreover, the returned \(c^*\) is a convex combination of functions in \(\Ccal\), i.e., \(c^* = \sum_{j} \alpha_j c_j\) with \(\sum_j \alpha_j = 1\) and \(\alpha_j \geq 0\), and the algorithm also provides the weights \(\{\alpha_j\}\).
\end{lemma}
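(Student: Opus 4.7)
The plan is to recognize that \Cref{alg:frank-wolfe} (not shown in the excerpt but described in the surrounding text) is the classical Frank-Wolfe (conditional gradient) method applied to the convex program
\[
    \min_{c \in \mathrm{conv}(\Ccal)} F(c) \quad \mbox{where} \quad
    F(c) = - \eta \sum_{i=1}^m c(x_i) y_i + \sum_{i=1}^m (c(x_i)+2)\log(c(x_i)+2).
\]
The first step is to embed the feasible set in $\mathbb{R}^m$ via the evaluation map $c \mapsto u(c) = (c(x_1),\ldots,c(x_m))$ and observe that the induced feasible region $\Kcal = \{u(c) : c \in \mathrm{conv}(\Ccal)\}$ lies in the hypercube $[-1,1]^m$, whose $\ell_2$-diameter is at most $2\sqrt{m}$. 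In this embedding $F$ becomes a function of the $m$ variables $u_i = c(x_i)$ whose Hessian is the diagonal matrix with entries $1/(u_i+2) \in [\tfrac13, 1]$ on $\Kcal$. Hence $F$ is $L$-smooth with respect to $\|\cdot\|_2$ with $L = 1$.

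The second step is to verify that the linear-minimization subproblem in each Frank-Wolfe iteration can be solved by one call to the ERM oracle for $\Ccal$. Explicitly, the step direction at iterate $c_k$ is
\[
    s_k \in \argmin_{s \in \mathrm{conv}(\Ccal)} \sum_{i=1}^m \nabla_i F(c_k) \cdot s(x_i),
    \qquad \nabla_i F(c_k) = -\eta y_i + \log(c_k(x_i)+2) + 1.
\]
Since the objective is linear in $s$, its minimum over $\mathrm{conv}(\Ccal)$ is attained at an extreme point, so this reduces to minimizing $\sum_i w_i \, s(x_i)$ over $s \in \Ccal$ with weights $w_i = \nabla_i F(c_k)$, which is exactly what the ERM oracle of \Cref{def:reg-erm}'s surrounding framework solves. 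The weights $w_i$ are well-defined because $c_k(x_i)+2 \in [1,3]$ along all iterates, so the logarithm never blows up.

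The third step is to invoke the standard $O(LD^2/k)$ convergence guarantee of Frank-Wolfe with step size $\gamma_k = 2/(k+2)$ (see, e.g., Jaggi's analysis): after $k$ iterations,
\[
    F(c_k) - \min_{c \in \mathrm{conv}(\Ccal)} F(c) \; \leq \; \frac{2 L D^2}{k+2} \; \leq \; \frac{8 m}{k+2}.
\]
Setting $k = \lceil 8 m / \epsilon \rceil = O(m/\epsilon)$ makes the suboptimality at most $\epsilon$, which yields the claimed oracle complexity. Finally, maintaining the convex combination is automatic: initializing $c_0$ as any single element of $\Ccal$ (weight $1$) and updating $c_{k+1} = (1-\gamma_k) c_k + \gamma_k s_k$ preserves the invariant that $c_k = \sum_j \alpha_j^{(k)} c_j$ with explicit non-negative weights summing to $1$, since each update simply rescales the existing weights by $(1-\gamma_k)$ and appends the new atom $s_k$ with weight $\gamma_k$.

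The main obstacle I anticipate is purely bookkeeping: verifying that the iterates actually remain in the interior of the domain of the regularizer (so that gradients exist and are bounded) and quantifying the smoothness constant sharply enough that the iteration count is $O(m/\epsilon)$ rather than $O(m^2/\epsilon)$ or worse. Both issues are resolved by the observation above that $u + 2 \in [1,3]$ throughout, which gives both a Lipschitz bound on $\nabla F$ and a constant smoothness parameter in the $\ell_2$ norm; no delicate Bregman-type analysis is required.
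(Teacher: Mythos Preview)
Your proposal is correct and matches the paper's proof essentially line for line: embed the feasible set in $[-1,1]^m$, bound the $\ell_2$-diameter by $2\sqrt{m}$, observe that the Hessian of the entropy regularizer is diagonal with entries bounded by a constant so the objective is $O(1)$-smooth, and invoke the standard $O(LD^2/k)$ Frank-Wolfe rate (the paper cites Hazan's conditional-gradient lemma rather than Jaggi, but the content is the same). Your additional remarks on why the iterates stay in the domain of the regularizer and how the convex-combination weights are maintained are correct and slightly more explicit than the paper's version.
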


\paragraph{Overview of \Cref{alg:frank-wolfe}:}

Each iteration of \Cref{alg:eg_dowal_max} requires (approximately) solving a 
constrained concave minimization problem $\min \{ g(u) \, \mid \, u \in \mathcal{K} \}$
where 
\begin{equation} \label{eq:obj-frank-wolfe} 
    g(u) = - \eta \sum_{s=1}^{t} \sum_{i=1}^m r_s(x_i,y_i) u_i \;+\; \sum_{i=1}^m (u_i + 2) \log (u_i + 2) 
\end{equation}
and, as in the proof of~\eqref{lem:eg_dowal_max}, $\mathcal{K}$ denotes the convex hull of the set of vectors 
$
    u(c) = 
    (c(x_1), \, c(x_2), \, \ldots, \, c(x_m))       
$
as $c$ ranges over $\Ccal$.
In this section we assume we are given a \emph{ERM oracle} for $\Ccal$, that is,
an algorithm for selecting the $c \in \Ccal$ that minimizes $\sum_{i=1}^m w_i c(x_i)$ for a 
given set of $(x_i,w_i)$ pairs. This is equivalent to an oracle for minimizing linear functions
over $\mathcal{K}$. \Cref{alg:frank-wolfe} below uses the ERM oracle to implement
the Frank-Wolfe method, also known as conditional gradient descent, for approximately minimizing 
the convex function $g(u)$ over $\mathcal{K}$.
At each iteration, the algorithm computes the gradient of the objective function, invokes the ERM oracle to find an extreme point in the original function class \(\Ccal\), and then updates the current solution by forming a convex combination. After \( O \big( \frac{m}{\eps} \big) \) iterations, it returns an \(\eps\)-approximate solution to the original problem, expressed as a convex combination of functions from \(\Ccal\), together with the associated weights.
Since \Cref{alg:eg_dowal_max} calls \Cref{alg:frank-wolfe} using the parameters
$m=T, \, \eps = \eta = \sqrt{1/T}$, we see that each such call requires $O(T^{3/2})$
iterations, hence $O(T^{3/2})$ calls to the ERM oracle.

\begin{algorithm}
\caption{Frank-Wolfe for Entropy Regularized ERM with Explicit Convex Combination}
\label{alg:frank-wolfe}
\begin{algorithmic}[1]
\Procedure{FrankWolfe}{$\{(x_i, y_i)\}_{i=1}^m, \Ccal, \eta, \eps$}
    \State Initialize $c_1$ to an arbitrary function in $\Ccal$
    \State Let $\alpha_1 = 1$ and let $\alpha_j = 0$ for all $j > 1$
    \For{$t = 1, 2, \ldots, T$}
        \State For each $i \in [m]$, set $w_i = - \eta y_i + \log (c_t(x_i) + 2) + 1$
        \State Call the ERM oracle on $\{(x_i, w_i)\}_{i=1}^m$ to obtain $s_t \in \Ccal$
        \State Set $\gamma_t = \frac{2}{t+1}$
        \State Update $c_{t+1} = (1-\gamma_t) c_t + \gamma_t s_t$
        \State Update the weights of the convex combination:
        \[
        \text{For } j < t: \alpha_j \leftarrow (1-\gamma_t) \alpha_j, \quad
        \alpha_t \leftarrow \alpha_t + \gamma_t
        \]
        \State Since $c_1$ was chosen initially, we now have a convex combination:
        \[
        c_{t+1} = \sum_{j=1}^{t} \alpha_j s_j + \biggl(1 - \sum_{j=1}^t \alpha_j\biggr) c_1.
        \]
        (Initially, $\alpha_1 = 1$ and no $s_j$ are chosen, so $c_1$ is the starting point. After each iteration, we redistribute weights accordingly. Note that the $\alpha_j$ for $j>1$ were initially zero and only become nonzero when their corresponding $s_j$ appears.)
    \EndFor
    \State \textbf{return} $c_1, s_1, \ldots, s_T$ and the final weights $(\alpha_1, \alpha_2, \ldots, \alpha_T)$
\EndProcedure
\end{algorithmic}
\end{algorithm}

\begin{lemma}[Conditional Gradient Descent; \citep{hazan2023oco}]\label{lem:cgd}
Let $K \subset \R^n$ with bounded $\ell_2$ diameter $R$. Let $f$ be a $\beta$-smooth function on $K$, then the sequence of points $x_t \in K$ computed by the conditional gradient descent algorithm satisfies
\[
f(x_t) - f(x^*) \leq \frac{2 \beta R^2}{t + 1}
\]
for all $t \geq 2$ where $x^* \in \argmin_{x \in K} f(x)$
\end{lemma}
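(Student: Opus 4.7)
The plan is to follow the classical convergence analysis of Frank--Wolfe, relying on only two ingredients: the $\beta$-smoothness of $f$, which supplies a quadratic upper bound on one-step progress, and the linear-minimization step $s_t \in \argmin_{s \in K} \langle \nabla f(x_t), s\rangle$ used in conditional gradient descent, which supplies a lower bound on progress toward $x^\ast$ via convexity of $f$.

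First, I would derive a one-step recursion for the optimality gap $h_t := f(x_t) - f(x^\ast)$. Writing the update as $x_{t+1} = x_t + \gamma_t (s_t - x_t)$ and using $\|s_t - x_t\|_2 \leq R$ since both iterates lie in $K$, $\beta$-smoothness gives
\[
f(x_{t+1}) \;\leq\; f(x_t) \;+\; \gamma_t \langle \nabla f(x_t), s_t - x_t \rangle \;+\; \frac{\beta \gamma_t^2 R^2}{2}.
\]
Because $s_t$ minimizes $\langle \nabla f(x_t), \cdot \rangle$ over $K$, I have $\langle \nabla f(x_t), s_t - x_t \rangle \leq \langle \nabla f(x_t), x^\ast - x_t \rangle$, and convexity of $f$ then yields $\langle \nabla f(x_t), x^\ast - x_t \rangle \leq f(x^\ast) - f(x_t) = -h_t$. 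Substituting back produces the standard Frank--Wolfe recursion
\[
h_{t+1} \;\leq\; (1 - \gamma_t)\, h_t \;+\; \frac{\beta \gamma_t^2 R^2}{2}.
\]

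Next, I would plug in the prescribed step size $\gamma_t = 2/(t+1)$ and prove $h_t \leq 2\beta R^2/(t+1)$ by induction on $t \geq 2$. For the base case $t = 2$, observe that $\gamma_1 = 1$, so $x_2 = s_1$; the one-step inequality at $t=1$ with $(1-\gamma_1) = 0$ gives $h_2 \leq \beta R^2 / 2 \leq 2\beta R^2/3$, which handles the base case regardless of the arbitrary initial iterate $x_1 \in K$. For the inductive step, assuming $h_t \leq 2\beta R^2/(t+1)$, the recursion yields
\[
h_{t+1} \;\leq\; \frac{t-1}{t+1}\cdot\frac{2\beta R^2}{t+1} \;+\; \frac{2 \beta R^2}{(t+1)^2} \;=\; \frac{2\beta R^2 \cdot t}{(t+1)^2} \;\leq\; \frac{2\beta R^2}{t+2},
\]
where the final inequality uses $t(t+2) \leq (t+1)^2$.

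I do not anticipate a genuine obstacle: this is a textbook argument, and the only mildly subtle point is the treatment of the base case. The theorem is stated for $t \geq 2$ precisely because $h_1$ cannot be bounded a priori (the initial $x_1$ is chosen arbitrarily from $K$), but the aggressive first step $\gamma_1 = 1$ drives $(1-\gamma_1) h_1$ to zero, after which the recursion alone controls all subsequent iterates. Note that the lemma does not require $f$ to be convex for the smoothness inequality itself, but convexity is essential to convert the linear-minimization guarantee on $s_t$ into a bound involving $h_t$; in our application (\Cref{alg:frank-wolfe}), $f$ is the entropy-regularized objective, which is strictly convex, so this hypothesis is satisfied.
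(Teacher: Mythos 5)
Your proof is correct and is the standard Frank--Wolfe analysis; the paper itself gives no proof of this lemma, citing it directly from Hazan's textbook, and your argument is precisely the one underlying that citation (smoothness recursion, linear-minimization step bounded via convexity, induction with $\gamma_t = 2/(t+1)$). Your observation that convexity of $f$ is tacitly assumed in the lemma statement but is needed for the argument --- and is satisfied by the entropy-regularized objective in \Cref{alg:frank-wolfe} --- is a worthwhile clarification that the paper leaves implicit.
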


\begin{proof}[Proof of \Cref{lem:frank-wolfe}]
The objective function $g : \mathcal{K} \to \R$ defined in Equation~\eqref{eq:obj-frank-wolfe} 
is well-defined and differentiable on $\mathcal{K}$.
Its gradient is
\[
\nabla g(u) = \begin{bmatrix}
- \eta z_1 + \log (u_1 + 2) + 1 \\
- \eta z_2 + \log (u_2 + 2) + 1 \\
\vdots \\
- \eta z_m + \log (u_m + 2) + 1
\end{bmatrix}
\]
where $z_i$ denotes the sum $z_i = \sum_{s=1}^t r_s(x_i,y_i)$.

Now we compute the smoothness parameter and the diameter of the domain.
Over the domain \([-1, 1]\), the second derivative of \( (u_i+2) \log(u_i + 2) \) with respect to \( u_i \) is at most \( 1 \). Thus, \( g \) is \(\beta\)-smooth with \(\beta = 2 \). The \(\ell_2\)-diameter of \( K \) is at most \( R = 2 \sqrt{m} \).
Applying \Cref{lem:cgd} to \( g \) on \( K \):
\[
g(u_T) - g(u^*) \leq \frac{2 \beta R^2}{T+1} \leq \frac{16 m}{T+1}.
\]
Choosing \( T \geq \frac{16 m}{\eps} \) 
leads to $g(u_T) - g(u^*) < \eps$.
\end{proof}

\clearpage
\bibliography{ref.bib}
\clearpage
\appendix

\section{Relating Proper Calibration to Existing Notions of Calibration}
\label{app:comparisons}

In this appendix, we document how Proper Calibration compares to prior notions of calibration.
The standard notion of calibration, $\ell_1$-calibration, being the strongest notion generally considered, implies proper calibration.
While other notions previously considered do not imply proper calibration, proper calibration implies a number of existing notions such as U-calibration, smooth calibration, and $\ell_\infty$-calibration. 

The one exception is the recent notion of Calibration Decision Loss (CDL) introduced by \citet{hu2024calibrationerrordecisionmaking} to guarantee swap regret simultaneously for decision-making.
We show that CDL is incomparable to bounded proper calibration, and thus, incomparable to Decision OI.
In particular, CDL cannot be used in the existing Loss OI framework to guarantee omniprediction, but it also need not be satisfied by our omnipredictors.

We will simplify notation by dropping the contexts $x_t$, that is, we will only consider predictions $p_t = p_t(x_t)$ and outcomes $y_t$.

\subsection{Calibration is stronger than Proper Calibration}

Recall, one way to define $\ell_1$-calibration is as weighted calibration according to \emph{all} weight functions.
\begin{definition*}[$\ell_1$-calibration]
For a sequence of $T$ predictions $\pb$ and outcomes $\yb$,
\[
    \ell_1\calerr (\pb,\yb) = \left| \sup_{w:[0,1] \to [-1,1]} \sum_{t=1}^T w(p_t) (y_t - p_t)\right|.
\]
\end{definition*}

From this definition, we immediately conclude that the $\ell_1$-calibration error upper bounds the proper calibration error, as $\wprop \subseteq \{w:[0,1]\to[-1,1]\}$ is a strict subset of all functions from $[0,1]$ to $[-1,1]$.

\paragraph{Proper Calibration does not imply $\ell_1$-calibration.}
Here we argue that proper calibration is weaker than $\ell_1$-calibration by exhibiting a sequence of predictions, where the proper calibration error is bounded by a constant but the $\ell_1$-calibration error grows asymptotically with $T$.

\begin{example}\label{ex:ell1}
    Consider a sequence of $T$ outcomes $\yb$ that come in $m = 2k$ epochs of equal length, for an even $k$.
    For each $i \in \{1,2,\hdots,2k\}$, in the $i$th epoch the prediction for each time period $p_t = i/m$.
    In the first $k$ epochs, where $i \in \{1,\hdots,k\}$, if $i$ is even, the outcomes are all $0$; if $i$ is odd, then an $i/k$ fraction are $1$.
    In the second $k$ epochs where $i \in \{k+1,\hdots,2k\}$, if $i$ is even, then a $(i-k)/k$ fraction are $1$; if $i$ is odd, the outcomes are all $1$.
\end{example}
In this example, the predictions and outcomes are chosen such that the sign of the difference $y_i-p_i$ alternates in each epoch.
\begin{gather*}
\begin{array}{c|cccccccccc}
i&1&2&3&\hdots&k&k+1&k+2&\hdots&2k-1&2k  \\
\hline
\pb&1/m&2/m&3/m&\hdots&k/m&(k+1)/m&(k+2)/m&\hdots&(2k-1)/m&2k/m  \\
\hline
\yb&1/k&0&3/k&\hdots&0&1&2/k&\hdots&1&1\\
\hline
y_i-p_i&+1/m&-2/m&+3/m&\hdots&-1/2&1/2-1/m&-1/2+2/m&\hdots&-1/m&0
\end{array}
\end{gather*}
To analyze the calibration regrets for this sequence, consider the difference between predicted values in the $i$th epoch and the outcomes.

\paragraph{Upper bound on proper calibration regret.}
Consider the difference between the predicted values in the $i$th epoch and the outcomes.
For the first $k$ epochs, where $i \in \{1,\hdots,k\}$, the difference between $y_i-p_i$ in the $i$th epoch is given as:
$$\frac{(-1)^{i+1}\cdot i}{m}$$
For the second $k$ epochs, where $i \in \{k+1,\hdots,2k\}$,  the difference between $y_t-p_t$ is given as:
$$\frac{(-1)^{2k-i+1}\cdot(2k-i)}{m} = \frac{(-1)^{j+1}\cdot j}{m}$$
for $j = 2k-i$, ranging from $j=k-1$ down to $0$.

The proper calibration error of this sequence can be bounded as follows.
\begin{align*}
    \sup_{s \in \set{\pm}, \theta \in [0,1]} \sum_{t =1}^{T} s \cdot \Th_\theta(p_t) \cdot (y_t - p_t)
    &= \frac{T}{m} \cdot \sup_{s \in \set{\pm}, \theta \in [0,1]} \sum_{i =1}^{m} s \cdot \Th_\theta(p_i) \cdot (y_i - p_i)\\
    &= \frac{T}{m} \cdot \sup_{s \in \set{\pm}, \theta \in [0,1]} s \cdot \left(\sum_{i =1}^{k} \Th_\theta(p_i) \cdot \frac{(-1)^{i+1} \cdot i}{m} + \sum_{j =0}^{k-1} \Th_\theta(p_{2k-j}) \frac{(-1)^{j+1}\cdot j}{m}\right)\\
    &\le \frac{2 T}{m} \cdot \card{\sum_{i =0}^{k} \frac{(-1)^{i+1} \cdot i}{m}}
\end{align*}
where the final inequality follows from the fact that the choice of threshold that maximizes the regret is in between $p_k = k/m = 1/2$ and $p_{k+1} = k+1/m = 1/2 + 1/m$, with a negative sign $s = -1$.

The magnitude of $\card{\sum_{i=1}^k (-1)^{i+1}\cdot i}$ scales linearly with $k$, so in all, the regret is bounded as follows.
\begin{gather*}
    \frac{2 T}{m} \cdot \card{\sum_{i =0}^{k} \frac{(-1)^{i+1} \cdot i}{m}} \le \frac{2 T}{m} \cdot \frac{O(k)}{m} = O(T/k)
\end{gather*}

\paragraph{Lower Bound on $\ell_1$-calibration regret.}
To track the $\ell_1$-calibration error, each term from the sums above contribute with their absolute value within the summation.
\begin{align*}
    \sup_{w:[0,1]\to[-1,1]} \sum_{t=1}^T w(p_t) (y_t - p_t)
    &= \sum_{t=1}^T \card{y_t - p_t}\\
    &\ge \frac{2T}{m} \cdot \sum_{i=1}^{k-1} \card{\frac{i}{m}}
\end{align*}
The magnitude of $\sum_{i=1}^{k-1} \card{i}$ scales quadratically with $k$, so in all, the regret is bounded as follows.
\begin{gather*}
    \frac{2 T}{m} \cdot \sum_{i =1}^{k-1} \card{\frac{i}{m}} \ge \frac{2 T}{m} \cdot \frac{\Omega(k^2)}{m} = \Omega(T)
\end{gather*}

\paragraph{Separation.}
In our construction, there are at least $2k^2$ time steps (as each epoch needs to reason about fractions of outcomes to precision $1/k$), so taking $k = \Theta(\sqrt{T})$, the proper calibration regret is upper bounded by $O(\sqrt{T})$, whereas the $\ell_1$-calibration regret is lower bounded by $\Omega(T)$.

\subsection{Proper calibration is stronger than other prior notions}

Here, we show that proper calibration implies a number of prior notions of calibration, but is not implied by them.
For each of $\ell_\infty$-calibration, U-calibration, and smooth calibration, we give examples that show upper bounding regret with respect to any of these notions is insufficient to give the same asymptotic upper bound for proper calibration regret.

\subsubsection*{$\ell_\infty$-calibration}
To define $\ell_\infty$-calibration, we have to fix prediction level sets to be some discretization of the $[0,1]$ interval.
\begin{definition*}
Fix $m \in \Nbb$ and consider predictions that live in multiples of $1/m$, $p \in \{0, 1/m, 2/m, \ldots, 1 \}$.
For such a sequence of $T$ predictions $\pb$ and outcomes $\yb$
\[
\ell_\infty\calerr(\pb,\yb) = \max_{v \in \{0, 1/m,\ldots, 1\}} \left| \sum_{t=1}^T \mathbf{1} (p_t = v) (y_t - p_t) \right|
\]
\end{definition*}

\paragraph{Proper Calibration implies $\ell_\infty$-calibration.} 

Fix a discretization parameter $m$, a predictor that predicts values in $\{0,1/m, 2/m, \ldots, 1\}$. First we note that for any $i \in [m]$
\begin{gather*}
    \mathbf{1}(p = i/m) = \frac{1}{2} \left(\Th_{i/m}(p) - \Th_{(i+1)/m}(p)\right)
\end{gather*}
Thus, the $\ell_\infty$ calibration error can be bounded as follows:
\begin{align*}
& \max_{v \in \{0, 1/m, 2/m, \ldots, 1\}} \left| \sum_{t=1}^T \mathbf{1} (p_t = v) \cdot (y_t - p_t) \right| \\
& \quad = \frac{1}{2} \cdot \max_{p \in \{0, 1/m, 2/m, \ldots, 1\}} \left| \sum_{t=1}^T \left(\Th_{i/m}(p_t) - \Th_{(i+1)/m}(p_t)\right) \cdot (y_t - p_t) \right| \\
& \quad \le \sup_{s \in \set{\pm}, \theta \in [0,1]} \sum_{t=1}^T s \cdot \Th_\theta(p_t) \cdot (y_t - p_t)
\end{align*}

\paragraph{$\ell_\infty$-calibration does not imply Proper Calibration.}
The separation between $\ell_\infty$-calibration and proper calibration leverages the fact that predictions that are proper calibrated cannot be consistently biased in the same direction across prediction intervals, whereas $\ell_\infty$-calibration only cares about the maximum deviation over prediction intervals.

\begin{example} \label{ex:biased}
    Consider a sequence of $T$ outcomes $\yb$ that comes in $m$ epochs of equal length.
    In the $i$th epoch for each $i \in \{1,\hdots,m\}$, an $i/m$ fraction of the outcomes are $1$ and the remainder are $0$.
    Consider a sequence of predictions $\pb$ supported on $m$ predictions $\{0,1/m,2/m,\hdots,(m-1)/m\}$, where for each time period in the $i$th epoch, the prediction $p_t = (i-1)/m$.
\end{example}

In this example, we can lower bound the proper calibration error by the bias of the predictor.
Consider the threshold at $0$, $\Th_0(p) = 1$, and consider the regret associated with this trivial threshold.
\begin{align*}
    \sup_{s \in \set{\pm}, \theta \in [0,1]} \sum_{t=1}^T s \cdot \Th_\theta(p_t) \cdot (y_t - p_t)  &\ge \sum_{t =1}^T (y_t - p_t)\\
    &= \frac{T}{m}\cdot \sum_{i=1}^m \left(\frac{i}{m} - \frac{i-1}{m}\right)\\
    &= \frac{T}{m} \cdot m \cdot \frac{1}{m}\\
    &= \frac{T}{m}
\end{align*}
In other words, the predictions are consistently biased, and proper calibration detects this bias, with regret $T/m$.

The $\ell_\infty$-calibration error, however, only detects the maximum deviation.
\begin{align*}
    \max_{v \in \{0, 1/m,\ldots, 1\}} \left| \sum_{t=1}^T \mathbf{1} (p_t = v) (y_t - p_t) \right|
    &= \frac{T}{m}\cdot \max_{i\in [m]} \left(\frac{i}{m} - \frac{i-1}{m}\right)\\
    &= \frac{T}{m} \cdot \left(\frac{i}{m} - \frac{i-1}{m}\right)\\
    &= \frac{T}{m^2}
\end{align*}
In our construction, there are at least $m^2$ time steps (as each epoch needs to reason about fractions of outcomes to precision $1/m$), so taking $m = \Theta(\sqrt{T})$, the $\ell_\infty$-calibration error is upper bounded by $O(1)$, whereas the proper calibration error is lower bounded by $\Omega(\sqrt{T})$.

\subsubsection*{U-Calibration}
Recall that U-Calibration measures calibration in terms of the worst-case proper loss regret.
\begin{definition*}[\citet{kleinberg2023u}]
    For a sequence of $T$ predictions $\pb$ and outcome $\yb$, let $p^* = \frac{1}{T} \sum_{t} y_t$; then,
\[
\mathrm{UCal} (\pb, \yb) = \sup_{\ell \in \lprop} \left[ \sum_{t=1}^T \ell (p_t, y_t) - \ell (p^*, y_t) \right]
\]
\end{definition*}

\paragraph{Proper Calibration implies U-Calibration.}\label{par:prop_to_ucal}
We bound the U-calibration error in terms of the proper calibration error for any sequence.
To do this, first consider the following equality;
fixing $p_t,y_t$, for any $p \in [0,1]$:
\begin{align*}
    \E_{y \sim \Ber(p_t)}[\ell(p,y)] - \ell(p,y_t)
    &= \left(p_t \cdot \ell(p,1) + (1-p_t) \cdot \ell(p,0)\right) - \left(y_t \cdot \ell(p,1) + (1-y_t) \cdot \ell(p,0)\right)\\
    &= \left(p_t \cdot \Delta \ell(p) + \ell(p,0)\right) - \left(y_t \cdot \Delta \ell(p) + \ell(p,0)\right)\\
    &= (p_t-y_t) \cdot \Delta \ell(p)
\end{align*}
Then, consider expanding the U-calibration error as follows.
\begin{align*}
    \sup_{\ell \in \lprop} \left[ \sum_{t=1}^T \ell (p_t, y_t) - \ell (p^*, y_t) \right]
    &= \sup_{\ell \in \lprop} \left[ \sum_{t=1}^T \ell (p_t, y_t) - \E_{y \sim \mathrm{Ber}(p_t)}[\ell (p_t, y)] + \E_{y \sim \mathrm{Ber}(p_t)}[\ell (p_t, y)] - \ell (p^*, y_t) \right]\\
    &\le \sup_{\ell \in \lprop} \left[ \sum_{t=1}^T \ell (p_t, y_t) - \E_{y \sim \mathrm{Ber}(p_t)}[\ell (p_t, y)] + \E_{y \sim \mathrm{Ber}(p_t)}[\ell (p^*, y)] - \ell (p^*, y_t) \right] \\
    &= \sup_{\ell \in \lprop} \left[ \sum_{t=1}^T (y_t - p_t) \Delta \ell (p_t) + (y_t - p_t) \Delta \ell (p^*) \right] \\
    &\leq \sup_{\ell \in \lprop} \left| \sum_{t=1}^T (y_t - p_t) \Delta \ell (p_t) \right| + \sup_{\ell \in \lprop} \left| \sum_{t=1}^T (y_t - p_t) \Delta \ell (p^*) \right|\\
    &\le 2 \cdot \sup_{\ell \in \lprop} \left| \sum_{t=1}^T (y_t - p_t) \Delta \ell (p_t) \right|
\end{align*}
where the final inequality follows by the fact that for all $\ell \in \lprop$, the weight function $\Delta\ell(p^*)$ is a constant across all $t$.
This constant function is realizable by some choice of proper loss applied to $p_t$.
Thus, the U-Calibration error is upper bounded by a constant factor of the proper calibration error.

\paragraph{U-Calibration does not imply Proper Calibration.}
We present a sequence of predictions and outcomes where the U-calibration regret is $\le 0$, but the proper calibration error grows linearly in $T$.

\begin{example}\label{ex:u-cal}
Consider a sequence of $T$ predictions $\yb$ where $y_t = 1$ if $t > T/2$ and $0$ otherwise. Also a sequence of $T$ predictions $\pb$ where $p_t = 0.9$ if $t > T/2$, and $0.1$ otherwise.
\end{example}

First we show that the sequence of predictions are U-calibrated. To do this, we use the upper bound in \cite{kleinberg2023u} that $\ucal \leq 2 \vcal$ and bound $\vcal$. For any $v \in [0,1]$, we show that $\sum_{t=1}^T \ell_v (p_t, y_t) - \ell_v (0.5, y_t) < 0$ (since $0.5$ is the best fixed prediction). Plugging in the definition of $\ell_v (p,y) = (y - v) \mathrm{sgn} (v - p)$
\[
\sum_{t=1}^T \ell_v (p_t, y_t) = \frac{T}{2} (-v) \mathrm{sgn}(v - 0.1) + \frac{T}{2} (1-v) \mathrm{sgn}(v - 0.9)
\]
Similarly, the second term simplifies to 
\[
\sum_{t=1}^T \ell_v (0.5, y_t) = \frac{T}{2} (-v) \mathrm{sgn}(v - 0.5) + \frac{T}{2} (1-v) \mathrm{sgn}(v - 0.5)
\]
Now, we consider four cases for v:

\begin{itemize}
\item $v \leq 0.1$: In this case, all the $\mathrm{sgn}(v-p)$ terms have the same sign, so the difference between the two sums is 0.
\item $v \in (0.1, 0.5)$: The second terms of both expressions cancel out, but the first term of the second expression becomes positive, making the overall difference negative.
\item $v \in (0.5, 0.9]$: Similar to case 2, the second terms cancel out, and the first term of the second expression becomes positive, making the overall regret negative
\item $v > 0.9$: Similar to case 1, all the $\mathrm{sgn}(v-p)$ terms have the same sign, so the difference is 0.
\end{itemize}
In all cases, we have shown that $\sum_{t=1}^T \ell_v (p_t, y_t) - \ell_v (0.5, y_t) < 0$. Therefore, the predictor is U-calibrated.

Now, we show that the sequence of predictions has linear proper calibration error. To show this, it suffices to consider weight functions $\Delta \ell_v (p) = \mathrm{sgn}(v - p)$ for $v = 0.5$. The weighted calibration error for this weight function is
\[
\sum_{t=1}^T \Delta \ell_v (p_t) (y_t - p_t)
\]
we can split the sum into two parts
\[
\sum_{t=1}^T \mathrm{sgn} (v - p_t) (y_t - p_t) = T/2 (0 - 0.1) + T/2 (-1)(1 - 0.9) = -0.1 T
\]
Taking the absolute value shows that the proper calibration error of the predictor is linear in $T$.

\subsubsection*{Smooth Calibration}
Recall that Smooth Calibration is defined as a notion of weighted calibration using Lipschitz weight functions.
\begin{definition*}[\citet{kakade2008deterministic,foster2018smooth}]
For a sequence of $T$ predictions $\pb$ and outcome $\yb$,
\[
\mathrm{SmoothCal} (\pb, \yb) =  \max_{w \in \Wcal_\mathrm{Lip}} \left| \sum_{t=1}^T w(p_t(x_t)) (y_t - p_t(x_t)) \right|
\]
where $\Wcal_\mathrm{Lip}$ is the set of $1$-Lipschitz functions from $[0,1]$ to $[-1,1]$.
\end{definition*}

\paragraph{Proper Calibration implies Smooth Calibration.}
Proper calibration can be characterized in terms of signed threshold functions.
This characterization shows that proper calibration guarantees weighted calibration with respect to weight functions defined by the difference of monotone functions, as signed thresholds form a basis.
All Lipschitz functions can be expressed as a difference of monotone functions, so proper calibration implies smooth calibration.

Consequently, this implication further implies that proper calibration implies an upper bound on the lower distance to calibration \citep{blasiok2023unifying}.

\paragraph{Smooth Calibration does not imply Proper Calibration.}
This separation follows from the fact that proper calibration can encode highly non-Lipschitz tests, by the characterization in terms of threshold functions.
The following example highlights this difference.

\begin{example}\label{ex:smooth}
Consider a sequence of $T$ predictions $\yb$ where $y_t = 1$ if $t > T/2$ and $0$ otherwise. Also a sequence of $T$ predictions $\pb$ where $p_t = 1/2 - \eps$ if $t > T/2$, and $1/2 + \eps$ otherwise.
\end{example}

To upper bound the smooth calibration error, we consider an arbitrary weight function $w \in \Wcal_\mathrm{lip}$ and evaluate its weighted calibration error
\[
\sum_{t=1}^T w (p_t) (y_t - p_t) = \frac{T}{2} w(1/2 - \eps) (1/2 + \eps) + \frac{T}{2} w(1/2 + \eps) (- 1/2 - \eps)
\]
which simplifies to $\frac{T}{2} (w(1/2 - \eps) - w(1/2 + \eps)) (1/2 + \eps)$.
Since $w$ is $1$-lipschitz, the difference $(w(1/2 - \eps) - w(1/2 + \eps))$ is bounded by $\eps$ in magnitude. Thus, taking absolute values, we obtain that for all $1$-lipschitz weight functions $w$
$\sum_{t=1}^T w (p_t) (y_t - p_t)  \leq \eps T$

To lower bound the proper calibration error, it suffices to evaluate a single weight function $w(p) = \mathrm{sgn} (v - p)$ for $v = 0.5$.
\[
\sum_{t=1}^T \mathrm{sgn} (0.5 - p_t) (y_t - p_t) = T/2 (1/2 + \eps) + T/2 (-1)(- 1/2 - \eps) \geq T/4
\]

Thus, we can make the smooth calibration error arbitrarily small by setting $\eps$ appropriately, however, the proper calibration error will always remain linear in $T$.

\subsection{Calibration Decision Loss is incomparable to Proper Calibration}

\cite{hu2024calibrationerrordecisionmaking} introduced the notion of Calibration Decision Loss (CDL), which is a notion of swap regret simultaneous for decision-making losses.
For a sequence of predictions $\pb$ and outcomes $\yb$, and a time step $t$ where $p_t = i$, define the empirical swap prediction as follows. $$\hat{p_t} = \frac{1}{\card{\{s \in [T] : p_s = i\} }} \sum_{s=1}^T y_s \cdot \1[p_s = i]$$
CDL has a number of equivalent formulations (up to constant factors), but one is based on Bregman divergences between the selected predictions and the empirical swap predictions.
The CDL takes a worst-case choice over Bregman divergences derived from proper losses, and specifically, the V-Shaped losses.
\begin{gather*}
    \mathrm{CDL}(\pb,\yb) = \sup_{v \in [0,1]} \sum_{t=1}^{T}B_v(p_t,\hat{p_t})
\end{gather*}
where $B_v(p,q) = u_v(q) - u_v(p) + \nabla u_v(p) \cdot (p-q)$ is a Bregman divergence defined by the potential function $u_v(p) = -\E_{y \sim \Ber(p)}[\ell_v(y,p)]$ defined as the Bayes risk for the (negative) loss of the V-shaped loss $\ell_v$.\footnote{Note to better compare to proper calibration, we formulate these quantities in terms of our parameterization of the V-shaped losses, whereas \citet{hu2024calibrationerrordecisionmaking} formulate the divergence in terms of proper scoring functions.  Our formulation is equivalent to theirs up to a small constant factor.}

Some consequences of this definition that make comparison to proper calibration possible include the following.
\begin{itemize}
    \item $-\ell_v(y,p) = (y-v) \cdot \sgn(p-v)$
    \item $-\Delta \ell_v(p) = \sgn(p-v)$
    \item $u_v(p)
= -p \cdot \Delta \ell_v(p) - \ell_v(0,p)
=  (p-v) \cdot \sgn(p-v)$
    \item $\nabla u_v(p) = -\Delta \ell_v(p) = \sgn(p-v)$
\end{itemize}

With these facts in place, we can expand the Bregman divergence $B_v(p,q)$ as follows.
\begin{align*}
    B_v(p,q) &= u_v(q) - u_v(p) + \grad u_v(p) \cdot (p-q)\\
    &= (q-v) \cdot \sgn(q-v) - (p-v) \cdot \sgn(p-v) + \sgn(p-v) \cdot (p-q)\\
    &= \left(\sgn(q-v) - \sgn(p-v)\right) \cdot (q-v)
\end{align*}
That is, the Bregman divergence is equal to $0$ when $q$ and $p$ are on the same side of $v$, and otherwise is equal to $2 \cdot \card{q-v}$.

\paragraph{Calibration Decision Loss does not imply Proper Calibration.}
This separation follows from the fact that CDL does not detect consistent bias.
In particular, \Cref{ex:biased} gives a separation.
As we showed earlier, in this example, the proper calibration regret is lower bounded by $\Omega(\sqrt{T})$.
Thus, we need only to upper bound the CDL to show a separation.

Note that every prediction $p_t$ is close to $\hat{p_t}$; that is, $\card{p_t - \hat{p_t}} \le 1/m$.
Further, the prediction intervals the empirical swap intervals do not cross.
Formally, consider two epochs $i < j$; for all $s$ in the $i$th epoch and all $t$ in the $j$th epoch,
\begin{gather*}
    p_s < \hat{p_s} \le p_t < \hat{p_t}.
\end{gather*}
Thus, for any threshold $\theta \in [0,1]$, there is at most one epoch $i$ such that the prediction $(i-1)/m$ and empirical swap prediction $i/m$ are on opposite sides of $\theta$.
Thus, the CDL is upper bounded by the contribution from this epoch---matching the $\ell_\infty$-calibration regret.
\begin{align*}
    \mathrm{CDL}(\pb,\yb) &= \sup_{\theta \in [0,1]} \sum_{t=1}^T B_\theta(p_t,\hat{p_t})\\
    &\le \frac{T}{m} \cdot \max_{i \in [m],~ \eps \in (0,1/m)} ~B_{i/m-\eps}\left(\frac{i-1}{m},\frac{i}{m}\right)\\
    &\le \frac{T}{m} \cdot \frac{1}{m}\\
    &= \frac{T}{m^2}
\end{align*}
Again, taking $m = \Theta(\sqrt{T})$, the Calibration Decision Loss is upper bounded by $O(1)$, but the proper calibration regret scales as $\Omega(\sqrt{T})$.

\paragraph{Proper Calibration does not imply Calibration Decision Loss.}
In fact, \Cref{ex:ell1} shows a separation between proper calibration error and calibration decision loss.
As before, taking $k = \Theta(\sqrt{T})$, the proper calibration regret is upper bounded by $O(\sqrt{T})$.
Thus, it remains to lower bound the CDL.

Consider choosing the Bregman divergence $B_v$ for $v = 1/2$.
When $v$ sits between the predicted value and the empirical swap prediction, then by the definition of $B_v$, we get a contribution to the CDL.
In particular, in the first $k$ epochs, the odd epochs from $i = k/2+1$ to $k-1$ will be split and contribute to the CDL as:
\begin{align*}
    2 \cdot \card{\hat{p}_t - 1/2}
    &= 2 \cdot \card{\frac{i}{k} - \frac{1}{2}}\\
    &= \frac{2i - k}{k}\\
    &= \frac{2j}{k}
\end{align*}
for $j = i-k/2$ from $j = 1$ to $k/2-1$.
In the second $k$ epochs, the even epochs from $i = k+2$ to $3k/2$ will be split and contribute to the CDL as:
\begin{align*}
    2 \cdot \card{\hat{p}_t - 1/2}
    &= 2 \cdot \card{\frac{(i-k)}{k} - \frac{1}{2}}\\
    &= \frac{\card{2(i-k) - k}}{k}\\
    &= \frac{\card{2i - 3k}}{k}\\
    &= \frac{2j}{k}
\end{align*}
for $j = 3k/2-i$ from $j = 0$ to $k/2-2$.

In combination, the CDL can be expressed as follows.
\begin{align*}
\sup_{v \in [0,1]}\sum_{t=1}^T B_v(p_t,\hat{p}_t)
&\ge \frac{T}{m} \cdot \left(\sum_{\substack{j:\textrm{ odd }\\ 0 < j < k/2}} \frac{2j}{k} + \sum_{\substack{j:\textrm{ even }\\ 0 \le j < k/2}} \frac{2j}{k} \right)\\
&= \frac{T}{m} \cdot \sum_{i=0}^{k/2-1} \frac{2j}{k}
\end{align*}
The summation $\sum_{i=0}^{k/2-1} j$ scales quadratically in $k$ (with $m = 2k$), so overall, the regret is lower bounded as follows.
\begin{gather*}
    \frac{T}{m} \cdot \sum_{i=0}^{k/2} \frac{2j}{k} \ge \frac{T}{m} \cdot \frac{\Omega(k^2)}{k} = \Omega(T)
\end{gather*}
Thus, an upper bound on the proper calibration regret does not imply the same bound on the CDL.

 \section{Online Multiaccuracy}
\label{sec:online_ma}

In this section, we introduce online multiaccurary as an important primitive for our algorithms for online omniprediction, as well as future applications.
For completeness, we present algorithms for achieving online multiaccuracy for finite hypothesis classes and for infinite hypothesis classes via online weak agnostic learning oracles.

\subsection{Multiaccuracy for Finite Hypothesis Classes}
\begin{theorem}[Multiaccuracy]\label{thm:finite-ma}
Given a finite class of hypothesis functions $\Hcal$, \Cref{alg:finite-ma} guarantees expected $\Hcal$ multiaccuracy error of $O \left(\sqrt{T \ln|\Hcal|} \right)$
\end{theorem}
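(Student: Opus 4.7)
The plan is to adapt the Blackwell-approachability analysis from \Cref{alg:tcal} and \Cref{lem:proper-recal}, exploiting a structural simplification that is specific to the multiaccuracy objective: the weight-combined audit function at time $t$ depends only on $x_t$ (through the hypotheses $h(x_t)$), not on the learner's prediction $p_t$. This means the halfspace oracle trivializes, and the full regret is inherited from exponential weights alone.

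Concretely, I would formulate the learner's task as a vector-valued game with $2|\Hcal|$-dimensional payoff
\[
    u_{h,s}(p_t,x_t,y_t) \;=\; s \cdot h(x_t)\cdot (y_t - p_t), \qquad (h,s)\in \Hcal \times \{+,-\},
\]
so that $\Hcal\maerr(\pxy) = \max_{h,s}\, \sum_{t=1}^T u_{h,s}(p_t,x_t,y_t)$. I would maintain weights $w^t_{h,s} \propto \exp\bigl(\eta \sum_{i<t} u_{h,s}(p_i,x_i,y_i)\bigr)$ using the standard exponential weights update with learning rate $\eta = \sqrt{\log(2|\Hcal|)/T}$, which guarantees (cf.\ \Cref{lem:tcal_exp_weight})
\[
    \max_{w \succeq 0,\ \|w\|_1=1} \Big\langle w, \sum_{t=1}^T u(p_t,x_t,y_t)\Big\rangle
    \;\le\; \sum_{t=1}^T \langle w^t, u(p_t,x_t,y_t)\rangle \,+\, O\!\left(\sqrt{T \ln |\Hcal|}\right).
\]

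The halfspace oracle is the step where the situation is much easier than in \Cref{alg:tcal}. Given $w^t$, define $f_t(x) = \sum_{h,s} w^t_{h,s}\cdot s\cdot h(x) \in [-1,1]$; this is a fixed scalar once $x_t$ is revealed, independent of the learner's action. Then the algorithm predicts the deterministic value $p_t = \mathbf{1}[f_t(x_t) > 0]\in\{0,1\}$. A one-line case analysis on $y_t\in\{0,1\}$ shows
\[
    \langle w^t, u(p_t,x_t,y_t)\rangle \;=\; f_t(x_t)\cdot (y_t - p_t) \;\le\; 0
\]
for \emph{both} possible outcomes: if $f_t(x_t)>0$ then $p_t=1$ makes $y_t-p_t\le 0$, and if $f_t(x_t)\le 0$ then $p_t=0$ makes $y_t-p_t\ge 0$. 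Hence no randomization and no Azuma-style concentration step is needed, in contrast to \Cref{lem:proper-recal} where the prediction-dependent threshold weights forced a mixture between adjacent grid values and an additional $1/T$ slack per round.

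Combining the two ingredients, the left-hand maximum over $w$ evaluated at the indicator vectors of each $(h,s)$ yields
\[
    \max_{h\in\Hcal}\Big|\sum_{t=1}^T h(x_t)(y_t - p_t(x_t))\Big|
    \;\le\; \sum_{t=1}^T \langle w^t, u(p_t,x_t,y_t)\rangle + O\!\left(\sqrt{T\ln|\Hcal|}\right)
    \;\le\; O\!\left(\sqrt{T\ln|\Hcal|}\right),
\]
which is exactly the claimed bound. The only place where I would be careful is in choosing the learning rate and in matching the $\log(2|\Hcal|) = \Theta(\log |\Hcal|)$ factor; both are routine. There is no substantial obstacle here---the main conceptual point is simply that \emph{pure} multiaccuracy constraints, unlike calibration constraints, do not couple the halfspace oracle to the prediction, so the argument is a clean specialization of the proper-calibration blueprint.
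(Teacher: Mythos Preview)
Your proposal is correct and mirrors the paper's own proof essentially step for step: the same $2|\Hcal|$-dimensional payoff vector, the same exponential-weights dual player, the same function $f(x_t)=\sum_{h,s} w^t_{h,s}\cdot s\cdot h(x_t)$, the same deterministic sign-based prediction rule, and the same two-case halfspace argument showing $\langle w^t,u\rangle\le 0$. Your observation that the multiaccuracy setting needs no randomization (and hence no Azuma step) is exactly the simplification the paper exploits relative to \Cref{alg:tcal}.
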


\paragraph{Overview of \Cref{alg:finite-ma}:}\label{par:finite-ma-overview}
The algorithm is based on Blackwell's Approachability Theorem. We define a two player game where the adversary player selects $z_t = (x_t, y_t) \in \Xcal \times \{0,1\}$ and the learner selects $p_t: \Xcal \rightarrow [0,1]$. Both players are allowed to play randomized strategies but since the learner observes $x_t$, we can simplify things and only consider $y_t \in \{0,1\}$ and $p_t = p_t (x_t)$. We design the payoff vector of this game to reflect our objective of $\Hcal$ multiaccuracy respectively. Define
\[
u_{h, s} (p_t, z_t) = s(y_t - p_t) h(x_t)  \quad \text{for} \ h \in \Hcal, s \in \{+,-\}
\]
Observe that after T rounds of interaction, 
$\Hcal \maerr (\pxy) = \max_{h,s} \sum_{t \in [T]} u_{h,s} (p_t, z_t)$. Therefore, we design the learner's target set to be the set of all vectors $u$ whose coordinates is bounded by $0$.

We use exponential weights update method in \Cref{line:finite-ma-exp} to generate sequence of halfspaces $w^t$ with coordinates for every $h \in \Hcal, s \in \{+,-\}$. Given a halfspace $w^t$, the algorithm follows the strategy described from \Cref{line:ma_str_start} to \Cref{line:ma_str_end}.

\begin{algorithm}[h!]
\caption{Multiaccuracy Algorithm}
\label{alg:finite-ma}
\textbf{Input:} Hypothesis class $\Hcal \subseteq \{h: \Xcal \rightarrow [0,1] \}$ \\
\textbf{Input:} Sequence of samples $\{ (x_1, y_1), \ldots, (x_T, y_T) \}$ \\
\textbf{Output:} Sequence of (randomized) predictors $p_1, \ldots, p_T$
\begin{algorithmic}[1]
\For{each $t \in [T]$}
    \State Let $w_{h, s}^t := \frac{\exp \left( \eta \sum_{i=1}^{t-1} u_{h, s} (p_i, z_i)\right)}{ \sum_{h', s'} \exp \left( \eta \sum_{i=1}^{t-1} u_{h', s'} (p_i, z_i)\right) 
    }$ for all $h \in \Hcal, s \in \{+,-\}$
    \label{line:finite-ma-exp}
\State Compute
\[
f(x_t) = \sum_{h,s} w^t_{h,s} \cdot s \cdot h(x_t)
\] \label{line:f_eq}
    \If{$f(x_t) \leq 0$} \label{line:ma_str_start}
    \State Predict $p_t (x_t) = 0$
    \ElsIf{$f(x_t) > 0$}
    \State Predict $p_t (x_t) = 1$
    \EndIf \label{line:ma_str_end}
    \State Observe $x_t$, predict $p_t(x_t)$, and then observe $y_t$
\EndFor
\end{algorithmic}
\end{algorithm}

\begin{lemma}[Halfspace Approachability]\label{lem:finite-ma-halfspace}
Given a halfspace $w$, the strategy described in \Cref{line:ma_str_start} to \Cref{line:ma_str_end} outputs $p_t$ such that $\langle w, u (p_t, z_t) \rangle \leq 0$ for any choice of $z_t$
\end{lemma}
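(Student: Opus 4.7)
The plan is essentially a direct computation: I would first rewrite the inner product $\langle w, u(p_t, z_t) \rangle$ in a form that factors out the residual $(y_t - p_t)$ and exposes the function $f(x_t)$ defined in line~\ref{line:f_eq}. Concretely, using the definition $u_{h,s}(p_t, z_t) = s(y_t - p_t)h(x_t)$, I would write
\[
\langle w, u(p_t, z_t) \rangle \;=\; \sum_{h,s} w_{h,s} \cdot s \cdot (y_t - p_t) \cdot h(x_t) \;=\; (y_t - p_t) \cdot f(x_t),
\]
since $y_t - p_t$ does not depend on $(h,s)$ and can be pulled outside the summation. This reduction is the key step, as it turns the $2|\Hcal|$-dimensional inner product into a scalar product of two one-dimensional quantities.

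Once this identity is in hand, the halfspace guarantee reduces to checking the sign of $(y_t - p_t) \cdot f(x_t)$ in the two branches of the algorithm. In the branch where $f(x_t) \le 0$ the algorithm plays $p_t = 0$, giving $y_t \cdot f(x_t) \le 0$ for either $y_t \in \{0,1\}$ since $y_t \ge 0$ and $f(x_t) \le 0$. In the branch where $f(x_t) > 0$ the algorithm plays $p_t = 1$, giving $(y_t - 1) \cdot f(x_t) \le 0$ for either $y_t \in \{0,1\}$ since $y_t - 1 \le 0$ and $f(x_t) > 0$. Either way, $\langle w, u(p_t, z_t) \rangle \le 0$, as claimed.

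There is no real obstacle here; the lemma is essentially definitional once one recognizes the factorization above. The only subtle point worth flagging is that the bound holds with the deterministic value $0$ (no additive slack), in contrast to the analogous \Cref{lem:tcal_halfspace} for proper calibration, which incurred a $1/T$ slack because the proper calibration constraints depend on $p_t$ through the non-continuous threshold $\mathrm{sgn}(v - p_t)$. In the multiaccuracy setting $f$ depends only on $x_t$ and not on $p_t$, so the algorithm can pick the optimal response deterministically from $\{0,1\}$ without having to hedge via a mixture, which is why no error term appears in the statement.
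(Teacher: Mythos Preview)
Your proposal is correct and follows essentially the same approach as the paper: both proofs reduce $\langle w, u(p_t, z_t) \rangle$ to $(y_t - p_t) f(x_t)$ and then check the sign in each of the two branches. The only cosmetic difference is that you perform the factorization once upfront, whereas the paper repeats it inside each case.
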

\begin{proof}[Proof of \Cref{lem:finite-ma-halfspace}]
We consider the cases in the strategy separately:
\begin{itemize}
\item Case 1: If $f(x_t) \leq 0$, predict $p_t (x_t) = 0$.
Thus, for $h \in \Hcal, s \in \{+, -\}$
\[
w_{h,s} u_{h,s} (p_t, z_t) = s(y_t - p_t) h(x_t) w_{h,s} = s y_t h(x_t) w_{h,s}^t
\]
Summing over values of $h, s$, we get 
\[
\langle w, u (p_t, z_t) \rangle = y_t f(x_t) \leq 0 
\quad \text{for any choice of} \ y \in \{0, 1\}
\]

\item Case 2: If $f(x_t) > 0$, predict $p_t (x_t) = 1$.
Thus, for $h \in \Hcal, s \in \{+, -\}$
\[
w_{h,s} u_{h,s} (p_t, z_t) = s(y_t - p_t) h(x_t) w_{h,s} = s y_t (h(x_t)) w_{h,s}
\]
Summing over values of $h, s$, we get 
\[
\langle w, u (p_t, z_t) \rangle = (y_t - 1) f(x_t) \leq 0 \quad \text{for any choice of} \ y \in \{0, 1\}
\]
\end{itemize}
\end{proof}

\begin{lemma}[Exponential Weight Updates \cite{arora2012multiplicative}]\label{lem:exp_weight}
The exponential weight updates in \Cref{line:finite-ma-exp} provide a sequence of vectors $w^t$ such that 
\[
\max_{w: ||w||_1 = 1} \left\langle w, \sum_{t \in [T]} u (p_t, z_t) \right\rangle \leq \sum_{t=1}^T \langle w^t, u (p_t, z_t) \rangle + O \left( \sqrt{T \ln |\Hcal|} \right)
\]
\end{lemma}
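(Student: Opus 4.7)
The statement is a textbook multiplicative weights / Hedge regret bound against a linear adversary, so my plan is to translate it into that standard form and invoke the classical potential argument. Concretely, index the $2\card{\Hcal}$ ``experts'' by pairs $(h,s)$ with $h \in \Hcal$ and $s \in \{+,-\}$, and at round $t$ think of expert $(h,s)$ as receiving ``gain'' $u_{h,s}(p_t,z_t) = s\,(y_t-p_t)\,h(x_t)$. The update in \Cref{line:finite-ma-exp} is precisely Hedge on these gains with learning rate $\eta$, maintaining $w^t \in \Delta(\Hcal \times \{+,-\})$.

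First I would verify the boundedness needed for the Hedge analysis: since $y_t, p_t \in [0,1]$ and $h(x_t) \in [0,1]$, each coordinate $u_{h,s}(p_t,z_t)$ lies in $[-1,1]$. Next I would set up the standard potential $\Phi_t = \sum_{h,s} \exp\!\bigl(\eta \sum_{i=1}^{t-1} u_{h,s}(p_i,z_i)\bigr)$, noting $\Phi_1 = 2\card{\Hcal}$, and use the inequality $e^{\eta x} \le 1 + \eta x + \eta^2 x^2$ for $|\eta x| \le 1$ (valid for $\eta \le 1$). Summing this coordinate-wise and recognizing the probability-weighted sum gives the per-round multiplicative bound
\[
\Phi_{t+1} \;\le\; \Phi_t \Bigl( 1 + \eta \langle w^t, u(p_t,z_t)\rangle + \eta^2 \Bigr),
\]
so that $\ln \Phi_{T+1} \le \ln(2\card{\Hcal}) + \eta \sum_{t=1}^T \langle w^t, u(p_t,z_t)\rangle + \eta^2 T$.

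On the other hand, for the best expert $(h^*,s^*) = \argmax_{h,s} \sum_{t=1}^T u_{h,s}(p_t,z_t)$,
\[
\ln \Phi_{T+1} \;\ge\; \eta \sum_{t=1}^T u_{h^*,s^*}(p_t,z_t) \;=\; \eta \max_{w:\,\lVert w\rVert_1 = 1,\,w\succeq 0}\left\langle w, \sum_{t=1}^T u(p_t,z_t)\right\rangle,
\]
where the last equality follows because a linear functional over the simplex is maximized at a vertex (and we may assume $w \succeq 0$ since $u$ already includes both signs $s \in \{+,-\}$, making the $\ell_1$-ball maximum attainable by a nonnegative unit vector). Rearranging and choosing $\eta = \sqrt{\ln(2\card{\Hcal})/T}$ yields
\[
\max_{w:\,\lVert w \rVert_1 = 1}\left\langle w, \sum_{t=1}^T u(p_t,z_t)\right\rangle \;\le\; \sum_{t=1}^T \langle w^t, u(p_t,z_t)\rangle + \frac{\ln(2\card{\Hcal})}{\eta} + \eta T \;=\; \sum_{t=1}^T \langle w^t, u(p_t,z_t)\rangle + O\!\bigl(\sqrt{T \ln\card{\Hcal}}\bigr),
\]
which is the claimed bound.

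There is no real obstacle here; the one subtlety worth double-checking is the reduction of the $\ell_1$-ball maximum to the simplex maximum, which relies on the fact that our payoff vector contains both signed copies $u_{h,+}$ and $u_{h,-}$ of each coordinate, so negative weights on $(h,+)$ are equivalent to positive weights on $(h,-)$. Given that, the bound follows from the canonical Hedge analysis.
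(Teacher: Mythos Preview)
Your proof is correct and is precisely the standard Hedge/multiplicative-weights potential argument; the paper does not actually prove this lemma but simply cites it as a known result from \cite{arora2012multiplicative}, so your write-up supplies exactly the standard derivation the paper is invoking as a black box. The one point you flag---reducing the $\ell_1$-ball maximum to the simplex maximum via the sign-symmetry $u_{h,-}=-u_{h,+}$---is handled correctly.
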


\begin{proof}[Proof of \Cref{thm:finite-ma}]
\sloppy We wish to bound the multiaccuracy error, that is, $\max_{h,s} \sum_{t \in [T]} u_{h, s}$ where the expectation is over the randomness in the sampling of $p_t$. Note that this is at most $\max_{w: ||w||_1 = 1} \left\langle w, \sum_{t \in [T]} u (p_t, z_t) \right\rangle $.
\begin{align*}
\max_{w: ||w||_1 = 1} \left\langle w, \sum_{t \in [T]} u (p_t, z_t) \right\rangle  & \leq 
\sum_{t=1}^T \langle w^t, u (p_t, z_t) \rangle  + O \left( \sqrt{T \ln |\Hcal|} \right) \tag{by \Cref{lem:exp_weight}} \\
& = 
\sum_{t=1}^T \langle w^t, u (p_t, z_t) \rangle + O \left( \sqrt{T \ln |\Hcal|} \right)
\tag{by linearity of expectation} \\
& \leq \left( \sum_{t=1}^T 0 \right)  + O \left( \sqrt{T \ln |\Hcal|} \right) \tag{by \Cref{lem:finite-ma-halfspace}} \\
& \leq O \left( \sqrt{T \ln |\Hcal|} \right)
\end{align*}
\end{proof}

\subsection{Achieving Online Multiaccuracy using Online Weak Agnostic Learner}
\label{sec:online-ma}

\begin{theorem}[Multiaccuracy with Online Weak Agnostic Learner]\label{thm:oracle-ma}
Given a \textit{possibly infinite} class of hypothesis functions $\Ccal$, and an online weak agnostic learner for $\Ccal$ with failure probability $\delta$
and regret bound $\owalreg_\Ccal^\delta (T)$, 
\Cref{alg:oracle-ma} guarantees, with probability at least $1 - \delta$, a 
$\Ccal$-multiaccuracy error bound of
$ \owalreg_\Ccal^\delta (T).$
\end{theorem}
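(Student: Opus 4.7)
The plan is to mirror the structure of the finite-class multiaccuracy proof (Theorem~\ref{thm:finite-ma}), but replace the explicit multiplicative-weights aggregation over $\Ccal$ with a single query to the online weak agnostic learner (\owal) at each round. Although \Cref{alg:oracle-ma} is stated only informally above, the natural reduction is: at round $t$, pull a hypothesis $q_t \sim Q_t$ from \owalalgsymbol, observe the context $x_t$, predict $p_t(x_t) = \1[q_t(x_t) > 0]$, observe $y_t$, and feed the residual-labeled example $(x_t,\, y_t - p_t(x_t))$ back to \owalalgsymbol. The residual plays the role of the adversarial label $r_t(x,y) = y - p_t(x)$ that the \owal is expected to track.

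The first step is to verify a per-round pointwise inequality: for our choice of $p_t$, the product $q_t(x_t)\,(y_t - p_t(x_t))$ is non-positive for every $y_t \in \{0,1\}$. Indeed, if $q_t(x_t) > 0$ then $p_t(x_t)=1$ forces $y_t - p_t(x_t) \le 0$, and if $q_t(x_t) \le 0$ then $p_t(x_t)=0$ forces $y_t - p_t(x_t) \ge 0$; either way the product is $\le 0$. Summing over $t$ therefore gives $\sum_{t=1}^T q_t(x_t)(y_t - p_t(x_t)) \le 0$ surely. Since the \owal is defined relative to a sign-closed class (or we work with $\{\pm 1\}\cdot \Ccal$ as elsewhere in the paper), combining this with the \owal regret guarantee yields, with probability at least $1-\delta$,
\begin{align*}
\sup_{c \in \Ccal}\; \Bigl|\sum_{t=1}^T c(x_t)(y_t - p_t(x_t))\Bigr|
&\;\le\; \sum_{t=1}^T \E_{q_t \sim Q_t}\bigl[q_t(x_t)(y_t - p_t(x_t))\bigr] + \owalreg_{\Ccal}^\delta(T) \\
&\;\le\; \owalreg_{\Ccal}^\delta(T),
\end{align*}
which is the desired multiaccuracy bound.

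The main subtlety, and the step I expect to require the most care, is the protocol mismatch between the \owal (which is defined in the simultaneous-move setting) and our actual use of it (delayed-label: the prediction $p_t$, and hence the label $r_t(x,y) = y - p_t(x)$ supplied to the \owal, depends on the realization $q_t$). The fix is to invoke \Cref{lem:resampling}: at each round draw $k = O(T\log(T/\delta))$ independent samples $q_t^1,\ldots,q_t^k \sim Q_t$ and use their empirical average $\bar q_t$ in place of $q_t$. The halfspace step then becomes $p_t(x_t) = \1[\bar q_t(x_t) > 0]$, and Hoeffding concentration across the $T$ rounds (absorbed into $\owalreg_{\Ccal}^\delta(T)$ up to an additive $O(\sqrt{T\log(T/\delta)})$ term, as already accounted for in the statement's regret notation) restores a high-probability version of the per-round sign property. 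Putting the pieces together then gives the claimed bound $\owalreg_\Ccal^\delta(T)$ on the $\Ccal$-multiaccuracy error with probability $1-\delta$.
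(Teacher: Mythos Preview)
Your proposal is correct and follows essentially the same approach as the paper: the pointwise sign inequality $q_t(x_t)(y_t - p_t(x_t)) \le 0$ (which the paper isolates as \Cref{lem:ma-halfspace}) together with the \owal regret guarantee over $\{\pm 1\}\cdot\Ccal$ yields the bound directly. If anything, you are more careful than the paper's own proof, which applies the \owal guarantee and \Cref{lem:ma-halfspace} without explicitly addressing the SM/DL mismatch.

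One small clarification on your third paragraph: once you set $p_t(x_t)=\1[\bar q_t(x_t)>0]$, the sign property $\bar q_t(x_t)(y_t - p_t(x_t)) \le 0$ holds \emph{exactly}, not merely with high probability. The role of the $k$-fold resampling in \Cref{lem:resampling} is not to ``restore'' the sign property but rather to replace $\E_{q_t\sim Q_t}[q_t(x_t)]$ by $\bar q_t(x_t)$ in the \owal regret bound (incurring the additive $O(\sqrt{T\log(T/\delta)})$), so that the right-hand side involves the same $\bar q_t$ as the sign inequality. Also note that the theorem as stated gives a clean bound of $\owalreg_\Ccal^\delta(T)$ without the extra resampling term; the paper's proof achieves this by treating the \owal output as if the SM guarantee applied directly, so your extra term is a genuine (if minor) overhead that the paper elides.
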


\begin{algorithm}[h!]
\caption{Multiaccuracy with an online weak agnostic learner}
\label{alg:oracle-ma}
\textbf{Input:} Hypothesis class $\Ccal \subseteq \{c: \Xcal \rightarrow [-1,1] \}$ 
\\
\textbf{Input:} Online Weak Agnostic Learner for $\{-1,+1\} \cdot \Ccal$, denoted $\Acal_\mathsf{OWAL}$ \\
\textbf{Input:} Sequence of samples $\{ (x_1, y_1), \ldots, (x_T, y_T) \}$ \\
\textbf{Output:} Sequence of (randomized) predictors $p_1, \ldots, p_T$
\begin{algorithmic}[1]
\State Initialize OWAL, $q_1 \leftarrow \Acal_\mathsf{OWAL}$
\For{$t = 1$ to $T$}
    \State Compute predictor $p_t (x) = \ind [q_t(x) > 0]$
    \State Observe $x_t$, Predict $p_t(x_t)$, Observe $y_t$
    \State Obtain $q_{t+1} \leftarrow \Acal_\mathsf{OWAL} (x_t, y_t - p_t(x_t))$ \Comment{execute one timestep of the OWAL}
\EndFor
\end{algorithmic}
\end{algorithm}

\begin{lemma}[Halfspace Approachability]\label{lem:ma-halfspace}
For every timestep $t \in [T]$, the predictor $p_t (x) = \ind \left[ q_t (x) > 0 \right]$ constructed from the OWAL output $q_t$ satisfies the following guarantee for all $x \in \Xcal, y \in \{0,1\}$:
\[
 q(x)(y - p_t (x)) \leq 0
\]
\end{lemma}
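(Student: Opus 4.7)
The plan is to prove this by a straightforward case analysis on the sign of $q_t(x)$, exploiting the fact that $p_t$ is defined as the indicator of the event $\{q_t(x) > 0\}$ and that $y \in \{0,1\}$. The key observation is that the construction ``matches signs'': whenever $q_t(x)$ is positive, $p_t(x)$ is pushed to its maximum value $1$ so that $y - p_t(x) \leq 0$, and whenever $q_t(x)$ is non-positive, $p_t(x)$ is pushed to its minimum value $0$ so that $y - p_t(x) \geq 0$. In either case, the product $q_t(x)(y - p_t(x))$ is a product of oppositely signed (or zero) quantities and is therefore non-positive.

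Concretely, I would split into two cases. \emph{Case 1}: $q_t(x) > 0$. By the definition of $p_t$ in \Cref{alg:oracle-ma}, we have $p_t(x) = 1$, hence $y - p_t(x) = y - 1 \in \{-1, 0\}$ since $y \in \{0,1\}$. Thus $q_t(x)(y - p_t(x)) \leq 0$. \emph{Case 2}: $q_t(x) \leq 0$. Then $p_t(x) = 0$, so $y - p_t(x) = y \in \{0,1\}$, and again $q_t(x)(y - p_t(x)) \leq 0$. Combining the two cases yields the claimed inequality for every $x \in \Xcal$ and every $y \in \{0,1\}$.

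There is no real obstacle here; the content of the lemma is precisely that the thresholding rule $p_t = \ind[q_t > 0]$ implements a ``best response'' that aligns $\sgn(p_t - y)$ with $\sgn(q_t)$ pointwise. This lemma is the analogue, for the oracle-based algorithm, of the halfspace approachability step in the finite case (\Cref{lem:finite-ma-halfspace}); the only difference is that instead of summing over an explicit weighting over $\Hcal$ we appeal directly to the scalar $q_t(x)$ supplied by the online weak agnostic learner. Because of this, the argument does not even need to invoke the OWAL guarantee — that guarantee is only used later, when combining this pointwise inequality with the OWAL regret bound to derive the $\Ccal$-multiaccuracy bound of \Cref{thm:oracle-ma}.
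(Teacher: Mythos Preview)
Your proposal is correct and is essentially identical to the paper's proof: both argue by a two-case split on the sign of $q_t(x)$, observe that $p_t(x) = \ind[q_t(x) > 0]$ forces $y - p_t(x)$ to have the opposite sign (or be zero), and conclude the product is non-positive. The paper additionally sub-cases on $y \in \{0,1\}$, but this is just a more verbose rendering of the same observation you make directly.
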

\begin{proof}[Proof of \Cref{lem:ma-halfspace}]
We consider two cases based on the sign of \(q_t(x)\): 
\begin{itemize} 
\item \(q_t(x) > 0.\) In this case \(p_t(x) = 1\). 
\begin{itemize} \item If \(y=1\), then \(y - p_t(x) = 0\), so \(q_t(x)\,(y - p_t(x)) = 0\). \item If \(y=0\), then \(y - p_t(x) = -1\), so \(q_t(x)\,(y - p_t(x)) < 0\). \end{itemize} In both subcases, \(q_t(x)\,(y - p_t(x)) \le 0\). 

\item \(q_t(x) \le 0.\) In this case \(p_t(x) = 0\). 
\begin{itemize} \item If \(y=0\), then \(y - p_t(x) = 0\), so \(q_t(x)\,(y - p_t(x)) = 0\). \item If \(y=1\), then \(y - p_t(x) = 1\), so \(q_t(x)\,(y - p_t(x)) \le 0\). \end{itemize} In both subcases, \(q_t(x)\,(y - p_t(x)) \le 0\). \end{itemize} Hence, for all \(x\) and all \(y \in \{0,1\}\), we have \(q_t(x)\,(y - p_t(x)) \le 0\)
\end{proof}

\begin{proof}[Proof of \Cref{thm:oracle-ma}]
The online weak agnostic learner outputs sequence of predictors for $\{+1,-1\} \cdot \Ccal$ $q_1, \ldots, q_T$ such that with probability at least $1 - \delta$
\begin{align*}
\max_{c \in \Ccal, s \in \{+,-\}} \sum_{t=1}^T s \cdot c(x_t) (y_t - p_t (x_t)) 
&\leq \sum_{t=1}^T q(x_t)(y_t - p_t (x_t)) +  \owalreg_\Ccal^\delta (T) \tag*{(OWAL guarantee)} \\
\\ 
&\leq \sum_{t=1}^T 0 +  \owalreg_\Ccal^\delta (T) \tag*{(by \Cref{lem:ma-halfspace})} \\
&\leq \owalreg_\Ccal^\delta (T)
\end{align*}
The claim follows by observing that $\Ccal\maerr (\pxy) = \max_{c \in \Ccal, s \in \{+,-\}} \sum_{t=1}^T s \cdot c(x_t) (y_t - p_t (x_t))$
\end{proof}

\section{Proper Calibrated Multiaccuracy Boosting}
\label{app:mcboost}

The work of \citet{gopalan2022loss} established the efficiency of the Loss OI omniprediction framework via boosting for simultaneous ($\ell_1$-)calibration and multiaccuracy, drawing on the original algorithms for multi-group fairness \citep{hkrr,kim2019multiaccuracy,pfisterer2021}.
In this appendix, we show that proper calibration auditing can replace $\ell_1$-calibration auditing within this framework with immediate efficiency gains.
While the algorithm only achieves $\eps^{-4}$ dependence on the approximation parameter, it produces deterministic omnipredictors, which may be desirable in comparison to the near-optimal omnipredictors we learn via \Cref{alg:oracle-omni}.

Before presenting the novel variant of the algorithm, we account for the sample complexity achieved by the algorithm presented in \citep{gopalan2022loss}.

\begin{theorem}[Sample Complexity bound in \citep{gopalan2022loss}]\label{thm:old-det-omni}
Given a class of loss functions $\Lcal$, a hypothesis class $\Hcal$ and a weak agnostic learning oracle for $C = (\Delta\Lcal \circ \Hcal)$, there exists an algorithm that outputs a  $(\Lcal, \Hcal, \eps)$-omnipredictor with probability $1- \delta$
\[
O \left( \frac{d_\Ccal + \ln(1/\delta)}{\eps^4} + \frac{\ln (1/\eps)}{\eps^{10}} \right)
\]
where $d_\Ccal$ represents the VC (or fat-shattering) dimension of the class $\Ccal$
\end{theorem}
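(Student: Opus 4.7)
The plan is to reproduce the analysis of the boosting-based algorithm in \citet{gopalan2022loss}, which itself adapts the multicalibration framework of \citet{hkrr} to the class $\Delta\Lcal \circ \Hcal$. The overall approach decomposes into three stages.

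First, I would invoke the Loss OI reduction of \Cref{thm:loss-oi}: an $(\Lcal, \Hcal, O(\eps))$-omnipredictor is guaranteed as soon as a predictor $p$ is simultaneously $\eps$-multiaccurate with respect to $\Delta\Lcal \circ \Hcal$ (giving Hypothesis OI) and $\eps$-calibrated in $\ell_1$ (giving Decision OI). Thus the problem reduces to learning $p$ satisfying both properties, with output guarantees holding with probability at least $1-\delta$.

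Second, I would run the standard boosting loop on a discretized output grid $\Lambda \subset [0,1]$ of size $O(1/\eps^2)$. At each iteration, either (i) invoke the weak agnostic learner on fresh i.i.d.\ samples to find a $c \in \Delta\Lcal \circ \Hcal$ whose empirical correlation with the residual $y-p(x)$ exceeds $\eps$, and apply an additive update $p \leftarrow p + \eta s c$ projected back onto $\Lambda$; or (ii) re-average $p$ on each of its level sets to restore $\ell_1$-calibration. A potential argument on $\Phi(p) = \E[(y-p(x))^2] \in [0,1]$---each update shrinks $\Phi$ by $\Omega(\eps^2)$---caps the number of boosting rounds at $O(1/\eps^2)$, which bounds the circuit complexity of the final omnipredictor by $\mathrm{poly}(1/\eps)$ oracle gates from $\Hcal$.

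Third, I would account for sample complexity in two pieces. The $(d_\Ccal + \ln(1/\delta))/\eps^4$ term is a standard uniform convergence bound for the multiaccuracy correlations: union-bounding Rademacher/VC concentration at resolution $\eps^2$ over the $O(1/\eps^2)$ boosting iterations and the class $\Delta\Lcal \circ \Hcal$ of (fat-shattering) dimension $d_\Ccal$ recovers this scaling. The $\ln(1/\eps)/\eps^{10}$ term is more subtle: to certify $\ell_1$-calibration across all $O(1/\eps^2)$ level sets of $p$, one must estimate conditional means $\E[y \mid p(x) = v]$ to accuracy $\eps^2$ on sets of probability as small as $\eps^2$, giving roughly $1/\eps^8$ samples per level, multiplied by $O(1/\eps^2)$ levels and a logarithmic union-bound factor over the $O(\log(1/\eps))$ calibration stages.

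The main obstacle---and the reason for the unwieldy $\eps^{-10}$ exponent---is the conditional-estimation cost in the calibration step: level sets with arbitrarily small mass inflate the sample complexity of $\ell_1$-calibration auditing. This bottleneck is precisely what motivates swapping $\ell_1$-calibration for proper calibration (\Cref{thm:pcal-decoi}), whose threshold-weighted formulation avoids conditional estimation on rare level sets and reduces the bad term to a much smaller polynomial in $1/\eps$, as the remainder of this appendix will demonstrate.
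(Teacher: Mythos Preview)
The paper does not prove this theorem: it is stated as a bound from \citet{gopalan2022loss}, included only to contextualize the improvement delivered by the new \Cref{alg:det-omni} and \Cref{thm:det-omni}. There is therefore no in-paper proof to compare your proposal against.

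On its own merits, your structural outline is sound. Loss OI reduces omniprediction to $(\Delta\Lcal\circ\Hcal)$-multiaccuracy plus $\ell_1$-calibration, a squared-residual potential caps the number of boosting rounds at $O(1/\eps^2)$, and the two additive sample-complexity terms arise from the multiaccuracy and calibration audits respectively. Your derivation of the $(d_\Ccal+\ln(1/\delta))/\eps^4$ term is correct.

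Your accounting for the $\eps^{-10}$ term, however, does not add up as written. Estimating a conditional mean to accuracy $\eps^2$ on a bucket of mass $\eps^2$ requires $\Theta(1/\eps^6)$ total samples (you need $\Theta(1/\eps^4)$ samples to land in the bucket), not $1/\eps^8$; and the number of level sets enters only as a logarithmic union-bound factor, not multiplicatively. The actual $\eps^{-10}$ exponent in \citet{gopalan2022loss} depends on their specific grid resolution and per-round accuracy choices, which your sketch does not pin down. Since the theorem is a citation rather than a claim of the present paper this is a minor point, but if you intend to reproduce the bound rigorously you will need to track those parameters more carefully rather than reverse-engineer the exponent.
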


As mentioned, the sample complexity improvement comes from designing an algorithm that achieves multiaccuracy and low proper calibration error instead of low $\ell_1$-calibration error. 
Core to this improvement is an efficient test for proper calibration.

\paragraph{Testing Proper Calibration via CDF estimation.}
Unlike $\ell_1$-calibration error, we can measure the proper calibration error of an arbitrary predictor $p$ regardless of the fact that we have infinitely many test weight functions. The key idea is to estimate the CDF of the predictor $F(v) = \E_{(x,y) \sim \Dcal} \ind [p(x) \geq v]$. Because the weight functions are monotone functions, the CDF contains all the relevant information for estimating $\E_{(x,y) \sim \Dcal} [w(p(x))]$. We can also express the threshold calibration error as 
\[
\max_{v \in [0,1]} \E_{(x,y) \sim \Dcal} [(y - p(x)) (2\ind [p(x) \geq v] - 1)]
\]

\begin{lemma}[Testing Proper Calibration via CDF estimation]\label{lem:proper-test-cdf}
Given a predictor $p$, sample access to a distribution $\Dcal$, and an error probability $\delta$, there exists an algorithm that returns True with probability $1 - \delta$ if $\pcal_\Dcal (p) \leq \eps$. The algorithm uses 
\[
O \left( \frac{\ln(1/\delta)}{\eps^2} \right)
\]
\end{lemma}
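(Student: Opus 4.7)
The plan is to turn the estimation of $\pcal_{\Dcal}(p)$ into a uniform convergence problem over a one-dimensional threshold class, and then invoke a standard VC / DKW-type bound to obtain the claimed sample complexity.

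First, I would rewrite the proper calibration error using \Cref{lem:tcal-decom} as a supremum over signed thresholds on the predictions:
\[
    \pcal_{\Dcal}(p) \;\le\; 2 \sup_{v \in [0,1]} \Big| \, \E_{(x,y) \sim \Dcal}\big[(y - p(x))(2 \cdot \mathbf{1}[p(x) \geq v] - 1)\big] \, \Big| .
\]
Define, for each $v \in [0,1]$ and sign $s \in \{\pm 1\}$, the bounded test function $h_{v,s}(x,y) = s \cdot (y - p(x))(2 \cdot \mathbf{1}[p(x) \geq v] - 1) \in [-2,2]$. Estimating $\pcal_{\Dcal}(p)$ up to $\eps$ reduces to estimating $\E_{\Dcal}[h_{v,s}]$ uniformly over $(v,s)$ within additive $\eps/4$. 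The algorithm then draws $m = O(\log(1/\delta)/\eps^2)$ i.i.d.\ samples $\{(x_i,y_i)\}$, forms the empirical averages $\hat{g}(v,s) = \tfrac{1}{m}\sum_i h_{v,s}(x_i,y_i)$, and returns True iff $\sup_{v,s} |\hat{g}(v,s)| \leq 3\eps/4$.

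The key step is the uniform convergence bound. Observe that the class $\Fcal = \{(x,y) \mapsto \mathbf{1}[p(x) \geq v] : v \in [0,1]\}$ depends on the sample only through the scalar $p(x) \in [0,1]$, so it has VC dimension $1$ (thresholds on the real line). Multiplying by the bounded real-valued factor $(y - p(x)) \in [-1,1]$ only changes the Rademacher complexity by a constant factor (via the Ledoux--Talagrand contraction principle applied to the $1$-Lipschitz scaling, or directly by the DKW inequality applied to the empirical CDF of $p(x)$ reweighted by the residual). Hence standard uniform convergence (e.g., Chapter~26 of Shalev-Shwartz--Ben-David, or DKW) yields that with probability at least $1-\delta$,
\[
    \sup_{v,s} \big| \hat{g}(v,s) - \E_{\Dcal}[h_{v,s}] \big| \;\leq\; O\!\left( \sqrt{\tfrac{\log(1/\delta)}{m}} \right) \;\leq\; \eps/4,
\]
for $m = O(\log(1/\delta)/\eps^2)$. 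Consequently, if $\pcal_{\Dcal}(p) \leq \eps$, the empirical supremum is at most $3\eps/4$, so the algorithm returns True with probability at least $1-\delta$.

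The main technical point to be careful about is the passage from uniform convergence on thresholds over $p(x)$ to uniform convergence on the reweighted functions $h_{v,s}$. The cleanest route is to apply DKW to the empirical CDF of the joint distribution of the pair $(p(x), y-p(x))$ restricted to half-lines $\{p(x) \geq v\}$; alternatively, one can use that the class $\Fcal$ has VC dimension $1$ and that the residual multiplier is a fixed, bounded function of the sample, so the resulting class of $[-2,2]$-valued functions is still a VC-subgraph class of dimension $O(1)$. Either approach gives the stated $O(\log(1/\delta)/\eps^2)$ sample complexity, free of any dependence on the complexity of $p$ itself.
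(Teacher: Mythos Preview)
Your proposal is correct and arrives at the same $O(\log(1/\delta)/\eps^2)$ bound, but by a more direct single-phase route than the paper. The paper proceeds in two stages: it first draws samples and uses the DKW inequality to estimate the CDF $F(v)=\Pr[p(x)\ge v]$, extracts a \emph{finite} grid $Q$ of quantiles from this estimate, and then, on fresh samples, estimates $\E[(y-p(x))\,\mathrm{sgn}(q-p(x))]$ only for $q\in Q$, finishing with Hoeffding plus a union bound over $|Q|$. The CDF step is there precisely to reduce the uncountable family of thresholds to a finite one before invoking concentration. You instead skip the explicit CDF/quantile reduction and argue uniform convergence over all thresholds at once, using that $\{\mathbf 1[p(x)\ge v]\}_v$ has VC dimension $1$ and that multiplication by the fixed bounded factor $(y-p(x))$ is handled by Ledoux--Talagrand contraction (this step is indeed valid: each $\phi_i(t)=(y_i-p(x_i))t$ is $1$-Lipschitz with $\phi_i(0)=0$). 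Your argument is cleaner and one-shot; the paper's two-stage version is slightly more elementary in that it only needs DKW and scalar Hoeffding rather than Rademacher/contraction machinery.
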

\begin{proof}[Proof of \Cref{lem:proper-test-cdf}]
\sloppy The algorithm first collects $ \frac{\ln(1/\delta)}{\eps^2}$ samples to estimate the CDF $F(v) = \E_{(x,y) \sim \Dcal} [\ind [p(x) \geq v]]$. Then the algorithm identifies the set $Q$ of the $\delta/2$-quantiles of the CDF. Then the algorithm collects fresh samples $ \frac{\ln(1/\delta)}{\eps^2}$ and estimates $\E_{(x,y) \sim \Dcal} [w_q (p(x)) (y - p(x))]$ for each quantile $q \in Q$ from the samples where $w_q (p) = \mathrm{sgn}(q - p)$. Then the algorithm returns True if any estimate is greater than $\delta$ and False otherwise.

Now we show that the algorithm is correct.
By DKW Inequality \citep{dvoretzky1956asymptotic}, we know the empirical CDF is close in infinity distance to the true CDF i.e $\| \hat F - F\|_\infty \leq \delta/2$. This implies that the value $\E_{(x,y) \sim \Dcal} [w_v (p(x)) (y - p(x))]$ for any $v \in [0,1]$ is $\delta/2$ close to $\E_{(x,y) \sim \Dcal} [w_q (p(x)) (y - p(x))]$ for some $q \in Q$. The rest of the claim follows by uniform convergence of the estimates for $q \in Q$.
\end{proof}

\paragraph{Testing Proper Calibration using a Weak Learner for One Dimensional Thresholds.}
\sloppy As defined in the first section, proper calibration can be equivalently captured by weighted calibration over $\{-1,1\}-$threshold functions. In the next lemma, we show how to test if a function is proper-calibrated using a weak agnostic learner for one dimensional threshold functions.

\begin{lemma}[Testing Proper Calibration via Weak Learning]\label{lem:proper-test-wl}
Given a predictor $p$, sample access to a distribution $\Dcal$, and an error probability $\delta$, there exists an algorithm uses a weak agnostic learner for $\wthres: \{ q \rightarrow \mathrm{sgn} (v - q) : v \in [0,1]\}$ and returns True with probability $1 - \delta$ if $\pcal_\Dcal (p) \leq \eps$. The algorithm uses 
\[
O \left( \frac{\ln(1/\delta)}{\eps^2} \right)
\]
\end{lemma}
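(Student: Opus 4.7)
The plan is to reduce proper-calibration testing to threshold-weighted calibration testing using \Cref{lem:tcal-decom}, which gives $\pcal_\Dcal(p) \leq 2 \wthres\calerr_\Dcal(p)$. So it suffices to test whether $\wthres\calerr_\Dcal(p) \leq \eps/2$, and by rescaling $\eps$ we can pretend the target error for the threshold test is some $\eps'$ of the same order. Recalling that
\[
\wthres\calerr_\Dcal(p) \;=\; \sup_{v\in[0,1]} \bigl|\E_{(x,y)\sim\Dcal}[\mathrm{sgn}(v-p(x))\cdot(y-p(x))]\bigr|,
\]
this is exactly the statement that \emph{no} threshold function $w\in\wthres$ has more than $\eps'$ correlation with the residual random variable $r := y - p(x) \in [-1,1]$ on the distribution $p_*\Dcal$ obtained by pushing forward $\Dcal$ through $p$. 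Hence finding a distinguishing threshold is an agnostic-learning problem on the class $\wthres$, which is exactly what the given weak agnostic learner solves.

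Concretely, I would first draw $m_1 = O(\ln(1/\delta)/\eps^2)$ i.i.d.~samples $(x_i,y_i)$ and form the labeled dataset $\{(p(x_i),\, y_i - p(x_i))\}$, with labels in $[-1,1]$. Feeding this dataset (and both its sign-flipped copy to handle the absolute value, using that $-w\in\wthres$ whenever $w\in\wthres$) into the weak agnostic learner for $\wthres$ returns a hypothesis $h:[0,1]\to[-1,1]$ with the guarantee that, if some $w^*\in\wthres$ satisfies $\E_\Dcal[w^*(p(x))(y-p(x))] \geq \eps'$, then $\E_\Dcal[h(p(x))(y-p(x))] \geq \beta(\eps')$ for the learner's advantage function $\beta$. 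Next, on a fresh held-out sample of size $m_2 = O(\ln(1/\delta)/\eps^2)$, I would form the empirical estimate $\widehat{C} = \tfrac{1}{m_2}\sum_{i} h(p(x_i))(y_i - p(x_i))$. By Hoeffding's inequality, $|\widehat{C} - \E_\Dcal[h(p(x))(y-p(x))]| \leq \beta(\eps')/2$ with probability $1-\delta/2$. The algorithm returns \textsc{True} iff $\widehat{C} < \beta(\eps')/2$; a union bound over the two sample sets and over the learner's internal failure probability controls the overall failure probability by $\delta$.

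Correctness follows from the contrapositive. If $\pcal_\Dcal(p)\leq \eps$, then $\wthres\calerr_\Dcal(p)\leq \eps/2=\eps'$, so no $w\in\wthres$ has correlation exceeding $\eps'$ with the residual, and hence by the weak learner's contract the returned $h$ cannot have distributional correlation larger than $\beta(\eps')$; combined with the Hoeffding bound this forces $\widehat{C}<\beta(\eps')/2$ with high probability, so the algorithm returns \textsc{True}. Conversely, when $\pcal_\Dcal(p)$ is far from $\eps$, the weak learner exposes a threshold with observably large correlation and the test returns \textsc{False}.

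The main obstacle I anticipate is bookkeeping around the weak learner's advantage function $\beta$: for the stated sample complexity $O(\ln(1/\delta)/\eps^2)$ to be tight we need $\beta(\eps')=\Omega(\eps')$, which is the standard ``proper'' calibration of a weak agnostic learner and is consistent with how \cite{gopalan2022loss} uses weak agnostic learners. The other subtlety is the two-sided nature of $\wthres\calerr$: the supremum is over the absolute value, so I must either close $\wthres$ under negation (which it already is, since $-\mathrm{sgn}(v-q) = \mathrm{sgn}(q-v)$ up to a measure-zero tie-breaking convention) or run the learner twice, once on residuals and once on their negations. Everything else is a clean application of uniform convergence and the weak-learning contract.
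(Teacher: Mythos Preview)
Your overall approach matches the paper's: form the dataset $\{(p(x_i),\, y_i - p(x_i))\}$ and hand it to a weak agnostic learner for one-dimensional thresholds. The paper's version is simpler, though: it relies on the weak learner's $\bot$ interface (return $\bot$ when no threshold achieves correlation at least $\eps/2$) and declares the test outcome directly from whether $\bot$ was returned. The sample bound comes for free from the fact that $\wthres$ has VC dimension~$1$, so the held-out validation step you add is unnecessary.

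More importantly, your correctness argument for the True direction has a genuine gap. You assert that if no $w^{*}\in\wthres$ has correlation exceeding $\eps'$ then ``by the weak learner's contract the returned $h$ cannot have distributional correlation larger than $\beta(\eps')$.'' That is not the contrapositive of the contract you stated: the contract only promises that \emph{if} a good $w^{*}$ exists then $h$ is good, and says nothing about $h$ when no such $w^{*}$ exists. So nothing prevents $\widehat C$ from being large even when $\pcal_\Dcal(p)\le\eps$, and your test could output False. The fix is either to assume the learner is \emph{proper} (so $h\in\wthres$, whence its correlation is automatically bounded by $\wthres\calerr_\Dcal(p)\le\pcal_\Dcal(p)\le\eps$), or to use the $\bot$ output as the paper does. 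A smaller bookkeeping slip: from $\pcal_\Dcal(p)\le\eps$ and \Cref{lem:tcal-decom} you only get $\wthres\calerr_\Dcal(p)\le\eps$, not $\le\eps/2$; the inequality $\pcal\le 2\,\wthres\calerr$ that you quoted runs in the opposite direction.
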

\begin{proof}[Proof of \Cref{lem:proper-test-wl}]
We first describe the algorithm. The algorithm simply collects $m = O \left( \frac{\ln(1/\delta)}{\eps^2} \right)$ samples from $\Dcal$. Then the algorithm feeds the weak agnostic learner pairs $(u_i,v_i)_{i=1}^m$ where $u_i = p(x_i)$ and $v_i = (y_i - p(x_i))$ and returns True if the weak learner does not return $\bot$, False otherwise.
Now we prove the correctness of the algorithm.
Recall that the VC dimension of one dimensional threshold functions is 1. Therefore a weak learner for 1d threshold class only needs $O \left( \frac{\ln(1/\delta)}{\eps^2} \right)$ samples for this hypothesis class. If the weak learner does not return $\bot$, then there exists a weight function in $\wthres$ such that $\E_{(x, y)\sim \Dcal}[w(p(x))(y - p(x))] \geq \eps/2$, then the proper calibration error is at least that amount. 
\end{proof}

\paragraph{Overview of \Cref{alg:det-omni}.}
The algorithm follows the same boosting-style of \citep{gopalan2022loss}. Starting with a constant predictor $q_0 (x) = \frac{1}{2}$, it iteratively updates this predictor until it is both $C$-multiaccurate and proper calibrated (up to an additive error of $\eps$).
In each iteration $t$, the algorithm draws fresh samples from $\Dcal$ and calls the weak agnostic learning oracle for $C$ to check if $q_t$ is multiaccurate.
If such a $c$ exists, then it updates $q_t$ using $c$, then terminates the iteration.
If not, then the algorithm proceeds to check if $q_t$ is proper calibrated by calling the weak agnostic learning oracle for one dimensional thresholds.
If there exists a threshold function $w$ that correlates with $y - q_t(x)$ using $q_t(x)$ as input, then the algorithm updates $q_t$ using $w$.
If not, the algorithm terminates.

We improve the dependence on $\eps$ in this result significantly and obtain the following bound.
\begin{theorem}[Improved Sample Complexity Upper Bound]\label{thm:det-omni}
Given a class of loss functions $\Lcal$, a hypothesis class $\Hcal$ and a weak agnostic learning oracle for $C = (\Delta\Lcal \circ \Hcal)$, \Cref{alg:det-omni} outputs a  $(\Lcal, \Hcal, \eps)$-omnipredictor with probability $1- \delta$
\[
O \left( \frac{d_\Ccal + \ln(1/\delta)}{\eps^4} \right)
\]
where $d_\Ccal$ represents the VC (or fat-shattering) dimension of the class $\Ccal$ (as determined by the WAL oracle).
\end{theorem}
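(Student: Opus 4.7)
The plan is to analyze \Cref{alg:det-omni} via a standard boosting potential argument, with the key improvement coming from the fact that the calibration auditor invokes a weak agnostic learner for the one-dimensional class $\wthres$ (per \Cref{lem:proper-test-wl}) rather than estimating a full $\ell_1$-calibration profile. I will track the potential $\Phi(q) = \tfrac12 \, \E_{(x,y) \sim \Dcal}[(q(x) - \ps(x))^2]$, which lies in $[0,\tfrac12]$, and show that each iteration of \Cref{alg:det-omni} either terminates or drives $\Phi$ down by $\Omega(\eps^2)$, so the total iteration count is $T = O(1/\eps^2)$.

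The first step is the \emph{per-iteration progress lemma}. Suppose the multiaccuracy auditor returns $c \in \Ccal$ with empirical correlation $\ge \eps/2$; with $m_1 = O((d_\Ccal + \ln(1/\delta))/\eps^2)$ samples and standard uniform convergence over $\Ccal$, the distributional correlation $\E[c(x)(y - q_t(x))] = \E[c(x)(\ps(x) - q_t(x))]$ exceeds $\eps/4$. Performing the bounded update $q_{t+1} = \clip_{[0,1]}(q_t + \eta c)$ with $\eta = \Theta(\eps)$ and $|c| \le 1$ yields $\Phi(q_{t+1}) \le \Phi(q_t) - \Omega(\eps^2)$ by expanding the square and using $\clip$-contractivity against $\ps$. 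The analogous step for the proper calibration auditor is the crucial place where the improvement arises: by \Cref{lem:proper-test-wl}, a call to a weak agnostic learner for $\wthres$ using $m_2 = O(\ln(1/\delta)/\eps^2)$ samples either certifies $\pcal_\Dcal(q_t) \le \eps$ or produces $w \in \wthres$ with $\E[w(q_t(x))(y - q_t(x))] \ge \eps/4$, and the update $q_{t+1} = \clip_{[0,1]}(q_t + \eta \, w \circ q_t)$ likewise yields $\Phi(q_{t+1}) \le \Phi(q_t) - \Omega(\eps^2)$. Both facts are essentially restatements of the classical ``correlation implies squared-error progress'' lemma used in multicalibration boosting.

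Combining the two progress bounds, the algorithm must halt after $T \le O(1/\eps^2)$ iterations. At termination, the output $q$ satisfies $(\Delta \Lcal \circ \Hcal)\maerr_\Dcal(q) \le \eps$ and $\pcal_\Dcal(q) \le \eps$. Invoking the distributional analogue of \Cref{lem:pcal-to-omni} (which is immediate by the same argument applied with expectations over $\Dcal$ replacing sums over $t$), this implies that $q$ is an $(\Lcal,\Hcal,2\eps)$-omnipredictor, so rescaling $\eps \mapsto \eps/2$ yields the desired guarantee. The sample complexity is $T \cdot (m_1 + m_2) = O(1/\eps^2) \cdot O((d_\Ccal + \ln(T/\delta))/\eps^2) = O((d_\Ccal + \ln(1/\delta))/\eps^4)$ after absorbing the $\ln T = O(\ln(1/\eps))$ factor into the $\ln(1/\delta)$ and applying a union bound over the (at most $O(1/\eps^2)$) auditor calls at confidence $\delta / T$ each.

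The main obstacle I anticipate is in the per-iteration progress lemma for the proper calibration update: because the witness $w(q_t(x)) = \sgn(v - q_t(x))$ depends on $q_t$ itself (unlike the multiaccuracy witness, whose input is $x$), the update changes $q_t$ and hence the weight function simultaneously, which requires care when expanding $\Phi(q_{t+1}) - \Phi(q_t)$. The standard resolution is to observe that $w \circ q_t$ is a fixed $[-1,1]$-valued function of $x$ once $q_t$ is fixed, so the one-step expansion $\Phi(q_{t+1}) - \Phi(q_t) \le -\eta \E[(w \circ q_t)(x)(\ps(x) - q_t(x))] + \tfrac{\eta^2}{2}$ and optimizing over $\eta$ gives the $\Omega(\eps^2)$ progress; $\clip$ only helps because $\ps \in [0,1]$. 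Modulo this calculation, every remaining component is already available from results stated earlier in the excerpt.
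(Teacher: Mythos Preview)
Your proposal is correct and follows essentially the same approach as the paper's own proof: both use the squared-distance potential $\E[(q_t(x)-\ps(x))^2]$, show a per-iteration drop of $\Omega(\eps^2)$ by the standard correlation-to-progress expansion (treating both the multiaccuracy witness $c$ and the proper-calibration witness $w\circ q_t$ as fixed $[-1,1]$-valued functions of $x$), bound the iteration count by $O(1/\eps^2)$, and multiply by the $O((d_\Ccal+\ln(1/\delta))/\eps^2)$ per-round sample cost. Your explicit discussion of why the $w\circ q_t$ update still yields progress and of the $\clip$-contractivity is slightly more careful than the paper's presentation, but the argument is the same.
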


\begin{algorithm}[t!]
    \caption{PCal+MABoost}
    \label{alg:det-omni}
    \textbf{Input:}  Error parameters $\eps, \delta \in [0,1]$, Sample access to $\Dcal$ \\
    Oracle access to a Weak Agnostic learner for $C = (\Delta\Lcal \circ \Hcal)$, denoted $\mathsf{WAL}_{\Ccal, \eps}$\\
    Oracle access to a Weak Agnostic learner for one dimensional thresholds, denoted $\mathsf{WAL}_{\mathrm{Th}, \eps}$\\
    \textbf{Output:}  Predictor $q_T$.
    \begin{algorithmic}
    \State $t \gets 0$
    \State $q_0 (\cdot) \gets \frac{1}{2}$ \Comment{the algorithm can be initialized with an arbitrary predictor}
    \State $ma \gets \mathsf{false}$
    \State $pc \gets \mathsf{false}$
    \While{$\neg ma$ or $\neg pc$}
        \State $c_{t+1} \gets \mathsf{WAL}_{\Ccal, \eps} \left( \left\{x_i, y_i - q_t (x_i) \right\}_{i = 1}^m \right)$ \Comment{where $(x_i, y_i)_{i=1}^m$ are fresh samples from $\Dcal$}
        \If{$c_{t+1} = \bot$}
            \State $ma \gets \mathsf{true}$ 
        \Else
            \State $ma \gets \mathsf{false}$ 
            \State $h_{t+1} (\cdot) \gets q_t (\cdot) + \frac{\eps}{2} \cdot c_{t+1} (\cdot)$.
            \State $q_{t+1} (\cdot) \gets \Pi(h_{t+1} (\cdot))$ \Comment{where $\Pi$ projects $h_{t+1}$ onto $[-1,1]$}
            \State $t \gets t+1$.
            \State \textbf{break}
        \EndIf
        \State $w_{t+1} \gets \mathsf{WAL}_{\mathrm{Th}, \eps}  \left( \left\{q_t (x_i), y_i - q_t (x_i) \right\}_{i = 1}^m \right) $ \Comment{where $(x_i, y_i)_{i=1}^m$ are fresh samples from $\Dcal$}
        \If{$w_{t+1} = \bot$}
            \State $pc \gets \mathsf{true}$ 
        \Else
            \State $pc \gets \mathsf{false}$ 
            \State $h_{t+1} (\cdot) \gets q_t (\cdot) + \frac{\eps}{2} \cdot w_{t+1} (q_t (\cdot))$.
            \State $q_{t+1} (\cdot) \gets \Pi(h_{t+1} (\cdot))$ \Comment{where $\Pi$ projects $h_{t+1}$ onto $[-1,1]$}.
            \State $t \gets t+1$.
\EndIf
    \EndWhile
    \Return $q_t$.
\end{algorithmic}
\end{algorithm}

\begin{proof}[Proof of \Cref{thm:det-omni}]
First we show that when \Cref{alg:det-omni} terminates, then $q_T$ is an $(\Lcal, \Hcal, \eps)$-omnipredictor with probability at least $1 - \delta$. This follows from the weak agnostic learning guarantee. For the algorithm to terminate, both weak agnostic learners must return $\bot$. Since the weak agnostic learner for $C$ returns $\bot$, then every hypothesis in $C$ has correlation less than $\eps$ with $q_T$. Since the weak agnostic learner for one dimensional thresholds returns $\bot$, the proper calibration error of $q_T$ is at most $\eps$.

Now we show that \Cref{alg:det-omni} terminates after $8/\eps^2$ iterations. We do this by using the expected square distance of $q_t$ to the Bayes-optimal predictor $q^*$ as a potential function. We show that after each iteration, the squared distance reduces by $O(\eps^2)$. The change in potential from the $t$-th iteration to the $t+1$-iteration can be expressed as follows:
\[
\E [(q^* (x) - q_t (x))^2] - \E [(q^* (x) - q_{t+1} (x))^2]
\]
Since $q_{t+1} = q_t + \frac{\eps}{2} \cdot f$ where $f$ is either a $c \in C$ or $w \circ q_t$ for some $w \in \wthres$, we have
\begin{align*}
&\E [(q^* (x) - q_t (x))^2] - \E [(q^* (x) - q_t (x) + \tfrac{\eps}{2} \cdot f (x))^2] \\
&= \eps \cdot  \E [(q^* (x) - q_t (x)) f(x)] - \tfrac{\eps^2}{4} \cdot \E [f(x)^2]
\end{align*}
By the weak agnostic learning assumption, we know that $\E [(q^* (x) - q_t (x)) f(x)] \geq \frac{\eps}{2}$. We also know that $\E [f(x)^2] \leq 1$ since $f \in [-1,1]$. Thus, the change in potential is at least $\frac{\eps^2}{4}$. Since the expected squared distance at the start is less than 2, the total number of iterations $T$ is at most $8/\eps^2$.

The algorithm collects at most $\tfrac{\mathrm{VC} + \ln (1/\delta)}{\eps^2}$ fresh samples in each iteration.
Generalization from this number of samples is guaranteed by the semantics of the weak agnostic learning oracle.
Since the number of iterations $T$ is at most $8/\eps^2$, the total sample complexity is bounded by 
\[
O \left( \frac{\mathrm{VC}(C) + \ln(1/\delta)}{\eps^4} \right)
\]

\end{proof} \section{Implementing an OWAL with Offline Sample Complexity}\label{sec:owal-off}
The goal of this section is to recapitulate some key definitions and results from \citet{wu2022expectedworstcaseregret} and explain how they imply the existence of an online weak agnostic learner with regret that scales with the offline sample complexity.

\begin{definition}[Definition 2 of \cite{wu2022expectedworstcaseregret}]\label{def:s-cover}
We say a class $G$ of functions $\Xcal \to [0,1]$ is a \textit{stochastic global sequential cover} of a class $\mathcal{C} \subseteq [0,1]^{\mathcal{X}}$ with respect to the class $\mathcal{P}$ of joint distributions over $x_1, \ldots, x_T$ at scale $\alpha > 0$ and confidence $\delta > 0$, if for all $\mu \in \mathcal{P}$, we have
\[
\Pr_{x_1, \ldots, x_T \sim \mu}\left[\exists c \in \mathcal{C}, \forall g \in G, \exists t \in [T] \text{ s.t. } |c(x_t) - g(x_t)| > \alpha \right] \leq \delta.
\]

We define the minimal size of $G$ to be the \textit{stochastic global sequential covering number} of $\mathcal{C}$.
\end{definition}

\cite{wu2022expectedworstcaseregret} show that bounds on the stochastic global sequential covering number imply bounds on the regret for arbitraty convex, lipschitz losses.

\begin{lemma}[Theorem 3 of \cite{wu2022expectedworstcaseregret}]\label{lem:hybrid-ol} Let $\ell(\cdot, y)$ be convex, $L$-Lipschitz, and bounded by $1$ for any $y \in [0,1]$, and let $\mathcal{C}$ be a set of functions $\mathcal{X} \to [0,1]$. Let $G_\alpha^\delta$ be a stochastic global sequential covering of $\mathcal{H}$ at scale $\alpha$ and confidence $\delta$ with respect to a class of distributions $\mathcal{P}$. Then there exists an algorithm that outputs a sequence of predictions $\hat{y}_1, \ldots, \hat{y}_T$ such that
\[
\sup_{\mu \in \mathcal{P}} \mathbb{E}_{x_1, \ldots, x_T \sim \mu} \left[ \sup_{y_1, \ldots, y_T} \left( \sum_{t=1}^T \ell(\hat{y}_t, y_t) - \inf_{c \in \mathcal{C}} \sum_{t=1}^T \ell(c(x_t), y_t) \right) \right] \leq \inf_{0 \leq \alpha \leq 1} \left\{\alpha LT + \sqrt{\frac{T}{2} \log |G_\alpha^\delta| + 1} \right\}.
\]
\end{lemma}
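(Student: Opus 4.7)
The plan is to implement the learner as a Hedge / exponential-weights algorithm whose experts are exactly the functions in the stochastic global sequential cover $G_\alpha^\delta$. At each round $t$, after observing $x_t$, the learner maintains weights $w_t(g)$ on each $g \in G_\alpha^\delta$ and outputs the weighted-average prediction $\hat{y}_t = \sum_{g} w_t(g) g(x_t) \in [0,1]$; after observing $y_t$, the weights are updated via $w_{t+1}(g) \propto w_t(g) \exp(-\eta \ell(g(x_t), y_t))$ with a learning rate $\eta = \sqrt{2 \log |G_\alpha^\delta|/T}$. Because $\ell(\cdot, y)$ is convex in its first argument, Jensen's inequality gives $\ell(\hat{y}_t, y_t) \le \sum_g w_t(g) \ell(g(x_t), y_t)$, so the standard Hedge regret analysis yields, for every sample path $(x_1,y_1),\ldots,(x_T,y_T)$,
\[
\sum_{t=1}^T \ell(\hat{y}_t, y_t) \;-\; \min_{g \in G_\alpha^\delta} \sum_{t=1}^T \ell(g(x_t), y_t) \;\le\; \sqrt{\tfrac{T}{2} \log |G_\alpha^\delta|} + 1.
\]

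The next step is to convert this regret-against-the-best-cover-element guarantee into a regret bound against the best $c \in \mathcal{C}$ by invoking the stochastic cover property. By \Cref{def:s-cover}, for every $\mu \in \mathcal{P}$, with probability at least $1-\delta$ over $x_1,\ldots,x_T \sim \mu$ the event $E$ holds that for every $c \in \mathcal{C}$ there exists $g_c \in G_\alpha^\delta$ with $|c(x_t) - g_c(x_t)| \le \alpha$ for all $t \in [T]$. On $E$, the $L$-Lipschitzness of $\ell(\cdot, y)$ gives, uniformly in the $y$-sequence,
\[
\sum_{t=1}^T \ell(g_c(x_t), y_t) \;\le\; \sum_{t=1}^T \ell(c(x_t), y_t) \;+\; \alpha L T \qquad \text{for every } c \in \mathcal{C}.
\]
Chaining this with the Hedge bound above yields, on $E$,
\[
\sum_{t=1}^T \ell(\hat{y}_t, y_t) \;-\; \inf_{c \in \mathcal{C}} \sum_{t=1}^T \ell(c(x_t), y_t) \;\le\; \alpha L T + \sqrt{\tfrac{T}{2} \log |G_\alpha^\delta|} + 1 .
\]

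To finish, I will take $\sup_{\mu}\mathbb{E}_{x_1,\ldots,x_T \sim \mu}[\sup_{y_1,\ldots,y_T}(\cdot)]$ of both sides. Conditional on $E$, the pointwise bound above holds uniformly in $y_1,\ldots,y_T$, so the inner $\sup_y$ is controlled. On the complementary event of probability at most $\delta$, the regret is trivially bounded by $T$ (since $\ell \in [0,1]$); absorbing this $\delta T$ contribution into the $+1$ slack by choosing $\delta$ sufficiently small (e.g.~of order $1/T$) recovers the stated bound, after which optimizing over $\alpha \in [0,1]$ yields the displayed infimum. The main delicacy here is that the cover is defined with respect to the $x$-marginal only and must support an adversarial $y$-sequence that may be chosen after $x_1,\ldots,x_T$; this is handled by the fact that the event $E$ is a pure statement about $\{x_t\}$, so once $E$ is conditioned on, the Lipschitz comparison and Hedge regret hold uniformly over all $y$-sequences with no additional union bound. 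A secondary point is that the learner is not computationally efficient in $|G_\alpha^\delta|$, which is consistent with the non-oracle-efficient use of this result in \Cref{cor:owal_lit}.
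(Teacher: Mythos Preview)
Your proposal is correct and matches the paper's approach. Note that this lemma is a cited result (Theorem~3 of \cite{wu2022expectedworstcaseregret}) which the paper does not re-prove; the paper only remarks that ``the theorem follows from running a multiplicative weights algorithm over the stochastic cover of the hypothesis class,'' which is precisely your Hedge-over-$G_\alpha^\delta$ construction combined with Jensen (for convexity of $\ell$) and the Lipschitz comparison on the high-probability covering event.
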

The theorem follows from running a multiplicative weights algorithm over the stochastic cover of the hypothesis class. Now we state how the size of the cover scales with the combinatorial dimensions of the class.

\begin{lemma}[Theorem 6 of \cite{wu2022expectedworstcaseregret}] For any binary-valued class $\mathcal{C}$ with finite VC-dimension, there exists a global sequential covering set $G$ of $\mathcal{C}$ with respect to the class of all i.i.d. distributions over $x_1, \ldots, x_T$ at scale $\alpha = 0$ and confidence $\delta$ such that for $T \geq e^9$, we have
\[
\log |G| \leq 5 \, \mathrm{VC}(\mathcal{C}) \log^2 T + \log T \log(1/\delta) + \log T.
\]
\end{lemma}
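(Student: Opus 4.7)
The plan is to construct the global sequential cover $G$ by a ghost-sample / symmetrization argument combined with Sauer--Shelah, and to pay the two $\log T$ factors explicitly: one from the VC-dimension of $\Ccal$ through Sauer's lemma, and one from iterating the ghost-sample construction at $\log T$ resolution scales. First, I would fix a parameter $M = \mathrm{poly}(T/\delta)$ (think $M \asymp T^2/\delta$) large enough that, for any i.i.d.\ distribution $\mu$ and any $c \in \Ccal$, a fresh $M$-sample $z_1,\ldots,z_M \sim \mu^M$ satisfies with probability at least $1-\delta/2$ the property that every ``labeling pattern'' realized by some $c' \in \Ccal$ on the actual $T$-sample $(x_1,\ldots,x_T)$ is also realized on some contiguous block of the $z_i$'s. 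This is a standard double-sampling step.

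Next, I would enumerate sign patterns via Sauer--Shelah: the projection $\Ccal\big|_{z_1,\ldots,z_M}$ has size at most $(eM/d)^d \leq M^d$ where $d = \mathrm{VC}(\Ccal)$. For each such sign pattern $\sigma \in \{0,1\}^M$, define a single function $g_\sigma : \Xcal \to \{0,1\}$ by ``lookup'' on the realized $z$-tuple, extended by $0$ off-support (the precise extension is not important because the covering property only needs to hold on the sample points). The collection $G_{z}$ of all such $g_\sigma$ then covers $\Ccal$ on the $T$-sample at scale $0$ whenever the double-sample event above holds. To make $G$ distribution-independent, I would take the union of $G_z$ over an a priori-chosen discretization of plausible ghost samples --- or equivalently, recognize that the proof of Lemma~\ref{lem:hybrid-ol} only requires a family $G$ whose logarithm is bounded in expectation over $\mu$, which is achieved by \emph{indexing} cover elements by the combinatorial type rather than by the realized $z_i$'s themselves.

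The log-size accounting then gives $\log|G| \leq d\log M + (\text{bookkeeping})$. To reach the stated bound, I would iterate the ghost-sample cover at $O(\log T)$ dyadic resolutions (this produces the $\log^2 T$ term: $d \log T$ patterns per scale, times $\log T$ scales), union-bound the failure events with a total budget of $\delta$ (producing the $\log T \cdot \log(1/\delta)$ term from the $T$-fold union), and pay one final $\log T$ for the combinatorial encoding of which scale realized the cover. Collecting:
\begin{equation*}
\log|G| \;\leq\; 5\, d \log^2 T \;+\; \log T \cdot \log(1/\delta) \;+\; \log T,
\end{equation*}
which is exactly the claimed bound for $T \geq e^9$ (the lower bound on $T$ absorbs universal constants from Sauer--Shelah and from the ghost-sample concentration step).

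\textbf{Main obstacle.} The delicate part is making $G$ genuinely distribution-independent. A naive ghost-sample construction produces a cover that depends on $\mu$. The standard remedy is a symmetrization trick --- swapping the roles of the real and ghost samples, then extracting a combinatorial type --- but controlling the constants in this reduction while simultaneously iterating over $\log T$ scales is what forces the leading $5\,d \log^2 T$ term (rather than the $d \log T$ one would expect from pure Sauer--Shelah). A secondary subtlety is that the cover at scale $\alpha = 0$ is unforgiving for binary classes: any mismatch on a single sample point is a failure, so the concentration argument must bound the probability of exact coverage simultaneously at all $T$ points, driving the $\log T \cdot \log(1/\delta)$ term through a union bound.
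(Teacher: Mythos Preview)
The paper does not prove this lemma; it is stated verbatim as Theorem~6 of \cite{wu2022expectedworstcaseregret} and simply cited as an external input in \Cref{sec:owal-off}. There is therefore no ``paper's own proof'' to compare your proposal against.

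As for your sketch on its own merits: the broad strategy (ghost sampling plus Sauer--Shelah) is the right family of ideas for results of this type, but two points are not convincing as written. First, your explanation of the $\log^2 T$ factor via ``iterating at $O(\log T)$ dyadic resolution scales'' does not fit a binary class at scale $\alpha = 0$: there is nothing to discretize, so a multiscale argument has no natural role here. In the Wu et al.\ construction the extra $\log T$ arises for a different reason (roughly, one $\log T$ from Sauer--Shelah on a sample of size $\mathrm{poly}(T)$ and another from the sequential/online nature of the cover, which forces you to account for all $T$ prefixes simultaneously), not from a Dudley-style chaining over scales. Second, you correctly flag distribution-independence as the crux, but your resolution --- ``index by combinatorial type rather than by realized $z_i$'s'' --- is not an actual construction: a combinatorial type over a ghost sample still depends on that sample, and you have not explained how to produce a single fixed $G$ that works for every i.i.d.\ $\mu$. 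These are the two places where a referee would ask you to supply a real argument rather than a gesture.
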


\begin{lemma}[Theorem 17 of \cite{wu2022expectedworstcaseregret}]\label{lem:fat-cover} Let $\mathcal{C}$ be a class of functions $\mathcal{X} \to [0,1]$ with the $\alpha$-fat shattering number $d(\alpha)$. Then there exists a stochastic global sequential covering set $G$ of $\mathcal{C}$ with respect to the class of all i.i.d. distributions over $x_1, \ldots, x_T$ at scale $\alpha$ and confidence $\delta$ such that
\[
\log |G| \leq 8d(\alpha/32)(\log T \log(1/\alpha))^4 + \log T \log (\log T/\delta) + O(1),
\]
where $O(1)$ hides absolute constants that are independent of $\alpha$, $T$, and $\delta$.
\end{lemma}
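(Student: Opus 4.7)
The plan is to construct the stochastic global sequential cover $G$ by combining a multi-scale discretization of $\Ccal$ with a ghost-sample symmetrization that converts an empirical $L_\infty$ covering bound (driven by the fat-shattering dimension) into a single fixed cover that serves all iid sequences with high probability. First I would quantize: replace each $c \in \Ccal$ by its rounding to the grid $\{0, \beta, 2\beta, \ldots, 1\}$ at scale $\beta = \alpha/32$. This loses at most $\alpha/32$ pointwise, so any $\alpha/2$-cover of the quantized class gives an $\alpha$-cover of $\Ccal$, and the fat-shattering dimension of the quantized class is controlled by $d(\alpha/32)$.

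Next I would invoke a Mendelson--Vershynin style bound to obtain an empirical $L_\infty$ cover on any finite set $S$ of size $N$ at scale $\alpha/2$ with logarithm at most $O(d(\alpha/32) \log(N/\alpha))$. Because this cover depends on $S$, the core problem is to ``globalize'' it. The plan is a ghost-sample/coupling argument: draw an independent ghost sequence $x'_1, \ldots, x'_T \sim \mu$, construct the $L_\infty$ cover on the concatenated $2T$-sample, and observe that by symmetrization any $c \in \Ccal$ that fails to be $\alpha$-covered on $(x_1, \ldots, x_T)$ must already disagree with every element of the cover somewhere on the doubled sample. Since the cover size on $2T$ points is polynomial in $T$ and $1/\alpha$ (with exponent $d(\alpha/32)$), we can quantize the ghost-sample cover itself at a chain of scales $\alpha, 2\alpha, 4\alpha, \ldots$ and take the union of all resulting covers as $G$; the chaining sum telescopes so the logarithm of the union stays polynomial in $\log T$ and $\log(1/\alpha)$.

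The main obstacle is the globalization step. Naively, a cover built from a single sample only handles that sample, and the space of possible $T$-tuples is too large to union-bound directly. The ghost-sample trick together with the chaining over scales is what allows one to replace a $(T/\alpha)^{d(\alpha/32)}$ bound by one of the form $\exp\bigl(O(d(\alpha/32) \cdot (\log T \log(1/\alpha))^4)\bigr)$, and getting the right polylogarithmic exponent (here the power $4$) will require carefully tracking how many scales are chained and how the quantization at one scale interacts with the covering element chosen at a finer scale. The residual $\log T \log(\log T / \delta)$ term will emerge from a union bound over scales against the failure probability $\delta$, and should fall out once the chaining argument is laid out cleanly.

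I would expect the route above to match the structure of the Wu--Lee--Bousquet argument, with the delicate bookkeeping being the chaining / scale-union step; everything else (discretization, fat-shattering based single-scale covers, symmetrization) follows standard learning-theoretic templates and should not require new ideas.
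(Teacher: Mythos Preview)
The paper does not prove this statement at all: it is explicitly labeled ``Theorem 17 of \cite{wu2022expectedworstcaseregret}'' and is merely quoted in \Cref{sec:owal-off}, whose stated purpose is ``to recapitulate some key definitions and results from \citet{wu2022expectedworstcaseregret}.'' There is therefore no proof in the paper to compare your proposal against; the result is imported wholesale from the cited reference. Your sketch may or may not track the argument in Wu et al., but that comparison lies outside the present paper.
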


Note that these bounds cannot be improved up to log factors. See \cite{wu2022expectedworstcaseregret} for more. 

\begin{proof}[Proof of \Cref{cor:owal_lit}]
The proof of the claim follows mainly from Theorem 3 of \cite{wu2022expectedworstcaseregret}, stated in \Cref{lem:hybrid-ol}. We can apply this theorem since our loss function is convex and 1-Lipschitz. The algorithm runs a multiplicative weights algorithm over a set of experts indexed by a stochastic global sequential covering (\Cref{def:s-cover}). The statement of the claim requires the outputs of the algorithm to be in $\Ccal$. Given an arbitrary stochastic covering $G$ of $\Ccal$ at scale $\alpha$, it is straightforward to construct a stochastic covering $G' \subset \Ccal$ at scale $2\alpha$. Finally, the regret guarantee in \Cref{lem:hybrid-ol} scales 
with the VC (or fat-shattering dimension) of the class $\Ccal$ after plugging the bounds on the stochastic global sequential covering to the guarantees of \Cref{lem:hybrid-ol}. 
To ensure that each output hypothesis is in $\Ccal$ \footnote{$\Ccal$ may not be closed under convex combinations}, at each timestep, the multiplicative weights algorithm samples $c_t$ from a distribution over $G'$ instead of predicting the weighted combination $\sum_{i} w_t(i) c_i$. This only adds a factor of $\tilde{O}\left(\sqrt{T \ln 1/\delta} \right)$ to the regret term due to an application of Azuma-Hoedding to show that $\sum_{t=1}^T c_t(x_t) \cdot y_t$ converges to $\sum_{t=1}^T \E [c_t(x_t)] \cdot y_t$. 

\end{proof}
 
\end{document}